\pdfoutput=1
\documentclass{article}

\usepackage{arxiv}

\usepackage[utf8]{inputenc}
\usepackage[T1]{fontenc}
\usepackage{url}
\usepackage{booktabs}
\usepackage{amsmath}
\usepackage{amsfonts}
\usepackage{amssymb}
\usepackage{mathrsfs}
\usepackage{amsthm}
\usepackage{nicefrac}
\usepackage{microtype}
\usepackage{graphicx}
\usepackage{fontawesome5}
\usepackage[square,numbers]{natbib}
\usepackage{tikz}
\usetikzlibrary{matrix,positioning,arrows.meta,fit,calc,shapes.geometric,shadows,patterns}
\usepackage{subcaption}
\usepackage{pgfplots}
\pgfplotsset{compat=1.18}
\usepackage[skins,breakable]{tcolorbox}
\usepackage{todonotes}
\usepackage[normalem]{ulem}
\usepackage{setspace}
\usepackage[scaled=0.7]{beramono}
\usepackage{xcolor}
\usepackage{tabularx}
\usepackage{xspace}
\usepackage{hyperref}
\usepackage{doi}
\usepackage[capitalise,noabbrev]{cleveref}

\definecolor{ravengold}{HTML}{9A7016} 
\definecolor{ravenink}{HTML}{252923}  

\newcommand{\modelname}[1]{\mbox{#1}}
\newcommand{\raven}{Raven\xspace}
\newcommand{\ravenresearch}{Raven-Research\xspace}
\newcommand{\ravencode}{Raven-Code\xspace}
\newcommand{\ravendesign}{Raven-Design\xspace}
\newcommand{\ravenoncall}{Raven-Oncall\xspace}

\newcounter{ravenalgorithm}
\newenvironment{ravenalgorithm}[1]{%
    \begin{figure}[!htbp]
    \refstepcounter{ravenalgorithm}%
    \begin{tcolorbox}[
        enhanced,colback=white,colframe=black!45,
        boxrule=0.5pt,arc=1pt,left=7pt,right=7pt,top=5pt,bottom=5pt,
        colbacktitle=black!4,coltitle=black,fonttitle=\bfseries,
        title={Algorithm~\theravenalgorithm: #1}]
    \small
}{\end{tcolorbox}\end{figure}}
\crefname{ravenalgorithm}{Algorithm}{Algorithms}

\theoremstyle{plain}
\newtheorem{theorem}{Theorem}
\newtheorem{lemma}{Lemma}
\newtheorem{corollary}{Corollary}
\newtheorem{proposition}{Proposition}
\theoremstyle{definition}
\newtheorem{definition}{Definition}
\theoremstyle{plain}
\crefname{theorem}{Theorem}{Theorems}
\crefname{lemma}{Lemma}{Lemmas}
\crefname{corollary}{Corollary}{Corollaries}
\crefname{proposition}{Proposition}{Propositions}
\crefname{definition}{Definition}{Definitions}
\crefname{section}{Section}{Sections}
\crefname{subsection}{Section}{Sections}
\crefname{subappendix}{Appendix}{Appendices}
\crefname{appendix}{Appendix}{Appendices}
\crefname{equation}{Eq.}{Eqs.}

\newcolumntype{Y}{>{\centering\arraybackslash}X} 
\newcolumntype{Z}{>{\raggedleft\arraybackslash}X} 

\definecolor{promptcolor}{RGB}{255, 152, 0}
\definecolor{thinkingcolor}{RGB}{33, 150, 243}
\definecolor{answercolor}{RGB}{76, 175, 80}
\definecolor{systempromptcolor}{RGB}{244, 67, 54}
\newcommand{\berafamily}{\fontfamily{fvm}\selectfont}

\newtcolorbox{promptBox}[1][User Prompt]{
    enhanced,
    breakable,
    colback=promptcolor!15,
    colframe=promptcolor!80!black,
    boxrule=0.8pt,
    fonttitle=\bfseries,
    title=#1,
    sharp corners,
    before upper={\begin{spacing}{0.9}\footnotesize\berafamily},
    after upper={\end{spacing}}
}

\newtcolorbox{systemPromptBox}[1][System Prompt]{
    enhanced,
    breakable,
    colback=systempromptcolor!15,
    colframe=systempromptcolor!80!black,
    boxrule=0.8pt,
    fonttitle=\bfseries,
    title=#1,
    sharp corners,
    before upper={\begin{spacing}{0.9}\footnotesize\berafamily},
    after upper={\end{spacing}}
}

\newtcolorbox{thinkingBox}[1][Chain-of-Thought]{
    enhanced,
    breakable,
    colback=thinkingcolor!10,
    colframe=thinkingcolor!75!black,
    boxrule=0.8pt,
    fonttitle=\bfseries,
    title=#1,
    sharp corners,
    before upper={\begin{spacing}{0.9}\footnotesize\berafamily},
    after upper={\end{spacing}}
}

\newtcolorbox{answerBox}[1][Final Answer]{
    enhanced,
    breakable,
    colback=answercolor!15,
    colframe=answercolor!70!black,
    boxrule=0.8pt,
    fonttitle=\bfseries,
    title=#1,
    sharp corners,
    before upper={\begin{spacing}{0.9}\footnotesize\berafamily},
    after upper={\end{spacing}}
}

\title{Raven: The Harness of Harnesses for Composable Agentic Intelligence}
\author{%
  EverMind AI\\[0.6em]
  {\normalfont\small
    \href{https://evermind.ai/}{\textcolor{blue!50!black}{\nolinkurl{https://evermind.ai/}}}}\\[0.6em]
  {\normalfont\small
    \href{https://github.com/EverMind-AI/Raven}{%
      \textcolor{ravenink}{\faGithub}\hspace{0.6em}%
      \textcolor{blue!50!black}{\nolinkurl{https://github.com/EverMind-AI/Raven}}}}%
}

\renewcommand{\headeright}{}
\renewcommand{\undertitle}{}

\fancypagestyle{evermindtitle}{%
  \fancyhf{}
  \fancyhead[L]{%
    \raisebox{-0.5\height}{\includegraphics[height=24pt]{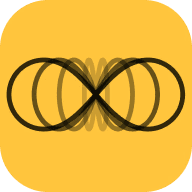}}%
    \hspace{6pt}%
    \raisebox{-0.5\height}{\includegraphics[height=14pt]{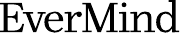}}%
  }

}

\hypersetup{
  colorlinks=true,
  linkcolor=ravengold,
  citecolor=ravengold,
  urlcolor=ravengold,
  pdftitle={Raven: The Harness of Harnesses for Composable Agentic Intelligence},
  pdfauthor={EverMind AI},
}

\begin{document}
\raggedbottom

\maketitle
\thispagestyle{evermindtitle}
\suppressfloats[t]

\begin{abstract}
As large language models advance, AI agents are moving beyond isolated, domain-specific tasks toward long-horizon, cross-domain workflows.
This transition exposes two challenges: increasing harness complexity makes manual design difficult to scale, while tighter coupling to specific domains limits the generality of a single harness.
The central question thus shifts from how to engineer a stronger harness for one domain to how to autonomously construct specialized harnesses, improve them through experience, and orchestrate them across domains.
We introduce \raven, \emph{The Harness of Harnesses}, an open-source
multi-agent ecosystem that automatically constructs and evolves modular
harnesses for specific models and domains, treating each executable
model--harness pair as a composable unit of intelligence.
To support an \emph{All-Domain Collaboration Network}, its Host Agent
decomposes goals, matches subtasks to specialized agents, coordinates
execution dependencies, and integrates results, while a host archive and EverOS preserve experience across tasks and Skill Forge makes that experience available as reusable procedures.
Our theory establishes sufficient conditions for such composition to
expand reliable task coverage beyond that of the available individual agents
under a shared resource budget.
On complex and long-horizon tasks, \raven significantly outperforms the state-of-the-art agent systems, pushing the frontier of composable agentic intelligence.
\end{abstract}

\par\medskip
\noindent\begin{minipage}{\linewidth}
  \centering
  \includegraphics[width=\linewidth]{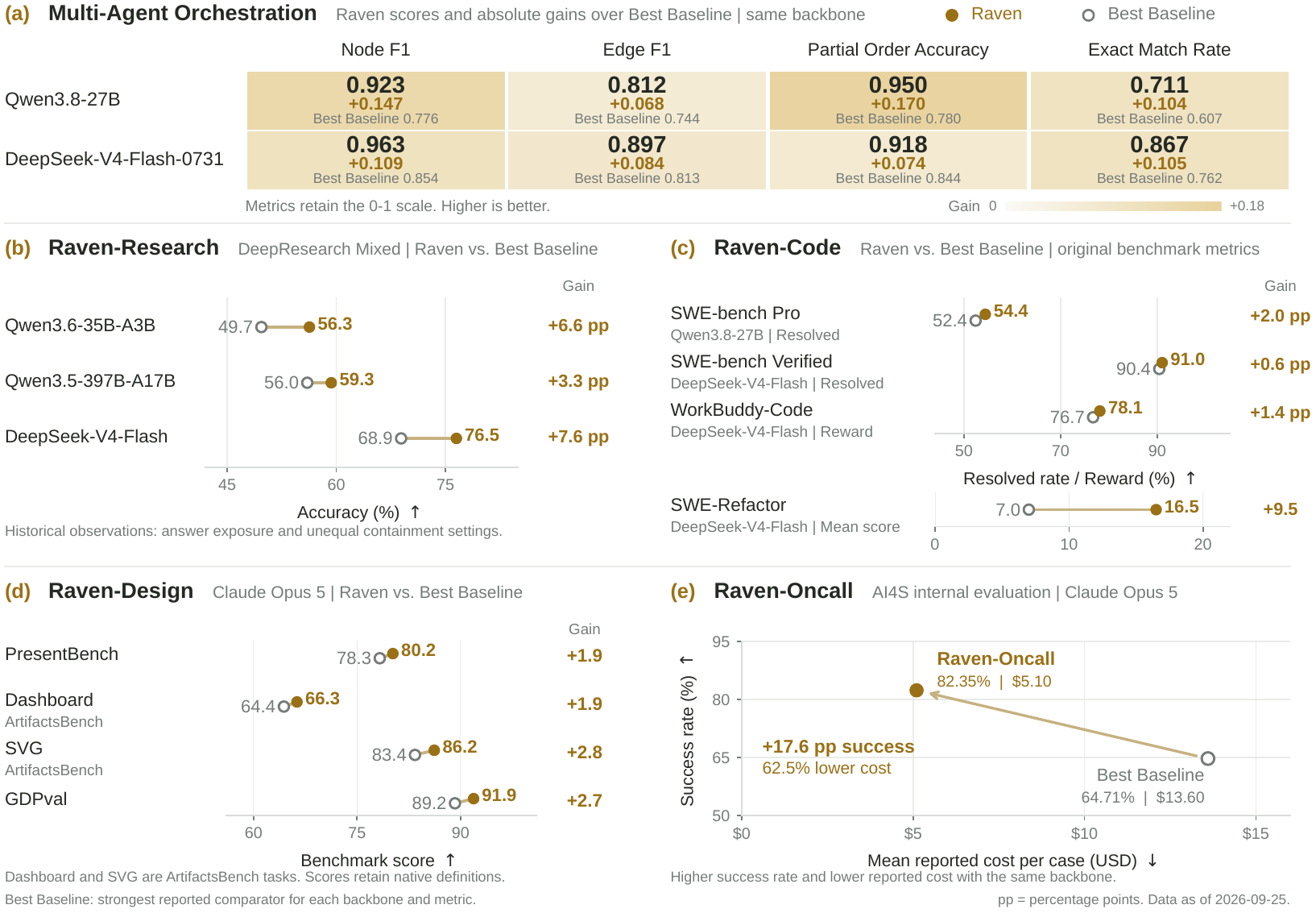}
  \captionsetup{font=small,skip=6pt,hypcap=false}
  \captionof{figure}{Raven performance overview. Selected results for
  multi-agent orchestration, research, coding, design, and on-call tasks,
  compared with the strongest reported alternative in each setting.}
  \label{fig:performance-overview}
\end{minipage}
\par\medskip

\clearpage
\pagestyle{empty}
\setcounter{tocdepth}{3}
\tableofcontents
\clearpage
\pagestyle{fancy}

\section{Introduction}\label{sec:introduction}
\suppressfloats[t]

Advances in large language models (LLMs), including instruction following
\citep{ouyang2022training} and code generation \citep{chen2021evaluating},
have provided a foundation for agents that interpret user goals and act
through software. LLMs can also learn to invoke external tools
\citep{schick2023toolformer}. Agent systems organize these capabilities
into multi-step interactions with external environments.
For example, interleaving reasoning with actions and observations allows
an agent to gather information and revise its decisions in response to
feedback \citep{yao2023react}. Specialized execution interfaces further shape
what agents can accomplish, as demonstrated in automated software
engineering \citep{yang2024sweagent}. These developments raise the question
of how agents with different expertise and execution mechanisms can work
toward a shared goal.

Complex goals often require several specializations within one workflow
\citep{fourney2024magentic}.
Developing and operating a first-person shooter (FPS) game, for example,
can involve requirements research, gameplay programming, visual design,
integration testing, and sustained operation. Each stage has different
tools and completion criteria, and its outputs must support the work
that follows. An agent's practical capability therefore depends on both
its model and its \emph{harness}: the tool interfaces, context management,
skills, execution policies, and recovery mechanisms surrounding the model.
These mechanisms govern how an agent applies its model's capabilities
within a domain~\citep{wang2023survey}.

This dependence on the harness creates two related challenges. First,
supporting more tools and execution conditions increases the number of
interacting design choices that must be configured and validated, making
manual harness development difficult to scale across models and domains
\citep{shang2025agentsquare,hu2024adas}.
Second, specialized harnesses encode assumptions about their tools,
working context, and expected outputs. Combining their capabilities
requires matching subtasks to suitable executors and establishing
compatible handoffs. Analyses of multi-agent failures identify inter-agent
misalignment and inadequate task verification as recurring problems
\citep{cemri2025mast}. Multi-agent conversation and role-based workflows
provide mechanisms for organizing collaboration
\citep{wu2023autogen,hong2023metagpt}. However, the benefit of a composition
still depends on task structure, local capabilities, and coordination cost
\citep{kim2026scaling}. The central problem is thus to construct and improve
specialized harnesses while making their capabilities composable across
domains.

To address these challenges, we introduce \raven, \emph{The Harness Of
Harnesses}, an open-source multi-agent ecosystem that treats each
executable model--harness pair as a unit of composition. As illustrated
in \Cref{fig:raven-ecosystem}(a), the ecosystem includes the native
specialists \ravenresearch, \ravencode,
\ravendesign, and \ravenoncall, alongside independently developed agents
such as Claude Code, Codex, Hermes Agent, and OpenClaw. Execution adapters
connect these agents to a shared orchestration interface while preserving
their tools and internal execution policies. The agent registry
describes their capabilities to the Host Agent and can accommodate
additional agents as they are integrated.

\begin{figure}[t]
    \centering
    \includegraphics[width=\linewidth]{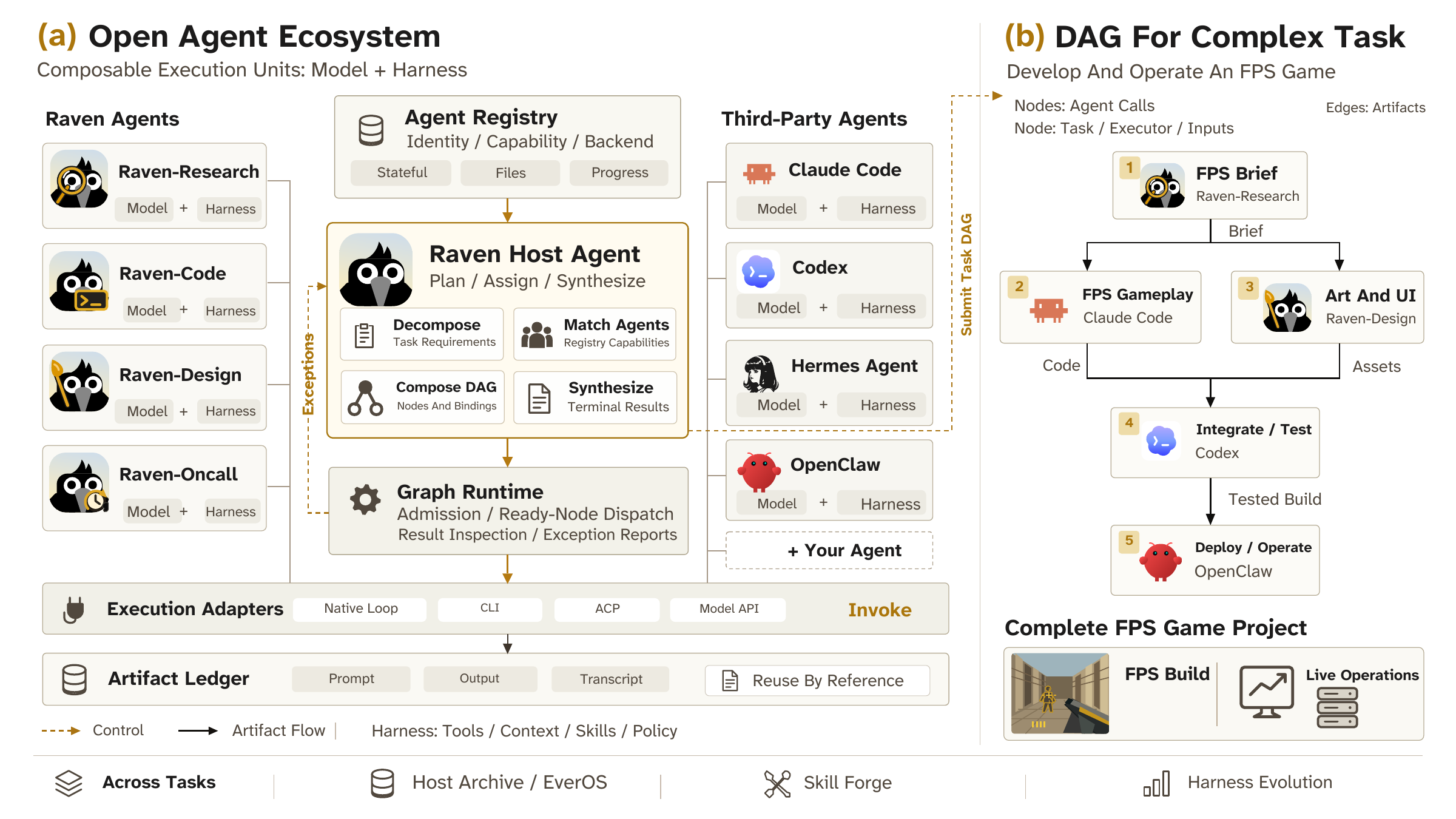}
    \caption{\textbf{Raven As An Open Multi-Agent Ecosystem.}
    (a) Built-in and third-party agents expose executable model--harness
    pairs through execution adapters. The Host Agent uses registry
    capabilities to compose and assign tasks, while the runtime validates
    the plan, schedules ready nodes, and records artifacts for handoff
    and reuse. (b) An illustrative FPS game project combines gameplay
    development and visual design before integration, testing, and
    operation. Nodes denote agent calls, and directed edges denote
    artifact dependencies coordinated by Raven. Persistent context,
    reusable skills, and harness evolution support adaptation across tasks.}
    \label{fig:raven-ecosystem}
\end{figure}

Within this ecosystem, the Host Agent organizes a collaboration by
matching subtasks to registered capabilities and representing their
dependencies as a directed acyclic graph (DAG). Each node invokes an
agent, and edges identify the dependencies between their outputs and
subsequent work. The runtime checks the plan before dispatch, schedules
ready nodes, and records
artifacts for handoff and reuse. The host handles execution exceptions
and integrates the resulting deliverables. In the illustrative FPS
workflow in \Cref{fig:raven-ecosystem}(b), a research brief informs
parallel gameplay development and visual design. Their outputs then
converge for integration and testing before deployment and operation.
The graph specifies the inputs each specialist requires and how its
output contributes to the requested deliverable.

Beyond coordinating individual workflows, \raven combines harness
adaptation with the reuse of experience across tasks. Prior methods
adapt agents by retaining insights from past executions
\citep{zhao2024expel} or distilling interactions into reusable skills
\citep{zheng2025skillweaver}. In \raven, modular harnesses expose execution
policies that an external evolver can modify within defined boundaries.
Building on our prior work, HarnessBank \citep{luo2026harnessbank}, the
evolution process diagnoses failures, proposes candidate harnesses, and evaluates
their behavior with a frozen task model. Persistent memory complements
these policy changes. A host archive retains user context, while the
optional EverOS backend \citep{hu2026evermemos,evermind2026everos}
provides semantic access to past experience. For procedural reuse,
Skill Forge retrieves task-relevant procedures from local skills,
memory-derived skills, and SkillHub, whose corpus and retrieval design
build on SkillCorpus
\citep{wang2026skillcorpus}.

Alongside this system design, our theoretical analysis characterizes when
composition can extend the capabilities of the available agents. For a
specified agent pool and task family, we formalize capability as reliable
task coverage under a common resource budget. We establish sufficient
conditions under which
complementary local capabilities, compatible handoffs, and bounded
planning and execution errors permit the composed system to solve tasks
that the available individual agents cannot reliably solve alone under
the same budget. The analysis accounts for the cost of planning and
coordination alongside worker execution.

To assess \raven empirically, we evaluate planning quality, specialist
execution, harness adaptation, and skill reuse separately. We introduce
the Multi-Agent Orchestration Benchmark (MAOB), which measures specialist
selection and dependency prediction by comparing proposed DAGs with
reference graphs before worker
execution. \raven ranks first among the compared systems on all four
graph metrics under both tested backbones, with Exact Match gains of
$10.4$ and $10.5$ percentage points over the strongest baseline for the
two backbones, respectively. Domain-specific evaluations characterize the
four native specialists, while the published HarnessBank experiments assess
the evolution method with a frozen backbone. The published SkillCorpus
experiments show that a curated skill library improves Raven on three
benchmarks, with larger gains than OpenClaw on two of them.

In summary, our main contributions are:
\begin{itemize}
    \item \textbf{An Open Ecosystem For Harness Composition.}
    We present an architecture that coordinates native and third-party
    agents through a shared interface and explicit execution dependencies,
    with modular harness evolution, persistent memory, and skill reuse
    (\crefrange{sec:collaboration}{sec:skillforge}).
    \item \textbf{A Theory Of Composable Agentic Intelligence.}
    We formalize task-relative capabilities and sufficient conditions for
    reliable composition and expanded task coverage under a shared
    resource budget
    (\cref{sec:theory}).
    \item \textbf{A Benchmark And Evaluation Across Capability Levels.}
    We introduce MAOB to evaluate specialist selection and dependency
    prediction, and organize empirical evidence across orchestration,
    the four native specialists, harness adaptation, and skill reuse
    (\cref{sec:evaluation}).
\end{itemize}

\section{Theory}\label{sec:theory}

\raven composes executable model--harness pairs through a host that controls
their assignment, information exchange, and execution. To analyze this
composition, we formalize
\emph{composable agentic intelligence} as reliable task coverage under a common
resource budget. The analysis begins with typed execution contracts, establishes
deterministic composition soundness, and then derives probabilistic
reliability bounds and conditions for capability beyond individual agents.
A final result addresses coverage of a task family specified independently
of observed system successes. These are sufficient conditions for reliable
composition, and their practical applicability depends on whether the
implemented agents and host satisfy them.
\Cref{fig:theory-composition} illustrates the composition.
\Cref{tab:theory-notation} collects the principal notation, while
\cref{app:method-notation} states the notation conventions of the method chapters.

\begin{figure}[htbp]
    \centering
    \includegraphics[width=\linewidth]{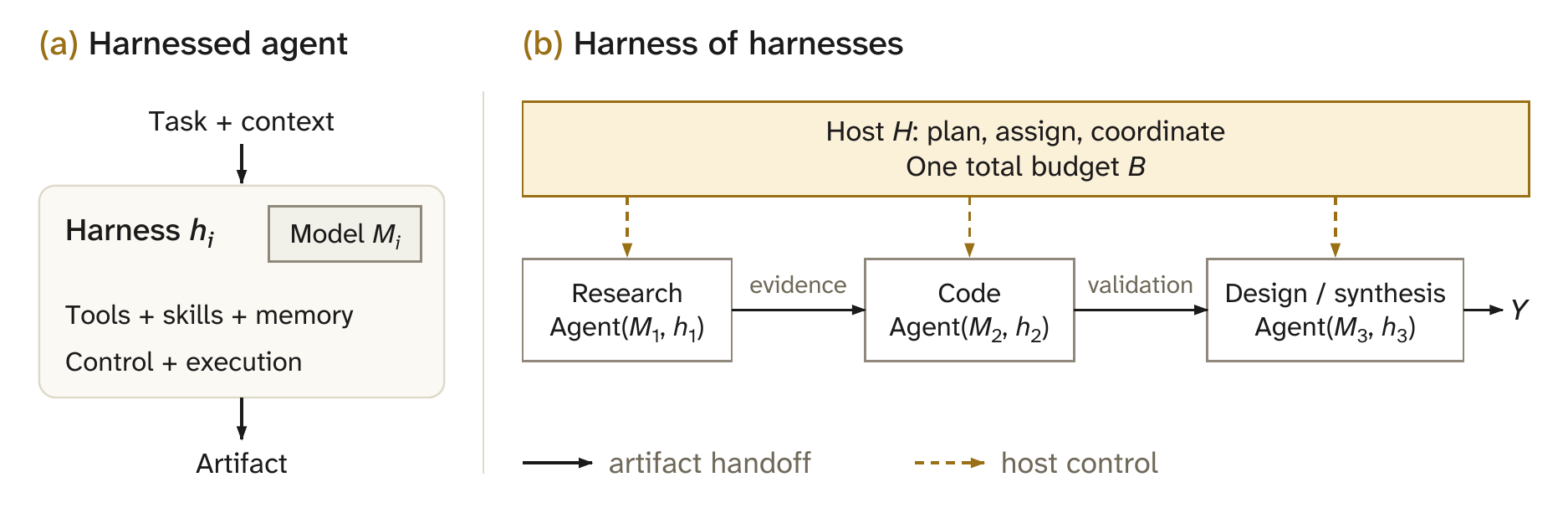}
    \caption{Harness composition. (a) A model and its harness form a callable
    execution unit. (b) A host assigns and coordinates these units. Solid
    arrows carry artifacts and dashed arrows denote control. The
    research--code--design chain is an illustrative plan with explicit final
    synthesis by a specialist. All operations share one budget. Model labels
    need not identify different foundation models.}
    \label{fig:theory-composition}
\end{figure}

\subsection{Harnessed Agents and Task-Relative Capabilities}
\label{sec:theory-capabilities}

A harness turns a model into an interactive execution policy. Let $M_i$ be a
model and $h_i$ its tool interfaces, context and memory mechanisms, skills,
control rules, and execution interfaces. The induced agent is
\begin{equation}
    a_i=\operatorname{Agent}(M_i,h_i):
       \mathcal H_i\longrightarrow\Delta(\mathcal U_i),
    \qquad \mathcal A=\{a_0,a_1,\ldots\},\quad 2\le|\mathcal A|<\infty.
    \label{eq:harnessed-agent}
\end{equation}
Here $\mathcal H_i$ is the space of the agent's permitted observation
histories, including its request and local memory. $\mathcal U_i$ contains
tool actions, artifact-return actions, and an abort action. $\Delta$
denotes probability distributions. The construction $\operatorname{Agent}$
therefore defines a stochastic policy without restricting the model's internal
parameterization. Although the analysis describes the full environment state,
the policy has access only to its permitted observation history.

The executor pool includes $a_0$, the host's callable policy with
cross-roster delegation disabled. The controller $H$ is a stochastic policy
on its permitted history, with additional actions for plan submission and
invoking pool executors. The notation $S_H=H[\mathcal A]$ denotes the
interaction of these policies through the execution protocol below.
A synthesis node can invoke $a_0$ or a specialist. Each
$a_i$ is also a standalone comparator, run on the original request with its
native interfaces and without calls to other roster members. Models,
harness versions, and initial memory-state laws are fixed for a comparison.
Several agents may share a model, and one agent may supply several worker
instances.

A task is a specification of inputs, interaction, and correctness:
$t=(r_t,\mu_t,\mathsf E_t,\mathsf V_t)\in\mathcal T$. The request $r_t$ is observable.
$\omega\sim\mu_t$ takes values in the reference space $\mathcal W$ and
represents the initial input and environment conditions.
$\mathsf E_t$ gives the distribution of the next observation and environment
state conditional on the joint interaction history and a tool action.
$\mathsf V_t:\mathcal W\times\mathcal Y_{\rm rec}\to\{0,1\}$ judges a record
$Y\in\mathcal Y_{\rm rec}$. The record includes
delivered artifacts and relevant environment changes. Agent initial-state
laws belong to the fixed system configuration. A fixed benchmark instance
corresponds to a degenerate input law. For an aborted or nonterminating run, set
$Y=\bot$ and $\mathsf V_t(\omega,\bot)=0$, with the distinguished failure
record $\bot$ included in $\mathcal Y_{\rm rec}$.

We use countable spaces of finitely encoded observations, plans, artifacts,
and runtime states, including the represented reference conditions.
Whenever an outcome or padded runtime record can equal $\bot$, its space
includes that sentinel. State and ledger predicates are extended to be
false on padded arguments. We write $\mathbb I[\mathcal E]$ for the indicator of
an event or predicate $\mathcal E$. For an event $\mathcal E$,
$\mathcal E^c$ denotes its complement.
Together, the task, system policies, scheduler, and
budget convention induce a probability law $\mathbb P_{t,S,B}$ on the
trace space $\Omega_{\rm tr}$, for $S=S_H$ or a standalone $a_i$.
The measurable trace function $Y_S$ extracts the delivered record, or
$\bot$ on failure. The nonnegative, possibly infinite total cost $C_S$
charges every operation exactly once using a fixed additive measure,
such as total inference and tool cost. Capability is
defined by
\begin{align}
    p_S(t;B)
      &=\mathbb P_{t,S,B}
        \bigl(\mathsf V_t(\omega,Y_S)=1,\ C_S\le B\bigr),
        &&B\ge0, \label{eq:budgeted-success}\\
    \mathcal C_S(B,\delta)
      &=\{t\in\mathcal T:p_S(t;B)\ge1-\delta\},
        &&0\le\delta<1. \label{eq:capability-set}
\end{align}
For $S_H$, cost includes planning, workers, handoffs, runtime verification,
and retries. Nontermination is unsuccessful even if its accrued monetary
cost is finite. Elapsed time can be a separate constraint, but it is not added
to tokens or monetary cost. Objective correctness $\mathsf V_t$ remains distinct
from an agent or judge declaring completion.

\subsection{Resource Limits and the Scope of Composition}
\label{sec:theory-limits}

Resource limits make capability task-dependent. No-free-lunch results equate
search performance
measured from sampled objective values for non-revisiting algorithms at a
fixed evaluation budget, under uniform averaging over functions between
finite input and value sets \citep{wolpert1997nfl}. Structured task
distributions can instead favor appropriately matched prior knowledge, and
enough queries permit exhaustive search. The finite-query argument in
\cref{app:theory-search} shows that $n_{\rm qry}<N_{\rm pos}$ black-box queries cannot guarantee
finding every target among $N_{\rm pos}$ positions. It applies equally to an individual
agent and a multi-agent system with the same total information budget.

Empirical limitations motivate examining the particular agents available
for composition. AgentBench identifies deficiencies in long-term reasoning,
decision-making, and instruction following among its evaluated systems
\citep{liu2024agentbench}. Such results support capability profiling on the
intended task distributions, with conclusions limited to the evaluated systems.
Routing selects an agent for a complete request, whereas our composition
analysis concerns the distinct local contracts needed within
one request. \Cref{def:complementarity} formalizes local complementarity after
introducing contracts and their realization. A comparison with complete
single-agent executions remains a separate step. Individual limitations
alone do not imply complementary strengths.

\subsection{Host-Mediated Harness Composition}
\label{sec:theory-composition}

A plan specifies an executable graph together with its semantic and resource
obligations:
\begin{equation}
    \pi=(G,\sigma,\Phi,\Gamma,\lambda,\mathbf b),\qquad
    G=(V,E),\qquad \sigma:V\longrightarrow\mathcal A.
    \label{eq:composition-plan}
\end{equation}
The finite DAG $G$ has a nonempty node set $V$ of worker calls and a set of
handoff edges $E$. The assignment $\sigma$ selects an executor, including
$a_0$, for each call. The annotation $\Gamma$ contains contracts, handoff
providers, invocation bindings to states and instances, and the invariants
defined below. An analytical rule fixed before execution may supply the
contracts and invariants without requiring runtime proof certificates.
This rule depends only on the task, planning record, and submitted plan.
The schedule is a bijection
$\lambda:\{1,\ldots,K_{\rm op}\}\to V\sqcup E$, with $K_{\rm op}=|V|+|E|$ and
$\lambda_j=\lambda(j)$. It places $u$ before $e=(u,v)$ and $e$ before $v$.
The disjoint union $\sqcup$ distinguishes node identifiers from edge identifiers.
The last operation is the plan's designated output node.

Artifacts are stored in an append-only ledger $\ell$, separate from the
mutable state $s\in\mathcal S$. The initial ledger retains the task packet
and prior execution record, including planning actions, so the final record
can account for the complete run. Only the authorized packet and declared
messages become worker inputs. The state includes relevant environment and
agent-local state and has a read-only reference projection
$\operatorname{ref}:\mathcal S\to\mathcal W$, equal to the sampled
$\omega$ along an actual trace. Let $\mathcal X_v,\mathcal Y_v$ be a node's input and
output spaces, $\mathcal M_e$ an edge's message space, and
$\operatorname{In}(v)$ its incoming edges. For an authorized task/context
packet $d$ in the encoded-packet space $\mathcal D_{\rm pkt}$, the prescribed input is
\begin{equation}
    x_v=\Phi_v\bigl(d,(m_e)_{e\in\operatorname{In}(v)}\bigr),
    \qquad
    \Phi_v:\mathcal D_{\rm pkt}\times
        \prod_{e\in\operatorname{In}(v)}\mathcal M_e
        \longrightarrow\mathcal X_v .
    \label{eq:input-assembly}
\end{equation}
At a source, $\Phi_v$ uses only $d$. Edge transfers first
produce messages under their contracts, and $\Phi_v$ then assembles those
messages into the receiving node's input. Edge transformations and transport
are charged to the edge, while receiver assembly is charged to the node.
Both stages may use typed projections or serialization. Other computation
requires an explicit node, so the accounting covers each operation once.

To specify correctness for these operations, we define node and transfer
contracts as typed predicates:
\begin{align}
    P_v&:\mathcal X_v\times\mathcal S\to\{0,1\},\qquad
    Q_v:\mathcal X_v\times\mathcal S\times
                 \mathcal Y_v\times\mathcal S\to\{0,1\},
                 \label{eq:node-contracts}\\
    J_e&:\mathcal Y_u\times\mathcal S\times
                 \mathcal M_e\times\mathcal S\to\{0,1\},
                 \qquad e=(u,v). \label{eq:handoff-contract}
\end{align}
$P_v(x,s)$ specifies the precondition, $Q_v(x,s,y,s')$ specifies the output
and state transition, and $J_e(y,s,m,s')$ specifies faithful transfer and its state
effects. Predicates may inspect the analytical reference component of $s$,
although executors cannot. In particular, a judge's acceptance is not the
definition of $Q_v$.

For operation $\lambda_j$, define its entry predicate
$A_j(s,\ell)$ and transition predicate $D_j(s,\ell,s',\ell')$. A node entry
requires its incoming messages and $P_v(x_v,s)$. Its transition requires
the recorded actual input to equal the prescribed input, a fresh output record,
and $Q_v(x_v,s,y_v,s')$. A handoff entry requires its source artifact. Its
transition requires a fresh message record and $J_e(y_u,s,m_e,s')$.
Both transitions preserve all earlier ledger records. Missing or ill-typed
operation records make the corresponding transition predicate false.
The ledger stores actual inputs as well as outputs.
\Cref{app:theory-protocol} gives the exact predicates and their behavior
under ledger extension. A plan is well-typed when its input, output, and
message records have the declared types and all its identifier references resolve.

Beyond type correctness, composition requires compatibility that preserves
the assertions needed by later operations, including assertions about mutable
state. Fix a set
$\mathcal I\subseteq\mathcal S\times\mathcal L$ of admitted initial
state--ledger pairs, where $\mathcal L$ is the ledger space.
The annotations are instantiated for the task and authorized packet.
They provide predicates
$I_j:\mathcal S\times\mathcal L\to\{0,1\}$ for $j=0,\ldots,K_{\rm op}$ and a
specified projection $\operatorname{rec}_\pi(\ell,s)$ into the final execution record.
The projection serializes actual stored artifacts, actions, and state,
including the planning record, without computing an additional artifact.

\begin{definition}[Compatible plan]\label{def:compatible-plan}
A well-typed plan is compatible with task $t$ and initial set $\mathcal I$
if $I_0$ holds throughout $\mathcal I$ and the following implications hold
for every well-typed state and ledger transition:
\begin{align}
    I_{j-1}(s,\ell)&\ \Longrightarrow\ A_j(s,\ell),\label{eq:contract-enable}\\
    I_{j-1}(s,\ell)\land D_j(s,\ell,s',\ell')
       &\ \Longrightarrow\ I_j(s',\ell'),\label{eq:contract-preserve}\\
    I_{K_{\rm op}}(s,\ell)&\ \Longrightarrow\
       \mathsf V_t\bigl(\operatorname{ref}(s),\operatorname{rec}_\pi(\ell,s)\bigr)=1 .
       \label{eq:contract-terminal}
\end{align}
The first two conditions apply to $j=1,\ldots,K_{\rm op}$.
\end{definition}

The enabling implication checks a join's inputs jointly, while preservation
maintains the assertions needed later. This induction uses Hoare-style
assertions \citep{hoare1969axiomatic}. Local assumptions and guarantees
support componentwise reasoning \citep{kwiatkowska2010assume}.
We prove the execution result for the serial order $\lambda$. It also
applies to parallel executions with a certified linearization preserving
observations, records, state effects, acceptance, and charged cost
(\cref{app:theory-protocol}). A DAG or disjoint output filenames alone do
not establish this property.

The allocation vector
$\mathbf b=(b_H,(b_v)_{v\in V},(b_e)_{e\in E})$ has nonnegative entries for
planning, node calls, and transfers. Resource feasibility means
\begin{equation}
    b_H+\sum_{v\in V}b_v+\sum_{e\in E}b_e\le B.
    \label{eq:composition-budget}
\end{equation}
The execution protocol attempts $\lambda_1,\ldots,\lambda_{K_{\rm op}}$ in order. Write
$(\Sigma_j,L_j)$ for the random state and ledger after operation $j$, and $C_j$
for its cost. $(\Sigma_0,L_0)$ is the post-planning entry.
An operation succeeds if it finishes, is accepted, satisfies $D_j$, and
respects $b_{\lambda_j}$. Records after abandonment are padded with $\bot$.
Node costs include assembly, verification, continuations, and assigned
control work. The last operation also includes serialization and delivery.
Its success means that $\operatorname{rec}_\pi(L_{K_{\rm op}},\Sigma_{K_{\rm op}})$ has been delivered. After all
$K_{\rm op}$ successes the system terminates with
\[
    Y_{S_H}=\operatorname{rec}_\pi(L_{K_{\rm op}},\Sigma_{K_{\rm op}}),\qquad
    C_{S_H}=C_H+\sum_{j=1}^{K_{\rm op}} C_j,
\]
where $C_H$ is the incurred planning cost. Final delivery is charged to the
last operation, and a delivery failure counts as failure of that operation.

\begin{lemma}[Deterministic composition soundness]
\label{lem:composition-soundness}
Starting from $\mathcal I$, success of the first $j$ operations of a
compatible plan establishes $I_j$ and, for $j<K_{\rm op}$, enables $A_{j+1}$.
If every operation succeeds, planning costs at most $b_H$, and
\cref{eq:composition-budget} holds, the protocol delivers a correct record
within $B$ and terminates.
\end{lemma}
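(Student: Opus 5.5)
We prove the first claim by induction on $j$, using the Hoare-style implications of \cref{def:compatible-plan}.

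\emph{Base case.} For $j=0$ the post-planning entry $(\Sigma_0,L_0)$ lies in $\mathcal I$, so $I_0(\Sigma_0,L_0)$ holds by compatibility. Then \cref{eq:contract-enable} with $j=1$ gives $A_1(\Sigma_0,L_0)$.

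\emph{Inductive step.} Suppose the first $j-1$ operations succeed and $I_{j-1}(\Sigma_{j-1},L_{j-1})$ holds. Success of operation $j$ means, by definition, that it finishes, is accepted, respects $b_{\lambda_j}$, and satisfies $D_j(\Sigma_{j-1},L_{j-1},\Sigma_j,L_j)$. The implications of \cref{def:compatible-plan} are quantified over well-typed transitions, so we must check that this transition qualifies. Success records a non-padded fresh record of the declared type. The typing predicates in $D_j$ are false on ill-typed or $\bot$-padded records, and $D_j$ preserves earlier ledger records. Hence the successful transition is well-typed. Applying \cref{eq:contract-preserve} gives $I_j(\Sigma_j,L_j)$. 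For $j<K_{\rm op}$, applying \cref{eq:contract-enable} at index $j+1$ gives $A_{j+1}(\Sigma_j,L_j)$.

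This well-typedness bridge between the stochastic execution and the deterministic predicates is the main obstacle. I would handle it by appealing to the exact predicates and ledger-extension behavior in \cref{app:theory-protocol}. If those predicates are stated only on well-typed arguments, the extension-to-false convention for padded arguments supplies the missing case.

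\emph{Correctness of the output.} Once all $K_{\rm op}$ operations succeed, the induction yields $I_{K_{\rm op}}(\Sigma_{K_{\rm op}},L_{K_{\rm op}})$. Then \cref{eq:contract-terminal} gives
\[
\mathsf V_t\bigl(\operatorname{ref}(\Sigma_{K_{\rm op}}),\operatorname{rec}_\pi(L_{K_{\rm op}},\Sigma_{K_{\rm op}})\bigr)=1.
\]
Along an actual trace $\operatorname{ref}(\Sigma_{K_{\rm op}})=\omega$, because the reference projection is read-only and equals the sampled condition. Success of the last operation includes delivery of $\operatorname{rec}_\pi(L_{K_{\rm op}},\Sigma_{K_{\rm op}})$. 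The protocol then terminates with $Y_{S_H}$ equal to this record, so $\mathsf V_t(\omega,Y_{S_H})=1$.

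\emph{Cost bound.} Each successful operation respects its allocation, so $C_j\le b_{\lambda_j}$. Planning satisfies $C_H\le b_H$ by hypothesis. Since $\lambda$ is a bijection onto $V\sqcup E$, each node and edge budget is counted exactly once:
\[
C_{S_H}=C_H+\sum_{j=1}^{K_{\rm op}}C_j\le b_H+\sum_{v\in V}b_v+\sum_{e\in E}b_e\le B
\]
by \cref{eq:composition-budget}. The accounting convention charges assembly, verification, and delivery to nodes, and transport to edges. This ensures no cost falls outside the sum.

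\emph{Termination.} There are only finitely many operations, $K_{\rm op}<\infty$, and each one finishes. The protocol therefore terminates, and $Y_{S_H}\ne\bot$.
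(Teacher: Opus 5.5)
Your proof is correct and follows the paper's route. You induct on the successful prefix using the enabling and preservation implications of the compatibility definition, apply the terminal implication for correctness, and sum the successful operations' costs with the planning cost against the budget inequality. Your added points make explicit what the paper's short proof leaves implicit: successful transitions are well-typed, the read-only reference projection equals $\omega$, and the bijectivity of $\lambda$ counts each allocation once.
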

\begin{proof}
$I_0$ holds initially. Given $I_{j-1}$, \cref{eq:contract-enable} establishes
the next operation's precondition. Its successful transition satisfies
$D_j$, so \cref{eq:contract-preserve} establishes $I_j$ without assuming any
later success. Equation~\eqref{eq:contract-enable} then gives $A_{j+1}$
when $j<K_{\rm op}$. At $j=K_{\rm op}$, \cref{eq:contract-terminal} proves correctness.
The terminal protocol delivers that record and stops. Summing the
successful operations' costs and planning cost proves the budget claim.
\end{proof}

In the research--code--design example, the ledger preserves the scoped
claim, evidence, implementation, and test outcomes. An invariant before
design records which claims were validated. The design contract
must preserve that distinction in the final presentation. Passing a file path alone does not establish the enabling or terminal
implications.

\subsection{Reliability of Composed Execution}
\label{sec:theory-reliability}

The soundness result assumes that each operation succeeds. To account for
execution failures, we define local capability through contract realization
in a specified invocation context. A provider $\kappa$ is a worker call or handoff implementation,
including its assembly, control, and acceptance procedures. It has a
conditional outcome kernel $\mathsf K_\kappa(o\mid c_{\rm cfg})$ on encoded configurations
$c_{\rm cfg}\in\mathcal Z_\kappa$ and outcomes $o\in\mathcal O_\kappa$.
Configurations contain the entry state, ledger, input, and relevant prior
or latent information. Outcomes contain the resulting state and ledger,
acceptance flag, and cost, or $\bot$ for noncompletion. The kernel may
depend on full history through $c_{\rm cfg}$, without assuming memoryless agents.
An entry law $\nu$ is a distribution over $\mathcal Z_\kappa$.
Define the local experiment by
\begin{equation}
 \Pr_{\nu,\kappa}(\operatorname{ok}_{D,b})
   :=\sum_{c_{\rm cfg},o}\nu(c_{\rm cfg})\mathsf K_\kappa(o\mid c_{\rm cfg})
                  \operatorname{ok}_{D,b}(c_{\rm cfg},o).
 \label{eq:local-experiment}
\end{equation}
Here $\operatorname{ok}_{D,b}(c_{\rm cfg},o)$ equals one exactly for accepted completion
in finite time satisfying transition predicate $D$ at cost at most $b$,
and equals zero otherwise. Analytical information in $c_{\rm cfg}$ does not expand an
executor's permitted observations.

\begin{definition}[Contract realization]\label{def:contract-realization}
For a nonempty family $\mathfrak L$ of entry laws supported on entry
predicate $A$, write $\kappa\models_{\mathfrak L}(A,D;b,\epsilon)$,
with $b\ge0$ and $\epsilon\in[0,1]$, if
\begin{equation}
    \inf_{\nu\in\mathfrak L}
       \Pr_{\nu,\kappa}(\operatorname{ok}_{D,b})
       \ge1-\epsilon .
    \label{eq:local-realization}
\end{equation}
\end{definition}

Applying this definition to a workflow requires the local experiment to
match the actual conditional execution, as established below. An isolated
benchmark average does not establish that relationship.

To specify these contexts, fix $t,B,H,\mathcal A$ and abbreviate
$\mathbb P=\mathbb P_{t,S_H,B}$. Let $Z$ be the finite planning transcript,
including observations and incurred cost, and let $\Pi$ be the selected
annotated plan. If planning is abandoned without an execution plan, set $\Pi=\bot$.
Nonterminating planning also gives $Z=\bot$. A direct fallback is instead
represented as a one-node plan for $a_0$, with earlier planning costs retained.
Both variables have countable ranges. For a positive-probability pair
$(Z,\Pi)=(\zeta,\pi)$, $d_\zeta$ is determined by the transcript and
$\mathcal I_{\zeta,\pi}$ is the support of the conditional entry
$(\Sigma_0,L_0)$. A pair is \emph{valid} if its planning cost is at most $b_H$,
the plan satisfies \cref{eq:composition-budget}, its protocol has the
specified serial or certified parallel semantics, and it is compatible
with $\mathcal I_{\zeta,\pi}$. The compatibility premise must hold for every
entry in this conditional support.
The valid-planning event
\begin{equation}
    \mathsf G_t
      =\{(Z,\Pi)\text{ is a valid planning pair}\}
    \label{eq:good-planning-event}
\end{equation}
is $(Z,\Pi)$-measurable, that is, determined by the planning variables. Validity is an analytical property of a
planning pair and need not be decidable by the runtime.

For a fixed valid pair, let $\Xi_j,O_j$ be operation $\lambda_j$'s actual
configuration and outcome. Its provider, law family, allocation, and error
bound are $(\kappa_j,\mathfrak L_j,b_{\lambda_j},\epsilon_j)$. Node and edge
subscripts refer to the same objects through $j=\lambda^{-1}(v)$ or
$j=\lambda^{-1}(e)$. Let $R_{j-1}$ be the host-visible execution record:
messages, returned artifacts, statuses, and observed costs, excluding
hidden reference truth. For parallel execution it is the corresponding
prefix of the certified serialization, not necessarily a wall-clock prefix.
Define the success events
\[
    \mathsf{Ok}_j=\{\operatorname{ok}_{D_j,b_{\lambda_j}}(\Xi_j,O_j)=1\},\qquad
    F_0=\Omega_{\rm tr},\qquad F_j=\bigcap_{k=1}^{j}\mathsf{Ok}_k .
\]
A skipped or nonterminating operation does not satisfy $\mathsf{Ok}_j$. The reliability
premise for each positive-probability successful prefix is
\begin{equation}
    \mathbb P(\mathsf{Ok}_j^c\mid F_{j-1},Z=\zeta,\Pi=\pi)\le\epsilon_j .
    \label{eq:conditional-local-error}
\end{equation}
$\epsilon_j\in[0,1]$ may depend on $t,B,\zeta,\pi$. Those arguments are
suppressed in the local bounds.

We call the selected plan \emph{locally covered} when each provider realizes
its contract and, conditional on each
$(R_{j-1}=\varrho,F_{j-1},Z=\zeta,\Pi=\pi)$ with record value $\varrho$, its entry law belongs
to $\mathfrak L_j$ and its outcome kernel is $\mathsf K_{\kappa_j}$.
The kernel-consistency condition and averaging proof are stated in
\cref{lem:local-to-workflow}. That lemma derives
\cref{eq:conditional-local-error} from local coverage.
The condition retains $F_{j-1}$ because visible records do not necessarily
reveal objective correctness. It permits correlated errors through the
conditional entry laws and kernels.

All conditional quantities below refer to the same actual system law:
\begin{equation}
    \mathsf S_t=\{\mathsf V_t(\omega,Y_{S_H})=1,\ C_{S_H}\le B\},
    \qquad
    q_t(\zeta,\pi)
       =\mathbb P(\mathsf S_t\mid Z=\zeta,\Pi=\pi).
    \label{eq:selected-plan-success}
\end{equation}
Thus $q_t$ accounts for incurred planning cost and the input distribution
selected by the host, rather than a fresh standalone execution of $\pi$.

\begin{theorem}[Reliability of harness composition]
\label{thm:composition-reliability}
For a valid planning pair $(\zeta,\pi)$ satisfying
\cref{eq:conditional-local-error}, define
\begin{equation}
    \epsilon(\zeta,\pi)
      =\sum_{j=1}^{K_{\rm op}}\epsilon_j
      =\sum_{v\in V}\epsilon_v+\sum_{e\in E}\epsilon_e.
    \label{eq:plan-error-budget}
\end{equation}
Then the conditional success probability satisfies
\begin{equation}
    q_t(\zeta,\pi)\ge[1-\epsilon(\zeta,\pi)]_+,
    \qquad [x]_+=\max\{x,0\}.
    \label{eq:composition-reliability}
\end{equation}
\end{theorem}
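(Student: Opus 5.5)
The plan is to condition throughout on the fixed valid pair $(Z,\Pi)=(\zeta,\pi)$, write $\mathbb P_{\zeta,\pi}(\cdot)=\mathbb P(\cdot\mid Z=\zeta,\Pi=\pi)$, and show $F_{K_{\rm op}}\subseteq\mathsf S_t$ up to a $\mathbb P_{\zeta,\pi}$-null set. The result then follows from a chain-rule union bound on the failure events, $\mathbb P_{\zeta,\pi}(F_{K_{\rm op}}^c)\le\sum_j\epsilon_j$. Since $q_t(\zeta,\pi)\ge\mathbb P_{\zeta,\pi}(F_{K_{\rm op}})\ge 1-\epsilon(\zeta,\pi)$ and probabilities are nonnegative, we get $q_t\ge[1-\epsilon]_+$.

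\textbf{Step 1: inclusion via soundness.} Because the pair is valid, the plan is compatible with $\mathcal I_{\zeta,\pi}$, which is the conditional support of $(\Sigma_0,L_0)$. So almost surely under $\mathbb P_{\zeta,\pi}$ the entry lies in the admitted set and $I_0$ holds. On $F_{K_{\rm op}}$ every operation is accepted, finishes, satisfies $D_j$, and costs at most $b_{\lambda_j}$. Validity also gives planning cost at most $b_H$ and \cref{eq:composition-budget}. \Cref{lem:composition-soundness} then yields that the delivered record $\operatorname{rec}_\pi(L_{K_{\rm op}},\Sigma_{K_{\rm op}})$ satisfies $\mathsf V_t(\operatorname{ref}(\Sigma_{K_{\rm op}}),\cdot)=1$, with total cost $C_H+\sum_j C_j\le B$. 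The remaining check is that $\operatorname{ref}(\Sigma_{K_{\rm op}})=\omega$ along actual traces, which holds because the reference projection is read-only. Together these give $F_{K_{\rm op}}\subseteq\mathsf S_t$. For certified parallel executions, I would invoke the linearization property from \cref{app:theory-protocol}, which transfers records, acceptance, and cost to the serial order.

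\textbf{Step 2: telescoping bound.} Since $F_j=F_{j-1}\cap\mathsf{Ok}_j$, write $F_{K_{\rm op}}^c=\bigsqcup_{j}(F_{j-1}\cap\mathsf{Ok}_j^c)$. Then
\[
\mathbb P_{\zeta,\pi}(F_{K_{\rm op}}^c)=\sum_{j}\mathbb P_{\zeta,\pi}(F_{j-1})\,\mathbb P(\mathsf{Ok}_j^c\mid F_{j-1},Z=\zeta,\Pi=\pi)\le\sum_j\epsilon_j .
\]
The inequality uses \cref{eq:conditional-local-error} together with $\mathbb P_{\zeta,\pi}(F_{j-1})\le1$. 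Terms with $\mathbb P_{\zeta,\pi}(F_{j-1})=0$ contribute zero, so the premise is needed only for positive-probability prefixes, which is exactly how it is stated. Finally, re-index the sum over $V\sqcup E$ via the bijection $\lambda$ to obtain the second expression in \cref{eq:plan-error-budget}.

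\textbf{Main obstacle.} The delicate point is Step 1's measure-theoretic bookkeeping rather than the union bound. Compatibility must apply to every realized entry, which is why the support condition is phrased through $\mathcal I_{\zeta,\pi}$. The success events must also be defined so that soundness applies pathwise. In particular, $\mathsf{Ok}_j$ must entail the well-typedness of the transitions to which the implications \cref{eq:contract-enable}--\cref{eq:contract-terminal} refer, and padded $\bot$ records must make $D_j$ false, so that no failing trace is counted in $F_{K_{\rm op}}$. Final delivery must be charged to operation $K_{\rm op}$, so that its success means the record was actually delivered and the run terminated. I would verify these points directly from the protocol conventions stated before \cref{lem:composition-soundness}.
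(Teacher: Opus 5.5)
Your proposal is correct and follows essentially the same route as the paper. Soundness (\cref{lem:composition-soundness}) gives $F_{K_{\rm op}}\subseteq\mathsf S_t$ under the conditional law, and the disjoint decomposition $F_{K_{\rm op}}^c=\bigsqcup_j(F_{j-1}\cap\mathsf{Ok}_j^c)$, combined with the chain rule and \cref{eq:conditional-local-error}, bounds the failure probability by $\epsilon(\zeta,\pi)$; complementation and nonnegativity then finish the argument. Your extra bookkeeping in Step 1 (the entry lying in $\mathcal I_{\zeta,\pi}$, the read-only reference, and final delivery being charged to the last operation) only makes explicit what the paper folds into its single appeal to the soundness lemma.
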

\begin{proof}
By \cref{lem:composition-soundness}, $F_{K_{\rm op}}$ implies $\mathsf S_t$ under the
conditional law. The disjoint events $F_{j-1}\cap \mathsf{Ok}_j^c$ partition $F_{K_{\rm op}}^c$.
For each positive-probability prefix,
\begin{align*}
 &\mathbb P(F_{j-1}\cap \mathsf{Ok}_j^c\mid Z=\zeta,\Pi=\pi)\\
 &\quad=\mathbb P(F_{j-1}\mid Z=\zeta,\Pi=\pi)
   \mathbb P(\mathsf{Ok}_j^c\mid F_{j-1},Z=\zeta,\Pi=\pi)
   \le\epsilon_j .
\end{align*}
Zero-probability prefixes contribute zero. Summing gives
$\mathbb P(F_{K_{\rm op}}^c\mid Z=\zeta,\Pi=\pi)\le\epsilon(\zeta,\pi)$.
Taking the complement and using nonnegativity proves the bound. No
independence assumption is required.
\end{proof}

The preceding bound conditions on a valid planning pair. We next account
for the host's probability of selecting such a pair.

\begin{proposition}[Host-level reliability]
\label{prop:host-reliability}
Suppose $\mathbb P(\mathsf G_t)\ge1-\eta_H(t;B)$, where
$\eta_H(t;B)\in[0,1]$. On every positive-probability valid planning pair,
assume the local premises of \cref{thm:composition-reliability} and the
uniform envelope
\begin{equation}
    \min\{1,\epsilon(\zeta,\pi)\}\le\bar\epsilon(t;B)\le1.
    \label{eq:host-assumptions}
\end{equation}
Clipping limits the accumulated error bound to the range of a probability.
Then
\begin{equation}
    p_{S_H}(t;B)
      \ge(1-\eta_H(t;B))(1-\bar\epsilon(t;B)).
    \label{eq:host-reliability}
\end{equation}
\end{proposition}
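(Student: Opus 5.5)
The argument conditions on the planning variables and uses the law of total probability. Since $p_{S_H}(t;B)=\mathbb P(\mathsf S_t)$ by \cref{eq:budgeted-success} and \cref{eq:selected-plan-success}, and $\mathsf S_t$ contains $\mathsf S_t\cap\mathsf G_t$, the plan is to bound
\[
\mathbb P(\mathsf S_t)\ \ge\ \mathbb P(\mathsf S_t\cap\mathsf G_t)
=\sum_{(\zeta,\pi)\ \text{valid},\ \mathbb P(Z=\zeta,\Pi=\pi)>0}
\mathbb P(Z=\zeta,\Pi=\pi)\,q_t(\zeta,\pi).
\]
This decomposition is legitimate because $Z$ and $\Pi$ have countable ranges and $\mathsf G_t$ is $(Z,\Pi)$-measurable, so $\mathsf G_t$ is the disjoint countable union of the events $\{Z=\zeta,\Pi=\pi\}$ over valid pairs. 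Pairs with $Z=\bot$ or $\Pi=\bot$ are never valid, so they are excluded automatically.

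Next, I would bound each summand uniformly. On every positive-probability valid pair, the local premises hold, so \cref{thm:composition-reliability} gives $q_t(\zeta,\pi)\ge[1-\epsilon(\zeta,\pi)]_+$. I then need the elementary identity $[1-x]_+=1-\min\{1,x\}$ for $x\ge0$. Combining it with the envelope \cref{eq:host-assumptions} yields $q_t(\zeta,\pi)\ge1-\bar\epsilon(t;B)$, and this right-hand side is nonnegative because $\bar\epsilon\le1$. The clipping matters here: $\epsilon(\zeta,\pi)$ may exceed one, and the envelope only controls its clipped version.

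Summing the uniform bound over valid pairs gives $\mathbb P(\mathsf S_t)\ge(1-\bar\epsilon)\,\mathbb P(\mathsf G_t)$. Finally, I use $\mathbb P(\mathsf G_t)\ge1-\eta_H$ and multiply by the nonnegative factor $1-\bar\epsilon$, which preserves the inequality. This yields \cref{eq:host-reliability}.

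The main obstacle is bookkeeping rather than estimation. I must check that $q_t$ in \cref{thm:composition-reliability} is the conditional probability under the same actual law $\mathbb P=\mathbb P_{t,S_H,B}$ that defines $p_{S_H}$, which \cref{eq:selected-plan-success} asserts. I must also check that conditioning on zero-probability pairs never arises, so the sum involves only well-defined conditional probabilities. Once these measurability and countability points are stated, the inequality chain is routine.
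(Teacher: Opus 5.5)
Your proposal is correct and matches the paper's proof: both bound $p_{S_H}(t;B)$ below by $\mathbb P(\mathsf S_t\cap\mathsf G_t)=\mathbb E[\mathbb I[\mathsf G_t]q_t(Z,\Pi)]$, apply \cref{thm:composition-reliability} uniformly over valid pairs, and finish with $\mathbb P(\mathsf G_t)\ge1-\eta_H(t;B)$. Your countable sum over valid pairs is that conditional expectation written out, and your identity $[1-x]_+=1-\min\{1,x\}$ makes explicit the clipping step the paper leaves implicit.
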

\begin{proof}
For positive-probability planning pairs use $q_t$ from
\cref{eq:selected-plan-success}. A pair with $\Pi=\bot$ has zero success.
Completed but invalid plans retain their actual conditional success
probability. Since $\mathsf G_t$ is $(Z,\Pi)$-measurable, conditional expectation gives
\begin{align*}
    p_{S_H}(t;B)
       &\ge\mathbb P(\mathsf S_t\cap\mathsf G_t)
        =\mathbb E[\mathbb I[\mathsf G_t]q_t(Z,\Pi)]\\
       &\ge(1-\bar\epsilon(t;B))\mathbb P(\mathsf G_t)
        \ge(1-\bar\epsilon(t;B))(1-\eta_H(t;B)).
\end{align*}
The second inequality uses \cref{thm:composition-reliability} uniformly over
valid pairs. It makes no independence claim about planning and execution.
\end{proof}

The bound requires a compatible plan, a host that can find such a plan
within its budget, and providers that realize their contracts in the
selected contexts. Additional nodes may reduce subtask difficulty while
increasing handoff cost and error. Internal continuations are included in a node's
realization. Recovery on failed paths preserves this lower bound if the
first plan's all-success path still delivers and terminates within budget.
Quantifying additional recovery gains requires accounting for the later
plans and reached states.

\subsection{Capabilities Beyond Individual Agents}
\label{sec:theory-expansion}

To examine capability gains from reliable composition, we first define
local complementarity before selecting executors.
A \emph{decomposition} fixes $(G,\Phi,\Gamma,\lambda,\mathbf b)$, omitting
$\sigma$, together with the node law families and error allocations.
Let $\kappa_{v,i}$ be the bounded invocation of $a_i$
at node $v$, including its prescribed input bindings and acceptance procedure.
Providers compared at this node use a common encoded configuration space
$\mathcal Z_v$ for the full state, ledger, input, and invocation
bindings. Each provider retains its own permitted-observation projection
and outcome kernel. Thus $\mathfrak L_v$ is a family of laws on the same
domain for every candidate provider. Define the eligible realizers by
\begin{equation}
    \mathcal R_v=
       \{a_i\in\mathcal A:
         \kappa_{v,i}\models_{\mathfrak L_v}
             (A_v,D_v;b_v,\epsilon_v)\}.
    \label{eq:eligible-realizers}
\end{equation}
Here $(A_v,D_v)$ are the operation predicates for node $v$, as indexed
earlier by its position $j$ in $\lambda$.

\begin{definition}[Decomposition-level complementarity]
\label{def:complementarity}
The decomposition has complementary local capabilities if
\begin{equation}
    \mathcal R_v\ne\varnothing\quad\text{for every }v\in V,
    \qquad
    \bigcap_{v\in V}\mathcal R_v=\varnothing.
    \label{eq:local-complementarity}
\end{equation}
\end{definition}

The first condition ensures that each node has an eligible executor. The
second excludes assigning the same eligible executor to every node under
the specified allocations.
\Cref{prop:eligible-assignment} connects eligible assignments to the
reliability theorem, provided transfer contracts, compatibility, and actual
invocation-law consistency also hold. However, a different single-agent strategy
may still solve the complete task. Hence an independent upper bound on
individual end-to-end performance is needed to establish nonempty
\emph{new-task coverage},
\[
    \mathcal C^{\rm new}(B,\delta)
      :=\mathcal C_{S_H}(B,\delta)
          \setminus\bigcup_{a\in\mathcal A}\mathcal C_a(B,\delta).
\]

\begin{samepage}
\begin{corollary}[Capability beyond individual agents]
\label{cor:capability-expansion}
For fixed $B,\delta$, suppose a task $t$ satisfies the hypotheses of
\cref{prop:host-reliability} and
\begin{equation}
    (1-\eta_H(t;B))(1-\bar\epsilon(t;B))
       \ge1-\delta>
       \max_{a\in\mathcal A}p_a(t;B).
    \label{eq:strict-expansion-condition}
\end{equation}
Then
\begin{equation}
    t\in\mathcal C^{\rm new}(B,\delta).
    \label{eq:strict-expansion}
\end{equation}
\end{corollary}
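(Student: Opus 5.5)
The argument is a direct combination of \cref{prop:host-reliability} with the definitions of the capability sets, so I would prove the two halves of the set difference separately. For membership in $\mathcal C_{S_H}(B,\delta)$, the hypotheses of \cref{prop:host-reliability} hold for $t$, so \cref{eq:host-reliability} gives $p_{S_H}(t;B)\ge(1-\eta_H(t;B))(1-\bar\epsilon(t;B))$. The left inequality in \cref{eq:strict-expansion-condition} then gives $p_{S_H}(t;B)\ge1-\delta$, and by \cref{eq:capability-set} this means $t\in\mathcal C_{S_H}(B,\delta)$.

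For exclusion from every individual capability set, fix any $a\in\mathcal A$. The right inequality in \cref{eq:strict-expansion-condition} gives $p_a(t;B)\le\max_{a'\in\mathcal A}p_{a'}(t;B)<1-\delta$. Hence $t\notin\mathcal C_a(B,\delta)$ by \cref{eq:capability-set}. Since $a$ was arbitrary, $t\notin\bigcup_{a\in\mathcal A}\mathcal C_a(B,\delta)$. Combining the two parts yields $t\in\mathcal C^{\rm new}(B,\delta)$, which is \cref{eq:strict-expansion}.

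I expect no substantive obstacle; the step needing care is only bookkeeping. First, the maximum over $\mathcal A$ is attained because $\mathcal A$ is finite by \cref{eq:harnessed-agent}, so the strict inequality transfers to each agent. Second, each $p_a(t;B)$ must be the standalone comparator probability under $\mathbb P_{t,a,B}$, with the same $t$, $B$, and cost convention used for $S_H$. This ensures that both capability sets are defined by \cref{eq:budgeted-success} under one shared budget, so the comparison is legitimate. The real difficulty, establishing the two inequalities in \cref{eq:strict-expansion-condition} for a concrete task, lies outside this proof: the left one comes from complementarity and reliability premises, and the right one requires an independent upper bound on individual agents, as noted before the corollary.
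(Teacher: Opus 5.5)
Your proof is correct and matches the paper's argument: \cref{prop:host-reliability} with the left inequality of \cref{eq:strict-expansion-condition} puts $t$ in $\mathcal C_{S_H}(B,\delta)$, and the strict right inequality excludes $t$ from every $\mathcal C_a(B,\delta)$ by \cref{eq:capability-set}. Your remarks on the finiteness of $\mathcal A$ and the shared budget convention are accurate bookkeeping that the paper leaves implicit.
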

\begin{proof}
\Cref{prop:host-reliability} and the first inequality place $t$ in the system's
capability set. The strict second inequality excludes it from every
individual capability set by \cref{eq:capability-set}.
\end{proof}
\end{samepage}

\paragraph{An Explicit Complementary Construction.}
Let $U,W$ be independent fair bits. In the task
$t=(r_t,\mu_t,\mathsf E_t,\mathsf V_t)$, $\mu_t$ is their uniform law,
$r_t$ asks for their parity without revealing them, $\mathsf E_t$ answers
permitted bit queries deterministically, and $\mathsf V_t$ checks that the returned
bit equals $U\oplus W$. Take $\mathcal A=\{a_0,a_1,a_2\}$.
The native read interface of $a_1$ exposes only $U$, that of $a_2$ only $W$,
and $a_0$ has neither interface. The original request contains neither bit.
For individual executions, let all initial auxiliary information and
decision randomness be included in $\xi_i$, with
$(\xi_0,\xi_1,\xi_2)$ independent of $(U,W)$. The available transcripts are
\begin{equation}
    T_0=f_0(\xi_0),\qquad T_1=f_1(U,\xi_1),\qquad
    T_2=f_2(W,\xi_2).
    \label{eq:parity-information}
\end{equation}
Each returned answer $\widehat Y_i$ is a function of $T_i$. Any additional
decision randomness is already in $\xi_i$. Conditional on $(U,\xi_1)$,
$W$ remains fair, hence
\begin{equation}
    \mathbb P_{t,a_1,B}(\widehat Y_1=U\oplus W\mid U,\xi_1)\le\tfrac12.
    \label{eq:parity-single-bound}
\end{equation}
Equality holds for a bit-valued answer. An invalid answer or abort cannot
improve it. Averaging, and applying the symmetric argument to $a_2$ and the
no-input argument to $a_0$, bounds every $p_{a_i}(t;B)$ by $1/2$.

Specify the local policies constructively: on a read-and-report call,
$a_1$ returns $U$ and $a_2$ returns $W$ exactly. On a call supplied with two
bits, $a_0$ returns their parity exactly. These input-dependent policies are
fixed before evaluation. A three-node plan with two exact-copy handoffs
therefore satisfies the typed contracts with zero error. Its fixed host
schedule has zero planning error. Index the three nodes $1,2,0$ by their
executors. Choose finite allocations that bound their actual calls,
handoffs, final delivery, and planning cost. If
$b_H+b_1+b_2+b_0+b_{(1,0)}+b_{(2,0)}\le B$, then
$p_{S_H}(t;B)=1$, and \cref{cor:capability-expansion} applies for
$\delta<1/2$. Here $b_0$ is the synthesis-node allocation. Under zero-error read
contracts, the same missing-bit argument makes the read-$U$ and read-$W$
eligible sets disjoint, while the respective copying policies realize them.
Together with $a_0$ at synthesis, this also witnesses
\cref{def:complementarity}. The same initial input law and budget are used
for every comparator. The gain comes from
making complementary interface results available through costed messages.
A single agent equipped with both read interfaces could also solve the task.
This construction does not assert superiority over that different harness.

Although this result establishes \emph{new-task coverage}, it does not
establish strict set inclusion or an improvement in average performance. Retaining the whole union of
individual capability sets additionally requires feasible single-agent
execution paths and a selector that preserves their reliability. Empirical
tests must therefore include strong individual-agent baselines with
comparable models, tools, initial information, and total cost. The
research--code--design workflow is a candidate for such a comparison, not
evidence of a gain by itself.

\subsection{Reliable Task-Family Coverage and Implications for Raven}
\label{sec:theory-completeness}

Extending the task-level analysis to a task family requires specifying
that family independently of the system's successes.
\Cref{app:theory-family} defines a typed grammar of primitive, sequential,
and fork--join tasks, gives their relational semantics, and constructs
compatible finite DAGs by structural induction. Forking, joining, and
transferring artifacts are explicit costed operations. The construction
proves plan existence. Its compiler-based specialization
(\cref{cor:compiler-coverage}) translates correct input binding and
compilation into valid planning. The following general result also permits
other valid plans without requiring the host to reproduce the compiler.

\begin{samepage}
\begin{corollary}[Reliable task-family coverage]
\label{cor:relative-completeness}
Fix $\mathcal T_\star\subseteq\mathcal T$, $B\ge0$, and $0\le\delta<1$.
Suppose the hypotheses of \cref{prop:host-reliability} hold for every
$t\in\mathcal T_\star$, and, uniformly on this family,
\begin{equation}
    \eta_H(t;B)\le\eta_\star,\qquad
    \bar\epsilon(t;B)\le\epsilon_\star,\qquad
    \eta_\star+(1-\eta_\star)\epsilon_\star\le\delta,
    \qquad \eta_\star,\epsilon_\star\in[0,1].
    \label{eq:family-risk-budget}
\end{equation}
Then
\begin{equation}
    \mathcal T_\star\subseteq\mathcal C_{S_H}(B,\delta).
    \label{eq:relative-completeness}
\end{equation}
\end{corollary}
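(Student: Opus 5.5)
The argument works task by task: it reduces to \cref{prop:host-reliability} and then uses monotonicity of the resulting lower bound in its two risk parameters. Fix an arbitrary $t\in\mathcal T_\star$. The hypotheses of \cref{prop:host-reliability} hold for $t$ by assumption, so
\[
p_{S_H}(t;B)\ge(1-\eta_H(t;B))(1-\bar\epsilon(t;B)).
\]
Every quantity here lies in $[0,1]$: $\eta_H(t;B)\in[0,1]$ by the proposition's hypothesis, and $\bar\epsilon(t;B)\le1$ by \cref{eq:host-assumptions}, with nonnegativity because it dominates a clipped nonnegative error sum. Hence both factors $1-\eta_H$ and $1-\bar\epsilon$ are nonnegative.

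The map $(x,y)\mapsto(1-x)(1-y)$ is coordinatewise nonincreasing on $[0,1]^2$. The uniform bounds $\eta_H(t;B)\le\eta_\star$ and $\bar\epsilon(t;B)\le\epsilon_\star$ from \cref{eq:family-risk-budget} therefore give
\[
(1-\eta_H(t;B))(1-\bar\epsilon(t;B))\ge(1-\eta_\star)(1-\epsilon_\star).
\]
I would do this in two single-coordinate steps. Each step multiplies an inequality by a nonnegative factor, using $\eta_\star,\epsilon_\star\le1$ and $1-\eta_H,1-\bar\epsilon\ge0$.

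Next, expand the product:
\[
(1-\eta_\star)(1-\epsilon_\star)=1-\bigl[\eta_\star+(1-\eta_\star)\epsilon_\star\bigr].
\]
The last condition in \cref{eq:family-risk-budget} makes the right-hand side at least $1-\delta$. Chaining the inequalities yields $p_{S_H}(t;B)\ge1-\delta$. By \cref{eq:capability-set}, with $0\le\delta<1$, this gives $t\in\mathcal C_{S_H}(B,\delta)$. Since $t$ was arbitrary, \cref{eq:relative-completeness} follows.

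No step is a real obstacle. The only point that needs care is the monotonicity step, where I must check that all factors are nonnegative so that multiplying inequalities is legitimate. That check uses the range conditions $\eta_\star,\epsilon_\star\in[0,1]$ and $\bar\epsilon(t;B)\le1$, which is why those constraints appear in the statement. The argument also needs no uniformity beyond the stated envelope and no independence between tasks, since each task is handled separately by the proposition under its own law $\mathbb P_{t,S_H,B}$.
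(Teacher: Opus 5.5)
Your proposal is correct and matches the paper's proof: apply \cref{prop:host-reliability} to each $t\in\mathcal T_\star$, pass to the uniform bounds to get $(1-\eta_\star)(1-\epsilon_\star)=1-\eta_\star-(1-\eta_\star)\epsilon_\star\ge1-\delta$, and conclude by \cref{eq:capability-set}. The only difference is that you spell out the monotonicity step the paper leaves implicit; that step needs only $1-\eta_H\ge0$ and $1-\epsilon_\star\ge0$ (or, in the other order, $1-\bar\epsilon\ge0$ and $1-\eta_\star\ge0$), so your remark that $\bar\epsilon(t;B)\ge0$ is unnecessary.
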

\begin{proof}
For each task, \cref{prop:host-reliability} yields
$p_{S_H}(t;B)\ge(1-\eta_\star)(1-\epsilon_\star)
=1-\eta_\star-(1-\eta_\star)\epsilon_\star\ge1-\delta$.
The capability definition gives the inclusion.
\end{proof}
\end{samepage}

For a grammar with bounded expanded node and edge counts, the appendix
derives uniform resource and error envelopes. These supply explicit
sufficient conditions for the corollary without assuming that arbitrary
composition preserves a fixed budget or reliability level. Allowing
a finite budget $B_t$ chosen separately for each task can establish
$p_{S_H}(t;B_t)\ge1-\delta$ for every $t\in\mathcal T_\star$ without
providing one finite budget valid for the entire family.
The analysis also makes no computability distinction between an orchestrated
system and a single program that can simulate it with the same interfaces
and resources.

The theory thus gives \raven's components distinct analytical roles.
Agent Orchestration (\cref{sec:orchestration}) constructs and executes plans.
The built-in and connected agents (\cref{sec:agents}) provide candidate
contract realizers. EverOS Memory (\cref{sec:memory}) and Skill Forge
(\cref{sec:skillforge}) help construct context and configure skills.
Harness Self-Evolution (\cref{sec:evolver}) changes the policies being
compared. Their effects on capability, cost, and conditional
reliability require evaluation.

Prior routing results support considering collaboration modes, roles, and
model assignments jointly \citep{yue2025masrouter}. Controlled scaling
experiments also find that coordination benefits depend on task structure
and may be outweighed by overhead \citep{kim2026scaling}. Accordingly, the
graph agreement measures in \cref{sec:multi-agent-evaluation} should be
complemented by semantic contract checks, end-to-end outcomes, cost accounting,
and failure attribution. Individual-agent evaluations
(\cref{sec:evaluation}) characterize candidate providers, while matched
comparisons test new-task coverage. The formal results give sufficient conditions under stated assumptions.
Measured average success rates alone do not establish the conditional
reliability premises.

\section{Multi-Agent Collaboration}
\label{sec:collaboration}
\label{sec:orchestration}

Multi-agent frameworks such as AutoGen and MetaGPT organize collaborators
within a shared framework \citep{wu2023autogen,hong2023metagpt}. Role
prompts and configurations distinguish agents that use the framework's
message interface and control loop. This design supports division of
labor, while leaving the system builder responsible for configuring and
maintaining the roles within a common harness architecture. By contrast, \raven
composes heterogeneous harnesses. Its roster combines the
built-in specialists with independently developed agents, such as Claude
Code and Codex, that connect through execution adapters. Collaboration
therefore joins execution capabilities that differ in tools, context
handling, and control policy. These differences can provide the complementary local capabilities
formalized in \cref{def:complementarity}.

This heterogeneity introduces coordination requirements. Agents differ in their
input interfaces, local-file access, conversation continuity, and failure
reporting. When a host relays results through free-form messages,
restating those results consumes context and may omit details. Without
explicit artifact references, later outputs may also be difficult to trace
to their sources. \raven
therefore makes the collaboration structure explicit. The Host Agent
submits a typed dependency graph, the runtime checks it before any worker
starts, and every handoff passes through a recorded artifact.

Collaboration proceeds in five stages. First, the host loads orchestration guidance on demand and uses it to
compose a graph. Second,
the runtime admits the graph after structural, capability, and environment
checks. Third, the executor dispatches nodes as their dependencies settle,
judges each result, and returns exceptions and clarification requests to
the host. Fourth, a run report returns file locations and terminal outputs
to the host for synthesis. Fifth, later graphs and single delegations reuse
completed nodes by reference. \Cref{sec:collaboration-planning} describes
the first two stages (\cref{fig:collab-planning}).
\Cref{sec:orchestration-execution} describes the next two
(\cref{fig:collab-lifecycle}), and \cref{sec:collaboration-memory} describes
the artifact and memory flow that supports reuse (\cref{fig:collab-flow}).
\Cref{sec:collaboration-group-memory} describes shared group memory, an
optional layer in which the host assesses each agent after a round and
returns the assessment at later dispatches. These mechanisms implement the assignment, input assembly, and scheduling
components of the plan in \cref{eq:composition-plan}. The semantic contracts
in \cref{sec:theory} describe the conditions under which the resulting
execution is correct.

As a running example, consider a user who asks for a survey of recent
progress in AI for AI research, a reproduction of the leading algorithms in
one experimental environment, and a web page that reports the results.
Two research nodes can independently investigate the algorithms and their
benchmarks. A coding node needs both investigations, and a design node
needs the reproduction results. The corresponding plan is
\[
    \{v_{\mathrm{alg}},v_{\mathrm{bench}}\}
    \longrightarrow v_{\mathrm{code}}
    \longrightarrow v_{\mathrm{design}}.
\]
The same agent identity may execute several nodes, but each node is a
distinct invocation with its own inputs, artifacts, and outcome. This
example illustrates both immediate artifact dependencies and the retention
of experience for later workflows.

\subsection{Task Decomposition and Agent Orchestration}
\label{sec:collaboration-planning}

\begin{figure}[t]
    \centering
    \includegraphics[width=\linewidth]{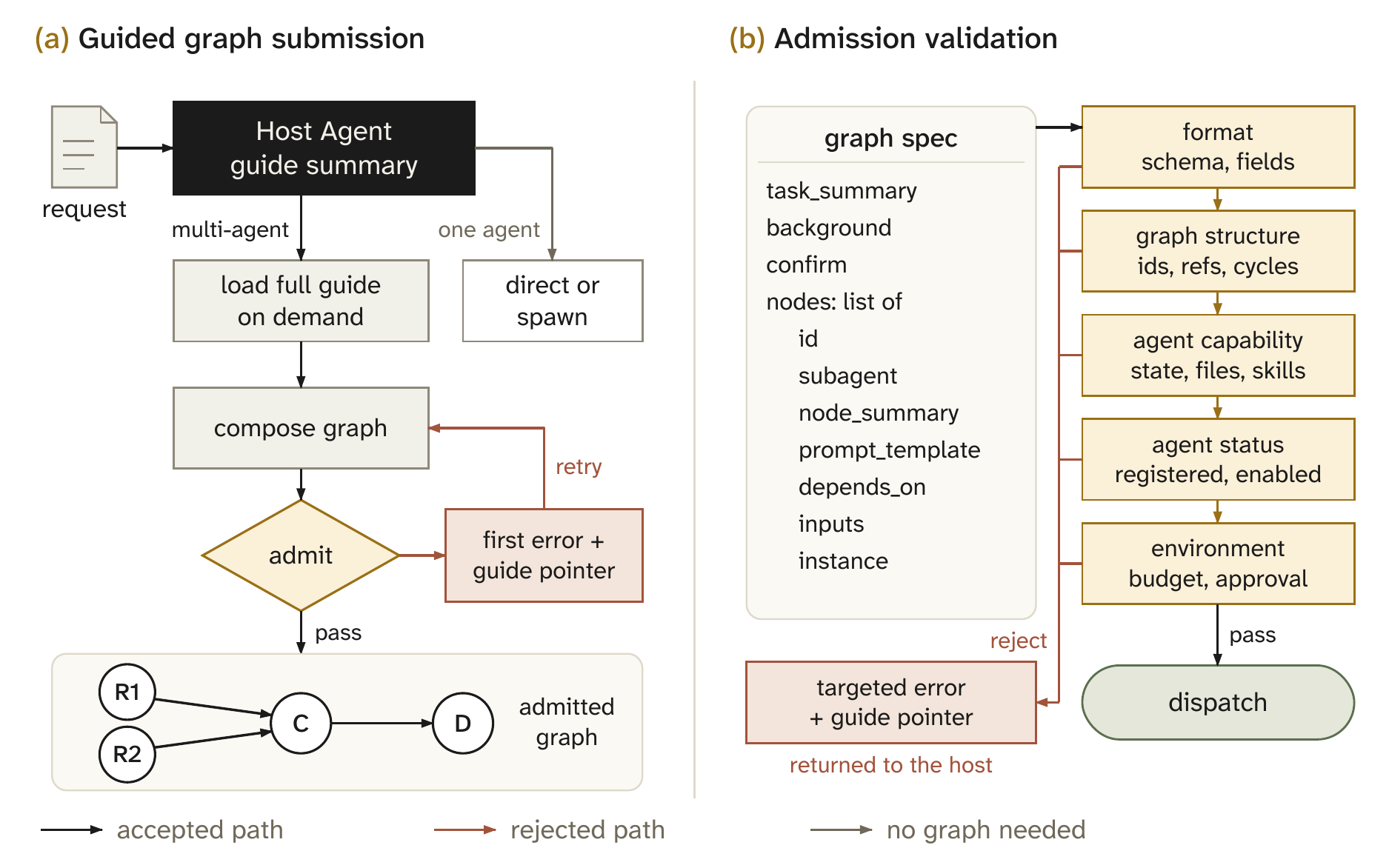}
    \caption{Graph planning and admission. (a) The host context carries
    only a summary of the orchestration guide. The host loads the full
    guide when a request needs several agents and delegates directly
    otherwise. A rejected graph returns its first error with a pointer back
    to the guide. In the admitted graph, R1 and R2 are the running example's
    research nodes $v_{\mathrm{alg}}$ and $v_{\mathrm{bench}}$, C is
    $v_{\mathrm{code}}$, and D is $v_{\mathrm{design}}$. (b) A submission is a flat node list with graph-level
    flags. Five groups of checks run in order before any worker is
    dispatched (\cref{tab:collab-admission}).}
    \label{fig:collab-planning}
    \label{fig:method-collaboration}
\end{figure}

\paragraph{A Heterogeneous Agent Registry.}\label{sec:agents}
An available agent is identified by a registered name and a capability
description. Its registry entry binds that name to one of four execution
backends: the in-process Raven agent loop, a command-line process, an Agent
Client Protocol (ACP) connection, or an OpenAI-compatible model API. The
entry also records three capabilities that constrain planning. A stateful
agent can continue an earlier conversation, a file-capable agent can read
local paths, and a progress-reporting agent streams its intermediate
activity. The default roster contains a general Raven agent and the four
specialists \ravenresearch, \ravencode, \ravendesign, and \ravenoncall.
Configuration presets connect third-party agents, including Claude Code,
Codex, OpenCode, Hermes Agent, and OpenClaw, through ACP. The Host
Agent receives the roster with these capability tags through its
orchestration interface. Its model selects among the entries using the
current context, without a separately trained routing classifier or assumed
oracle capability scores.

\paragraph{Staged Orchestration Guidance.}
A tool schema alone does not convey the rules an executable graph must
satisfy. The host must respect the placeholder syntax, the identifier rules,
and the capabilities of each backend. It must also decide whether a graph is
needed at all. Keeping the full rules in every context would incur an input cost even
on turns that do not require orchestration. \raven therefore discloses its orchestration guidance
in stages (\cref{fig:collab-planning}a). Every turn carries only a short
description of the orchestration guide. When the host decides that a
request requires several agents, the tool description instructs it to load
the full guide before composing a graph. The guide specifies node wiring,
the placeholder syntax, and the concurrency limit, and it states when a
single delegation is the better choice. A request that needs no graph incurs only the context cost of the short
description. Once loaded, the guide remains pinned in the session context
and available for subsequent graph construction. The runtime does not
block a submission when the guide is absent. Instead, every rejected graph
returns the validation error together with a pointer to the guide. This feedback directs the host to the relevant rules when validation fails.
The planning evaluation in \cref{sec:multi-agent-evaluation} measures the
graphs produced under this guidance.

\paragraph{Graph Construction.}
The host constructs the graph by working backward from the requested
deliverable. Each node must have an identifiable output and an executor
able to produce it. The host then adds a dependency whenever producing
one node's input requires another node's output. In the running example,
the two research assignments ask different questions and can start
independently. The coding assignment states how their findings select the
algorithms to reproduce and what executable result is required. The design
assignment names the expected page and the artifacts that support its
claims. A request that requires only one specialist uses the single-node
spawn interface, which shares the backend abstraction and the session's
node namespace.

\paragraph{Executable Node Specification.}
Let \(G_{\rm rt}=(V_{\rm rt},E_{\rm rt})\) be the submitted DAG. We identify
each node with its session-unique identifier, so \(V_{\rm rt}\) and
\(\operatorname{Dep}(v)\) use the same identifier universe. The record
\(n_v\) stored under identifier \(v\) contains the fields
\begin{equation}
    \begin{aligned}
    n_v&=(\mathrm{id}_v,\sigma(v),\mathsf{Tpl}_v,\operatorname{Dep}(v),
          \operatorname{Bind}_v,\mathrm{inst}_v,\operatorname{Cfg}_v),\\
    E_{\rm rt}&=\{(u,v):v\in V_{\rm rt},\ u\in \operatorname{Dep}(v)\cap V_{\rm rt}\}.
    \end{aligned}
    \label{eq:runtime-node}
\end{equation}
Here \(\mathrm{id}_v=v\) is the node identifier, and \(\sigma(v)\)
selects the registered executor, \(\mathsf{Tpl}_v\) is the prompt template, and
\(\operatorname{Dep}(v)\) lists dependencies. The input map \(\operatorname{Bind}_v\) binds template slots
to literal text, a file, or a previous node's output. An optional
instance handle \(\mathrm{inst}_v\) continues a stateful agent conversation.
Nodes that share a handle run sequentially in the same agent session.
The runtime also accepts optional capability overrides \(\operatorname{Cfg}_v\),
which specify the skills and MCP servers available for that invocation.
The model-facing schema does not advertise these overrides. Omitting an
override inherits the agent's configuration, while an empty list
explicitly requests none. Each node also carries a short summary for
progress reporting.
The exact interface fields are listed in \cref{app:method-contracts}.

A submission wraps the node list with a task summary and two flags. The
background flag is enabled by default. It returns control to the host as
soon as the run starts, and progress, exceptions, and the final report
arrive as later messages. The confirmation flag is disabled by default.
When enabled, it requests user approval for the listed nodes before
dispatch. If approval is denied, no node runs.

\paragraph{Prompt Construction and Artifact Binding.}
The task text is obtained by resolving the template against its
declared inputs and the available artifact ledger:
\begin{equation}
    x_v^{\rm task}=\operatorname{Render}
       \bigl(\mathsf{Tpl}_v,\operatorname{Bind}_v,(y_u)_{u\in \operatorname{Dep}(v)},d\bigr).
    \label{eq:runtime-render}
\end{equation}
The authorized request packet \(d\) includes the user's objective and
constraints, and \(y_u\) is the recorded output of node \(u\). The indexed family retains the node identifier associated
with each output, even when two nodes return identical content.
The host must carry the relevant parts into the node's
prompt because the executor does not have access to the host's private
conversation. A template inserts an upstream output with
\texttt{\{\{~id.output~\}\}} or passes its path with
\texttt{\{\{~id.output\_path~\}\}}. Declared inputs appear through
\texttt{\{\{~inputs.k~\}\}} or \texttt{\{\{~inputs.k.path~\}\}}, and files
through \texttt{\{\{~ref:path~\}\}} or \texttt{\{\{~ref\_path:path~\}\}}.
For example, \texttt{\{\{~research\_alg.output\_path~\}\}} gives a
file-capable coding agent access to the algorithm survey. Path references
avoid duplicating a long report in the initial prompt. An executor without
local-file access instead needs the corresponding content. Material too long
for a template, such as a detailed specification drafted by the host, is
written to a file and referenced in the same way. Substituted file and node contents are marked as untrusted data to
distinguish upstream artifacts from host instructions.

\paragraph{Admission Validation.}
The runtime admits a graph only after five groups of checks
(\cref{fig:collab-planning}b and \cref{tab:collab-admission}). Format checks
reject malformed arguments and unknown fields. Graph-structure checks
require unique identifiers, resolvable dependencies and inputs, confined
file paths, and acyclicity. Capability checks reject wiring that the
selected backend cannot honor, such as a shared instance on a stateless
agent or a path reference sent to an agent without local-file access.
Status checks require each named agent to be registered and enabled.
Environment checks refuse recursive delegation from inside a sub-agent run,
enforce a dispatch budget shared by graphs and single delegations, and
collect any requested user approval. Validation stops at the first failure
and returns a message that names the offending node or field. Several
messages also state the repair, for example replacing a path reference with
the content form for an agent without local-file access. The runner repeats
the checks while it reserves node identifiers, so two concurrent submissions
cannot claim the same identifier.

\begin{table}[t]
\centering
\small
\caption{Admission checks applied before dispatch, grouped as in
\cref{fig:collab-planning}b. Validation returns the first failure,
followed by a pointer to the orchestration guide.}
\label{tab:collab-admission}
\begin{tabularx}{\linewidth}{@{}l>{\raggedright\arraybackslash}X@{}}
\toprule
\textbf{Group} & \textbf{Checks} \\
\midrule
Format & Arguments parse as complete JSON. Fields have the required types
and values. No unknown field is present. Identifiers and instance handles
match their patterns. Required node fields are nonblank. \\
\addlinespace[2pt]
Graph structure & Identifiers are unique within the graph and unused in
the session. Dependencies name a node of the graph or a completed earlier
node with output. Every output placeholder is backed by a dependency.
Input values are well formed, and declared and referenced inputs agree.
Path forms apply only to file or node inputs. File and reference paths stay
within the permitted roots. The graph is acyclic. \\
\addlinespace[2pt]
Agent capability & Shared instances use stateful agents. Skill overrides
appear only at the head of an instance chain. Path forms are not sent to
agents without local-file access. \\
\addlinespace[2pt]
Agent status & Each named agent is registered and enabled. \\
\addlinespace[2pt]
Environment & The call does not come from inside a sub-agent run.
Delegation is not paused, and the dispatch budget is available. The user
approves the graph when confirmation is requested. \\
\bottomrule
\end{tabularx}
\end{table}

Admission establishes that the graph can be interpreted consistently. Because validation precedes dispatch, a rejected graph consumes a host
turn without starting a partial worker execution. However, semantic validation must additionally
establish whether the research assignments address the user's question and
whether the coding task has an adequate completion criterion.

\subsection{Host-Mediated Graph Execution}
\label{sec:orchestration-execution}

\begin{figure}[t]
    \centering
    \includegraphics[width=\linewidth]{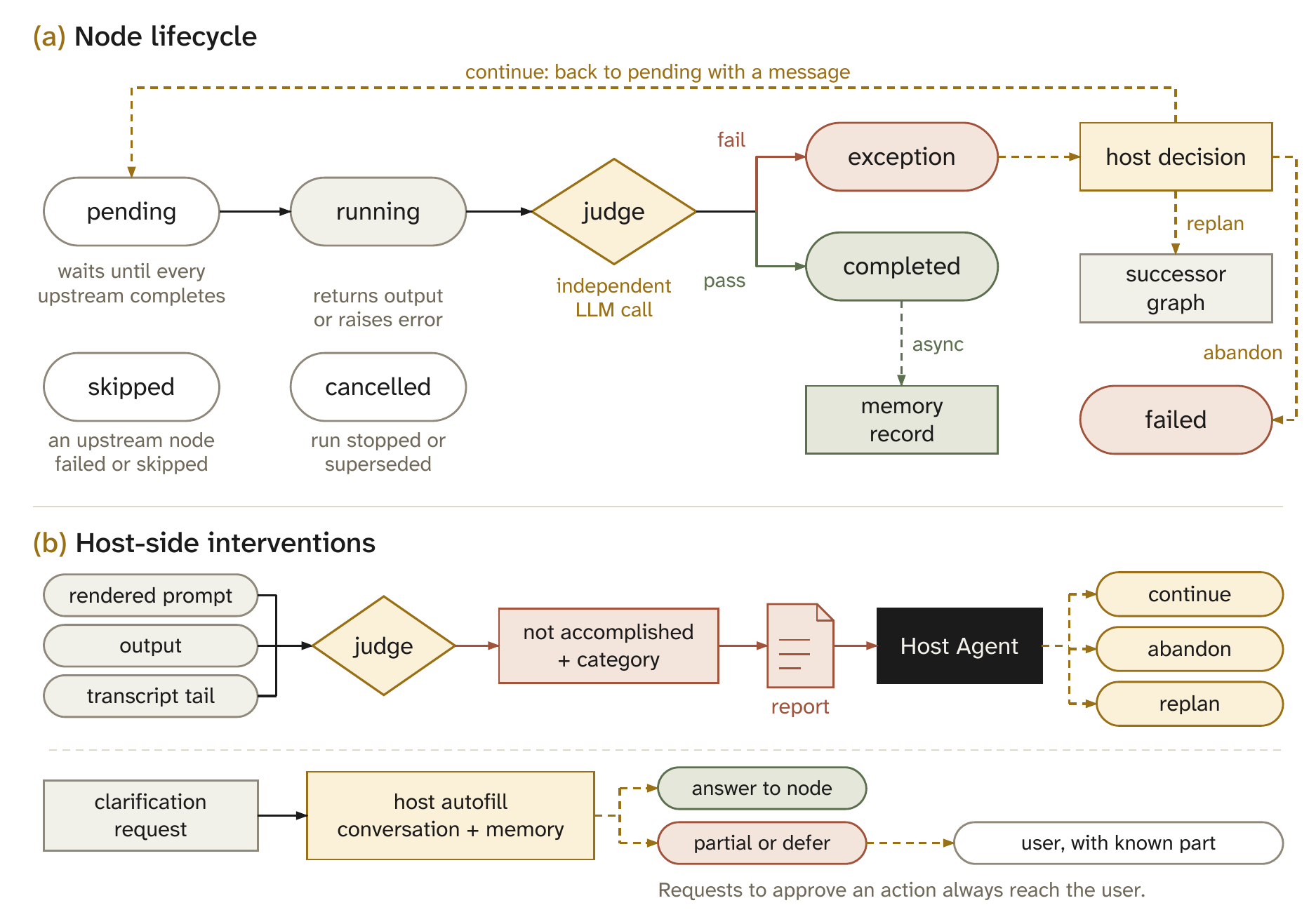}
    \caption{Node execution and host intervention. (a) A node runs once all
    predecessors have completed and settled. Returned output and raised
    errors both reach the judge. A failed verdict suspends the node for a
    host decision when continuations remain. Memory recording runs
    asynchronously and does not delay successors. (b) Top: the judge reads
    the rendered prompt, output, and transcript tail, and a negative verdict
    becomes an exception report for the host. Bottom: a worker's
    clarification request is answered from the host's conversation and
    memory when possible, and otherwise reaches the user.}
    \label{fig:collab-lifecycle}
\end{figure}

\paragraph{Node Lifecycle.}
Once a graph is admitted, each node moves through a small set of runtime states
(\cref{fig:collab-lifecycle}a). Let \(j_{\rm rt}\in\{0,1,\ldots\}\)
index scheduler events. Let \(z_{j_{\rm rt}}(v)\) denote the status of node
\(v\): pending, running, completed, exception, failed, skipped, or
cancelled. A node starts pending and runs once every in-graph predecessor
has completed and settled. The backend either returns an output or raises
an error, and both outcomes go to a completion judge. A positive verdict marks the node as completed. A negative verdict
suspends the node in the exception state if continuations remain and the
host can be reached.
Otherwise the node fails. A failed or skipped predecessor marks its pending
successors as skipped. An exception does not propagate, because the host may
still repair the node. Stopping a run cancels running and suspended nodes
and skips pending ones.

\paragraph{Dependency-Ready Scheduling.}
Two additional sets track submitted work and outcomes awaiting a final
verdict. Let \(\mathsf{Submitted}_{j_{\rm rt}}\) contain calls already assigned to an execution task,
including calls queued for a concurrency slot, and let \(\mathsf{Unsettled}_{j_{\rm rt}}\) contain
outcomes awaiting adjudication. The ready set is
\begin{equation}
    \mathsf{Ready}_{j_{\rm rt}}=
    \left\{v\in V_{\rm rt}:\begin{aligned}
    &z_{j_{\rm rt}}(v)=\mathsf{pending},\quad
      v\notin\mathsf{Submitted}_{j_{\rm rt}},\\
    &\forall u\in\operatorname{Dep}(v)\cap V_{\rm rt},\\
    &\quad z_{j_{\rm rt}}(u)=\mathsf{completed}
           \ \land\ u\notin\mathsf{Unsettled}_{j_{\rm rt}}
    \end{aligned}\right\}.
    \label{eq:runtime-ready}
\end{equation}
Earlier-session dependencies have already been checked at admission.
Excluding \(\mathsf{Submitted}_{j_{\rm rt}}\) prevents a node waiting for a slot from being submitted
twice. Excluding unsettled predecessors prevents a consumer from reading
the output of a call whose completion judgment may still be rejected.

Ready nodes execute under a shared concurrency semaphore. This limit applies across all graph runs and single delegations in the
session, keeping their combined execution within the available capacity.
Ready nodes sharing an instance handle are grouped and executed
sequentially, whereas independent instances can proceed concurrently.
A settled node wakes the scheduler immediately, so the successor of a
short research call need not wait for unrelated work to finish. In the
running example, the coding node becomes eligible only after both research
outcomes settle.

\paragraph{Completion Adjudication.}
A returned response alone does not establish task completion.
An agent may explain why it could not proceed, stop at an output limit, or
report success without producing the requested artifact. An independent
model call therefore judges each finished node
(\cref{fig:collab-lifecycle}b). The judge reads the rendered prompt, the
output, and the tail of the execution transcript. Two flags state whether
the transcript evidence is complete and whether the output was cut short.
The judge must answer through a constrained tool with the outcome
accomplished or not accomplished. A negative verdict names a category, such
as missing user input, a missing credential, a tool failure, unusable
dependency output, or an output limit. It also states what is missing and
cites supporting evidence. A supported negative answer to a research
question counts as accomplished, whereas a report that the research could
not be performed does not. A node judged unaccomplished loses its output
reference, so no successor can consume it.

However, the verdict does not certify objective correctness. Under the availability fallback, a normally returned call is treated as
accomplished if the judge fails or times out. Domain tests and final artifact inspection
provide independent evidence. The observed completed set in
\cref{eq:runtime-ready} therefore differs from the objective
successful-prefix event used in \cref{sec:theory-reliability}.

\paragraph{Exception Resolution.}
A suspended node sends an exception report to the host. The report names
the node and its agent, restates the task, and gives the verdict category,
the missing condition, and the evidence. It also lists the dependents that
the node blocks, the attempts that remain, and the decision deadline, and
it supplies ready-to-use calls of the resolution tool. The host uses the report and its conversation context to select a
resolution, consulting the user when the missing information requires user
input. It can continue the node with a message, abandon it, or replan the
remaining workflow. Continuation returns the node to pending with the message
attached. A stateful instance receives the message alone, whereas a
stateless agent receives its original prompt, its previous result, and the
message. Continuation retains the run and its completed artifacts, with a
bounded number of additional attempts. Abandonment fails the node and
skips its dependents. Replanning validates a successor graph, collects any
requested approval, and starts it under a new run identifier linked to the
old run. Unfinished nodes of the old run are skipped, cancelled, or marked
failed as superseded, and its completed nodes remain available by
reference. A report that receives no decision before its deadline fails the
node. Recovery therefore changes the executable plan without overwriting
earlier execution records.

\paragraph{Clarification Requests.}
A worker sometimes needs information that its prompt omits, such as a
preferred output format or the location of a dataset. Agents connected
through ACP can raise such questions while they run. Forwarding every
question to the user would interrupt the user for details that the
conversation already contains. The host therefore attempts an answer first
(\cref{fig:collab-lifecycle}b, bottom). An independent model call receives
the questions, a snapshot of the host's current conversation, earlier
answers in the session, and memories recalled from EverOS with the
questions as queries. It answers a question only when the user stated the
answer or a recalled preference determines it uniquely. It never answers a
request to approve an action, such as pushing, deleting, sending, or
paying. Unanswered and partly answered questions go to the user, annotated
with what the host already knows. The questions enter this call as untrusted data, preserving the distinction
between a worker's request and user authorization.

\begin{ravenalgorithm}{Host-Mediated Graph Execution}
\label{alg:collaboration}
\textbf{Input:} request packet \(d\), graph specification, registry,
artifact ledger, and concurrency allowance.\\
\textbf{Output:} a run report, terminal artifacts, and a persistent
execution record.
\begin{enumerate}\setlength{\itemsep}{2pt}
    \item Apply the admission checks in \cref{tab:collab-admission}. On the
    first failure, return the error and a pointer to the orchestration
    guide without running any node.
    \item If confirmation is requested, obtain user approval. Reserve the
    identifiers, persist the graph, and treat admitted earlier-session
    dependencies as completed artifacts.
    \item While runnable, running, or suspended work remains:
    \begin{enumerate}\setlength{\itemsep}{1pt}
        \item Propagate terminal failures to dependent nodes and apply
        host decisions to continue, abandon, or replan.
        \item Compute \(\mathsf{Ready}_{j_{\rm rt}}\) by \cref{eq:runtime-ready}. Group shared
        instances and reserve submitted nodes in \(\mathsf{Submitted}_{j_{\rm rt}}\).
        \item When a slot is available, assemble the node input \(x_v\)
        (\cref{eq:runtime-render,eq:group-memory-prefetch}), persist its
        prompt, and invoke the selected backend. Route clarification
        requests to host autofill, and forward the remainder to the user.
        \item Persist the output and transcript, and obtain the judge's
        verdict. Commit an accomplished outcome. Otherwise suspend the node
        and send an exception report, or fail it when no continuation
        remains. Remove settled outcomes from \(\mathsf{Unsettled}_{j_{\rm rt}}\) and wake the
        scheduler.
        \item Release the execution slot and schedule memory recording
        independently of artifact completion.
    \end{enumerate}
    \item Return the run report, with status counts, node output files,
    and terminal outputs, to the Host Agent for final synthesis.
\end{enumerate}
\end{ravenalgorithm}

\paragraph{Run Report and Final Synthesis.}
When a run finishes, the host receives a compact report that summarizes
intermediate outcomes (\cref{fig:collab-flow}b). The report gives the
numbers of completed, failed, cancelled, and skipped nodes and the run
directory. It lists each node with its status, instance handle, and output
file, together with the error of any node that did not complete. Only the
terminal outputs, from completed nodes without dependents, appear in full,
up to a length cap. In the running example, the host receives the design
node's output in full and file paths for the two surveys and the
reproduction. A foreground call returns an exception report as soon as a
node suspends, and the host's resolution call returns the next report or
this final report. A background call returns a run identifier at once and
later delivers progress, exception, and completion reports as messages.
Reports that quote node output are marked as untrusted data. The host then reads further files only if
its final response needs them.

Substantive final reasoning by the host is also an execution operation.
In the theoretical annotation it must appear as a host-policy call with
its own contract and cost. A final response that only delivers existing
artifacts incurs serialization and delivery costs. This distinction keeps
both synthesis and delivery within the plan's cost accounting.

\paragraph{Resource Accounting.}
The executor is event-driven and does not impose a universal wall-clock
limit on a graph. Individual backends, continuation policies, decision
deadlines, and the evaluation protocol supply their own limits.
Consequently, the common budget in \cref{sec:theory} must be instantiated
explicitly when measuring a composed run. Structural scheduling does not
establish a cost advantage by itself. Planning, verification, retries,
clarification, and handoffs must all be included in the measured resource
use.

\subsection{Multi-Agent Memory Flow}
\label{sec:collaboration-memory}

\begin{figure}[t]
    \centering
    \includegraphics[width=\linewidth]{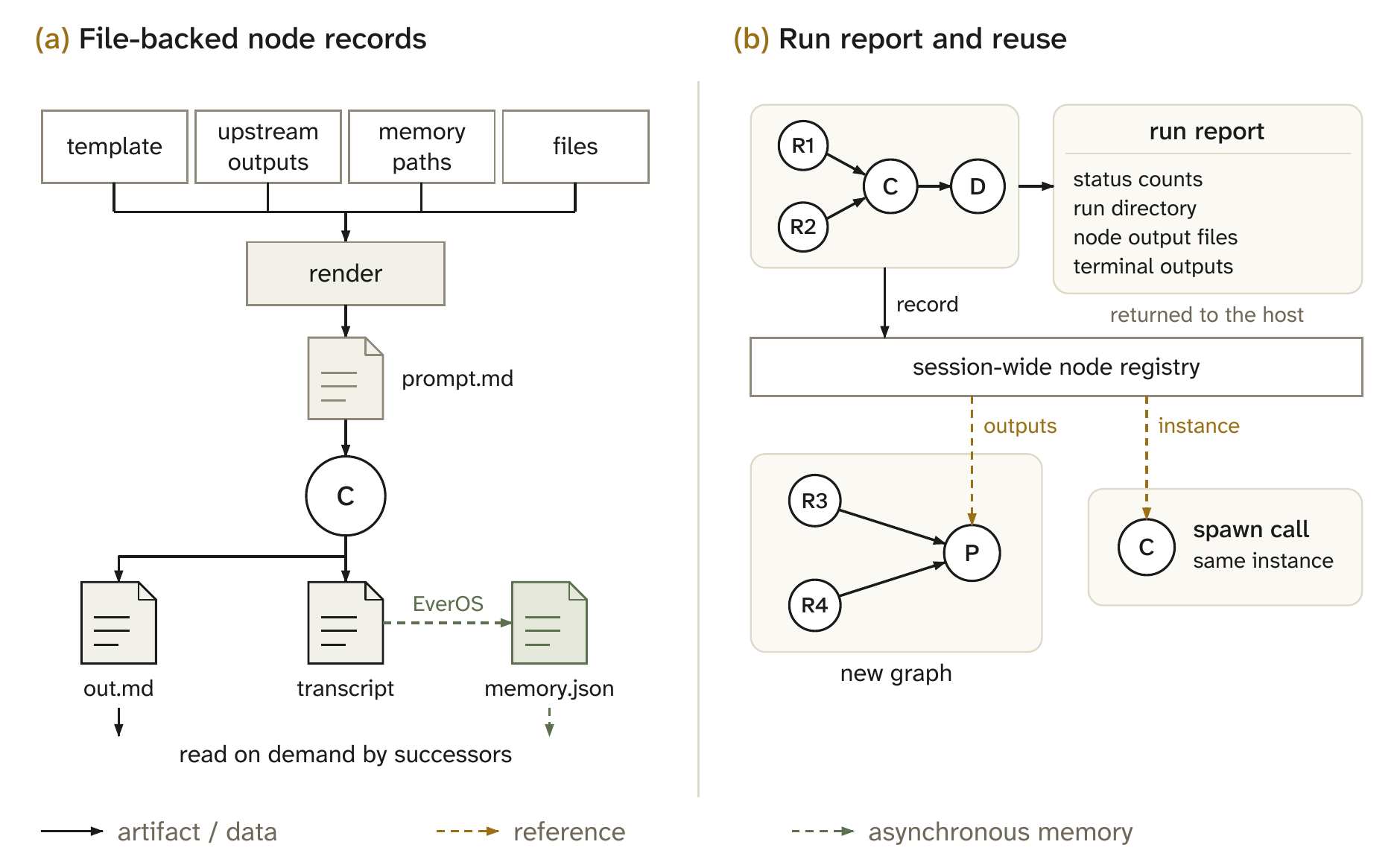}
    \caption{Artifact and memory flow. (a) A node's prompt is rendered from
    its template, upstream outputs, upstream memory paths, and files. The
    runtime writes the prompt, output, and transcript to disk, and a memory
    record follows asynchronously. Successors read these files on demand.
    (b) The host receives a run report with paths and terminal outputs.
    Completed nodes enter a session-wide registry, so a later graph can
    reference their outputs and a single delegation can continue their
    instance. In (b), R3 and R4 are research nodes of a follow-up graph
    whose slide-deck node P references earlier outputs, and the spawn call
    continues the coding instance C.}
    \label{fig:collab-flow}
\end{figure}

\paragraph{Two Information Paths.}
A collaboration carries both task artifacts and retained experience.
An artifact \(y_v\) is the immediate output needed to execute the
submitted graph. A memory record contains what a memory backend
has distilled from that invocation. In the running example, the design
node needs the reproduction results regardless of whether a reusable
case has already been extracted from the coding trace. Artifact
completion therefore releases dependencies, while memory recording proceeds
asynchronously.

\paragraph{File-Backed Node Records.}
Each handoff uses a recorded file containing the producer's output (\cref{fig:collab-flow}a). The runtime stores each run's graph
and manifest separately from a session-wide node directory. For every node
it writes the rendered prompt before dispatch, the output when the backend
returns, and the transcript before the judge reads it. A continued node
keeps a numbered copy of each attempt's prompt, output, and transcript.
A successor reads an upstream output inline or by path, and the host
inspects a file only when it needs the content. File references avoid repeating long artifacts in the host's prompt and
in every successor's prompt. They also give successors access to the
producer's output without an intervening paraphrase. The records identify
which artifact each downstream node consumed, including after a foreground
wait ends.

For a memory-enabled worker, the host also reads that invocation's session
twice: episodes on the user track and cases on the agent track
(\cref{sec:memory-session}). These reads retrieve records by session identifier without relevance
ranking. They do not request profiles or skills. When the worker's
memory source is its own backend, the host reads that backend and does
not write on the worker's behalf. When the source is a captured trace,
the host stores the trace in the host's own backend under an explicit
session identity and marks the write final. The record preserves
whether its source was the worker backend or the host-captured trace.
The two processes can use different backends and identities.

\paragraph{Dependency-Scoped Propagation.}
For a file-capable node \(v\), the runtime appends paths to the memory
records of its transitive predecessors:
\begin{equation}
    \mathsf{Paths}_v=
    \{\operatorname{MemPath}(u):u\in\operatorname{Anc}_{G_{\rm rt}}(v)\}.
    \label{eq:collaboration-memory-paths}
\end{equation}
Here \(\operatorname{MemPath}(u)\) is the deterministic file location
reserved for node \(u\)'s memory record.
This block is generated from the graph and does not require the host to
repeat every memory reference in the template. Paths give the receiving
agent the option to inspect relevant experience without eagerly copying
all ancestor memories into its prompt. The design node can consequently
inspect the coding case and both research records. A backend without
local-file access receives no unusable path block. Any information it
requires must be supplied through its supported input channel.
Write \(\operatorname{PathBlock}_v\) for this rendered block, or empty
text when the backend cannot read files. The base input is
\(x_v^{\rm base}=x_v^{\rm task}\mathbin{+\!\!+}\operatorname{PathBlock}_v\),
where \(\mathbin{+\!\!+}\) concatenates text in order.

\paragraph{Cross-Graph Reuse.}
Node identifiers are unique across the whole conversation, and graph nodes
and single delegations share one registry (\cref{fig:collab-flow}b). A
later graph can therefore depend on a node that an earlier run completed,
or read its output through a placeholder, without executing it again.
Admission verifies that the earlier node completed with a recorded output.
A failed, skipped, cancelled, or still-running node cannot be used as if it
had produced a result. Instance handles also persist across runs. A later
node that names the same handle continues the same agent session with its
accumulated context. In the running example, a follow-up request for a
slide deck can reuse both surveys and the reproduction by reference. A
single delegation can continue the coding instance to repair one
experiment. Repeating work requires a fresh identifier, which preserves the
earlier record. Reuse avoids redundant subtasks and limits the host's context to
references to earlier artifacts.

\paragraph{Cross-Task Reuse and Its Boundary.}
Within a graph, ancestry determines which memory-record paths are
advertised. Across sessions, task-aware retrieval names the user or
the agent, and the server applies its default application and project
scope (\cref{sec:memory-retrieval}). Cases can subsequently support skill extraction
(\cref{sec:skillforge-evolution}), making procedures from individual
executions available to future tasks.
\Cref{sec:collaboration-group-memory} extends this path with assessments that the host records for each agent after a round.
The mechanism provides dependency-scoped provenance and bounded context
transfer while retaining experience for future tasks. Its effect on
final-task success requires evaluation beyond graph agreement.

\subsection{Shared Group Memory}
\label{sec:collaboration-group-memory}

\begin{figure}[t]
    \centering
    \includegraphics[width=\linewidth]{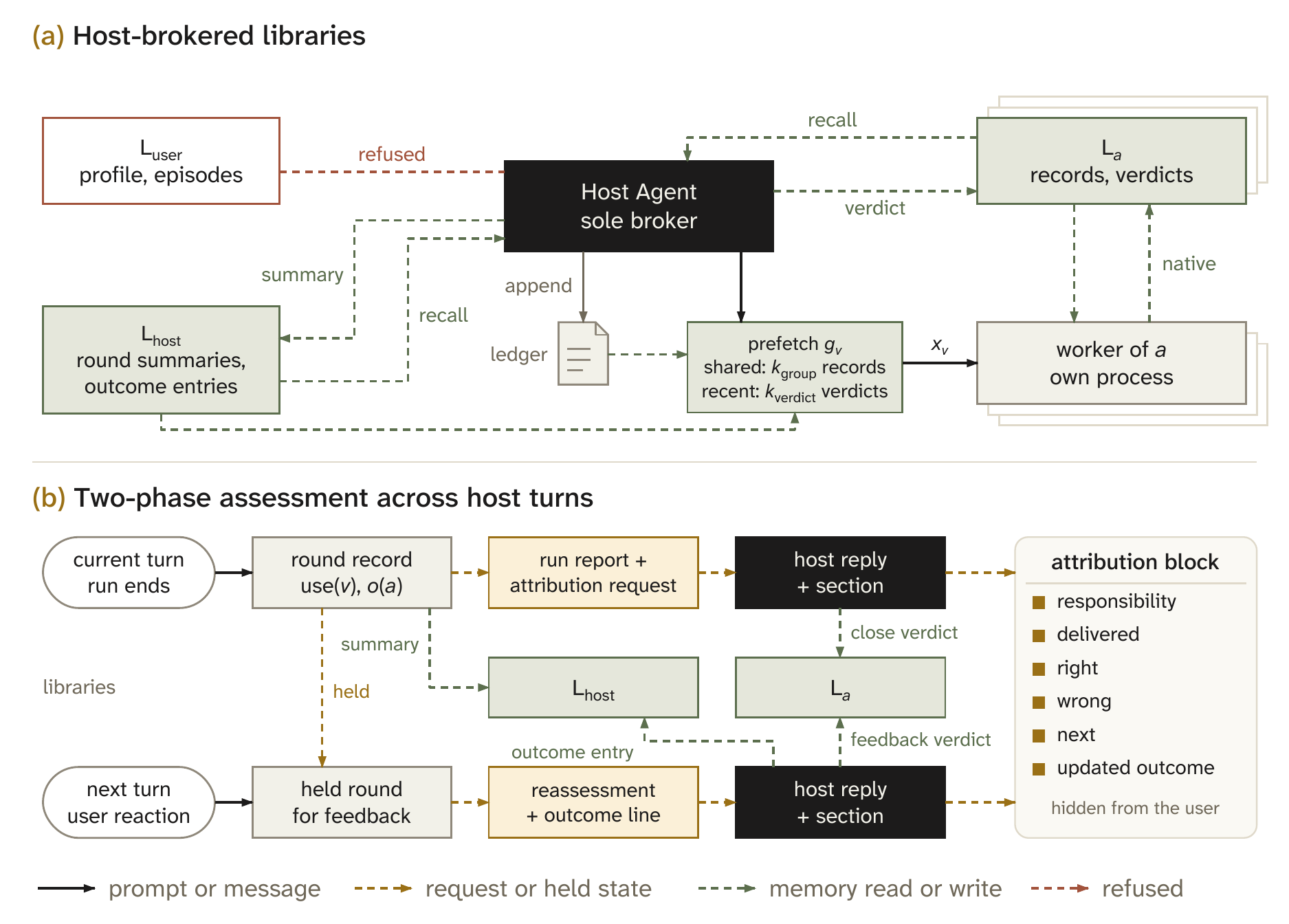}
    \caption{Shared group memory. (a) The host mediates access to
    owner-partitioned EverOS libraries. It retrieves experience before
    planning, deposits round summaries and outcomes in its shared library,
    and writes agent assessments to the corresponding agent libraries.
    Dispatch prefetch combines shared experience with recent assessments
    from a provenance ledger, with respective caps \(k_{\rm group}\)
    and \(k_{\rm verdict}\). Personal user memory is excluded, while
    workers retain their ordinary memory access. Offset outlines indicate
    one library and worker per mapped agent.
    (b) Assessment has two phases: a close verdict after execution and a
    feedback verdict on the next user turn. Attribution is extracted from
    the host's reply and omitted from the user-visible response and session
    log.}
    \label{fig:group-memory}
\end{figure}

A worker observes its own execution but lacks direct access to downstream
use of its output and subsequent user feedback. The host observes both
through run reports and later user messages. To incorporate this broader
feedback, shared group memory converts these observations into assessments that inform future orchestration and
worker invocations (\cref{fig:group-memory,alg:group-memory}). We define a
\emph{round} as a finished graph run or a single delegation, represented as
a one-node graph.
The completion judge evaluates individual nodes
(\cref{sec:orchestration-execution}), while group memory records the host's
assessment of each agent's contribution across the round.

The layer reuses the active EverOS backend and stores three record types:
a deterministic round summary, per-agent assessments called \emph{verdicts},
and an outcome entry after user feedback. Summaries and outcome entries
belong to the host's shared library, while verdicts belong to the assessed
agents' libraries. Bounded retrieval supplies this experience before host
planning and worker execution. Artifact transfer follows the graph and
cross-graph references described in \cref{sec:collaboration-memory}.

\paragraph{Memory Ownership and Access.}
A library is the collection of records under one EverOS owner identifier
(\cref{sec:memory-formation}). Let \(\mathsf L_{\mathrm{user}}\) contain
the user's profile and episodes. The host's shared library
\(\mathsf L_{\mathrm{host}}\) uses a separate owner derived from the
same user identifier. For each agent \(a\) in the configured subset
\(\mathcal A_{\mathrm{map}}\), let \(\mathsf L_a\) use the worker's
existing owner identifier \(\operatorname{own}(a)\). Depositing verdicts
under this owner makes them available through the worker's native recall.
All instances of an agent share its library, as do agents mapped to the
same owner. Reuse therefore operates at the owner level, without isolation
between instances that share an owner. Unmapped agents receive no group
memory prefetch or verdicts.

The host performs all group memory reads and writes under the policy in
\cref{tab:group-memory-access}. Workers receive selected records through
prefetch and retain their ordinary access to their own libraries. Group
memory excludes \(\mathsf L_{\mathrm{user}}\) and does not expose agent
libraries under other owners to workers. Verdict targets are resolved from
the configured owner map and restricted to agents dispatched in the round.
Generated assessment text cannot select a target owner.

\begin{table}[t]
\centering
\small
\caption{Access within the group memory layer. The host performs all
retrievals and deposits. Ordinary host and worker memory calls are outside
this policy.}
\label{tab:group-memory-access}
\begin{tabularx}{\linewidth}{@{}lllX@{}}
\toprule
\textbf{Actor} & \(\mathsf L_{\mathrm{host}}\) & \(\mathsf L_{\mathrm{user}}\)
& \textbf{Agent Libraries} \\
\midrule
Host & read, write & none & read and write every mapped library \\
Mapped worker & read via prefetch & none & its own verdicts via prefetch \\
Unmapped agent & none & none & none \\
\bottomrule
\end{tabularx}
\end{table}

\paragraph{Recall Before Orchestration.}
For an incoming host message \(x_{\mathrm{in}}\), define the queried
libraries as \(\mathcal Q=\{\mathsf L_{\mathrm{host}}\}\cup
\{\mathsf L_a:a\in\mathcal A_{\mathrm{map}}\}\). The host receives
\begin{equation}
    g_{\mathrm{orch}}=\operatorname{Fence}_{B_{\mathrm{orch}}}\!\left(
    \operatorname{Merge}\!\left(
    (\operatorname{Top}_{k_{\rm group}}(\mathsf L,x_{\mathrm{in}}))
      _{\mathsf L\in\mathcal Q}\right)\right).
    \label{eq:group-memory-recall}
\end{equation}
Here the positive integer \(k_{\rm group}\) is a retrieval cap, and
\(\operatorname{Top}_{k_{\rm group}}(\mathsf L,q)\) returns up to \(k_{\rm group}\)
records ranked for query \(q\) within \(\mathsf L\)
(\cref{sec:memory-retrieval}). \(\operatorname{Merge}\) groups results
by source library and removes duplicate lines, and
\(\operatorname{Fence}_{B_{\rm char}}\) renders them as untrusted context within a
character budget \(B_{\rm char}\), instantiated as \(B_{\mathrm{orch}}\) here and
\(B_{\mathrm{worker}}\) below. Without a subscript, \(\operatorname{Fence}\) applies no
character budget. Retrieval covers every mapped agent, including
those not yet dispatched, so prior experience is available when selecting
an executor. Agent libraries contribute both native worker records and
host verdicts. The block is appended to the incoming message and excluded
from the session log to prevent automatic replay on later turns.

\paragraph{Prefetch at Dispatch.}
For a node \(v\) assigned to a mapped agent \(\sigma(v)\), the host
augments the base input \(x_v^{\rm base}\), including the path block
from \cref{sec:collaboration-memory}:
\begin{equation}
    \begin{aligned}
    g_v&=\operatorname{Fence}_{B_{\mathrm{worker}}}\!\left(
      \operatorname{Merge}\!\left(
      (\operatorname{Top}_{k_{\rm group}}(\mathsf L_{\mathrm{host}},x_v^{\rm base}),
      \operatorname{Recent}_{k_{\rm verdict}}(\sigma(v)))\right)\right),\\
    x_v&=x_v^{\rm base}\mathbin{+\!\!+}g_v.
    \end{aligned}
    \label{eq:group-memory-prefetch}
\end{equation}
The operator \(\operatorname{Recent}_{k_{\rm verdict}}(a)\) selects the most recent
verdicts about agent \(a\), up to a positive integer cap \(k_{\rm verdict}\), from the
host's provenance ledger. Shared experience is selected by
query relevance, while verdicts are selected by recency. This gives recent
feedback a retrieval path independent of its similarity to the new task or
its ranking against the worker's native records. The block follows the
task so that the worker's memory records the assignment as its task
context. Single delegations use the same rule. If this layer is disabled
or the agent is unmapped, \(g_v\) is empty and \(x_v=x_v^{\rm base}\).

The final dispatched input \(x_v\) is the object corresponding to
\cref{eq:input-assembly}. Applying that theory to memory-enabled execution
requires an annotated expansion \(G_{\rm ann}=(V_{\rm ann},E_{\rm ann})\) of
\(G_{\rm rt}\). Semantic retrieval and other context-producing computation
become explicit operations with their own contracts, costs, and recorded
outputs. The theory is instantiated with \(G=G_{\rm ann}\), so
\(K_{\rm op}=|V_{\rm ann}|+|E_{\rm ann}|\), resource allocations, and local
error sums refer to the expanded plan. Authorized fixed context
is bound in \(d\), and later retrieval results enter through declared
messages. The final \(\Phi_v\) only projects or serializes these inputs.
This annotation preserves the theory's assembly restriction and accounts
for retrieval cost. Supplying a memory block alone does not establish these semantic guarantees.

\paragraph{Deterministic Round Outcomes.}
For round \(r\) with graph \(G_{\rm rt}=(V_{\rm rt},E_{\rm rt})\), let \(z(v)\) denote the final
status of node \(v\) and \(\operatorname{Succ}(v)=\{w\in V_{\rm rt}:v\in \operatorname{Dep}(w)\}\)
its successors within the round. Stopped runs are excluded from recording
because cancellation and skipping may reflect the stop decision. For other
finished rounds, define
\begin{equation}
    \operatorname{use}(v)=
    \begin{cases}
        \mathsf{terminal}, & \operatorname{Succ}(v)=\varnothing,\\
        \mathsf{consumed}, & \exists w\in\operatorname{Succ}(v):
            z(w)\in\{\mathsf{completed},\mathsf{failed}\},\\
        \mathsf{unused}, & \text{otherwise}.
    \end{cases}
    \label{eq:group-memory-use}
\end{equation}
Here \(\mathsf{consumed}\) indicates that a successor executed, even if
it failed, and \(\mathsf{terminal}\) indicates that no in-round successor
exists. These labels describe execution structure and do not establish
output quality or user acceptance.

Let \(\mathcal A_r=\{\sigma(v):v\in V_{\rm rt}\}\) be the participating agents
and \(V_a=\{v\in V_{\rm rt}:\sigma(v)=a\}\). For \(a\in\mathcal A_r\), the
initial outcome is the first matching case below:
\begin{equation}
    o(a)=
    \begin{cases}
        \mathsf{cancelled}, & \exists v\in V_a: z(v)=\mathsf{cancelled},\\
        \mathsf{failed}, & \exists v\in V_a: z(v)=\mathsf{failed},\\
        \mathsf{cancelled}, & \exists v\in V_a: z(v)\neq\mathsf{completed},\\
        \mathsf{adopted}, & \forall v\in V_a:
            \operatorname{use}(v)\neq\mathsf{unused},\\
        \mathsf{partially\_adopted}, & \exists v\in V_a:
            \operatorname{use}(v)\neq\mathsf{unused},\\
        \mathsf{rejected}, & \text{otherwise}.
    \end{cases}
    \label{eq:group-memory-outcome}
\end{equation}
The third case maps skipped nodes to \(\mathsf{cancelled}\). Adoption
labels apply only when all of the agent's nodes completed, with terminal
outputs counted as adopted. At this phase, \(\mathsf{rejected}\) denotes
unused outputs. The host immediately deposits a round summary containing
node assignments, statuses, dependencies, task excerpts, and agent outcomes
in \(\mathsf L_{\mathrm{host}}\). Task excerpts provide semantic context
for later retrieval.

\paragraph{Two-Phase Agent Assessment.}
A verdict associates the host's assessment with an agent, round, and phase.
For \(a\in\mathcal A_r\cap\mathcal A_{\mathrm{map}}\), write
\begin{equation}
    \mathsf{Verd}_{r,a}^{(p)}=
    \bigl(\mathrm{id}_H,a,r,p,o_{r,a}^{(p)},\mathrm{body}_{r,a}^{(p)}\bigr),
    \qquad p\in\{\mathrm{close},\mathrm{feedback}\},
    \qquad o_{r,a}^{(\mathrm{close})}=o(a),
    \label{eq:group-memory-verdict}
\end{equation}
where \(\mathrm{id}_H\) identifies the host as author and \(\mathrm{body}_{r,a}^{(p)}\) is the
assessment text. Each verdict is deposited in \(\mathsf L_a\), with its
author, subject, round, and outcome encoded in a leading header.

At round closure, the run report requests one attribution block per
distinct participating agent. The host appends these blocks to its existing
reply, requiring no additional model call. Each block addresses the agent
directly and describes its responsibility, delivered work, strengths,
failures and their causes, and instructions for future tasks. Assessments
must be grounded in observations, while node statuses come from the run
record. The layer preserves usable block text in the close verdict and
removes the attribution section before displaying or logging the reply.
Missing or unparseable blocks yield deterministic summaries of node status
and output use, with distinct headers identifying the fallback reason.

The layer also retains each closed round for one reassessment on the next user
turn. The host supplies updated attribution blocks and proposed outcome
labels \(\widehat o_{r,a}\), with missing or invalid labels represented by
\(\bot\). Let \(\mathsf{Labels}\) contain the five distinct labels in
\cref{eq:group-memory-outcome}. For each reassessed mapped agent,
\begin{equation}
    o_{r,a}^{(\mathrm{feedback})}=
    \begin{cases}
        \widehat o_{r,a}, & \widehat o_{r,a}\in\mathsf{Labels},\\
        o_{r,a}^{(\mathrm{close})}, & \text{otherwise}.
    \end{cases}
    \label{eq:group-memory-feedback}
\end{equation}
Only this phase permits model-generated text to revise the deterministic
outcome. The feedback verdict is appended alongside the close verdict under
the same round identifier. An outcome entry in
\(\mathsf L_{\mathrm{host}}\) records the reassessed labels and their
initial values when changed. The round is then released from pending
reassessment. Without a subsequent user turn, only the close verdict is
retained.

In the running example, suppose \ravenresearch is mapped. It receives one
close verdict covering both research nodes. If the user later identifies
an outdated benchmark version, the host can revise its outcome to
\(\mathsf{rejected}\) and
include a corrective instruction in the feedback verdict. Subsequent
prefetch can expose this verdict to another \ravenresearch instance even
when the task differs, subject to the recency and character budgets. The
host can also retrieve it when planning a related task.

\begin{ravenalgorithm}{Shared Group Memory Across Host Turns}
\label{alg:group-memory}
\textbf{Input:} host messages, dispatched rounds, owner map
\(\operatorname{own}\), libraries \(\mathcal Q\), and provenance ledger.\\
\textbf{Output:} host and worker context blocks, round summaries, agent
verdicts, and outcome entries.
\begin{enumerate}\setlength{\itemsep}{2pt}
    \item Before host planning, append \(g_{\mathrm{orch}}\) from
    \cref{eq:group-memory-recall} to the incoming message without logging
    the block.
    \item At dispatch to a mapped agent, send \(x_v\) from
    \cref{eq:group-memory-prefetch}.
    \item At completion of a non-stopped round, compute output-use labels
    and agent outcomes by
    \cref{eq:group-memory-use,eq:group-memory-outcome}, deposit the round
    summary in \(\mathsf L_{\mathrm{host}}\), and retain the round for
    reassessment.
    \item Request attribution in the host's reply to the run report.
    Extract and deposit \(\mathsf{Verd}_{r,a}^{(\mathrm{close})}\) for each
    participating mapped agent, using labeled fallbacks where needed.
    Strip attribution before displaying or logging the reply.
    \item On the next user turn, request reassessment, apply
    \cref{eq:group-memory-feedback}, append feedback verdicts and a shared
    outcome entry, and release the round.
    \item Enforce \cref{tab:group-memory-access} on every read and write,
    and record each deposit in the provenance ledger.
\end{enumerate}
\end{ravenalgorithm}

\paragraph{Provenance and Limitations.}
EverOS records identify an owner but have no separate author field, and
extraction can shorten or rewrite their text. Verdict headers therefore carry
attribution within the content. To preserve the original assessments,
a host-side append-only ledger retains
the authoritative deposit text, target owner, record kind, round, and
covered nodes. Each deposited record ends with a ledger reference, and the
ledger supplies verdicts for recency-based prefetch. The layer neither
expires records nor reviews workers' native writes. Verdicts remain
observational judgments by the host and may be incorrect. Group memory is
disabled by default and is not evaluated in this report. Its effect on
orchestration and task performance remains to be established.

\section{Agent Harness Self-Evolution}\label{sec:evolver}

Harness self-evolution seeks to improve the execution policy surrounding a
frozen language model. Building on HarnessBank \citep{luo2026harnessbank},
the method exposes the policy through four strategy interfaces, diagnoses
execution failures, constructs new or recombined harnesses, screens their
behavior, and retains evaluated candidates in a semantic gene bank.
\Cref{sec:evolver-modularity,sec:evolver-diagnosis,sec:evolver-bank,sec:evolver-screening}
specify the complete method, and \cref{sec:eval-evolution} summarizes the
published harness-evolution experiments. A Task Agent executes tasks, an Evolver Agent proposes
modifications, and a fixed Evaluator measures their outcomes. The resulting agent is available to the Host Agent for subsequent tasks. \Cref{fig:method-evolution} connects the
editable policy to the search procedure. We first define the modules
and admissible edits, then the diagnostic evidence, archive search,
and verification rules.

\begin{figure}[!htbp]
    \centering
    \includegraphics[width=\linewidth]{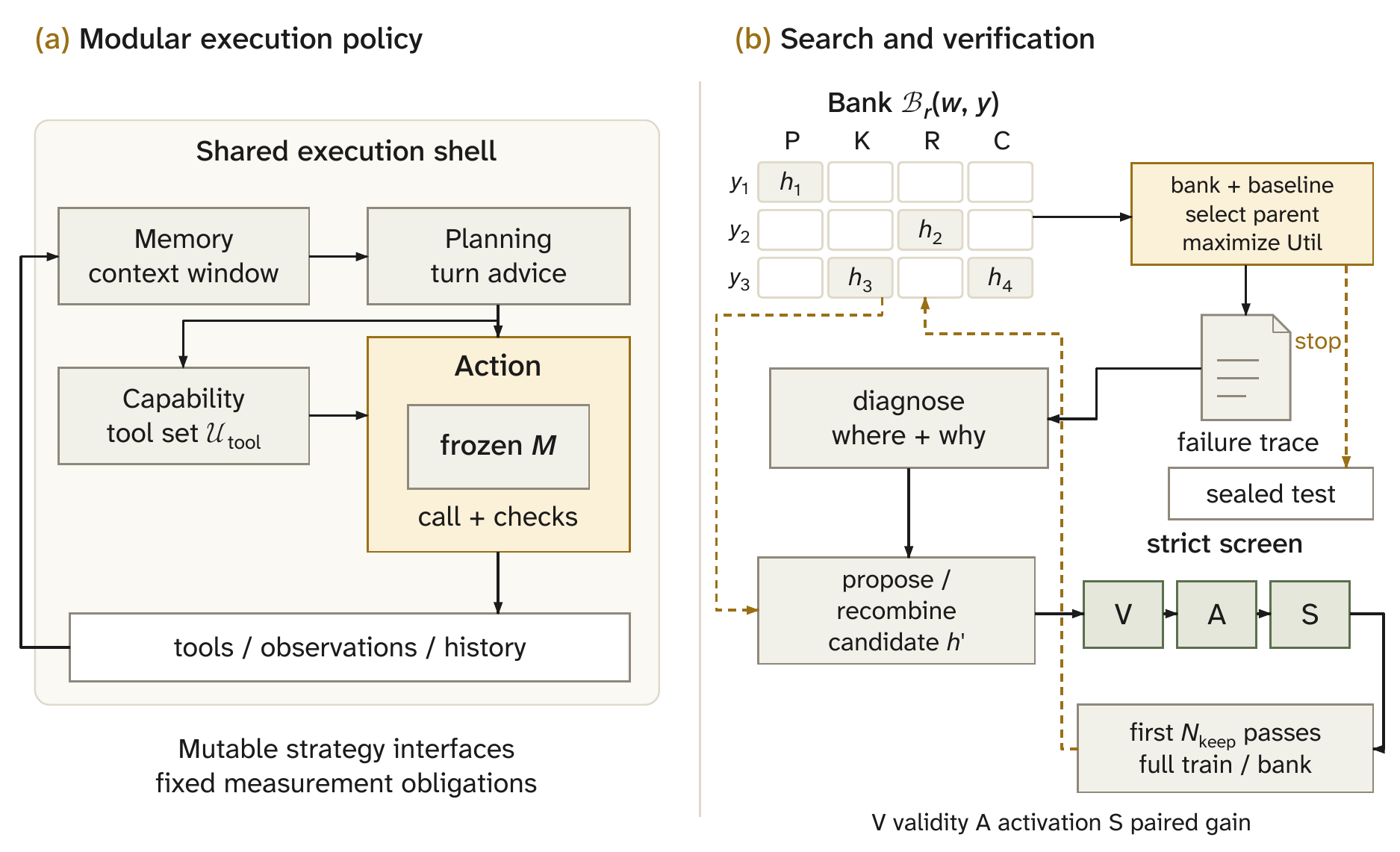}
    \caption{Harness self-evolution. (a) Four strategy interfaces expose
    decisions around a frozen model. The shared execution shell handles dispatch and
    accounting. (b) The best training-evaluated harness supplies failure
    traces for diagnosis and candidate generation. The bank stores one
    complete harness per edit-category/pathology cell. P/K/R/C denote
    prompt, knowledge, runtime, and configuration edits. The first
    \(N_{\rm keep}\) candidates passing validity, activation, and paired-gain
    screening receive full-training evaluation before cell competition,
    where \(N_{\rm keep}\) is the per-round survivor cap.
    The final training-selected harness is frozen before held-out testing.}
    \label{fig:method-evolution}
\end{figure}

\subsection{Modular Harness Design}\label{sec:evolver-modularity}

\paragraph{The Policy Surrounding a Model.}
For the model--harness pair in \cref{eq:harnessed-agent}, let \(M\)
denote the frozen Task Agent model and represent its configurable
harness as
\begin{equation}
    h=(\mathrm{id}_h,\mathsf{Mem}_h,\mathsf{Plan}_h,\mathsf{Cap}_h,
       \mathsf{Act}_h;\theta_h).
    \label{eq:harness-factorization}
\end{equation}
The identifier \(\mathrm{id}_h\) distinguishes candidates even when their
executable components coincide. It keys the candidate's descriptor, activation
specification, creation order, and designated evaluation records. The four
strategy components govern execution. \(\theta_h\) contains
their prompts, domain instructions, and configuration, rather than
the weights of \(M\). \textbf{Memory} assembles the working context
and decides what to remove or summarize under context pressure.
\textbf{Planning} prepares turn guidance and incorporates domain
advice. \textbf{Capability} selects the tool definitions exposed to
the model. \textbf{Action} obtains a usable model response and judges
actions or completion. These roles implement decisions within one
agent. The shared execution shell handles tool dispatch, iteration
accounting, persistence, events, and bounded retry control.

The interfaces connect through the context presented to the model.
Let \(x_{\rm in}\) be a request, \(\mathsf{Hist}_{\rm local}\) its available interaction history,
and \(B_{\rm ctx}>0\) the context allowance. The initial message
sequence is
\begin{equation}
    x^{(0)}=\mathsf{Plan}_h
       \bigl(x_{\rm in},\mathsf{Mem}_h(\mathsf{Hist}_{\rm local},x_{\rm in},B_{\rm ctx})\bigr).
    \label{eq:harness-initial-context}
\end{equation}
At model-call iteration \(j\ge0\), with current messages \(x^{(j)}\),
the next two decisions are abstracted as
\begin{equation}
    \mathcal U_{\rm tool}^{(j)}=\mathsf{Cap}_h(x^{(j)},j),
    \qquad
    u^{(j)}\sim\mathsf{Act}_h
        (\,\cdot\mid M,x^{(j)},\mathcal U_{\rm tool}^{(j)}).
    \label{eq:harness-iteration}
\end{equation}
Here \(\mathcal U_{\rm tool}^{(j)}\) is the available tool set and \(u^{(j)}\)
is a response containing tool requests or a proposed final answer.
The conditional distribution includes model sampling. The shell
dispatches tool requests and appends observations to the history to
construct the next context. A proposed final answer is subject to
the Action role's completion judgment. Under context pressure, Memory
proposes a reduced window and the shell controls the retry. These equations abstract the information flow across the strategy interfaces. The default Planning preparation preserves its
input messages, while domain-specific components can add iterative advice.

\paragraph{Domain Specialization and Mutable Boundaries.}
A specialized agent combines these strategies with domain tools,
skills, and instructions. Research can emphasize evidence sufficiency,
while coding can emphasize execution feedback and final verification.
The selected memory backend supplies context to \(\mathsf{Mem}_h\),
whereas the strategy itself controls the working window. This
distinction permits changing the use of memory without redefining
the storage system in \cref{sec:memory}.

Building on this modular design, evolution starts from a baseline harness
\(h_0\). Let \(\mathcal X_{\rm edit}\)
be the allowed edit locations and \(\mathcal K_{\rm fixed}\) the immutable
locations, with \(\mathcal X_{\rm edit}\cap\mathcal K_{\rm fixed}=\varnothing\).
The immutable kernel protects evaluation, bookkeeping, and required
execution interfaces. For a concrete patch \(\Delta^{\rm patch}\),
\(\operatorname{scope}(\Delta^{\rm patch})\) is the set of locations it changes
and \(\operatorname{Apply}(h,\Delta^{\rm patch})\) materializes a harness with
a fresh candidate identity. The identity is an analytical label and is
excluded from executable edit locations and protected contents.
Write \(\operatorname{Exec}(h)\) for the executable components of \(h\),
excluding its identity. The admissible search space is
\begin{equation}
\begin{split}
    \mathfrak H(\mathcal X_{\rm edit})=\{h:\;&\exists\Delta^{\rm patch},\\
       &\operatorname{Exec}(h)=\operatorname{Exec}(\operatorname{Apply}(h_0,\Delta^{\rm patch})),\\
       &\operatorname{scope}(\Delta^{\rm patch})\subseteq\mathcal X_{\rm edit},\quad
        h|_{\mathcal K_{\rm fixed}}=h_0|_{\mathcal K_{\rm fixed}},\\
       &\operatorname{InterfaceValid}(h)=1\}.
\end{split}
    \label{eq:harness-search-space}
\end{equation}
Restriction \(h|_{\mathcal K_{\rm fixed}}\) denotes the protected contents.
\(\operatorname{InterfaceValid}\) checks construction and interface
compatibility. We assume \(h_0\) satisfies these requirements.
A patch applied to an evolved parent must leave the resulting harness
in the same search space: its cumulative modifications still obey the
baseline's edit restrictions. Admissibility establishes executability,
while task-level improvement requires evaluation.

\subsection{Diagnosis-Driven Harness Evolution}
\label{sec:evolver-diagnosis}

\paragraph{Objective and Measurement Unit.}
Let \(\mathcal D_{\rm tr}\) be the finite training task set available
to evolution and \(\mathcal D_{\rm te}\) a disjoint, sealed test set.
A fixed protocol \(\Pi_{\rm eval}\) specifies the model and environment versions,
scorer, initial state, sampling policy, resource ceilings, and
infrastructure-retry rules. Each task receives \(K_{\rm att}\ge1\) scored
attempts, with \([K_{\rm att}]=\{1,\ldots,K_{\rm att}\}\). For task \(i\), attempt \(k\),
and harness \(h\), the Evaluator records score
\(s_{i,k}(h)\in[0,1]\), execution trajectory \(\tau_{i,k}(h)\),
and metadata \(\mathrm{meta}_{i,k}(h)\), including validity, cost, and activation
events. For any nonempty task set \(\mathcal D\), define
\begin{align}
    \bar s_i(h)&=\frac{1}{K_{\rm att}}\sum_{k=1}^{K_{\rm att}}s_{i,k}(h),
    &\operatorname{Util}(h;\mathcal D)&=\frac{1}{|\mathcal D|}
                         \sum_{i\in\mathcal D}\bar s_i(h),
        \label{eq:evolution-utility}\\
    \mathsf{Log}(h;\mathcal D)
      &=\bigl((s_{i,k}(h),\tau_{i,k}(h),\mathrm{meta}_{i,k}(h))\bigr)
                _{i\in\mathcal D,\,k\in[K_{\rm att}]}.
        \label{eq:evolution-ledger}
\end{align}
The log is an indexed collection, so identical outcomes do not
collapse into a single record. \(\bar s_i\) averages attempts within
a task and \(\operatorname{Util}\) weights tasks equally. For binary scores, this
estimates mean per-attempt success, not the probability of succeeding
at least once in \(K_{\rm att}\) attempts. The search maximizes training utility
over \(\mathfrak H(\mathcal X_{\rm edit})\) under a finite development budget.
The returned harness maximizes recorded utility among the baseline and
archived candidates, without a guarantee of global optimality.

Repeated evaluations produce distinct records. Let \(\mathcal D_r\)
denote the screening subset of \(\mathcal D_{\rm tr}\) used at round
\(r\ge0\). Unmarked quantities
refer to the designated full-training log (or the final test log
when \(\mathcal D=\mathcal D_{\rm te}\)). Superscript \(\mathrm{sc}\)
denotes a fresh screening evaluation, for example
\(s^{\rm sc}_{i,k}(h)\), \(\mathsf{Log}^{\rm sc}(h;\mathcal D_r)\),
and \(\operatorname{Util}^{\rm sc}(h;\mathcal D_r)\), defined by the same formulas.
The screening and confirmation scores for an overlapping task need
not coincide. Only the full-training record enters archive ranking.

The operation \(\operatorname{Eval}_{\Pi_{\rm eval}}(h,\mathcal D)\) produces
this log under the fixed protocol. Agent failures remain scored
outcomes and cannot be removed from its \(|\mathcal D|K_{\rm att}\) denominator.
Infrastructure failures follow the declared repair-and-retry rule.
An evaluation that remains invalid under the protocol is excluded from
selection. Score comparisons use only complete, valid logs.
The procedure below retains the recorded full-training log of
each harness and reuses its restriction to a screening subset. This
requires a stable evaluation environment. If the environment changes,
compared candidates and incumbents require a common reevaluation
protocol. Utilities measured under different environments are not ranked together.

\paragraph{From a Failure Trace to an Intervention.}
Using these evaluation records, the Evolver Agent converts the parent's log into a testable
intervention hypothesis. A diagnosis identifies the observed symptom,
its supporting trace events, the edit location, and the behavior
expected to change. For example, an empty final response after
reasoning-budget exhaustion motivates a recovery intervention,
while an untested code change motivates a finalization check. The diagnosis must connect the observed failure to a mechanism that the
Evolver Agent can modify.

Each proposal contains an executable patch, a semantic descriptor,
and a declared activation specification. Fixed before candidate evaluation, the activation specification identifies
the recorded event used to determine whether the targeted mechanism executed. The Evaluator
records activation and scores independently of the Evolver Agent's
explanation. The proposer may use a different model from the frozen
Task Agent. Neither proposal generation nor patch application updates
the Task Agent's weights.

\paragraph{Development and Deployment Cost.}
Development includes diagnosis, proposal generation, screening, and
confirmation. Deployment cost measures only execution of the selected
harness. Both candidate and parent operate under the same externally
fixed resource ceilings in \(\Pi_{\rm eval}\), although configuration edits may change
how that allowance is used. The utility in
\cref{eq:evolution-utility} contains no implicit cost penalty, so
token use and latency must be measured separately. The reference
experiments use \(K_{\rm att}=3\). Screening size \(n\), proposal cap \(J\),
survivor cap \(N_{\rm keep}\le J\), round cap \(R_{\rm round}\), and stagnation patience
\(P\) are positive integer run settings, with \(2\le n\le|\mathcal D_{\rm tr}|\).

\subsection{Harness Gene Bank and Recombination}
\label{sec:evolver-bank}

\paragraph{Semantic Archive.}
To support reuse across evolution rounds, the Harness Gene Bank retains evaluated candidates associated with
different failure modes, including candidates below the current best utility.
Let \(\mathcal W_{\rm edit}=\)
\(\{\mathsf{prompt},\mathsf{knowledge},\mathsf{runtime},\mathsf{config}\}\)
and let \(\mathcal Y_{\rm fail}\) contain admissible pathology labels.
At round \(r\ge0\), the bank is
\begin{equation}
    \mathcal B_r:
      \mathcal W_{\rm edit}\times\mathcal Y_{\rm fail}
      \longrightarrow\mathfrak H(\mathcal X_{\rm edit})\cup\{\bot\},
    \qquad \mathcal B_0(\mathsf{cell})=\bot.
    \label{eq:evolution-bank}
\end{equation}
A cell \(\mathsf{cell}=(w,y)\) records the edit category and the diagnosed failure
that motivates the modification. The observed vocabulary of \(y\) can
grow, and newly introduced cells start empty, holding \(\bot\). These edit categories are
distinct from the four strategy roles in
\cref{eq:harness-factorization}. For example, modifying a Memory
strategy can be a runtime edit.

Each identified candidate carries a fixed descriptor \(\operatorname{desc}(h)=(w,y)\).
The edit category is checked against the patch, while the pathology
is inferred from the diagnosis. For a composite intervention, its
primary cell and donor lineages are recorded before evaluation.
Labels are not changed in response to scores. Thus, \(\operatorname{desc}\) records the intervention's assigned category
and diagnosis. It is not uniquely determined by the executable content. The bank stores the complete
harness and its evaluated log. One elite per cell implements
quality-diversity search over semantic niches
\citep{mouret2015illuminating}. Pathology labels remain hypotheses,
not established causes.

\paragraph{Parent Selection and Candidate Construction.}
Define the set of stored harnesses
\(\operatorname{Im}^{+}(\mathcal B_r)
=\{\mathcal B_r(\mathsf{cell}):\mathcal B_r(\mathsf{cell})\ne\bot\}\).
The parent \(p_r\) is
\begin{equation}
    p_r=\operatorname{Best}
       \bigl(\{h_0\}\cup\operatorname{Im}^{+}(\mathcal B_r)\bigr),
    \qquad
    \operatorname{Best}(S)\in\arg\max_{h\in S}
       \operatorname{Util}(h;\mathcal D_{\rm tr}).
    \label{eq:evolution-parent}
\end{equation}
For every nonempty finite \(S\), \(\operatorname{Best}\) resolves
equal utility by a fixed creation order, with \(h_0\) first.
All candidates in this comparison have valid cached training logs.
The empty initial bank therefore selects \(h_0\).

Given the parent, its diagnostic evidence, and the bank, proposal
generation returns an ordered list
\begin{equation}
    \mathsf{Proposals}_r=\operatorname{Evolve}
       \bigl(p_r,\mathsf{Log}(p_r;\mathcal D_{\rm tr}),\mathcal B_r;J\bigr)
       =(\mathsf{prop}_1,\ldots,\mathsf{prop}_{J_r}),\qquad 0\le J_r\le J,
    \label{eq:evolution-proposals}
\end{equation}
where \(\mathsf{prop}_j=(\Delta^{\rm patch}_j,\mathsf{cell}_j,\beta_j)\) contains a patch, cell, and
activation specification. The materialized candidate is
\(h'_j=\operatorname{Apply}(p_r,\Delta^{\rm patch}_j)\), with
\(\operatorname{desc}(h'_j)=\mathsf{cell}_j\). Invalid patches are rejected before task rollouts.
Proposals either synthesize an intervention from the diagnosis or recombine
changes from archived donor lineages with the parent.
Donor provenance is retained and overlapping edits must be resolved
before materialization. A recovery rule and a final-verification rule,
for example, can act at different stages. The combined harness requires its own evaluation because the components
may interact and their gains need not be additive. Both proposal types follow the same screening procedure.

\paragraph{Competitive Archive Update.}
Screening considers proposals in their recorded order and retains
the first \(N_{\rm keep}\) that pass. Each survivor is evaluated on all training
tasks. Let \(\mathsf{Confirmed}_r\) contain the survivors whose full-training
logs are protocol-valid. The competitors for cell \(\mathsf{cell}\) are
\begin{equation}
    \mathsf{Comp}_r(\mathsf{cell})=
       \bigl(\{\mathcal B_r(\mathsf{cell})\}\setminus\{\bot\}\bigr)
       \cup\{h\in\mathsf{Confirmed}_r:\operatorname{desc}(h)=\mathsf{cell}\}.
    \label{eq:evolution-cell-competitors}
\end{equation}
The update is
\begin{equation}
    \mathcal B_{r+1}(\mathsf{cell})=
    \begin{cases}
        \operatorname{Best}_{\mathsf{cell}}(\mathsf{Comp}_r(\mathsf{cell})),
                    &\mathsf{Comp}_r(\mathsf{cell})\ne\varnothing,\\
        \bot,&\mathsf{Comp}_r(\mathsf{cell})=\varnothing.
    \end{cases}
    \label{eq:evolution-bank-update}
\end{equation}
Here \(\operatorname{Best}_{\mathsf{cell}}\) maximizes full-training utility, retains
the incumbent on a tie, and otherwise retains the earliest candidate
in proposal order among tied maxima. Equivalently, process confirmed
candidates sequentially and replace only an empty cell or a strictly
weaker incumbent. This resolves competition when several candidates
target the same cell.

Archive admission and parent selection use different comparisons.
A candidate can outperform its parent on the screening subset yet
fall below it on the full training set. It may still fill a new cell
or replace a weaker cell incumbent, but becomes a parent only if
\cref{eq:evolution-parent} selects it. With fixed cached utilities,
occupied-cell utility and the best available utility cannot decrease.
However, neither property guarantees improvement on unseen tasks.
HarnessDev reports this gap empirically: harness gains obtained by
evolution can be unstable, transfer only partially to held-out tasks, and
depend strongly on the model that executes the harness
\citep{wu2026harnessdev}.

\subsection{Gated Harness Screening and Selection}
\label{sec:evolver-screening}

\paragraph{Validity and Activation.}
At round \(r\), sample \(\mathcal D_r\subseteq\mathcal D_{\rm tr}\)
uniformly without replacement, with \(|\mathcal D_r|=n\), and use it
for every candidate-parent comparison in that round. Record the
sample and proposal order. The parent's measurements are the
corresponding entries of its full-training log. Each candidate
receives \(K_{\rm att}\) attempts per sampled task under \(\Pi_{\rm eval}\).
Let \(\operatorname{ValidLog}(\mathsf{Log})\in\{0,1\}\) indicate a complete protocol-valid
log. The gates are
\begin{align}
    g_{\rm valid}(h';\mathcal D_r)
       &=\operatorname{ValidLog}\bigl(\mathsf{Log}^{\rm sc}(h';\mathcal D_r)\bigr),\\
    g_{\rm act}(h';\mathcal D_r)
       &=\mathbb I\!\left[
           \sum_{i\in\mathcal D_r}\sum_{k=1}^{K_{\rm att}}\mathrm{active}_{i,k}(h')>0
          \right].
    \label{eq:evolution-operational-gates}
\end{align}
The activation indicator is
\(\mathrm{active}_{i,k}(h')=\beta
(\tau^{\rm sc}_{i,k}(h'),\mathrm{meta}^{\rm sc}_{i,k}(h'))\in\{0,1\}\),
where \(\beta\) is the declared activation predicate of the proposal that
produced \(h'\).
It detects an instrumented intervention in
the candidate's increment over \(h_0\), such as execution of a
recovery branch or actual injection of a new instruction. The predicate must exclude unrelated log events. Recombined mechanisms retain
their provenance and separate event records. The existential gate
establishes that an instrumented intervention ran at least once.
It neither requires every donor mechanism to execute nor establishes the
causal contribution of the newest edit. The subsequent utility
comparison includes \emph{all} sampled tasks, including those without
activation.

\paragraph{Paired Improvement Screening.}
Validity and activation alone do not establish improvement.
For a protocol-valid candidate \(h'\) and parent \(p_r\), define one
paired difference per task:
\begin{align}
    \gamma_i&=\bar s_i^{\rm sc}(h')-\bar s_i(p_r),
        &\bar\gamma_r&=\frac{1}{n}
                         \sum_{i\in\mathcal D_r}\gamma_i,
        \label{eq:evolution-paired-gain}\\
    \widehat\sigma_\gamma^2
       &=\frac{1}{n-1}\sum_{i\in\mathcal D_r}
                         (\gamma_i-\bar\gamma_r)^2,
        &\widehat{\operatorname{se}}_r
       &=\frac{\widehat\sigma_\gamma}{\sqrt n}.
        \label{eq:evolution-paired-variance}
\end{align}
Here \(\bar\gamma_r=\operatorname{Util}^{\rm sc}(h';\mathcal D_r)-\operatorname{Util}(p_r;\mathcal D_r)\)
is mean gain, \(\widehat\sigma_\gamma^2\) is the sample variance of
task-level gains, and \(\widehat{\operatorname{se}}_r\) is their
estimated standard error. The notation suppresses dependence on the candidate for readability. Averaging the \(K_{\rm att}\) attempts first keeps the task as
the measurement unit, so the sample size is \(n\), not \(nK_{\rm att}\).
Pairing matches task identity and does not assume that attempt
indices share identical random draws.

Define the paired improvement statistic and its operational boundary
cases by
\begin{equation}
    t_{\rm pair}=\begin{cases}
        \bar\gamma_r/\widehat{\operatorname{se}}_r,
            &\widehat{\operatorname{se}}_r>0,\\
        +\infty,&\widehat{\operatorname{se}}_r=0,
                              \ \bar\gamma_r>0,\\
        -\infty,&\widehat{\operatorname{se}}_r=0,
                              \ \bar\gamma_r<0,\\
        0,&\widehat{\operatorname{se}}_r=0,
                              \ \bar\gamma_r=0.
    \end{cases}
    \label{eq:evolution-paired-statistic}
\end{equation}
The nondegenerate expression has the form of a paired Student
\(t\)-statistic, since its denominator uses an estimated variance.
The screening rule uses the fixed standard-normal cutoff \(\lambda_{\rm screen}=1.96\)
without interpreting it as an exact
finite-sample two-sided 5\% test. A Student \(t\) interpretation
requires assumptions on the task differences and a critical value
with \(n-1\) degrees of freedom. Zero-variance
cases follow the explicit convention above and are flagged in the
log. No calibrated \(p\)-value is assigned to an infinite statistic.

For a declared \(\lambda_{\rm screen}>0\), the paired-gain screen is
\begin{equation}
    g_{\rm sig}(h',p_r;\mathcal D_r)
       =\mathbb I[\bar\gamma_r>0]\,
        \mathbb I[t_{\rm pair}\ge\lambda_{\rm screen}].
    \label{eq:evolution-significance}
\end{equation}
The composite decision rejects invalid or inactive candidates before
assessing paired gain:
\begin{equation}
    g_{\rm screen}(h',p_r;\mathcal D_r)=
    \begin{cases}
        g_{\rm sig}(h',p_r;\mathcal D_r),
          &g_{\rm valid}=g_{\rm act}=1\ \text{and }n\ge2,\\
        0,&\text{otherwise}.
    \end{cases}
    \label{eq:evolution-screening}
\end{equation}
Both operational gates in this expression are evaluated on
\((h',\mathcal D_r)\). The parent log must also be valid.
An invalid log never reaches the arithmetic in
\cref{eq:evolution-paired-gain,eq:evolution-paired-variance}.
\Cref{alg:evolution-screen} makes the rejection order explicit.
In the algorithms, \(\mathsf{Log}^{\rm sc}_{h'}\) abbreviates the fresh
screening record \(\mathsf{Log}^{\rm sc}(h';\mathcal D)\) for the supplied
subset, and \(\mathsf{Log}_{h}\) abbreviates the cached record
\(\mathsf{Log}(h;\mathcal D_{\rm tr})\). The supplied proposal's activation
predicate is \(\beta\). The screening algorithm suppresses the round
subscript on its gain statistic because it handles one comparison at a time.
These equations define the candidate screening procedure.
\Cref{app:harness-screening-policy} describes the experiment source and
the conventions used to complete the algorithm specification.

\begin{ravenalgorithm}{Candidate Screening}
\label{alg:evolution-screen}
\raggedright
\textbf{Input:} proposal \(\mathsf{prop}=(\Delta^{\rm patch},\mathsf{cell},\beta)\), parent \(p\)
with valid training log, subset \(\mathcal D\), protocol \(\Pi_{\rm eval}\),
admissible space \(\mathfrak H(\mathcal X_{\rm edit})\), threshold \(\lambda_{\rm screen}\).\\
\textbf{Output:} candidate \(h'\) and screening log, or
rejection \(\bot\) with a recorded reason.
\begin{tabularx}{\linewidth}{@{}r@{\quad}X@{}}
1 & \textbf{if} \(|\mathcal D|<2\), \textbf{return} \(\bot\)
    (insufficient tasks).\\
2 & Materialize \(h'\leftarrow\operatorname{Apply}(p,\Delta^{\rm patch})\) and
    attach \(\operatorname{desc}(h')=\mathsf{cell}\), predicate \(\beta\), and lineage.\\
3 & \textbf{if} application fails, \(h'\notin\mathfrak H(\mathcal X_{\rm edit})\),
    or its descriptor/activation contract is invalid,
    \textbf{return} \(\bot\) (inadmissible proposal).\\
4 & \(\mathsf{Log}^{\rm sc}_{h'}\leftarrow\operatorname{Eval}_{\Pi_{\rm eval}}(h',\mathcal D)\),
    including the declared bounded infrastructure retries.\\
5 & \textbf{if} \(\operatorname{ValidLog}(\mathsf{Log}^{\rm sc}_{h'})=0\), \textbf{return} \(\bot\)
    (invalid measurement) without computing a gain.\\
6 & \textbf{if} \(\sum_{i\in\mathcal D}\sum_{k=1}^{K_{\rm att}}
    \beta(\tau^{\rm sc}_{i,k}(h'),\mathrm{meta}^{\rm sc}_{i,k}(h'))=0\), \textbf{return} \(\bot\)
    (inactive intervention).\\
7 & Compute \(\gamma_i,\bar\gamma,\widehat\sigma_\gamma^2\)
    against the parent's restricted log using
    \cref{eq:evolution-paired-gain,eq:evolution-paired-variance}.\\
8 & Compute \(t_{\rm pair}\) by \cref{eq:evolution-paired-statistic} and
    record whether the variance is zero.\\
9 & \textbf{if} \(\bar\gamma\le0\) or
    \(t_{\rm pair}<\lambda_{\rm screen}\), \textbf{return} \(\bot\)
    (insufficient paired gain).\\
10 & \textbf{return} \((h',\mathsf{Log}^{\rm sc}_{h'})\).\\
\end{tabularx}
\end{ravenalgorithm}

\paragraph{Complete Search and Final Selection.}
Combining the screening and archive rules, \cref{alg:evolution} holds the parent and archive fixed while a round's
proposals are screened. Only complete full-training evaluations can
update the bank. If no proposal passes, the bank remains unchanged
and the stagnation counter advances. With \(R_{\rm round}\) rounds, at most
\(J\) proposals and \(N_{\rm keep}\) confirmations per round, the procedure
uses at most
\(K_{\rm att}[(1+R_{\rm round}N_{\rm keep})|\mathcal D_{\rm tr}|+R_{\rm round}Jn]\) scored task attempts,
excluding infrastructure retries and proposer computation.
An additional token or time ceiling can stop development early.
An unfinished evaluation is never used for selection.
In the algorithm, \(\mathsf{Survivors}_r\) is the ordered list of screening survivors
and \(\mathrm{stale}\) counts consecutive rounds without a cell update.

\begin{ravenalgorithm}{Semantic Gene-Bank Search}
\label{alg:evolution}
\textbf{Input:} valid \(h_0\), admissible space \(\mathfrak H(\mathcal X_{\rm edit})\),
training set \(\mathcal D_{\rm tr}\), protocol \(\Pi_{\rm eval}\) with \(K_{\rm att}\)
attempts, settings \((n,J,N_{\rm keep},R_{\rm round},P,\lambda_{\rm screen})\).\\
\textbf{Output:} training-selected harness \(\widehat h\) and its
cached evaluation record. Test tasks are not inputs to the search.
\begin{tabularx}{\linewidth}{@{}r@{\quad}X@{}}
1 & \(\mathsf{Log}_{h_0}\leftarrow\operatorname{Eval}_{\Pi_{\rm eval}}(h_0,\mathcal D_{\rm tr})\).
    \textbf{If} \(\operatorname{ValidLog}(\mathsf{Log}_{h_0})=0\), terminate with an evaluation error.\\
2 & Cache \(\mathsf{Log}_{h_0},\operatorname{Util}(h_0;\mathcal D_{\rm tr})\).
    Initialize \(\mathcal B_0\leftarrow\bot\) in every cell and set
    \(r\leftarrow0,\ \mathrm{stale}\leftarrow0\).\\
3 & \textbf{while} \(r<R_{\rm round}\) and \(\mathrm{stale}<P\):\\
4 & \quad \(p_r\leftarrow\operatorname{Best}
    (\{h_0\}\cup\operatorname{Im}^{+}(\mathcal B_r))\).\\
5 & \quad Generate ordered \(\mathsf{Proposals}_r\) by \cref{eq:evolution-proposals} and
    sample \(\mathcal D_r\) of size \(n\) without replacement.\\
6 & \quad \(\mathsf{Survivors}_r\leftarrow[\,]\), \(\mathsf{Confirmed}_r\leftarrow\varnothing\).\\
7 & \quad \textbf{for each} \(\mathsf{prop}_j\) in \(\mathsf{Proposals}_r\), in order:\\
8 & \qquad \textbf{if} \(|\mathsf{Survivors}_r|=N_{\rm keep}\), \textbf{break}.\\
9 & \qquad Run \cref{alg:evolution-screen} with
    \((\mathsf{prop}_j,p_r,\mathcal D_r,\Pi_{\rm eval},\mathfrak H(\mathcal X_{\rm edit}),\lambda_{\rm screen})\).\\
10 & \qquad \textbf{if} accepted, append the returned candidate to \(\mathsf{Survivors}_r\).\\
11 & \quad \textbf{for each} \(h'\) in \(\mathsf{Survivors}_r\), in order:\\
12 & \qquad \(\mathsf{Log}_{h'}\leftarrow
    \operatorname{Eval}_{\Pi_{\rm eval}}(h',\mathcal D_{\rm tr})\).\\
13 & \qquad \textbf{if} \(\operatorname{ValidLog}(\mathsf{Log}_{h'})=1\): cache its log and
    utility, and set \(\mathsf{Confirmed}_r\leftarrow\mathsf{Confirmed}_r\cup\{h'\}\).\\
14 & \quad Update \(\mathcal B_{r+1}\) by
    \cref{eq:evolution-cell-competitors,eq:evolution-bank-update}.\\
15 & \quad \(\mathrm{stale}\leftarrow0\) if any cell changes,
    otherwise \(\mathrm{stale}\leftarrow\mathrm{stale}+1\).\\
16 & \quad \(r\leftarrow r+1\).\\
17 & \(\widehat h\leftarrow\operatorname{Best}
    (\{h_0\}\cup\operatorname{Im}^{+}(\mathcal B_r))\).
    \textbf{Return} \(\widehat h\) and its cached record.\\
\end{tabularx}
\end{ravenalgorithm}

After selection, freeze \(\widehat h\) and evaluate it and \(h_0\)
under the same held-out protocol. The reported difference is
\(\operatorname{Util}(\widehat h;\mathcal D_{\rm te})-\operatorname{Util}(h_0;\mathcal D_{\rm te})\).
Test outcomes do not revise proposals, thresholds, archive cells, or
the selected harness. Repeated adaptive screening on training tasks guides the search but does
not provide a family-wise significance guarantee.
Activation evidence, archive competition, and the held-out comparison
address separate questions: whether the intervention executed, how the
candidate ranked on training tasks, and how the selected harness performed
on unseen tasks.

\section{Memory System}\label{sec:memory}

An agent's interaction history contains information that can remain useful
beyond the active context, including user constraints, evidence supporting
earlier decisions, and procedures learned during execution.
To preserve this information across turns and sessions, Raven uses
EverOS \citep{hu2026evermemos,evermind2026everos}. Its memory lifecycle separates episodic
trace formation, semantic consolidation, and query-dependent recollection,
with distinct user and agent memory tracks and asynchronous persistence.
Raven connects user memory to context
assembly and agent experience to Skill Forge. We first describe how
memories acquire structure, then explain their retrieval and use in
Raven. \Cref{fig:method-memory} summarizes the formation and consolidation
process, while \cref{fig:memory-retrieval} shows the retrieval path used
for Raven's user memory.
Implementation defaults are collected in \cref{app:method-defaults}.

\subsection{Episodic Trace Formation}
\label{sec:memory-formation}

A useful memory must preserve enough context to interpret an event
without replaying the entire conversation. Let
\(\mathsf{Hist}=(\mathsf{msg}_1,\ldots,\mathsf{msg}_{T_{\rm msg}})\) be an ordered interaction stream of
\(T_{\rm msg}\) messages. Each message records its content, sender, role,
timestamp, and any associated tool interaction. A boundary detector
groups contiguous messages into segments
\(\mathsf{seg}_i=(\mathsf{msg}_{\mathrm{end}_{i-1}+1},\ldots,\mathsf{msg}_{\mathrm{end}_i})\), where \(i\) indexes
segments, \(\mathrm{end}_i\) is the last message index of segment \(i\), and
\(\mathrm{end}_0=0\). An LLM examines a sliding context
window and closes a segment when it detects a semantic boundary such as
a topic change. Grouping related exchanges preserves the context needed to resolve
references and explain decisions. Messages after the last
accepted boundary remain buffered until a later boundary or an explicit
flush.

We describe the information associated with segment \(i\) as a
\emph{MemCell}:
\begin{equation}
    \mathsf{Cell}_i=(\mathsf{ep}_i,\mathcal F_i,\mathcal P_i,\mathrm{meta}_i).
    \label{eq:memory-cell}
\end{equation}
Here \(\mathsf{ep}_i\) is an episode narrative, \(\mathcal F_i\) is a set of
atomic facts, \(\mathcal P_i\) is a set of forward-looking statements,
and \(\mathrm{meta}_i\) is grounding metadata. The narrative rewrites the
interaction in the third person, resolves coreferences, and removes
conversational repetition while retaining the event and its context.
Atomic facts split that narrative into statements that can be matched
individually to a later query. For example, a discussion about setting up
a project can yield a narrative explaining the environment decision and
a fact stating the selected Python version. The narrative explains the decision, while the fact provides a precise
retrieval target.

Foresight represents information whose usefulness depends on time.
Write an element of \(\mathcal P_i\) as \((p,t_{p,0},t_{p,1})\), where \(p\)
is a plan or inferred temporary state, and \([t_{p,0},t_{p,1}]\) is its
estimated validity interval. At time \(t_{\rm now}\), the time-valid subset is
\begin{equation}
    \mathcal P_i(t_{\rm now})=
    \{(p,t_{p,0},t_{p,1})\in\mathcal P_i: t_{p,0}\le t_{\rm now}\le t_{p,1}\}.
    \label{eq:memory-validity}
\end{equation}
An upcoming deadline, for instance, has a different temporal status from
a durable preference. The validity interval allows expired predictions to be
filtered out. EverOS can derive foresight from the episode or the shared
source segment.

The metadata \(\mathrm{meta}_i\) connects the extracted representations to their
origin. It includes time and source identifiers, together with session
and ownership information. EverOS separates application and project
scope, then partitions records by owner type and owner identifier. User
records belong to a user, while cases and skills belong to an agent.
A session identifier connects records from one interaction within these
partitions. Thus, a session describes where an experience occurred,
while ownership determines the collection in which it can be recalled.
Raven's ordinary recall names exactly one user or agent identifier and
uses the backend's default application and project scope.

In EverOS, \cref{eq:memory-cell} describes a logical association among
representations that are stored separately.
A shared boundary stage creates the source segment and its identifier.
The user pipeline synthesizes and writes the episode, after which
background strategies extract facts and update the higher-level
representations. Facts retain a parent link to the episode. In the
default agent mode, the same source segment also enters the agent
pipeline, preserving tool calls and their results for case extraction.
This shared origin connects what the user discussed with what the agent
attempted. Raven submits completed turns and flushes the remaining
buffer, so its turn boundaries also influence memory granularity.

\begin{figure}[t]
    \centering
    \includegraphics[width=\linewidth]{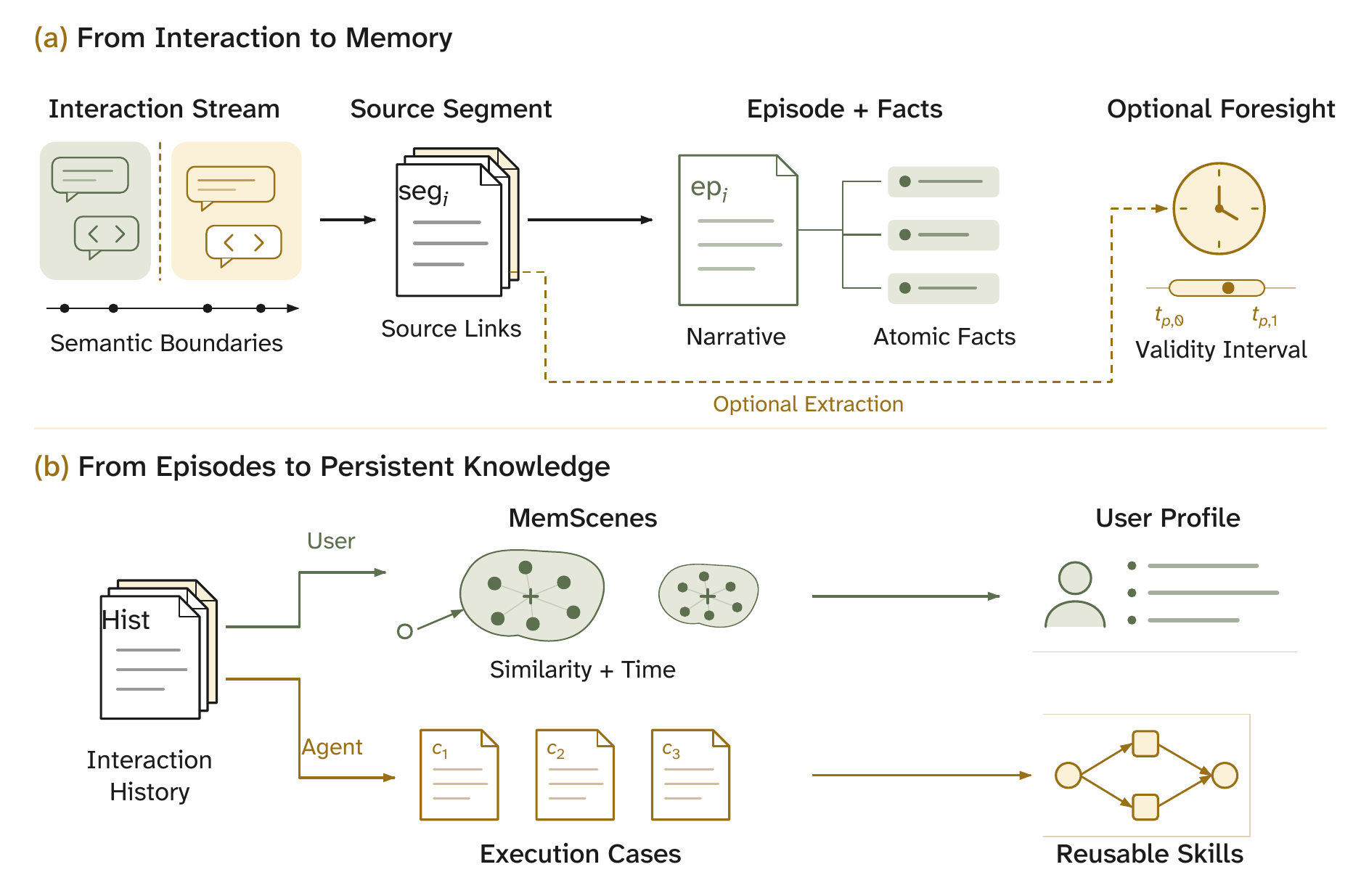}
    \caption{Memory formation and consolidation in EverOS.
    (a) Semantic boundaries group dialogue and tool interactions into a
    source segment \(\mathsf{seg}_i\). Its episode \(\mathsf{ep}_i\) anchors atomic
    facts. The dashed branch derives optional foresight with validity
    interval \([t_{p,0},t_{p,1}]\).
    (b) Across interaction history \(\mathsf{Hist}\), episodes, shown as
    points, form MemScenes around centroids marked by crosses.
    These scenes select evidence for profile updates.
    The agent track consolidates execution cases \(c_1,c_2,c_3\) into
    reusable procedures, depicted by a small dependency graph.
    The diagram shows representation and data dependencies.
    Persistence and indexing proceed separately.}
    \label{fig:method-memory}
\end{figure}

\subsection{Semantic Consolidation}
\label{sec:memory-consolidation}

Individual episodes describe local events. To identify recurring topics
and persistent attributes across interactions, consolidation groups related
events into a \emph{MemScene}. EverOS implements the user-side grouping as online
clustering of episode embeddings. Let \(\mathbf{z}_i=\operatorname{Enc}(\mathsf{ep}_i)\in\mathbb R^{d_{\rm emb}}\)
be the embedding of episode \(i\), where \(\operatorname{Enc}\) is the embedding
model and \(d_{\rm emb}\) its output dimension. A scene \(\mathsf{scene}_j\) stores its
member episode identifiers, centroid \(\boldsymbol{\mu}_j\in\mathbb R^{d_{\rm emb}}\), member
count \(n_j\), and latest member timestamp \(t_{j,{\rm last}}\). These quantities
summarize the evidence already assigned to that scene.

Assignment combines semantic similarity with temporal proximity. For
episode timestamp \(t_i\), let \(\mathcal J_{\rm valid}\) index existing
scenes with nonempty centroids of the embedding dimension. Define the
eligible scene indices and the
similarity score as
\begin{equation}
    \begin{aligned}
    \mathcal J_i&=\{j\in\mathcal J_{\rm valid}:
                       |t_i-t_{j,{\rm last}}|\le\Delta_{\rm time}\},
    \\
    \operatorname{sim}(\mathbf{z}_i,\boldsymbol{\mu}_j)&=\frac{\mathbf{z}_i^\top\boldsymbol{\mu}_j}
      {(\|\mathbf{z}_i\|_2+\epsilon_{\rm num})(\|\boldsymbol{\mu}_j\|_2+\epsilon_{\rm num})}.
    \end{aligned}
    \label{eq:memory-scene-similarity}
\end{equation}
Here \(\Delta_{\rm time}\) is the allowed time gap and \(\epsilon_{\rm num}=10^{-9}\)
prevents division by zero. Scenes with no centroid are excluded. When
\(\mathcal J_i\) is nonempty, the highest-scoring eligible scene is
\(j^*\in\arg\max_{j\in\mathcal J_i}\operatorname{sim}(\mathbf{z}_i,\boldsymbol{\mu}_j)\), with ties resolved by
the first scene encountered in the backend's candidate order. Episode \(i\) joins
that scene if \(\operatorname{sim}(\mathbf{z}_i,\boldsymbol{\mu}_{j^*})\ge\theta_{\rm sim}\), where \(\theta_{\rm sim}\) is the
similarity threshold. Otherwise, a new singleton scene is created with centroid \(\mathbf{z}_i\),
count one, and timestamp \(t_i\). For an accepted assignment, the updates are
\begin{equation}
    \boldsymbol{\mu}_{j^*}^{+}=\frac{n_{j^*}\boldsymbol{\mu}_{j^*}+\mathbf{z}_i}{n_{j^*}+1},
    \qquad n_{j^*}^{+}=n_{j^*}+1,
    \qquad t_{j^*,{\rm last}}^{+}=\max(t_{j^*,{\rm last}},t_i).
    \label{eq:memory-scene-update}
\end{equation}
The superscript \(+\) denotes the state after insertion. The member
identifier is appended as well. The time window constrains scene assignment without limiting episode
retention or searchability. Consequently, a recurring topic can occupy several scenes
when its occurrences are separated in time.

Profile evolution uses the resulting groups to select evidence for a
compact account of the user. Let \(\mathsf{Prof}_u\) denote the current profile
of user \(u\), with explicit attributes and inferred traits. Explicit
attributes include stated preferences and goals, while inferred traits
summarize patterns across interactions. EverOS uses recently updated
scenes to select the underlying source segments for a profile update. If
\(t_{\mathsf{Prof}_u}\) is the previous profile timestamp and \(j_{\rm cur}\)
is the scene updated by the current insertion, this selection is
\begin{equation}
    \mathcal J_u^{+}=\{j:t_{j,{\rm last}}>t_{\mathsf{Prof}_u}\}\cup\{j_{\rm cur}\},
    \qquad
    \mathsf{Prof}_u^{+}=\operatorname{UpdateProfile}
        (\mathsf{Prof}_u,\operatorname{Segments}(\mathcal J_u^{+})).
    \label{eq:memory-profile-update}
\end{equation}
All scenes in this expression belong to user \(u\) in the selected
application and project. The operator \(\operatorname{Segments}(\mathcal J)\)
collects the available source segments linked to scenes indexed by
\(\mathcal J\), ordered by time. \(\operatorname{UpdateProfile}\)
is an LLM extraction step conditioned on those segments and the previous
profile. With no prior profile, the extractor initializes one from the
available source segments. Including \(j_{\rm cur}\) keeps the current cluster eligible
even when an older interaction is ingested after a newer profile update.
Without embeddings, EverOS can select recent source segments by
timestamp and update the profile without geometric clustering.

Beyond profile updates, EverOS provides optional episode reflection for consolidation at a
longer timescale. Reflection selects a cluster, synthesizes a merged
episode, extracts facts from that merged narrative, and marks the
superseded episodes and facts as deprecated. Online assignment preserves
individual events, whereas reflection can replace fragmented retrieval
units with a consolidated account. This operation is separate from the
ordinary cluster and profile updates described above.

\subsection{Reconstructive Retrieval}
\label{sec:memory-retrieval}

EverOS supports several retrieval methods. Raven's user-memory call uses the
\textsc{hybrid}
method, which searches episodes first and then lets their atomic facts
compete for a bounded result set. Retrieval remains within the selected user partition, following the
search and response flow in \cref{fig:memory-retrieval}.

\begin{figure}[t]
    \centering
    \includegraphics[width=\linewidth]{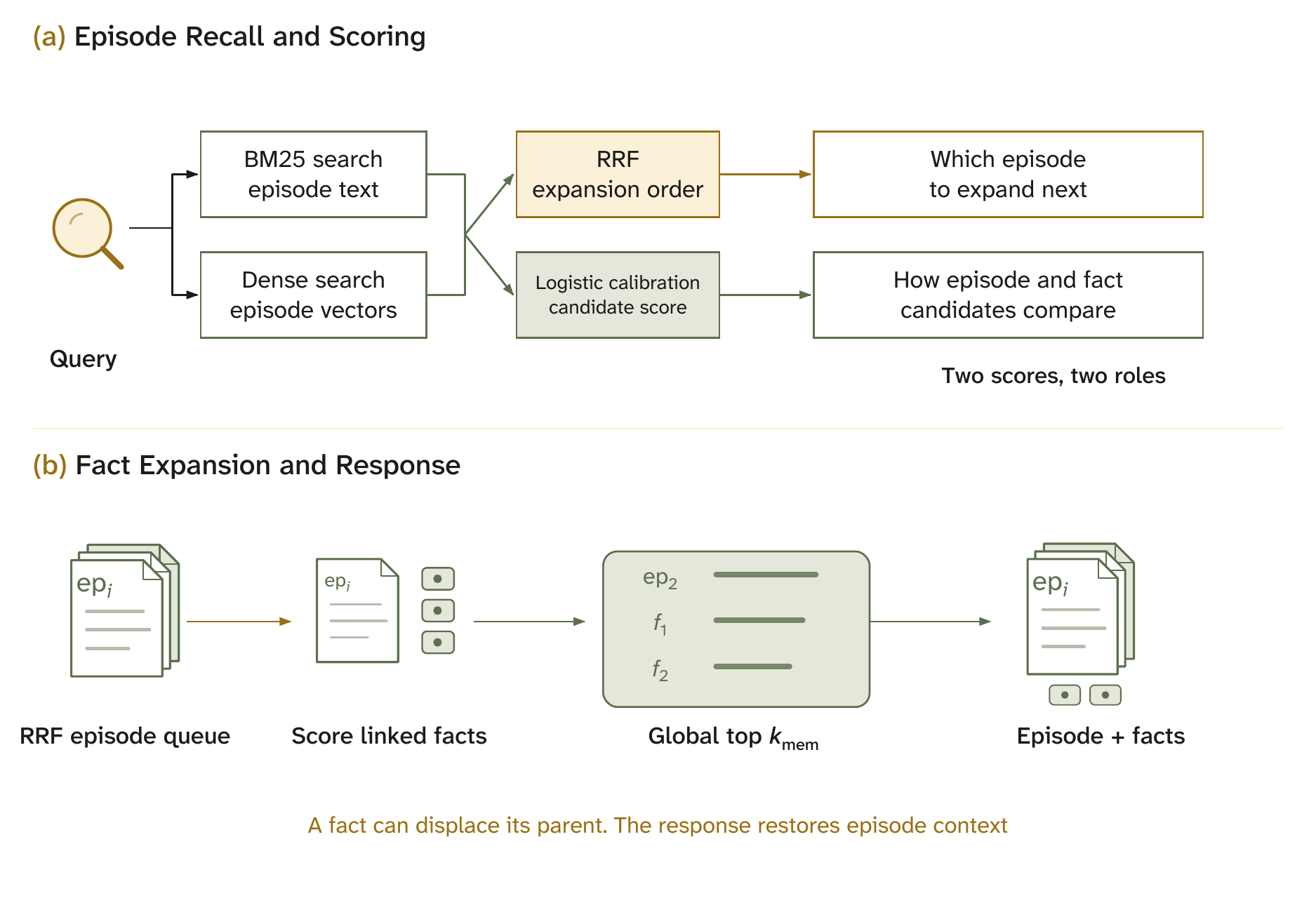}
    \caption{Raven's user-memory retrieval through EverOS \textsc{hybrid}.
    (a) BM25 and dense search retrieve episodes. Reciprocal rank fusion
    (RRF) determines their expansion order. A separate calibrated score
    ranks them for final selection.
    (b) The query scores facts belonging to each expanded episode.
    Episodes and facts compete within a bounded candidate set. A selected
    fact is returned under its parent episode, which restores its context.
    The positive integer \(k_{\rm mem}\) is the backend candidate-set cap.}
    \label{fig:memory-retrieval}
\end{figure}

\paragraph{Episode Recall and Expansion Order.}
Let the positive integer \(k_{\rm mem}\) bound the backend's episode-and-fact
candidate set. For query \(q\), EverOS retrieves episodes through BM25 search over the
episode text and dense search over episode embeddings. Lexical search matches query terms, while dense search can also match
paraphrases. Let
\(L^{\rm hit}_{\ell}(q)\) be the ranked episode list from channel
\(\ell\in\{\mathrm{lex},\mathrm{dense}\}\), with one-based ranks.
RRF gives episode \(\mathsf{ep}_i\) the expansion priority
\begin{equation}
    \operatorname{RRF}_i(q)=
    \sum_{\substack{\ell\in\{\mathrm{lex},\mathrm{dense}\}\\\mathsf{ep}_i\in L^{\rm hit}_\ell(q)}}
    \frac{1}{c_{\rm RRF}^{\rm mem}+\operatorname{rank}_{L^{\rm hit}_\ell(q)}(\mathsf{ep}_i)},
    \label{eq:memory-rrf}
\end{equation}
where \(c_{\rm RRF}^{\rm mem}>0\) controls rank damping. A channel contributes only
when it returns the episode. RRF determines expansion order, while
calibrated scores determine final relevance.

Final selection uses a separate calibrated relevance score. Let
\(\mathrm{dense}_i(q)\) be episode \(\mathsf{ep}_i\)'s dense similarity and \(\mathrm{lex}_i(q)\)
its BM25 score, with zero for a channel that did not recall it.
The backend's logistic calibration has the form
\begin{equation}
    \operatorname{Rel}_i(q)=\operatorname{sigmoid}\bigl(w_{\rm dense}\,\mathrm{dense}_i(q)+w_{\rm lex}\,\mathrm{lex}_i(q)+b_{\rm off}\bigr),
    \qquad \operatorname{sigmoid}(x)=\frac{1}{1+\exp(-x)}.
    \label{eq:memory-hybrid-score}
\end{equation}
Here \(w_{\rm dense}\), \(w_{\rm lex}\), and \(b_{\rm off}\) are fixed backend
coefficients. Calibration puts episode and fact candidates on a common
ranking scale.

\paragraph{Fact Expansion and Response.}
For each recalled episode, EverOS fetches linked atomic facts by parent
identifier and scores them against the query embedding. The initial
BM25 search is over episodes, so a fact \(f\in\mathcal F_i\) uses its
parent's lexical score \(\mathrm{lex}_i(q)\), together with its own dense
similarity \(\mathrm{dense}_f(q)\):
\begin{equation}
    \operatorname{Rel}_f(q)=\alpha_{\rm mix}\,
      \operatorname{sigmoid}\bigl(w_{\rm dense}\,\mathrm{dense}_f(q)+w_{\rm lex}\,\mathrm{lex}_i(q)+b_{\rm off}\bigr)
      +(1-\alpha_{\rm mix})\operatorname{Rel}_i(q).
    \label{eq:memory-fact-score}
\end{equation}
The mixing weight satisfies \(\alpha_{\rm mix}\in[0,1]\).
The default setting is \(\alpha_{\rm mix}=1\), allowing a precise fact
match to outrank its parent episode. The backend initializes a set
of at most \(k_{\rm mem}\) candidates and expands episodes in RRF order.
The best facts from an expanded episode may enter if space remains
or their calibrated score exceeds the lowest retained score.
When one enters, the parent episode is removed from this competition
to avoid spending two slots on the same evidence. Expansion stops
after all candidates are considered or the selected identifiers stay
unchanged for a bounded number of steps.

The response restores episode context after this competition.
Selected facts are grouped beneath their parent episode. If that episode
was removed from the candidate set, the backend recovers it from the
episode recall pool and assigns it the highest score among its retained
facts.
The API therefore returns episode objects with nested
\texttt{atomic\_facts}, possibly fewer than \(k_{\rm mem}\) distinct episodes.
Raven renders episode summaries, falling back to the episode body.

\subsection{User Context Assembly}
\label{sec:memory-archive}

To use the retrieved information during execution, Raven combines recalled
semantic memory with a host-maintained Markdown archive in the workspace.
The archive contains profile sections and
episode notes. Its consolidator appends an episode note when history is
archived and revises the profile when enough new notes share a tag.
For an ordinary turn, the host selects the two profile sections with
the greatest lexical overlap with the incoming message, together with
all Notes sections. This gives the host a compact source
of persistent user context alongside query-dependent backend recall.

Let the positive integer \(k_{\rm recall}\) be the adapter's final result
limit, distinct in role from the backend cap \(k_{\rm mem}\). EverOS user
recall requests episodes and the user's profile. The backend
retrieves episodes by the \textsc{hybrid} method above and fetches the profile
separately. Raven renders the profile as key-value lines within a character
budget. It then places the profile in the same result list as episodes with an
adapter-assigned score of \(1.0\) and retains at most \(k_{\rm recall}\) items.
This score is a placement convention, not a relevance estimate or
confidence in the profile. The adapter's result limit includes any
retained profile, while the server's episode limit excludes it.

Let \(\operatorname{Archive}(q)\) be the selected host archive text and \(\operatorname{Recall}(q)\) the
rendered user recall. Raven assembles the memory segment as
\begin{equation}
    \operatorname{MemContext}(q)=\operatorname{Archive}(q)\mathbin{+\!\!+}\operatorname{Fence}(\operatorname{Recall}(q)),
    \label{eq:memory-context}
\end{equation}
where \(\mathbin{+\!\!+}\) denotes ordered text concatenation and
\(\operatorname{Fence}\) marks recalled content as untrusted contextual
data. This segment can contain both host and EverOS profile text.
If either layer is unavailable, it contributes an empty segment and the
turn continues. When the backend lacks
an embedding model, the adapter requests keyword retrieval.

\subsection{Agent Experience and Cross-Session Reuse}
\label{sec:memory-session}

For procedural reuse, the agent memory track extracts procedures from execution evidence.
Given the source segment produced during formation, a case extractor
records the task intent, attempted approach, reusable insight, and an
estimated quality score. Related cases are grouped by task intent and
used to create or revise skills with links to their supporting cases.
User profiles describe the user's context, whereas cases and skills
capture how an agent approached a class of tasks. Agent recall supplies these records
to Skill Forge, whose admission, clustering, and skill-update rules are
specified in \cref{sec:skillforge-evolution}. They do not enter the
user memory segment in \cref{eq:memory-context}.

Cross-session reuse requires memory content to survive independently of
the active process and its search index. EverOS stores episodes, facts,
profiles, cases, and skills as Markdown records. SQLite maintains
operational state, source-segment payloads, and cluster membership,
while a derived index supports lexical and vector retrieval. File changes
are propagated to the index asynchronously. The backend uses
LanceDB by default, supports Milvus as an alternative, and maintains
approximate nearest-neighbor indexes for sufficiently large vector
collections. These indexing choices change retrieval execution without
changing the memory representations above. The Markdown records can be
used to rebuild the search index. Raven serializes writes within a
session and retries failed stores. Queue limits and retry settings are
recorded in \cref{app:method-defaults}.

For multi-agent execution, Raven also reads the records associated with
a particular worker session. This is a session-filtered read of user
episodes and agent cases.
When a worker uses its own memory backend, the host reads that backend.
When memory is formed from a captured worker trace, the host submits
the trace with the worker's identities and session identifier. The
resulting memory artifacts can be made available to downstream graph
nodes through the paths described in \cref{sec:collaboration-memory}.
Task completion releases execution dependencies while memory recording
continues asynchronously.

\section{Skill Forge}\label{sec:skillforge}

Skill Forge retrieves reusable procedures for an agent and updates them
using execution experience. A skill describes a task class, instructions
for performing it, and any required resources.
The method has three stages: build a curated catalog, select procedures
for the current task, and revise experience-derived skills for later
use. The catalog and reference retrieval pipeline build on our prior work,
SkillCorpus \citep{wang2026skillcorpus}. SkillCorpus introduced the
catalog construction in \cref{sec:skillforge-hub}, including corpus
deduplication, quality assessment, and release admission. It also
released the reference stack of query rewriting, recall, reranking, and
LLM selection on which \cref{sec:skillforge-retrieval} builds, and
\cref{sec:eval-skills} summarizes its published experiments. Raven
extends this work with an online router that fuses the SkillHub
catalog with local and EverOS skills (\cref{sec:skillforge-retrieval}).
Stored turns enter EverOS, whose agent pipeline converts execution
cases into procedures (\cref{sec:skillforge-evolution}).
\Cref{fig:method-skills} shows how corpus admission, task-specific
selection, and experience-driven revision connect.

\begin{figure}[!htbp]
    \centering
    \includegraphics[width=\linewidth]{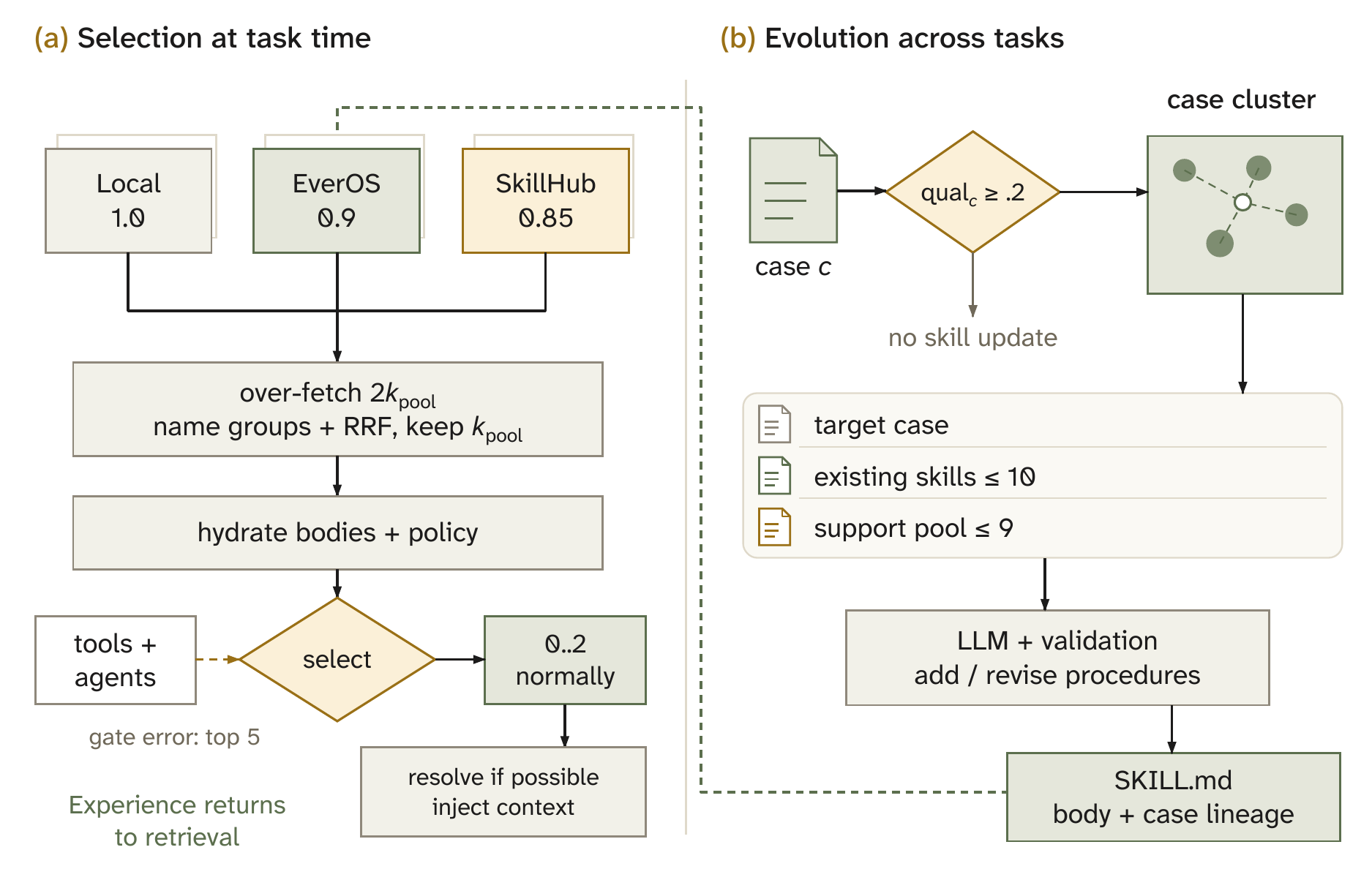}
    \caption{Skill selection and experience-driven updates.
    (a) Ranked source hits are grouped by name and fused before body
    retrieval, runtime policy checks, and LLM selection. The normal
    gate selects a bounded list, while a gate failure uses a separate
    ranked fallback. Resource resolution is attempted after selection.
    (b) A case enters an intent cluster and conditions skill operations
    together with existing skills and a bounded set of supporting cases.
    Persisted revisions become candidates for later
    EverOS retrieval after indexing. The positive integer \(k_{\rm pool}\)
    is the fused skill-pool cap. Numeric labels show the implementation
    defaults in \cref{app:method-defaults}.}
    \label{fig:method-skills}
\end{figure}

\subsection{Curated Skill Catalog}
\label{sec:skillforge-hub}

SkillHub is the live skill catalog that Raven queries. Its corpus and
retrieval design build on SkillCorpus \citep{wang2026skillcorpus},
whose curation pipeline this subsection summarizes.

\paragraph{Skill Representation.}
We denote a skill record by a sans-serif \(\mathsf{s}\), distinguishing it from
the analytical state \(s\) in \cref{sec:theory}, and represent it as
\begin{equation}
    \mathsf{s}=(\mathrm{id}_{\mathsf{s}},\mathrm{name}_{\mathsf{s}},\mathrm{desc}_{\mathsf{s}},\mathrm{body}_{\mathsf{s}},\mathcal R^{\rm res}_{\mathsf{s}},\mathrm{meta}_{\mathsf{s}}),
    \label{eq:skill-record}
\end{equation}
where \(\mathrm{id}_{\mathsf{s}}\) is its source-native identifier,
\(\mathrm{name}_{\mathsf{s}}\) its display name, \(\mathrm{desc}_{\mathsf{s}}\) its applicability
and trigger description, \(\mathrm{body}_{\mathsf{s}}\) its procedural body, \(\mathcal R^{\rm res}_{\mathsf{s}}\)
its bundled resources, and \(\mathrm{meta}_{\mathsf{s}}\) its metadata. Metadata can include
origin, curation scores, owner, and supporting-case identifiers.
In the \texttt{SKILL.md} representation, name and description occupy
frontmatter, the procedure is Markdown, and optional directories hold
scripts, references, or assets. Separating these fields permits compact
discovery while retaining the instructions used at execution time.
A name is not a globally unique identifier.

\paragraph{Corpus Construction and Deduplication.}
SkillCorpus aggregates public skills through a source registry and
applies six curation stages. The pipeline parses each entry, filters malformed or length-ineligible
entries, removes duplicates, assesses quality and assigns issue flags,
applies release admission rules, and attaches source priors, retrieval
embeddings, and index entries. The reference release contains 96,401 active skills. This count describes
a fixed corpus snapshot and may differ from the live SkillHub catalog.

Deduplication first collapses content- and name-fingerprint matches.
The semantic stage embeds the remaining entries in a 1,024-dimensional space and
compares pairs by cosine similarity. Similarity above \(0.995\)
triggers automatic merging, while pairs in \((0.90,0.995]\) receive LLM
adjudication. Removing duplicate procedures reduces redundant retrieval
and potential near-duplicate collisions in retriever training. This
corpus-level procedure differs from the online router's exact-name
collapse in \cref{sec:skillforge-retrieval}.

\paragraph{Quality Facets and Release Admission.}
After deduplication, the judge assigns integer utility, robustness, and safety subscores
from \(\{0,\ldots,10\}\). Dividing them by ten yields
\(\mathrm{util}_{\mathsf{s}},\mathrm{rob}_{\mathsf{s}},\mathrm{safe}_{\mathsf{s}}\in[0,1]\). Utility evaluates the description's scope
and triggers. Robustness evaluates the body and its consistency with
the description. Safety evaluates risks in following the instructions.
Each of the nineteen issue flags imposes a cap on its assigned quality
facet. Soft flags apply through these caps, without a separate score deduction.

For a parsed, deduplicated, judged record, let \(\mathcal F^{\rm flag}_{\mathsf{s}}\) be
its triggered flags, \(\mathrm{mal}_{\mathsf{s}}\in\{0,1\}\) a malware-pattern match, and
\(g_{\rm lic}(\mathsf{s})\in\{0,1\}\) acceptance under the release's source
license policy. The hard-flag set \(\mathcal F^{\rm flag}_{\rm hard}\) contains
\texttt{prompt\_injection}, \texttt{cmd\_injection},
\texttt{unsafe\_exec}, \texttt{auth\_bypass}, and
\texttt{csam\_risk}. Release admission is
\begin{equation}
    g_{\rm rel}(\mathsf{s})=g_{\rm lic}(\mathsf{s})\,
        \mathbb I[\mathrm{mal}_{\mathsf{s}}=0]\,
        \mathbb I[\mathcal F^{\rm flag}_{\mathsf{s}}\cap\mathcal F^{\rm flag}_{\rm hard}=\varnothing]\,
        \mathbb I[\mathrm{safe}_{\mathsf{s}}\ge0.3],
    \label{eq:skill-admission}
\end{equation}
Only records that pass
structural validation and deduplication reach this decision.

For admitted skills, safety attenuation and content quality are
\begin{align}
    \operatorname{atten}(x)&=
       \begin{cases}
           0.5+0.5(x-0.3)/0.4,&0.3\le x\le0.7,\\
           1,&0.7<x\le1,
       \end{cases}
       \label{eq:skill-safety-attenuation}\\
    \mathrm{qual}_{\rm content}(\mathsf{s})&=
       \operatorname{atten}(\mathrm{safe}_{\mathsf{s}})\,[0.50\,\mathrm{util}_{\mathsf{s}}+0.35\,\mathrm{rob}_{\mathsf{s}}+0.15\,\mathrm{safe}_{\mathsf{s}}].
       \label{eq:skill-content-quality}
\end{align}
Thus \(\operatorname{atten}(0.3)=0.5\) and \(\operatorname{atten}(0.7)=1\). It is not evaluated
for a safety-rejected record. Let \(\operatorname{src}(\mathsf{s})\) identify
the corpus source and \(p_{\rm src}(\operatorname{src}(\mathsf{s}))\) be its
prior in \cref{app:method-skill-prior}. Define the structural bonus
\(\mathrm{bonus}_{\mathsf{s}}=0.05\mathbb I[\texttt{scripts/}\text{ present}]
+0.02\mathbb I[\texttt{references/}\text{ present}]\).
The admitted skill's composite quality is
\begin{equation}
    \mathrm{qual}_{\rm cur}(\mathsf{s})=\operatorname{clip}_{[0,1]}
       \bigl(0.85\,\mathrm{qual}_{\rm content}(\mathsf{s})
          +0.15\,p_{\rm src}(\operatorname{src}(\mathsf{s}))+\mathrm{bonus}_{\mathsf{s}}\bigr),
    \label{eq:skill-quality}
\end{equation}
where \(\operatorname{clip}_{[a,b]}(z)=\min(b,\max(a,z))\).
A prior or structural bonus cannot override rejection by
\cref{eq:skill-admission}. The prior uses source-level content-quality
statistics, not the final composite score, avoiding a circular
definition. The resulting scores reflect curation rules and are not calibrated
probabilities of downstream success. Each released skill also receives
one of sixteen task-domain labels for organization.

\paragraph{Catalog Retrieval Artifacts.}
To search the resulting catalog, SkillCorpus provides two fine-tuned models, using
\modelname{Qwen3-Embedding-0.6B} for retrieval and
\modelname{Qwen3-Reranker-0.6B} for reranking. They use a retrieval field
distilled to 3,000 characters from each body. The reference stack
recalls and reranks candidates, then presents full bodies to an LLM
selector returning zero to two skills. Raven's integration combines several sources and supplies bounded body
excerpts to its selection gate.
Consequently, the corpus, its packaged retrieval models, and Raven's
online selector are distinct components of an evaluation.

\subsection{Task-Aware Skill Retrieval and Selection}
\label{sec:skillforge-retrieval}

\paragraph{Query, Sources, and Budgets.}
Let \(x_{\rm in}\) be the current request, \(\mathcal U_{\rm tool}\) the receiving
agent's available tools, and \(\mathcal A_{\rm spec}\subseteq\mathcal A\)
its available specialist roster. The latter need not contain every executor
in the theoretical pool, including the host's nondelegating policy.
An optional rewriter returns \((\mathrm{need},q)\), with
\(\mathrm{need}\in\{0,1\}\) and retrieval query \(q\). If the rewriter returns a valid skip decision, the router supplies no skill
segment. Otherwise, \(q\) retains the
task class, domain, and required capabilities while removing irrelevant
identifiers. A rewriter failure keeps retrieval enabled with \(x_{\rm in}\)
as the query. The gate subsequently sees this retrieval query, so
query rewriting affects both retrieval and selection.

The positive integer \(k_{\rm pool}\) caps the fused candidate pool.
The retrieval sources are
\[
    \mathcal J_{\rm src}=\{\mathrm{local},\mathrm{memory},\mathrm{hub}\}.
\]
Configured sources are queried concurrently for up to \(2k_{\rm pool}\) hits each, where \(\mathrm{hub}\)
denotes SkillHub. Local retrieval uses
the indexed local catalog. The memory adapter recalls skills within
its configured agent scope. SkillHub retrieval initially returns metadata.
The router can pass session history, but an adapter may ignore it.
A failed source contributes an empty list while other sources remain
usable. \Cref{app:method-defaults} collects the budgets for gated and
ungated retrieval, fusion weights, and extraction settings. Candidate budgets limit the number of entries, not the token count of the
final skill bodies.

\paragraph{Name-Group Fusion and Representative Selection.}
To combine the source rankings, let \(L_j^{\rm hit}=(\mathsf{h}_{j1},\ldots,\mathsf{h}_{jm_j})\) be source \(j\)'s
ranked hit list. A hit contains a qualified identifier
\(\operatorname{qid}(\mathsf{h})=\texttt{source/native\_id}\), display name \(\operatorname{name}(\mathsf{h})\),
body, metadata, and source-native score \(\operatorname{score}_{\rm native}(\mathsf{h})\). The display
name is the \(\mathrm{name}_{\mathsf{s}}\) field of the underlying skill record.
Positions \(r=1,\ldots,m_j\) are one-based ranks. For every name
\(\mathsf n\) present in the lists, the weighted reciprocal-rank score is
\begin{equation}
    \operatorname{RRF}(\mathsf n)=\sum_{j\in\mathcal J_{\rm src}}\sum_{r=1}^{m_j}
       \mathbb I[\operatorname{name}(\mathsf{h}_{jr})=\mathsf n]\,
       \frac{w_j}{c_{\rm RRF}^{\rm skill}+r}.
    \label{eq:skill-fusion}
\end{equation}
Weights \(w_j\ge0\) are source preferences and
\(c_{\rm RRF}^{\rm skill}>0\) controls rank damping. The formula sums hit occurrences, including
repeated names within one source if supplied. It groups exact name matches without testing semantic equivalence.

Let \(\mathsf{Hits}_{\mathsf n}=\{\mathsf{h}_{jr}:\operatorname{name}(\mathsf{h}_{jr})=\mathsf n\}\) be the
observed hits carrying a name. The representative whose content will be rendered is selected separately:
\begin{equation}
    \operatorname{rep}(\mathsf n)\in
       \arg\max_{\mathsf{h}\in\mathsf{Hits}_{\mathsf n}}\operatorname{score}_{\rm native}(\mathsf{h}),
    \qquad
    \mathsf{Pool}_0=\operatorname{Take}_{k_{\rm pool}}
       \bigl([\operatorname{rep}(\mathsf n)]_{\mathsf n\text{ ordered by }\operatorname{RRF}(\mathsf n)\downarrow}\bigr).
    \label{eq:skill-representative}
\end{equation}
Here \(\operatorname{Take}_{k_{\rm pool}}\) retains the first \(k_{\rm pool}\) entries of
an ordered list. Ties in representative scores retain the first encountered hit. Ties in
fused scores follow the order in which names first appear, determined by
registered-source order and within-source rank. Thus RRF
avoids comparing native score scales when ordering names, while
representative selection still compares those scores. Rescaling a
source's scores can change the injected version without changing its
name's RRF position. The representative's qualified identifier and
contributing-source metadata preserve this distinction.

\paragraph{Body Retrieval, Policy Checks, and Model Selection.}
After fusion, the runtime removes blocked representatives from \(\mathsf{Pool}_0\), then
fetches missing SkillHub bodies and applies the configured detail policy.
A failed detail fetch or policy refusal removes that candidate.
Filtering occurs after the top-\(k_{\rm pool}\) fusion, without refilling from
lower-ranked names. Let \(\mathsf{Pool}\) be the remaining ordered pool, with
\(|\mathsf{Pool}|\le k_{\rm pool}\). Catalog release admission (\cref{eq:skill-admission})
and runtime admission are
separate checks. Their default safety thresholds differ, as recorded
in \cref{app:method-defaults}.

To select procedures from this filtered pool, the LLM gate receives \(q\), qualified identifiers, and descriptions and
normalized body excerpts under separate character limits.
Available tool and specialist rosters are supplied
when obtainable. The prompt asks the gate to plan the task, reject
procedures requiring unavailable capabilities, and avoid procedures
whose work is already covered by a specialist. Procedures shaping the
current agent's planning or verification can still be useful. The gate assesses compatibility from the supplied excerpts and rosters.
However, this judgment does not guarantee that all requirements for executing the
selected skill are satisfied.

Let \(\mathsf{IDs}\) be the ordered identifier list from a successfully parsed
gate response, \(k_{\rm sel}\ge1\) the normal selection cap, and \(k_{\rm fb}\ge1\)
the failure-fallback cap, both integers. Define
\(\operatorname{Lookup}(\mathsf{IDs},\mathsf{Pool})\) to replace each known identifier by
its pool entry, ignore unknown identifiers, and preserve list order.
The selected list is
\begin{equation}
    S_{\rm sel}=\begin{cases}
       \operatorname{Take}_{k_{\rm sel}}(\operatorname{Lookup}(\mathsf{IDs},\mathsf{Pool})),
           &\text{gate response parses successfully},\\
       \operatorname{Take}_{k_{\rm fb}}(\mathsf{Pool}),
           &\text{gate call or response parsing fails},\\
       \mathsf{Pool},&\text{gate is disabled}.
    \end{cases}
    \label{eq:skill-selection-decision}
\end{equation}
The response fields are specified in \cref{app:method-contracts}.
A successful empty identifier list remains empty without triggering fallback.
With the default caps, fallback can select more entries than a normal response. The current
parser preserves repeated known identifiers, so the cap bounds list
entries, not necessarily distinct procedures. Gate output membership
is checked against \(\mathsf{Pool}\). The accompanying plan is explanatory text,
not an executable task graph.

\paragraph{Resource Resolution and Context Injection.}
After selection, local references are resolved against the skill's
directory. Selected SkillHub resources are downloaded only when the
installation policy permits, using the previously fetched metadata.
A skipped or failed installation retains the available body with
unresolved resource references. Memory-derived skills have no bundled
files in this path. The runtime renders selected bodies, qualified
identifiers, and any resolved resource locations into the agent's
context. Selection, resource availability, and actual execution are
therefore different events. Recording an injected identifier establishes selection, but successful
execution requires separate evidence.

\begin{ravenalgorithm}{Task-Aware Skill Selection}
\label{alg:skill-selection}
\textbf{Input:} request \(x_{\rm in}\), session history, capabilities
\((\mathcal U_{\rm tool},\mathcal A_{\rm spec})\), configured sources and clients,
rewriter/gate switches, budgets \((k_{\rm pool},k_{\rm sel},k_{\rm fb})\), runtime policy.\\
\textbf{Output:} skill context and selected identifiers with provenance.
\par\smallskip
\begin{tabularx}{\linewidth}{@{}r@{\quad}X@{}}
1 & \textbf{if} no router is configured, \textbf{return} an empty segment.\\
2 & Initialize \((\mathrm{need},q)\leftarrow(1,x_{\rm in})\).
    If enabled and the request is nonblank, run the rewriter.
    Retain this initialization on failure.\\
3 & \textbf{if} \(\mathrm{need}=0\), \textbf{return} an empty segment.
    If the rewritten query is empty, use \(x_{\rm in}\).\\
4 & Query sources concurrently for \(2k_{\rm pool}\) hits each.
    Assign \(L_j^{\rm hit}\leftarrow[\,]\) for each failed source.\\
5 & Compute name scores and representatives by
    \cref{eq:skill-fusion,eq:skill-representative}, obtaining \(\mathsf{Pool}_0\).\\
6 & Remove blocked entries, retrieve missing SkillHub bodies, and apply
    detail policy, dropping failed or refused entries. Call the result \(\mathsf{Pool}\).\\
7 & \textbf{if} \(\mathsf{Pool}=[\,]\), \textbf{return} an empty segment.\\
8 & If enabled, run the gate on \((q,\mathsf{Pool},\mathcal U_{\rm tool},\mathcal A_{\rm spec})\) and
    distinguish a parsed identifier list from call/parse failure.\\
9 & Compute \(S_{\rm sel}\) by \cref{eq:skill-selection-decision}.
    No further truncation to \(k_{\rm sel}\) is applied to the failure fallback.\\
10 & For selected entries, attempt resource resolution under policy.
    Preserve available bodies when SkillHub installation is skipped or fails.\\
11 & Render the selected entries and available paths, then return the
    context with qualified identifiers and source provenance.\\
\end{tabularx}
\end{ravenalgorithm}

\subsection{Experience-Driven Skill Self-Evolution}
\label{sec:skillforge-evolution}

\paragraph{Update Pipeline.}
To turn execution experience into reusable procedures, EverOS's agent
pipeline performs the updates below when Raven submits
interaction segments for memory formation
(\cref{sec:memory-formation}). The procedure uses the backend's default
thresholds. During retrieval, \cref{eq:skill-fusion} groups local and
EverOS skills with identical names into one candidate.

\paragraph{Execution Cases and Ownership.}
Inside the EverOS pipeline, an extractor produces zero or one
substantive case from an agent-mode memory cell:
\begin{equation}
    c=(\mathrm{id}_c,t_c,\mathrm{intent}_c,
       \mathrm{approach}_c,\mathrm{insight}_c,\mathrm{qual}_c).
    \label{eq:skill-case}
\end{equation}
Here \(t_c\) is the case timestamp, intent describes the task class,
approach describes the execution strategy, insight states the reusable
lesson, and \(\mathrm{qual}_c\in[0,1]\) is estimated case quality. It is distinct
from catalog quality \(\mathrm{qual}_{\rm cur}(\mathsf{s})\) and skill confidence.
The same case is persisted separately for each participating assistant
owner. Let \(\operatorname{own}(a)\) denote the owner partition of agent \(a\)
within its application and project scope. This replication makes a shared execution
available to its participants. It does not constitute independent
successes or causal credit for each participant.

\paragraph{Incremental Task-Intent Clustering.}
A case below the quality threshold \(\theta_{\rm skip}\), or a deployment without
embedding capability, produces no skill update. Otherwise, embed its
intent as \(\mathbf{z}_c\in\mathbb R^{d_{\rm emb}}\). An existing cluster \(C\) has
centroid \(\boldsymbol{\mu}_C\in\mathbb R^{d_{\rm emb}}\), positive count \(n_C\), members,
timestamp, and previews.
The clustering operator recalls a bounded list of centroids by
\begin{equation}
    \operatorname{sim}(\mathbf{z}_c,\boldsymbol{\mu}_C)=
       \frac{\mathbf{z}_c^\top\boldsymbol{\mu}_C}
       {(\|\mathbf{z}_c\|_2+\epsilon_{\rm num})(\|\boldsymbol{\mu}_C\|_2+\epsilon_{\rm num})},
       \qquad\epsilon_{\rm num}=10^{-9}.
    \label{eq:skill-cluster-similarity}
\end{equation}
Empty centroids are skipped. With no eligible cluster, a singleton
is created. If the highest similarity meets the direct-merge threshold, the case joins
the corresponding cluster. Otherwise the LLM receives the recalled
previews and returns a cluster index or a new-cluster decision.
A valid existing index is accepted. An out-of-range index creates a
singleton, while a parse error propagates for retry. No temporal
window is applied. Merging a singleton into \(C\) gives
\begin{equation}
    \boldsymbol{\mu}_C^+=\frac{n_C\boldsymbol{\mu}_C+\mathbf{z}_c}{n_C+1},\qquad n_C^+=n_C+1.
    \label{eq:skill-cluster-update}
\end{equation}
The update appends the member, retains the latest timestamp, and limits
previews to five.

Within \(\operatorname{own}(a)\), clustering serializes its read--assign--write
sequence. The resulting cluster identifier, target case content, and embedding
are carried by an update event. The skill extractor can therefore
reconstruct its target without waiting for the case's search-index row.
Skill writes use a separate lock for the same owner partition, since
updates to different clusters can still target the same skill name.

\paragraph{Selecting and Formatting the Update Context.}
With the cluster assigned, let \(\mathsf{Catalog}_C\) be its existing Markdown skills.
Select an ordered list
\(\mathsf{Skills}_C=(\mathsf{s}_0,\ldots,\mathsf{s}_{m-1})\), with
\(m\le k_{\rm skill}\), where \(k_{\rm skill}\) is the existing-skill cap. If \(|\mathsf{Catalog}_C|\le k_{\rm skill}\), use the Markdown
listing directly. Otherwise the index ranks relevant skills by the
case vector. Accept only hits with matching Markdown records and
backfill missing slots in Markdown order. The files determine
existence and content, while the index supplies an ordering.

For a skill \(\mathsf{s}\), let \(\Lambda(\mathsf{s})\) be its ordered supporting-case
identifier list. The candidate support identifiers and collected case pool are
\begin{align}
    \mathsf{SupIDs}_C&=\left(\bigcup_{\mathsf{s}\in \mathsf{Skills}_C}\Lambda(\mathsf{s})\right)
                   \setminus\{\mathrm{id}_c\},\\
    \mathsf{Cases}_C&=\operatorname{Take}_{k_{\rm case}}
       \bigl(\operatorname{Sort}_{(\mathrm{qual}_{c'},t_{c'})\downarrow}
          \operatorname{Fetch}_{\operatorname{own}(a)}(\mathsf{SupIDs}_C)\bigr).
    \label{eq:skill-support-pool}
\end{align}
Here \(k_{\rm case}\) is the supporting-case pool cap. The union treats lineage lists
as sets of identifiers. \(\operatorname{Fetch}_{\operatorname{own}(a)}\) returns available indexed
cases in the same owner partition, once per identifier. Records absent from the index
are excluded from the current support pool. Sorting prioritizes quality, then
recency. The target is always excluded from its own support pool.

Support formatting applies a separate limit to each skill.
For each \(\mathsf{s}\in \mathsf{Skills}_C\), the formatter takes the last matching
identifiers in \(\Lambda(\mathsf{s})\) whose records occur in \(\mathsf{Cases}_C\),
up to the per-skill support cap and preserving lineage order.
It supplies their summaries alongside that skill's name, confidence,
description, and body. Description, body, and supporting-approach prefixes
have separate token budgets, with omission markers appended on truncation.
The new target is supplied separately. These token budgets differ from the
online gate's character limits.

\paragraph{Generating Explicit Update Operations.}
The target quality determines the prompt mode:
\begin{equation}
    \operatorname{mode}(c)=
    \begin{cases}
        \mathsf{skip},&\mathrm{qual}_c<\theta_{\rm skip},\\
        \mathsf{failure},&\theta_{\rm skip}\le \mathrm{qual}_c<0.5,\\
        \mathsf{success},&0.5\le \mathrm{qual}_c\le1.
    \end{cases}
    \label{eq:skill-prompt-mode}
\end{equation}
For a non-skipped case, one structured LLM call uses this context to produce an ordered
operation list
\begin{equation}
    \mathsf{Ops}=\operatorname{SkillExtract}_{\operatorname{mode}(c)}
          \bigl(\operatorname{Format}(c,\mathsf{Skills}_C,\mathsf{Cases}_C)\bigr)
      =(\mathsf{op}_1,\ldots,\mathsf{op}_{|\mathsf{Ops}|}).
    \label{eq:skill-extraction-context}
\end{equation}
A well-formed operation \(\mathsf{op}_l=(\mathrm{act}_l,i_l,\Delta_l^{\rm fields})\) has action
\(\mathrm{act}_l\in\{\mathsf{add},\mathsf{update},\mathsf{none}\}\),
optional target index \(i_l\), and proposed field values
\(\Delta_l^{\rm fields}\). Update indices are zero-based positions in the fixed
input list \(\mathsf{Skills}_C\), not database identifiers or indices into newly
added records. Malformed list items or unknown actions are skipped.
An empty operation list means no change. Malformed top-level
responses are errors handled by the strategy retry path.

\paragraph{Validation, Field Changes, and Lineage.}
An addition requires a body with at least five nonempty lines and
fifty characters after trimming, and at least one of name or
description. For an update, the target must satisfy
\(0\le i_l<m\) and must not have been addressed by an earlier
update in the same batch. The index is reserved before validating
replacement content. A supplied nonempty replacement body must pass
the same length checks, while omitted or empty text fields preserve their
previous values. An update is accepted only if it changes a text
field or supplies a parseable confidence value. Supplying the existing confidence value can therefore still append new
evidence.

Let \(\chi(\mathsf{s})\in[0,1]\) denote confidence. A valid proposed value
is clipped to \([0,1]\). A malformed value retains prior confidence
on update and defaults to \(0.5\) on addition. An accepted normal
update produces a revised record with lineage
\begin{equation}
    \Lambda(\mathsf{s}^+)=\operatorname{Unique}
          \bigl(\Lambda(\mathsf{s})\mathbin{+\!\!+}[\mathrm{id}_c]\bigr),
    \label{eq:skill-lineage-update}
\end{equation}
where \(\mathbin{+\!\!+}\) concatenates lists and \(\operatorname{Unique}\)
preserves first occurrence. An addition starts with
\([\mathrm{id}_c]\). These checks produce an emitted record list
\(\mathsf{Emitted}=\operatorname{ApplyOps}(\mathsf{Ops},\mathsf{Skills}_C,c)\). Only emitted records are written, leaving the owner's other skills unchanged.

A confidence below \(0.1\) does not remove a skill. EverOS writes that
record as an ordinary skill, and later retrieval can return it.
The pipeline does not invoke \texttt{retire\_confidence}. Optional maturity scoring is
skipped by default: new skills receive its default value \(1.0\), and
retained maturity metadata is not an independent assessment of task
utility.

\paragraph{Persistence and Subsequent Reuse.}
Emitted records are written as owner-scoped Markdown skills in the
EverOS store, with cluster membership and case lineage. These files
are separate from the workspace catalog. Updates preserve identity
within the extraction batch to reconcile renames: write the new
sanitized name, then remove an obsolete directory unless another
emitted record claimed that old name. Non-emitted skills remain
unchanged. The owner lock serializes competing strategy executions.
It does not make multiple file writes an atomic transaction. Cascade
indexing later exposes the revisions to the memory source in
\cref{alg:skill-selection}.

\begin{ravenalgorithm}{Incremental Case-to-Skill Evolution}
\label{alg:skill-evolution}
\textbf{Input:} new case \(c\), owner \(\operatorname{own}(a)\), owned cluster
state, Markdown skill store, case index, configured model services.\\
\textbf{Output:} emitted skill records persisted with case lineage,
possibly an empty update. Failures follow the strategy retry policy.
\par\smallskip
\begin{tabularx}{\linewidth}{@{}r@{\quad}X@{}}
1 & \textbf{if} embedding is unavailable or \(\mathrm{qual}_c<\theta_{\rm skip}\),
    \textbf{return} no skill update.\\
2 & Under the clustering lock for \(\operatorname{own}(a)\), embed the intent,
    recall centroids, and assign or create a cluster as described above.\\
3 & Persist the cluster, emit its identifier together with \(c\) and \(\mathbf{z}_c\), and
    release the clustering lock.\\
4 & Under the skill-update lock for \(\operatorname{own}(a)\), verify that the
    cluster exists. A missing cluster is a retryable error.\\
5 & Enumerate cluster skills from Markdown and select \(\mathsf{Skills}_C\)
    with the \(k_{\rm skill}\) budget and index-ranking/backfill rule.\\
6 & Collect \(\mathsf{Cases}_C\) by \cref{eq:skill-support-pool} and format the
    bounded per-skill support and text fields.\\
7 & Request \(\mathsf{Ops}\) using \cref{eq:skill-prompt-mode,eq:skill-extraction-context}.
    Malformed top-level output raises an error before skill writes.\\
8 & Initialize reserved targets \(\mathsf{UsedTargets}\leftarrow\varnothing\)
    and emitted list \(\mathsf{Emitted}\leftarrow[\,]\).\\
9 & \textbf{for each} \(\mathsf{op}_l\in\mathsf{Ops}\), in order:\\
10 & \quad \textbf{if} the action is \(\mathsf{update}\): parse \(i_l\).
    Skip if invalid, outside \([0,m)\), or already in \(\mathsf{UsedTargets}\).
    Otherwise, set \(\mathsf{UsedTargets}\leftarrow \mathsf{UsedTargets}\cup\{i_l\}\).\\
11 & \quad Validate an add/update operation's body and fields.
    Skip rejected operations, \(\mathsf{none}\), and unknown actions.\\
12 & \quad Construct the accepted record, apply
    \cref{eq:skill-lineage-update}, and append it to \(\mathsf{Emitted}\).
    A confidence below \(0.1\) is still emitted and remains retrievable.\\
13 & Write emitted records in order under sanitized owner-scoped names.
    Retain the written-name map for rename reconciliation.\\
14 & Reconcile obsolete names after writes and release the update lock
    on completion or error. Index changes asynchronously.\\
\end{tabularx}
\end{ravenalgorithm}

Across these stages, case quality, catalog quality, source rank, skill confidence,
and measured task outcome are different quantities. Neither a
structurally accepted update nor a high-confidence skill establishes
an improvement. \Cref{sec:eval-skills} reports the effect of the fixed
catalog on task performance with the model and harness held fixed.

\section{Evaluation}\label{sec:evaluation}

We evaluate Raven's effectiveness in organizing specialist work, improving
execution with a fixed model, and reusing procedural knowledge. Planning
experiments test specialist selection and dependency prediction, while
domain evaluations measure task outcomes and resource use.
Harness-evolution and skill-reuse experiments then examine how
adaptation and reusable procedures improve agent performance. Together,
these evaluations connect Raven's architectural choices to measurable
gains in planning, execution, and reuse.

\paragraph{Comparison Protocol.}
The planning comparison fixes the backbone across systems to assess their
orchestration capabilities. Harness-evolution experiments keep the task
model frozen and separate adaptation tasks from held-out testing. Skill
experiments hold the backbone and harness fixed within each comparison,
isolating the effect of adding retrieved procedures. Specialist results
report task quality alongside runtime, token use, and monetary cost where
available. Benchmark-specific settings describe the task split, tool
access, attempt count, scoring procedure, and resource budget. Adaptation
data and development compute are distinguished from final evaluation.

\paragraph{Evaluation Structure.}
We first evaluate multi-agent orchestration and harness self-evolution,
then assess the four specialists on research, software engineering, visual
design, and sustained operation. The final experiments measure the benefit
of retrieving and executing skills from a fixed catalog.
\Cref{app:specialist-eval} gives scoring conventions and protocol details
for these comparisons.

\subsection{Multi-Agent Orchestration}\label{sec:multi-agent-evaluation}

Multi-agent planning determines which specialists receive a request, what
information they receive, and which subtasks may execute concurrently.
End-to-end task accuracy reflects both planning decisions and downstream
execution quality, making planning errors difficult to isolate. We therefore
evaluate the host's proposed graph before dispatching workers. This protocol
measures specialist selection and dependency prediction independently of
specialist execution.

\subsubsection{The Multi-Agent Orchestration Benchmark}\label{subsec:mao-benchmark}

We introduce the \emph{Multi-Agent Orchestration Benchmark} (MAOB), a suite of
$140$ requests modeled on occupational tasks, each paired with a reference directed
acyclic graph (DAG) over a fixed roster of four specialist sub-agents. The
benchmark measures agreement between the host's proposed graph and a reviewed
reference graph for each request.

\paragraph{Specialist Roster.}\label{par:mao-domains}
MAOB defines four specialist domains:

\begin{itemize}
  \item \textbf{research}: searching the web and producing a source-supported
        analysis.
  \item \textbf{coding}: writing, executing, and debugging code on the host.
  \item \textbf{content}: converting source material into a deliverable,
        such as a presentation, report, or briefing.
  \item \textbf{oncall}: executing and monitoring a long-running or remote
        job through completion.
\end{itemize}

The fixed roster permits exhaustive coverage of domain combinations.
Four domains yield exactly $11$ subsets containing at least two domains,
and MAOB includes all $11$ (Table~\ref{tab:mao-coverage}). This coverage is exhaustive for domain combinations within the declared
roster, but does not extend to all occupational scenarios or tool
configurations.

\paragraph{Reference-Guided Construction.}\label{par:mao-construction}
MAOB specifies the reference graph before generating the request. This provides
an explicit construction target and reduces reliance on a model's post hoc
interpretation of a request. However, graph-first generation still requires review:
the resulting request must justify the selected specialists and dependencies.
The construction procedure has six stages:

\begin{enumerate}
  \item \textbf{Pattern extraction.} We survey public occupational task
        collections, GDPval and JobBench
        \citep{patwardhan2025gdpval,li2026jobbench}, and extract the task structure as the
        triple $\langle$occupation, deliverable, reference-material
        type$\rangle$. No task text is reused, reducing direct overlap with
        the source collections while retaining their occupational framing.
  \item \textbf{Template library.} Each extracted pattern becomes a template
        specifying an occupation, scenario, deliverable, and compatible graphs.
        A template may support several graphs, allowing the same occupational
        scenario to represent different dependency structures.
  \item \textbf{Reference graph first.} For each template instance we fix the
        reference DAG, including its node set and partial order, before generating
        the request. The authored graph provides a construction
        target whose consistency with the resulting request is checked during
        review. Reference graphs are authored with
        \modelname{Claude Opus 5}, chosen from a different model family
        than the systems and backbones under test to reduce shared-family
        labeling effects.
  \item \textbf{Backward generation.} The request text is then generated
        from the graph with GLM-5.2, which also produces task metadata and
        corpus support files. The generator is instructed to make the dependencies inferable from the
request without stating the graph explicitly.
  \item\label{item:leakage} \textbf{Leakage filtering.} Every generated request
        passes an automatic filter before it enters the pool. A request that
        enumerates steps
        (\emph{first\ldots{}then\ldots{}finally}), or that names a domain outright
        (\emph{research}, \emph{write code}, \emph{make a deck}), is rejected and
        regenerated. The filter shares its vocabulary with the generation prompt,
        providing consistent exclusion criteria. Lexical filtering alone cannot
        rule out paraphrased planning cues, which remain a review concern.
  \item \textbf{Review and admission.} Surviving samples are reviewed against
        their reference graph. The review records node-set agreement, ordering
        ambiguity, and the attribution of request elements to graph nodes.
\end{enumerate}

Step~\ref{item:leakage} limits explicit planning cues that would allow a system
to reproduce a listed sequence without inferring dependencies. A reference
chain such as
$\textsf{research}\rightarrow\textsf{coding}\rightarrow\textsf{content}$
is admissible only when the request motivates each specialist and the
information passed between them. Omitting an explicit sequence from the
request is therefore necessary for this construction protocol but does not
by itself establish the validity or uniqueness of the reference graph.

\paragraph{Quality Control.}\label{par:mao-qc}
Every generated sample must pass automatic admission checks before evaluation.
A failed check stops the construction pipeline. In addition to the leakage
filter in step~\ref{item:leakage}, the checks reject three classes of defect:

\begin{itemize}
  \item \textbf{Inconsistent specification.} Contradictory quantities or
        requirements make a request unsatisfiable as written and can confound
        the assessment of planning ability.
  \item \textbf{Role-boundary violation.} A reference graph that assigns work
        outside a specialist's remit, such as asking the content agent to
        retrieve or to compute, contradicts the roster the system under test is
        given.
  \item \textbf{Field inconsistency.} Task metadata must agree with the reference
        graph. A mismatch can exclude a sample from its intended metric group.
\end{itemize}

A further pass removes reference nodes that the request does not justify.
This correction is applied during dataset construction. Masking a reference
node only at scoring time could otherwise penalize a prediction made under
the original task specification. After removal, edges are reconstructed from
the transitive closure to preserve dependencies among the remaining nodes.

Samples that pass these checks undergo expert review of the reference
node set, partial order, and attribution of request elements to nodes.
Structured review records preserve the basis for each admission decision.

\paragraph{Dataset Statistics.}\label{par:mao-stats}
The released benchmark comprises $140$ tasks drawn from $137$ distinct
occupations across $140$ templates, with one template per task. Reference graphs
contain $2.72$ nodes and $1.84$ edges on average, with $257$ reference edges in
total. Every request is self-contained text without attachments, so the
evaluation does not require file-handling capability.

\begin{table}[t]
\centering
\caption{Coverage of domain combinations. Every subset containing at least
two domains from the four-domain roster is represented.}
\label{tab:mao-coverage}
\begin{tabular}{lcr}
\toprule
\textbf{Domains per task} & \textbf{Subsets covered} & \textbf{Tasks} \\
\midrule
2 & 6\,/\,6 & 66 \\
3 & 4\,/\,4 & 47 \\
4 & 1\,/\,1 & 27 \\
\midrule
\textbf{Total} & \textbf{11\,/\,11} & \textbf{140} \\
\bottomrule
\end{tabular}
\end{table}

Structurally, $98$ tasks are purely serial and $42$ admit parallel execution.
The latter include independent successors of a shared predecessor and
independent entry nodes that contribute to a common deliverable. Including
both structures requires the planner to distinguish sequential dependencies
from opportunities for concurrency.

Domain frequency is close to even across the three deliverable-producing
domains: \textsf{content} appears in $106$ tasks, \textsf{research} in $103$,
and \textsf{coding} in $102$. The \textsf{oncall} domain appears in $70$
tasks that require sustained execution or monitoring. By
subset size, $66$ tasks span two domains, $47$ span three, and $27$ span all
four (\cref{fig:mao-stats}).

\begin{figure}[htbp]
\centering
\includegraphics[width=0.95\linewidth]{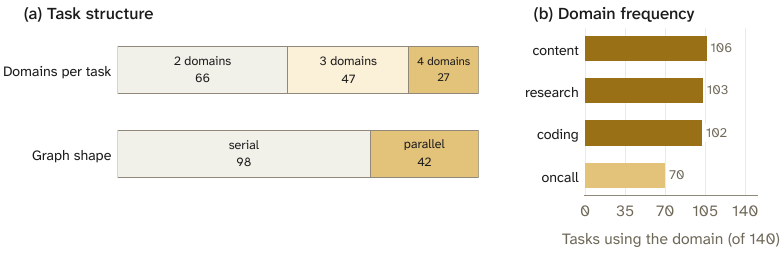}
\caption{MAOB composition. (a) Number of domains per task and the share of
tasks whose reference graph contains parallel branches. (b) Number of
tasks that use each specialist domain.}
\label{fig:mao-stats}
\end{figure}

\subsubsection{Metrics}\label{subsec:mao-metrics}

Let $\widehat G^{\rm raw}$ and $G^{\star,\rm raw}$ denote the submitted and
reference graphs. A domain map assigns each invocation to its specialist
label. The projection $\operatorname{Fold}$ identifies nodes with the
same label, removes edges within a domain, and deduplicates the remaining
directed edges. Write
\[
  \widehat G=(\widehat V,\widehat E)=\operatorname{Fold}(\widehat G^{\rm raw}),\qquad
  G^\star=(V^\star,E^\star)=\operatorname{Fold}(G^{\star,\rm raw}).
\]
All metrics use these domain-level graphs to assess specialist selection
and inter-domain dependencies. Invocation names and subdivisions within
a domain do not affect the scores. The formulas first describe order-comparable,
acyclic predictions. Projection conflicts, cycles, admissible reference
variants, and aggregation are specified in \cref{app:maob-conventions}.

For finite sets $X_1,X_2$, define the total set-agreement function
\[
 \operatorname{F1}(X_1,X_2)=
 \begin{cases}
  1,&X_1=X_2=\varnothing,\\
  2|X_1\cap X_2|/(|X_1|+|X_2|),&\text{otherwise}.
 \end{cases}
\]

\paragraph{Node F1.}
Specialist-selection agreement is
\[
  \mathrm{NodeF1}=\operatorname{F1}(\widehat V,V^\star).
\]
This penalizes omitted reference domains and additional predicted domains.
Every MAOB reference has at least two domains, so its ordinary denominator
is positive even for an empty prediction.

\paragraph{Edge F1.}
Let $\operatorname{TR}(G)$ denote a DAG's transitive reduction, which
removes edges implied by longer directed paths while preserving
reachability. Let $\widehat E_{\rm red}$ be the edge set of
$\operatorname{TR}(\widehat G)$, and let $\mathfrak E^\star_{\rm red}$ contain
the reduced edge sets allowed by the selected reference's order
annotations. Without alternative annotations this family contains only
the edge set of $\operatorname{TR}(G^\star)$. The score is
\[
  \mathrm{EdgeF1}=\max_{E'\in\mathfrak E^\star_{\rm red}}
                     \operatorname{F1}(\widehat E_{\rm red},E').
\]
Thus a redundant transitive edge is not penalized. If both compared edge
sets are empty, their F1 is one. The metric compares the normalized
dependency structure.

\paragraph{Partial Order Accuracy (POA).}
Let $V_\cap=\widehat V\cap V^\star$ and define
\[
 \mathcal P_{\rm pair}=\{(u,v):u,v\in V_\cap,\ u\prec_{\rm dom}v\},
\]
where $\prec_{\rm dom}$ is the fixed roster order research, coding,
content, oncall, so each unordered pair is counted once. The relation
$\operatorname{ord}_G(u,v)$ is \emph{before}, \emph{after}, or
\emph{unordered}, determined by reachability in $G$.
Let $\mathcal O^\star_{uv}$ be the reference's accepted relations for
that oriented pair. It is the singleton
$\{\operatorname{ord}_{G^\star}(u,v)\}$ unless an annotation explicitly
allows alternatives. Then
\[
 \mathrm{POA}=\begin{cases}
 \displaystyle\frac{1}{|\mathcal P_{\rm pair}|}
   \sum_{(u,v)\in\mathcal P_{\rm pair}}
    \mathbb I[\operatorname{ord}_{\widehat G}(u,v)\in\mathcal O^\star_{uv}],
     &|V_\cap|\ge2,\\[5pt]
 1,&|V_\cap|=1,\\
 0,&|V_\cap|=0.
 \end{cases}
\]
A single shared node provides no ordering pair to compare and receives a
score of one. A prediction with no shared node receives zero. POA must therefore be interpreted together with
Node F1. Different node or edge sets can still preserve the relative
order of their shared nodes.

\paragraph{Exact Match Rate.}
For an acyclic, order-comparable prediction, the per-record Exact Match
indicator is one precisely when $\widehat V=V^\star$ and every shared pair's
predicted relation belongs to $\mathcal O^\star_{uv}$. The reported rate
averages this indicator over records with a defined value. Without
alternative annotations this means equal node sets and equal reachability,
or equivalently equal transitive reductions. It does not require identical
raw edge lists. An alternative valid plan can differ from the reference.

The four metrics are reported separately. They distinguish specialist
selection, reduced dependencies, accepted precedence, and whole-plan
agreement at the domain level. They are not combined into a single score.

\subsubsection{Experimental Setup}\label{subsec:mao-setup}

\paragraph{Systems Compared.}
We evaluate three orchestrating agent systems on identical inputs:
Raven and two baseline harnesses, Claude Code and Hermes Agent.
Each system plans through its native agent loop.

\paragraph{Backbones.}
To control for backbone choice, each system is evaluated with two backbones:
\modelname{Qwen3.8-27B} and \modelname{DeepSeek-V4-Flash-0731}. A result is reported only
when all three systems have been run on the same backbone, over the same $140$
tasks, in the same protocol.

\paragraph{Protocol.}
Planning is measured in isolation. Each system is given the request text and
submits its graph through a tool call to its native orchestration interface.
The submitted graph is captured and scored without dispatching workers.
The measured behavior and associated evaluation cost therefore cover planning
alone.

\paragraph{Elicitation Parity.}
All three systems receive identical request text and the same delegation
instruction, taken verbatim from a shared constant. Domain descriptions are
provided only through each system's orchestration-tool schema, without an
additional system-level preamble. This controls the task and domain information
provided to the models while retaining each system's native interface.

\subsubsection{Results}\label{subsec:mao-results}

Figure~\ref{fig:mao-results} reports the four metrics for
all three systems under both backbones.

\begin{figure}[htbp]
\centering
\includegraphics[width=0.95\linewidth]{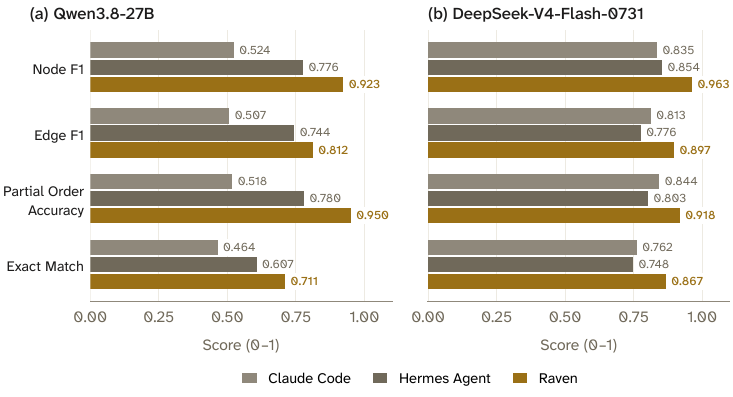}
\caption{Multi-Agent Orchestration Benchmark results with backbone
(a) \modelname{Qwen3.8-27B} and (b) \modelname{DeepSeek-V4-Flash-0731}.}
\label{fig:mao-results}
\end{figure}

\paragraph{Consistent Gains Across Metrics and Backbones.}
Raven outperforms both baseline systems on all four metrics with each
backbone, demonstrating effective specialist selection and dependency
planning across the two model families. With \modelname{Qwen3.8-27B},
Raven reaches $0.923$ Node F1 against $0.776$ for the strongest baseline,
a margin of $14.7$ points on a percentage scale, and obtains $0.950$ POA.
With \modelname{DeepSeek-V4-Flash-0731}, Raven reaches $0.963$ Node F1
and $0.897$ Edge F1, compared with $0.854$ and $0.813$ for the strongest
baseline on each metric. The gains cover both the choice of specialists
and the relationships that organize their work.

\paragraph{Exact Graph Agreement.}
The improvements also extend to complete plans, for which both the
specialist set and all precedence relations must match an accepted
reference. On \modelname{Qwen3.8-27B}, Raven achieves an Exact Match rate
of $0.711$, compared with $0.607$ for the strongest baseline, a
$10.4$-percentage-point improvement. On
\modelname{DeepSeek-V4-Flash-0731}, the corresponding rates are $0.867$
and $0.762$, a $10.5$-percentage-point improvement. These gains show that
Raven more often produces a complete dependency plan consistent with the
reference, complementing its advantages on individual nodes and edges.

\paragraph{Comparison Under Matched Backbones.}
Raven retains its leading position even as the relative ranking of the
baselines changes. Hermes Agent leads Claude Code on all four metrics
with \modelname{Qwen3.8-27B} but trails it on POA with
\modelname{DeepSeek-V4-Flash-0731}. All systems receive the same request
text, delegation instruction, and domain descriptions
(\S\ref{subsec:mao-setup}). Raven's consistent advantage with matched
backbones supports the effectiveness of its orchestration design across
both tested models.

\paragraph{Improved Dependency Prediction.}
Raven improves Edge F1 over the strongest baseline by $+0.068$ with
\modelname{Qwen3.8-27B} and $+0.084$ with
\modelname{DeepSeek-V4-Flash-0731}. Even with these gains, Edge F1 remains lower than Node F1
for every system under both backbones, indicating that dependency
prediction is a more difficult part of this benchmark. Raven's gains on
this metric show that its advantage extends to identifying which
specialists must exchange information and which subtasks depend on
earlier results.

\subsection{Harness Self-Evolution}\label{sec:eval-evolution}

We next examine harness self-evolution, which improves an agent by adapting the execution policy
around a frozen backbone. The published HarnessBank experiments
\citep{luo2026harnessbank} evaluate the diagnosis, search, and screening
procedure on tasks withheld from evolution.

\paragraph{Setup.}
The seven benchmarks are Terminal-Bench-2 \citep{merrill2026terminalbench},
five domains of EvoAgentBench \citep{gao2026evoagentbench} (LiveCodeBench,
Omni-MATH, BrowseComp+, GDPval, and SWE-bench Verified), and AppWorld
\citep{trivedi2024appworld}.
Figure~\ref{fig:harnessbank-source} presents held-out
Pass@1 results with a frozen \modelname{Qwen3.6-27B} backbone, averaged over
three attempts per task. The evolved harness is selected using the
training split and then assessed on the test split.

\begin{figure}[htbp]
\centering
\includegraphics[width=0.95\linewidth]{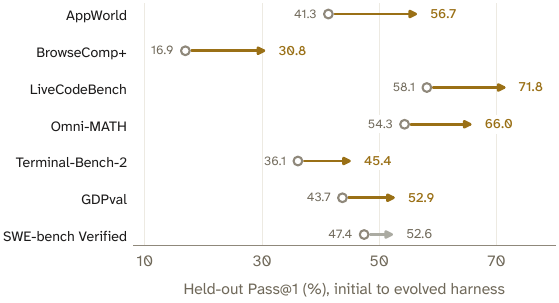}
\caption{Held-out Pass@1 reported in HarnessBank \citep{luo2026harnessbank}
for the initial and evolved harnesses with a frozen \modelname{Qwen3.6-27B} backbone.
Gray denotes SWE-bench Verified.
\Cref{tab:harnessbank-source} lists the values.}
\label{fig:harnessbank-source}
\end{figure}

\paragraph{Cross-Domain Gains with a Frozen Backbone.}
The evolved harness improves held-out Pass@1 on all seven benchmarks.
The largest gains are 15.4 percentage points on AppWorld, 13.9 on
BrowseComp+, and 13.7 on LiveCodeBench. Improvements also extend to
terminal operation, mathematical reasoning, professional tasks, and
repository-level issue resolution. Because the task model remains
frozen and the test tasks are withheld from evolution, these results
demonstrate that harness adaptation can improve execution across several
domains and transfer beyond the tasks used to guide the search. Six
comparisons pass the source's paired-gain criterion. SWE-bench Verified
also improves, although its 26-task test split does not pass that
criterion.

\paragraph{Mechanism Analysis.}
To examine the mechanisms behind these gains, we consider the reported
interventions, which address diagnosed execution failures, including
reasoning-budget overruns and premature finalization. The source links
these edits to observable events in the execution traces, providing a
behavioral explanation for how changes around a fixed model improve task
completion. Screening combines evidence that an intervention executed
with a paired comparison of task outcomes. This connects candidate
selection to both operational behavior and measured utility, while
held-out evaluation tests the benefit of the selected harness on new
tasks.

\subsection[Raven-Research]{Raven-Research: Evidence-Grounded Deep Research}
\label{sec:eval-research}
\label{sec:agents-research}

\ravenresearch answers questions using evidence from the live web. Its
harness addresses recurring failure modes in deep research: repeated
searches without reading retrieved pages, late restarts of an
investigation, turns that end without an answer, and unsupported claims.
To address these failures, its single plugin,
\texttt{research-flow}, replaces the built-in search, fetch, and
clarification tools. The search tool caches repeated queries and stops
once consecutive queries surface nothing new. The fetch tool extracts
the passages that answer an explicit information request and falls back
to alternative addresses for pages with little content. Checks at fixed phases of the agent loop can block or redirect execution.
They require page reads after extended search sequences, prevent late
restarts, and recover turns that end without an answer.
Before delivery, an independent reviewer, a separate model call without
the agent's context, checks the decisive claims of the draft against the
gathered evidence. The three session modes use different stopping rules under the same
resource budget. Each report gives a
full URL for every number, date, and quotation, and a research trail
computed from the tool-call record flags any citation whose page the
agent never opened.

\paragraph{Setup.}
We evaluate this workflow on DeepResearch Mixed, which combines questions from BrowseComp
\citep{wei2025browsecomp}, FRAMES \citep{krishna2025frames}, the
text-only exact-match setting of Humanity's Last Exam
\citep{phan2026hle}, and xBench-DeepSearch
\citep{chen2025xbench,xbench2025deepsearch}. An LLM judge grades each
short final answer for semantic equivalence with the reference, and
accuracy is pooled over all questions. We compare Raven-Research with
MiroFlow \citep{su2026miroflow} and DeepSeek-Harness
\citep{deepseek2026harness} on three shared backbones, \modelname{Qwen3.6-35B-A3B},
\modelname{Qwen3.5-397B-A17B}, and \modelname{DeepSeek-V4-Flash}, with the same
search and page-retrieval tools, and we include three commercial deep
research services as references
\citep{miromind2026mirothinker17,perplexity2025sonardr,google2026geminidr}.
The runs took place between August 10 and August 18, with each system
answering each question once.
\Cref{app:eval-research} gives the benchmark composition, grading, cost
accounting, and budgets.

\begin{table}[htbp]
\centering
\small
\caption{DeepResearch Mixed results. Accuracy (\%) is pooled over all
questions. Input tokens (including cache reads), output
tokens, and cost are means per question. A dash marks a quantity
that was not measured. Bold marks the best accuracy within each
shared-backbone group.}
\label{tab:research-results}
\begin{tabular}{lrrrr}
\toprule
\textbf{System} & \textbf{Accuracy} & \textbf{Input (M)} &
\textbf{Output (k)} & \textbf{Cost (USD)} \\
\midrule
\multicolumn{5}{l}{\emph{\modelname{Qwen3.6-35B-A3B}}} \\
MiroFlow           & 47.7 & 1.493 & 11.3 & -- \\
DeepSeek-Harness   & 49.7 & 4.587 & 16.2 & -- \\
Raven-Research     & \textbf{56.3} & 1.706 & 30.0 & -- \\
\midrule
\multicolumn{5}{l}{\emph{\modelname{Qwen3.5-397B-A17B}}} \\
MiroFlow           & 48.0 & 1.799 & 15.3 & -- \\
DeepSeek-Harness   & 56.0 & 0.679 & 6.6 & -- \\
Raven-Research     & \textbf{59.3} & 1.837 & 7.6 & -- \\
\midrule
\multicolumn{5}{l}{\emph{\modelname{DeepSeek-V4-Flash}}} \\
MiroFlow           & 67.2 & 3.737 & 39.3 & 0.0374 \\
DeepSeek-Harness   & 68.9 & 2.975 & 31.8 & 0.0211 \\
Raven-Research     & \textbf{76.5} & 2.380 & 41.6 & 0.0242 \\
\midrule
\multicolumn{5}{l}{\emph{Commercial API Reference Systems}} \\
MiroThinker-1.7 & 50.3 & \(\approx 1.247\) & \(\approx 18.5\) & 5.62 \\
Perplexity Sonar Deep Research & 52.0 & -- & 11.7 & 0.55 \\
Gemini Deep Research (2604) & 63.2 & 1.178 & 97.0 & \(\approx 1.67\) \\
\bottomrule
\end{tabular}
\end{table}

\begin{figure}[htbp]
\centering
\includegraphics[width=0.95\linewidth]{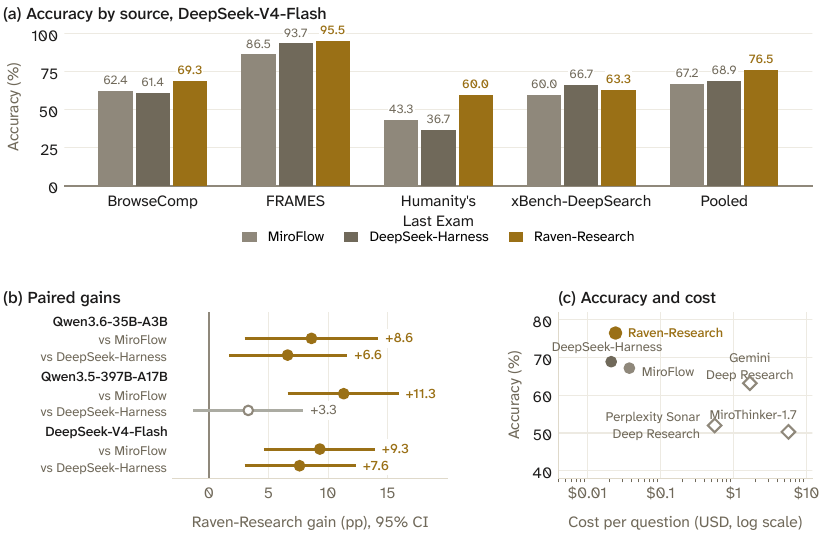}
\caption{Raven-Research results on DeepResearch Mixed. (a) Accuracy per
source under \modelname{DeepSeek-V4-Flash}. (b) Paired accuracy gain of Raven-Research over
each comparison on the same backbone, with stratified bootstrap 95\%
intervals over all questions. The hollow marker denotes the comparison
with DeepSeek-Harness on \modelname{Qwen3.5-397B-A17B}, with exact McNemar \(p=0.21\).
(c) Accuracy against mean cost per question,
with filled markers for the \modelname{DeepSeek-V4-Flash} harnesses and hollow markers for
the commercial services, whose costs follow each provider's own
accounting.}
\label{fig:research-results}
\end{figure}

\paragraph{Consistent Accuracy Gains Across Backbones.}
Raven-Research achieves the highest pooled accuracy with all three shared
backbones, exceeding the strongest baseline by 6.6, 3.3, and 7.6
percentage points, respectively (Table~\ref{tab:research-results}).
The consistent gains across the Qwen and DeepSeek models demonstrate the
effectiveness of the research harness across different backbones. With
\modelname{DeepSeek-V4-Flash}, Raven-Research reaches 69.3\% on
BrowseComp and 60.0\% on Humanity's Last Exam, compared with at most
62.4\% and 43.3\% for the other two harnesses
(Figure~\ref{fig:research-results}a). The larger margins on these two
benchmarks account for an important part of the overall advantage.
All three harnesses exceed 86\% on FRAMES, while
DeepSeek-Harness leads Raven-Research by 3.4 percentage points on
xBench-DeepSearch. Raven-Research nevertheless obtains the strongest
aggregate performance across the combined task set.

\paragraph{Paired Statistical Comparisons.}
Paired comparisons reinforce the overall accuracy advantage. Exact
McNemar tests over all questions give \(p<0.02\) for five of the six
same-backbone comparisons (Figure~\ref{fig:research-results}b). The
remaining comparison, against DeepSeek-Harness with
\modelname{Qwen3.5-397B-A17B}, shows a positive 3.3-percentage-point
difference with \(p=0.21\). Thus, the numerical gains are consistent
across all six comparisons, with statistical support in five.

\paragraph{Higher Accuracy at Comparable Inference Cost.}
With \modelname{DeepSeek-V4-Flash}, Raven-Research combines the highest
accuracy with a mean cost of 0.0242 USD per question, between the costs
of the two locally evaluated baselines. It uses fewer input tokens and
more output tokens than both, achieving better answers without incurring
the highest inference cost. Raven-Research also outperforms the three
commercial reference services, whose accuracies range from 50.3\% to
63.2\% (Figure~\ref{fig:research-results}c). However, those services use their own
models and search stacks, with provider-specific cost accounting
(\cref{app:eval-research}). Accordingly, the shared-backbone comparison provides the
clearest evidence that Raven's research workflow offers an effective
balance between answer quality and inference cost.

\subsection[Raven-Code]{Raven-Code: Repository-Aware Software Engineering}
\label{sec:eval-code}
\label{sec:agents-code}

\ravencode writes, runs, and debugs code in the working directory assigned
by the Host Agent. Its harness addresses errors arising from incomplete
repository context, including edits to stale file contents, changes that
violate project conventions, and completion claims unsupported by tests.
To address these errors, its plugin, \texttt{code-flow}, adds the repository's own instruction
files, such as \texttt{AGENTS.md} and \texttt{CLAUDE.md}, to every model
call and warns the agent when other sessions share the directory. The
tool interface follows the conventions of common coding assistants and
adds safeguards. An edit is refused unless the session has read the
current version of the file, every Python write returns a syntax check,
and the shell returns partial output when a command times out. A set of coding
instructions, selected by model family, asks the agent to reproduce a
failure before editing, make minimal root-cause fixes, and prefer the project's
own tests as evidence. Each task ends with a harness manifest derived from repository state,
including the base commit, uncommitted files, and the diff. This provides
an execution record independent of the model's account of its work.

\paragraph{Setup.}
We evaluate issue resolution on SWE-bench Pro
\citep{deng2025swebenchpro} (731 tasks) and SWE-bench Verified
\citep{openai2024sweverified,jimenez2024swebench} (500 tasks),
repository-level development on the 80-task code subset of WorkBuddy
Bench \citep{cai2026workbuddybench}, and whole-repository stack
migration on SWE-Refactor \citep{hong2026swerefactorbench} (20 tasks).
For systems evaluated locally, we use AgentEval, an internal framework
that runs each attempt in a disposable container and applies the
benchmark's official grading code. Patches are generated without network access on
SWE-bench Pro and with network access on SWE-bench Verified. Hollow
markers in \cref{fig:code-results} are public leaderboard values
\citep{valsai2026swebench,workbuddy2026leaderboard,einsia2026srbleaderboard}.
\Cref{app:eval-code} gives the versions, budgets, and accounting.

\begin{figure}[htbp]
\centering
\includegraphics[width=0.95\linewidth]{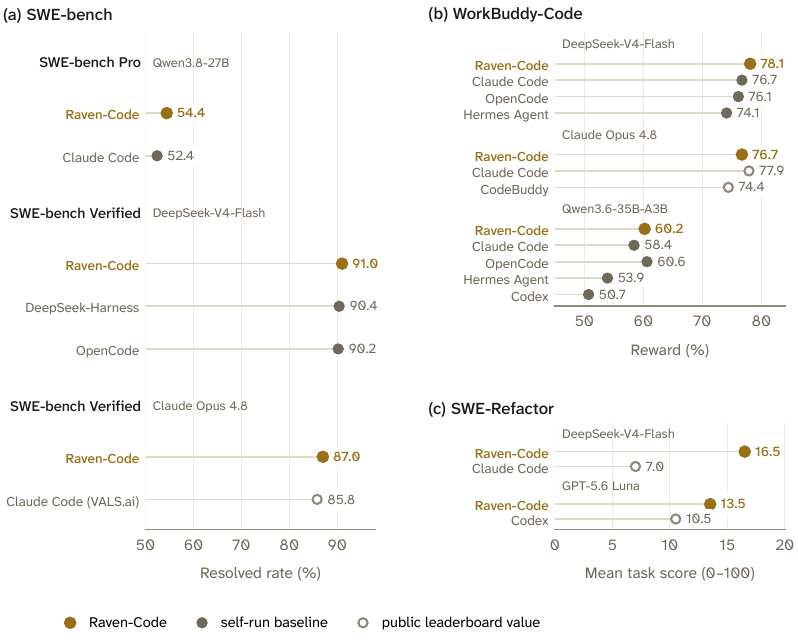}
\caption{Coding results. Each row is one system, grouped by benchmark and
backbone. SWE-bench reports the resolved rate, WorkBuddy-Code the reward,
and SWE-Refactor the mean task score. Hollow markers are public
leaderboard values.}
\label{fig:code-results}
\end{figure}

\paragraph{Strong Performance on Complex Repository Tasks.}
Raven-Code achieves the highest score in six of the eight settings in
Figure~\ref{fig:code-results}, with its largest margins on
whole-repository migration. On SWE-Refactor, it exceeds the compared
leaderboard entries by 9.5 points with \modelname{DeepSeek-V4-Flash}
and 3.0 points with \modelname{GPT-5.6 Luna}. Each task requires migrating
an entire repository to a new stack over several hours. The gains
demonstrate Raven-Code's effectiveness on sustained development tasks
that require coordinated changes across files. This result is consistent
with the harness's emphasis on repository instructions, current file
contents, and execution-based verification.

\paragraph{Improved Issue Resolution.}
Raven-Code also improves resolution of existing repository issues. On
SWE-bench Pro, it resolves 15 more of the 731 tasks than Claude Code with
the same \modelname{Qwen3.8-27B} backbone. On SWE-bench Verified, its
advantages of 0.6 points with \modelname{DeepSeek-V4-Flash} and 1.2
points with \modelname{Claude Opus 4.8} correspond to 3 and 6 additional solved
tasks out of 500. These results extend the gains from repository
migration to targeted bug fixing. The Verified runs permit network
access during generation (\cref{app:eval-code}).

\paragraph{Competitive Repository Development.}
On WorkBuddy-Code, Raven-Code outperforms all three locally evaluated
baselines with \modelname{DeepSeek-V4-Flash}, by margins of 1.4 to 4.0
points. It remains close to the strongest alternatives with the other
backbones: the Claude Code leaderboard entry leads by 1.2 points with
\modelname{Claude Opus 4.8}, and OpenCode leads by 0.4 points with
\modelname{Qwen3.6-35B-A3B}, where the three strongest systems are within
2.2 points. Raven-Code therefore remains competitive across the tested
development settings, alongside its larger advantages on migration.

\paragraph{High Accuracy on Database Analysis.}
Beyond software modification, we assess database analysis using
DataAgentBench \citep{ma2026dab}. This benchmark asks an agent to answer 54
natural-language questions over 12 datasets stored in PostgreSQL,
MongoDB, SQLite, and DuckDB. Following the leaderboard protocol with five
trials per query, Raven-Code with \modelname{Claude Opus 5} as its main model reaches a
Pass@1 of 0.8762 and a Pass@5 of 0.9097. On August 24, 2026, this was the
highest Pass@1 among the entries in Figure~\ref{fig:data-results}, with
the other entries taken from the public leaderboard on that date. This
extends the evidence for Raven-Code beyond software modification to
answering analytical questions over heterogeneous database systems.

\begin{figure}[htbp]
\centering
\includegraphics[width=0.95\linewidth]{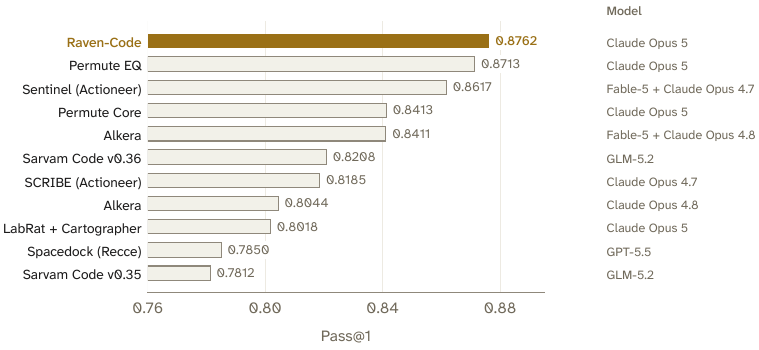}
\caption{DataAgentBench Pass@1 on August 24, 2026, with the horizontal
axis starting at 0.76. The comparison entries are the public leaderboard
values on that date, and the right column gives their models.}
\label{fig:data-results}
\end{figure}

\subsection[Raven-Design]{Raven-Design: Render-and-Inspect Visual Design}
\label{sec:eval-design}
\label{sec:agents-design}

Visual defects such as overflowing text, obscured chart labels, and poor
layout may become apparent only after rendering. \ravendesign therefore
incorporates rendering and inspection into the production of slide decks,
charts, diagrams, and web interfaces. A single entry in the Host Agent's registry provides access to two
implementations. The design engine selects
the relevant packaged domain skills for each request, maintains a persistent Task State containing goals and requirements, and renders
HTML, SVG, and Office outputs into pages and previews that the model
inspects before delivery. Requests based on a user-provided template are routed internally to
Raven-PPT. This deck engine specifies each slide's main claim in an
outline and builds the deck using python-pptx. Each build is checked for
68 types of issue, nine of which block publication. Any slide not yet
inspected by the agent is sent to a second reviewer with an empty context.

\paragraph{Slide Generation Setup.}
PresentBench \citep{chen2026presentbench} contains 238
slide-generation tasks from five domains, each with an instruction and
background materials. The agent produces an editable PPTX file, and the
official evaluator scores it from 0 to 100 over five equally weighted
dimensions: presentation fundamentals, visual design and layout,
content completeness, content correctness, and content fidelity. Reported
means cover successfully graded decks, and task coverage varies across
configurations.
\Cref{app:eval-design} gives the protocol and coverage of each
configuration.

\begin{figure}[htbp]
\centering
\includegraphics[width=0.95\linewidth]{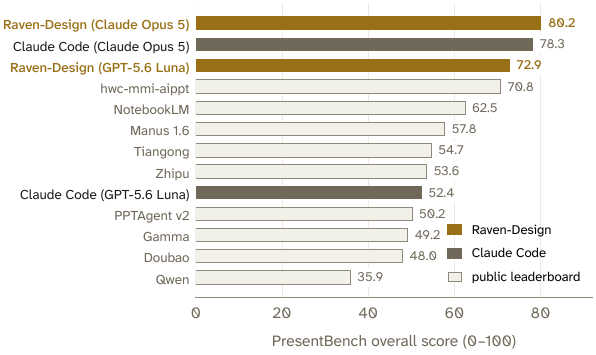}
\caption{PresentBench overall scores (0--100). Gold bars are Raven-Design and
dark bars Claude Code, with the backbone in parentheses. Light bars are
public leaderboard entries evaluated by the benchmark authors, whose model
configurations are not given.}
\label{fig:present-results}
\end{figure}

\paragraph{Improved Slide Generation.}
Raven-Design produces higher-scoring decks than Claude Code with both
tested backbones. It scores 80.2 with \modelname{Claude Opus 5} and 72.9 with
\modelname{GPT-5.6 Luna}, compared with 78.3 and 52.4 for Claude Code
(Figure~\ref{fig:present-results}). The gains of 1.9 and 20.5 points
show the benefit of a specialized slide-generation harness, with a
particularly large improvement for \modelname{GPT-5.6 Luna}. Both
Raven-Design scores also exceed every public leaderboard entry shown in
the figure. Those entries provide product-level reference scores, with
unspecified models and evaluation on all 238 tasks. The results support
the effectiveness of structuring slide generation around explicit layouts
and artifact-level evaluation.

\paragraph{Visual Artifact Setup.}
Beyond slide decks, the evaluation includes SVG generation and
data-visualization dashboard tasks from ArtifactsBench
\citep{zhang2025artifactsbench}, while GDPval
\citep{patwardhan2025gdpval} contributes professional deliverables
such as spreadsheets, reports, and slide decks. An internal grader
renders each deliverable, replays interactions where the task requires
them, and scores the benchmark's checklist or rubric from 0 to 100.

\begin{figure}[htbp]
\centering
\includegraphics[width=0.95\linewidth]{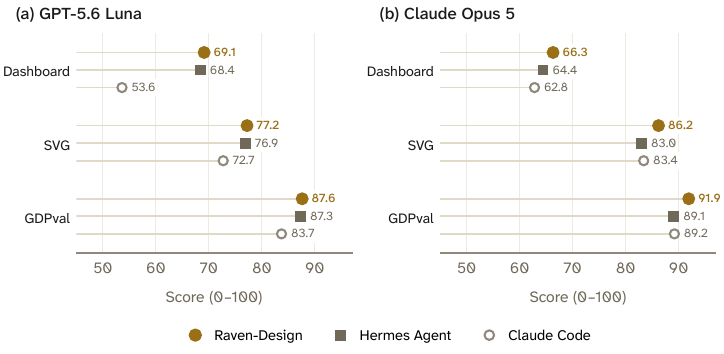}
\caption{Visual-design scores (0--100) under (a) \modelname{GPT-5.6 Luna} and (b) \modelname{Claude Opus 5}.
Scores come from an internal grader with \modelname{GPT-5.6 Luna} as the judge and are
not comparable with the official leaderboards.}
\label{fig:design-results}
\end{figure}

\paragraph{Consistent Gains Across Visual Tasks.}
Raven-Design achieves the highest score in all six benchmark and backbone
settings (Figure~\ref{fig:design-results}), extending its advantages to
SVG generation, interactive dashboards, and professional deliverables.
It leads the strongest alternative, Hermes Agent in four settings and Claude
Code in two, by 0.3 to 2.8 points. The largest gap against an individual
baseline is 15.5 points over Claude Code on Dashboard tasks with
\modelname{GPT-5.6 Luna}. Although the margins over the strongest
alternative are modest, their consistency across task types and both
backbones supports the broader utility of Raven-Design's visual workflow.

\paragraph{Case Study Setup.}
To complement the benchmark scores with examples of generated deliverables,
we examine seven runs from the
showcase in the Raven repository
(\texttt{docs/showcase.md}): four slide decks, two posters, and one
interactive website (\cref{fig:design-cases}). In each run, Raven planned a task graph from the user's request, ending
with a Raven-Design node that consumed outputs from other specialists (\cref{tab:design-cases}).
These selected cases provide qualitative examples of completed
collaborative workflows, without human preference ratings.

\begin{figure}[t]
\centering
\begin{minipage}{0.95\linewidth}
\centering
\begin{subfigure}[t]{0.2325\linewidth}
\centering
\includegraphics[height=1.88\linewidth]{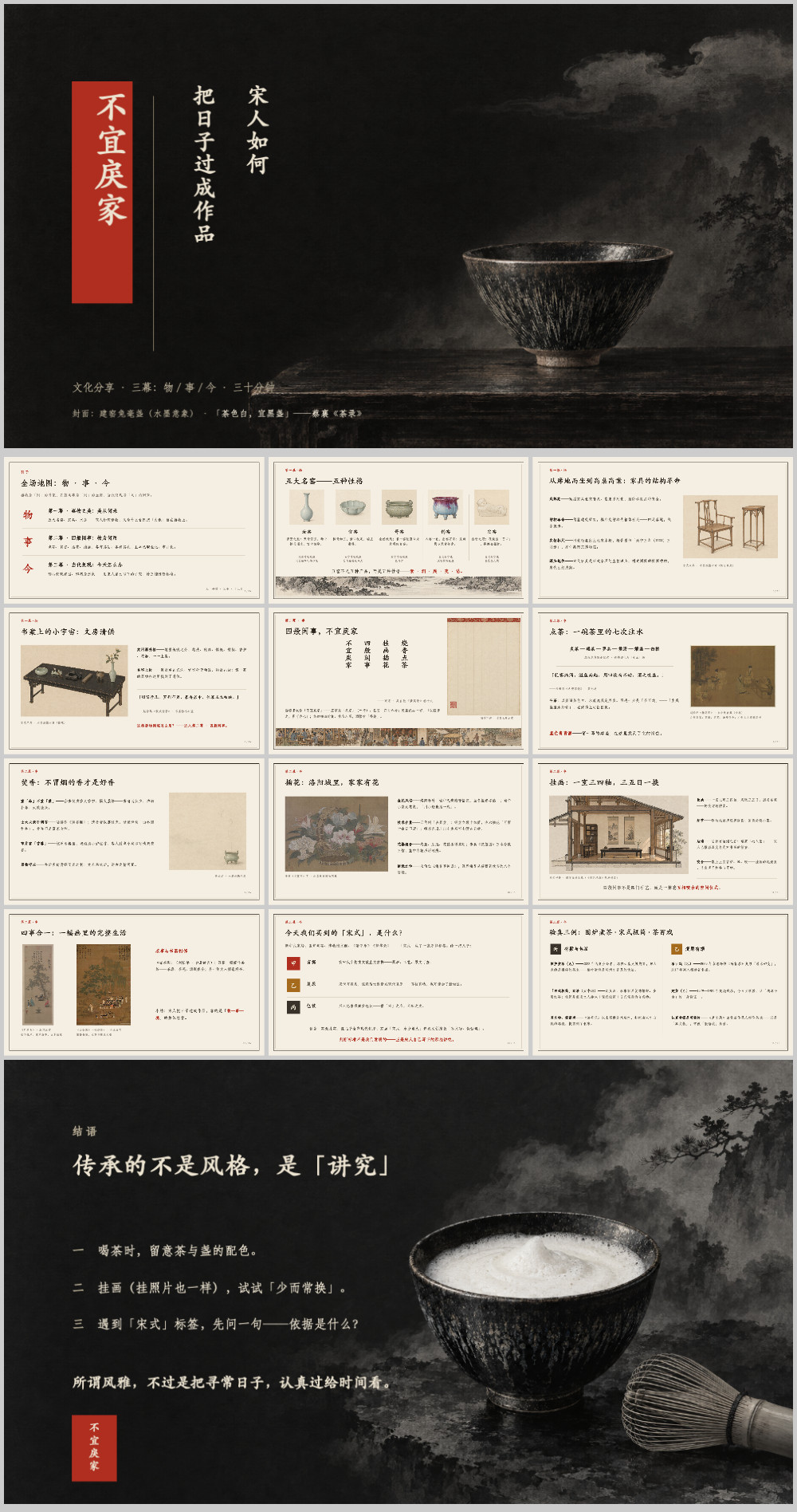}
\caption{Song Dynasty}
\label{fig:design-case-song}
\end{subfigure}\hfill
\begin{subfigure}[t]{0.2325\linewidth}
\centering
\includegraphics[height=1.88\linewidth]{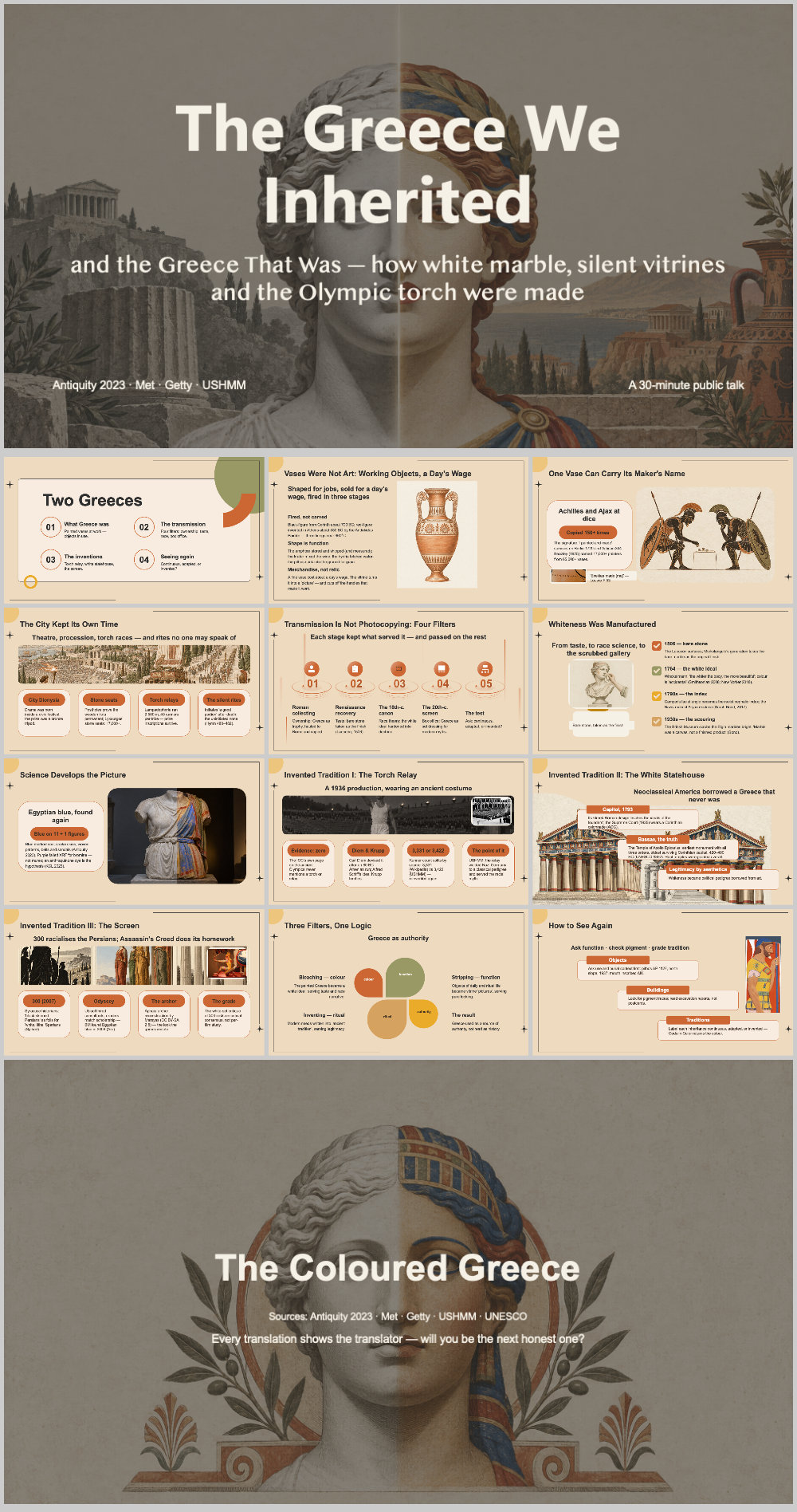}
\caption{Greek Polychromy}
\label{fig:design-case-greek}
\end{subfigure}\hfill
\begin{subfigure}[t]{0.2325\linewidth}
\centering
\includegraphics[height=1.88\linewidth]{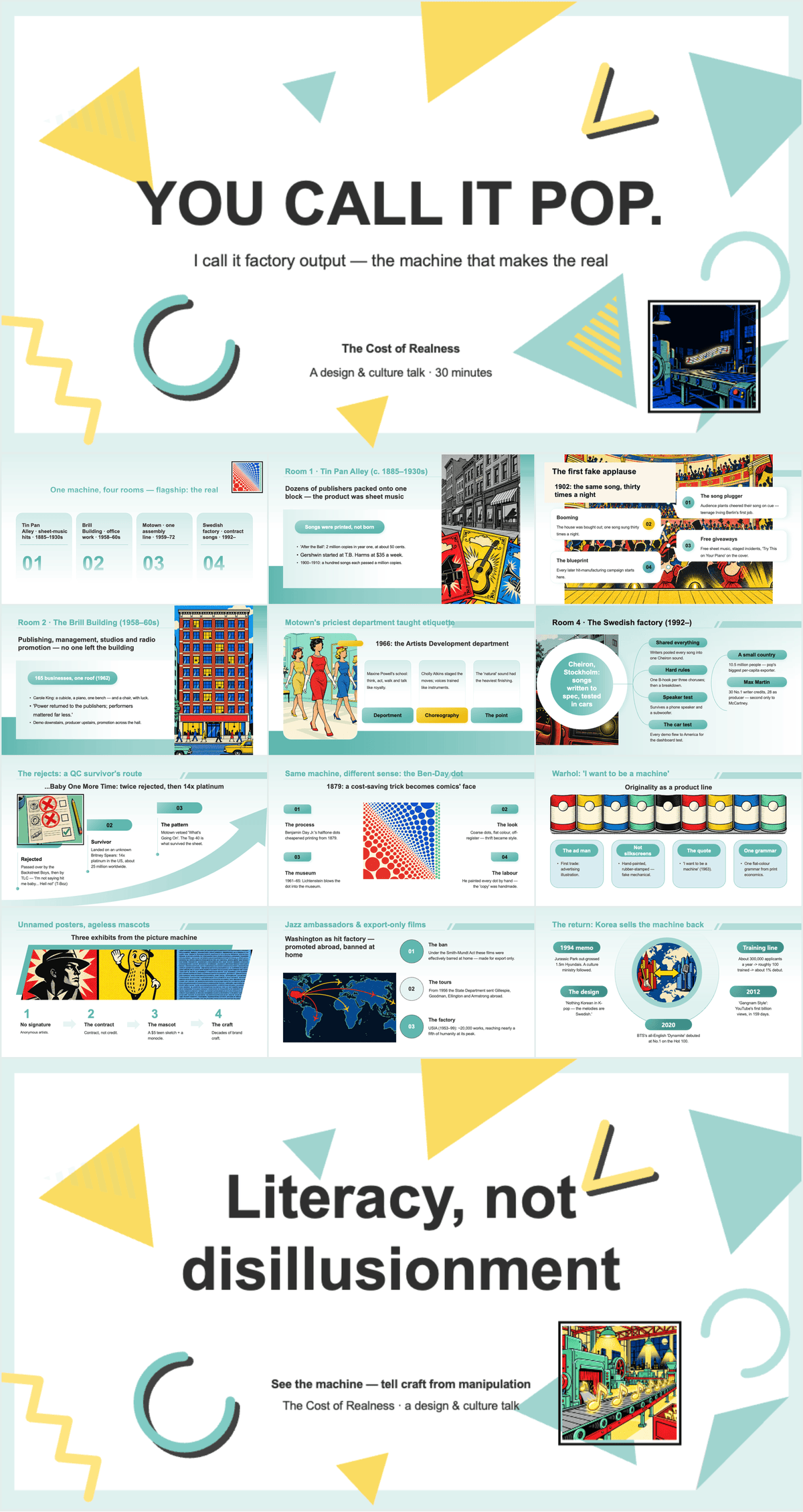}
\caption{Pop Music}
\label{fig:design-case-pop}
\end{subfigure}\hfill
\begin{subfigure}[t]{0.2325\linewidth}
\centering
\includegraphics[height=1.88\linewidth]{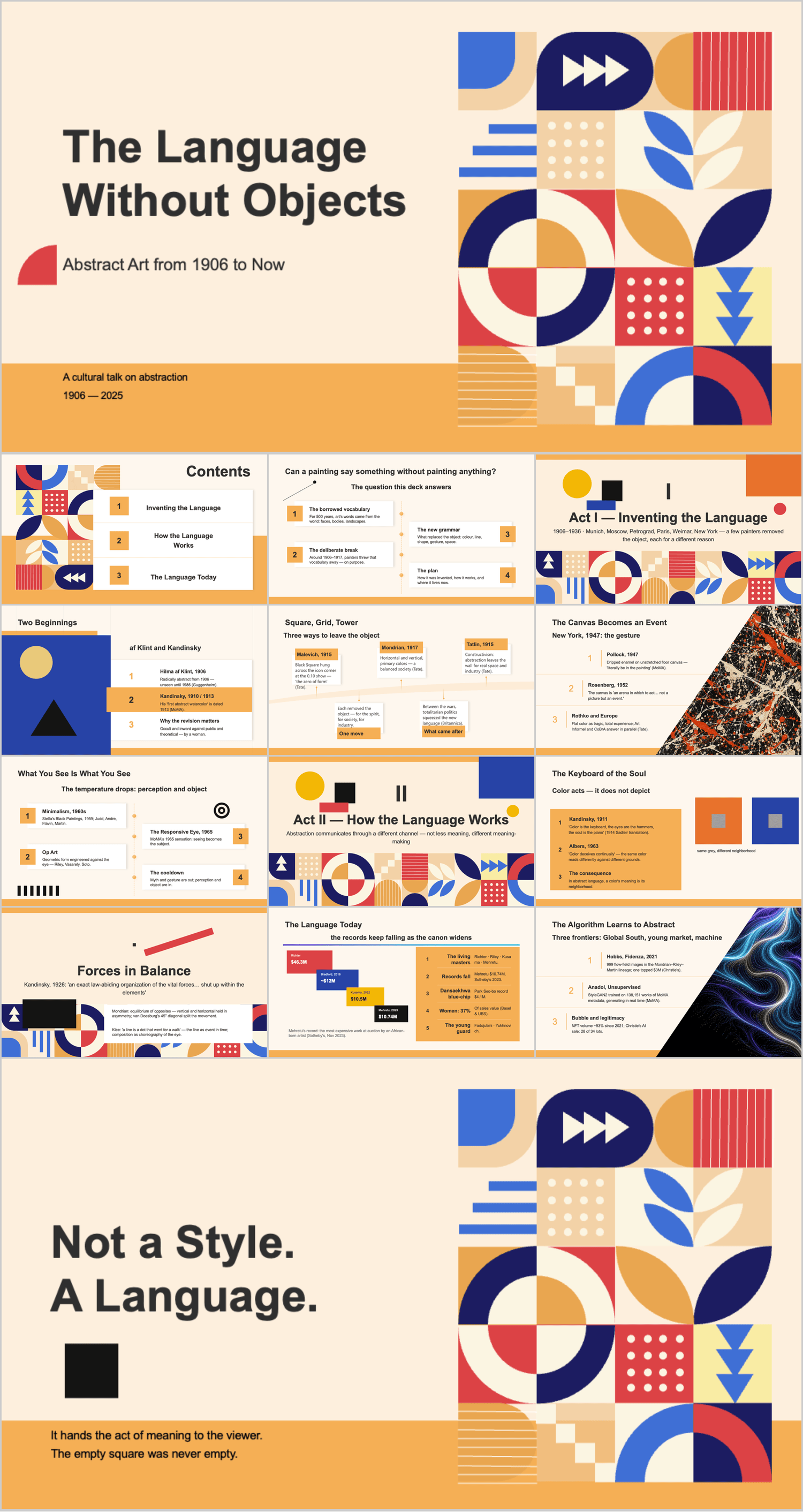}
\caption{Abstract Art}
\label{fig:design-case-abstract}
\end{subfigure}

\vspace{0.8em}

\begin{subfigure}[t]{0.3496\linewidth}
\centering
\includegraphics[width=\linewidth]{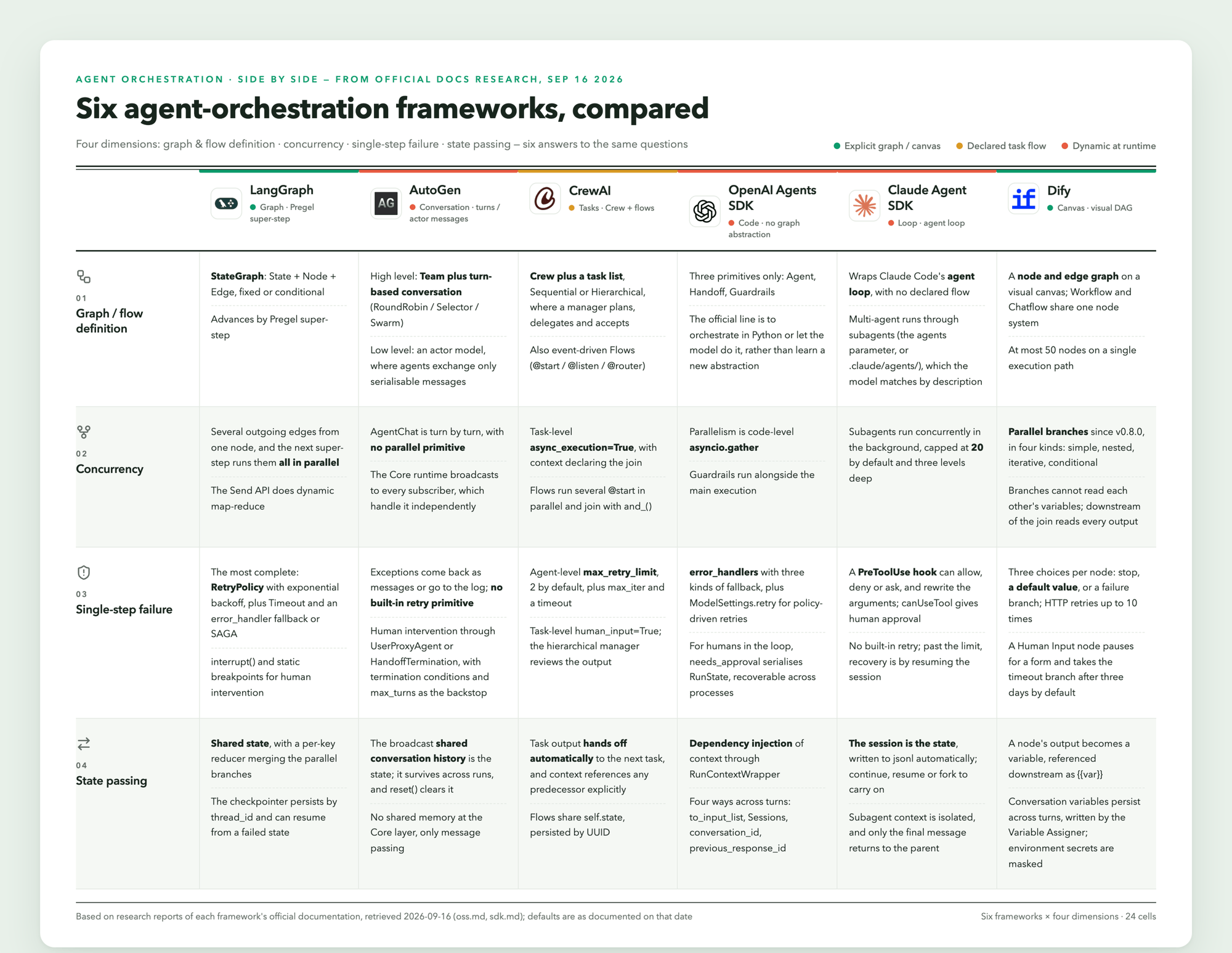}
\caption{Orchestration Frameworks}
\label{fig:design-case-frameworks}
\end{subfigure}\hfill
\begin{subfigure}[t]{0.2941\linewidth}
\centering
\includegraphics[width=\linewidth]{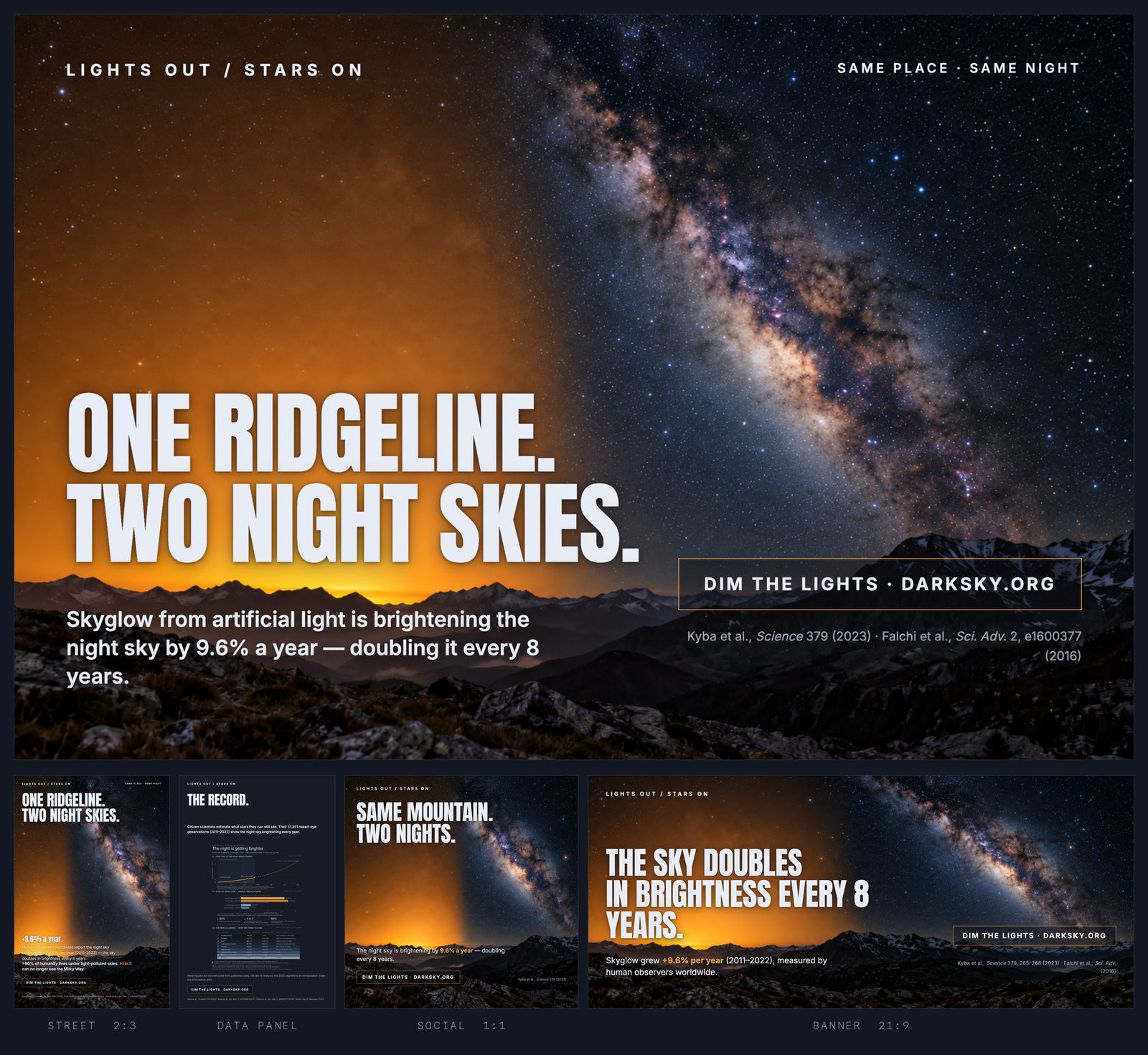}
\caption{Light Pollution}
\label{fig:design-case-light}
\end{subfigure}\hfill
\begin{subfigure}[t]{0.2864\linewidth}
\centering
\includegraphics[width=\linewidth]{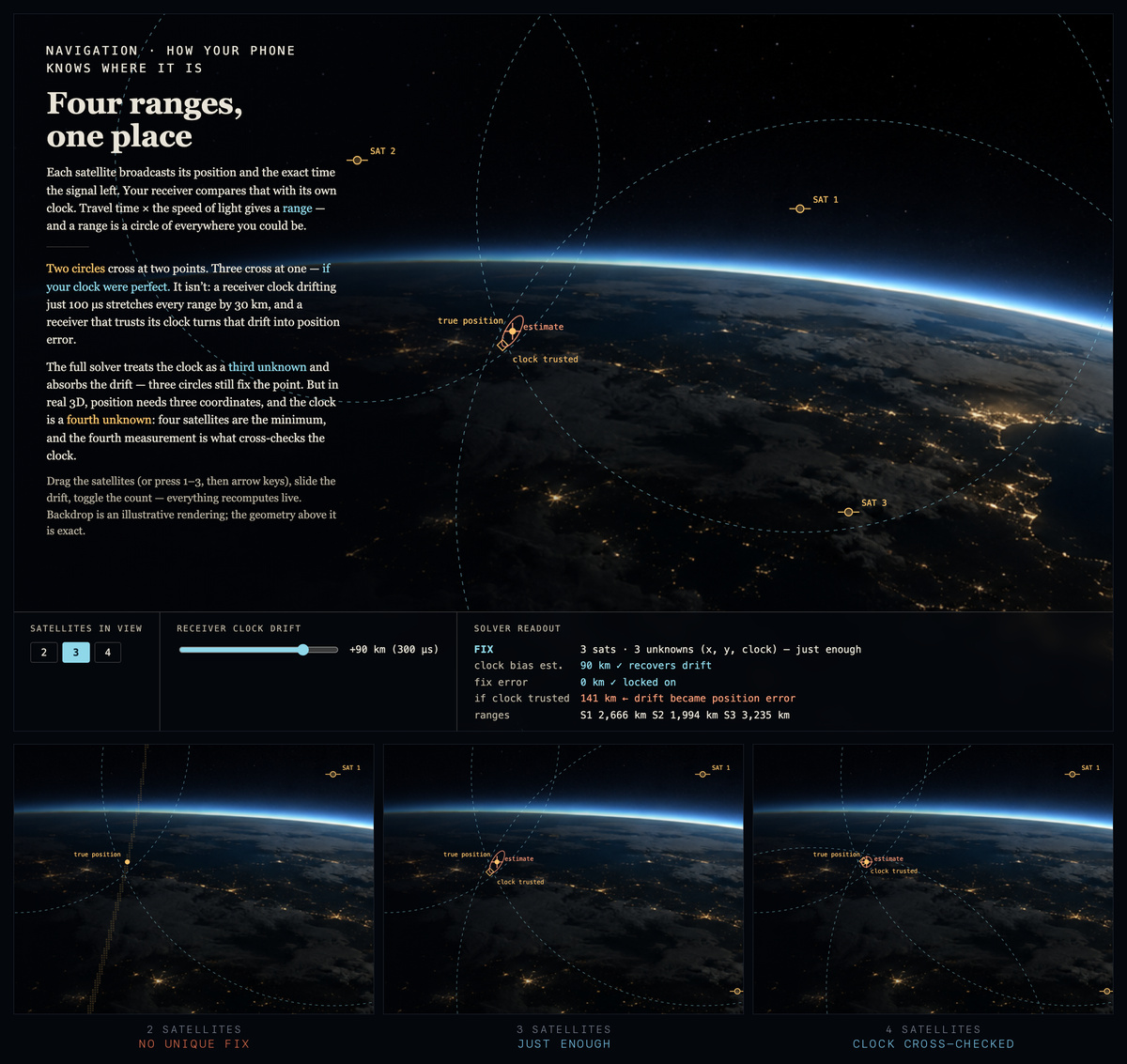}
\caption{GPS Trilateration}
\label{fig:design-case-gps}
\end{subfigure}
\end{minipage}
\caption{Deliverables of the Raven-Design case studies. Panels (a)--(d)
show the cover, twelve interior slides, and the closing slide of each
deck. Panel (e) shows the complete board, which compares six frameworks
on four dimensions. Panel (f) shows the key visual above its four
derived formats, and panel (g) shows the live page above the position fix with two, three,
and four satellites. \Cref{tab:design-cases} gives the task graph of
each run.}
\label{fig:design-cases}
\end{figure}

\begin{table}[!ht]
\centering
\small
\caption{Task graphs of the case studies in \cref{fig:design-cases}.
R, C, O, and D denote Raven-Research, Raven-Code, Raven-Oncall, and
Raven-Design nodes, a numeral counts parallel nodes of one kind, $\to$
marks a dependency, and $\parallel$ marks branches that run
concurrently. Design time is the wall-clock time of each Raven-Design
node in the task graph, with the image-plate node first where a graph
has two.}
\label{tab:design-cases}
\begin{tabular}{lllr}
\toprule
\textbf{Case} & \textbf{Deliverable} & \textbf{Task Graph} &
\textbf{Design Time (min:s)} \\
\midrule
(a) Song dynasty & Deck (Chinese) &
$3\mathrm{R} \to \mathrm{R} \to \mathrm{D}$ & 18:45 \\
(b) Greek polychromy & Deck (English) &
$3\mathrm{R} \to \mathrm{R} \to \mathrm{D}$ & 55:34 \\
(c) Pop music & Deck (English) &
$3\mathrm{R} \to \mathrm{R} \to \mathrm{D}$ & 47:03 \\
(d) Abstract art & Deck (English) &
$3\mathrm{R} \to \mathrm{R} \to \mathrm{D}$ & 64:50 \\
\midrule
(e) Orchestration frameworks & Comparison board &
$2\mathrm{R} \to \mathrm{D}$ & 6:57 \\
(f) Light pollution & Poster series &
$[(3\mathrm{R} \to \mathrm{C}) \parallel \mathrm{D}] \to \mathrm{D}$ &
2:58, 13:57 \\
(g) GPS trilateration & Interactive website &
$[(2\mathrm{C} \to \mathrm{O}) \parallel \mathrm{D}] \to \mathrm{D}$ &
3:03, 17:03 \\
\bottomrule
\end{tabular}
\end{table}

\paragraph{A Distinct Visual Language for Each Deck.}
Although the four decks share the same task graph, they adopt visual designs
suited to their subjects. In each run, three
Raven-Research nodes study separate aspects of the topic in parallel, a
fourth synthesizes their reports, and Raven-Design builds the deck. The
Chinese deck on Song-dynasty domestic aesthetics sets a vertical title
on ink-wash photography beside a red seal-style block
(\cref{fig:design-case-song}). The deck on Greek polychromy splits its
cover portrait into a white-marble half and a painted half, visually introducing the presentation's central claim
(\cref{fig:design-case-greek}).
The deck on pop music uses pop-art illustrations, and the deck on
abstract art is built from flat geometric tiles
(\cref{fig:design-case-pop,fig:design-case-abstract}). Within each deck, the interior slides retain a consistent palette and
title position while using timelines, card rows, numbered lists, and
image-led layouts. The Raven-Design node took between 19 and 65 minutes per deck,
and the showcase lists a cost of about US\$0.80 for each deck. The
shared workflow accommodates different subjects and visual styles while
preserving consistency within each presentation.

\paragraph{Composing the Work of Other Specialists.}
The posters and website illustrate how Raven-Design incorporates evidence
and computations from other specialists. The comparison board of six
orchestration frameworks is built from two Raven-Research reports on
their official documentation, and it arranges 24 cells, one for each
framework and dimension, under a three-color key for how each framework
defines its flow (\cref{fig:design-case-frameworks}). For the
light-pollution campaign, a Raven-Code node computes the figures for
the data panel from three research reports while a separate Raven-Design
node generates the key visual. A final Raven-Design node then composes
the key visual and four derived formats, a 2:3 street poster, the data
panel, a 1:1 social square, and a 21:9 banner
(\cref{fig:design-case-light}). Both posters print their sources. For
the website, two Raven-Code nodes write the GPS trilateration
computation and an independent counterpart, Raven-Oncall cross-checks
them, and Raven-Design builds a page on which the reader drags
satellites, changes the receiver clock drift, and sees the position fix
recomputed (\cref{fig:design-case-gps}). The page labels its Earth
backdrop as illustrative and its geometry as exact. In the last two
graphs the Host Agent assigns Raven-Design both a narrow subtask, the
image plate, and the final composition. These cases demonstrate how
Raven composes research, computation, verification, and design into
complete visual deliverables. The task graph also allows a specialist to
contribute at different stages, according to the artifact dependencies
of the request.

\subsection[Raven-Oncall]{Raven-Oncall: Unattended Run-and-Watch Automation}
\label{sec:eval-oncall}
\label{sec:agents-oncall}

\ravenoncall manages long-running jobs on registered machines, including
training runs, solver cases, and parameter sweeps. Such jobs require
monitoring for divergence, efficient waiting between state changes, and
final reports grounded in recorded measurements. To meet these requirements,
each job is represented as a campaign, a persistent record containing its
target, measurements, permitted actions, and budget. The agent submits
one round at a time and registers a wake-up with Raven's proactivity
scheduler under the campaign identifier. It then waits without issuing
model calls. A monitor that operates without model calls wakes the agent
early if the round finishes or diverges. The tools enforce the campaign's
operating constraints. Every decision to wait, terminate, or resubmit must cite a recent
measurement, compute requests are
clamped to the remaining budget, and machine capacity is allocated
across campaigns. A campaign closes only through a report gate that
rejects reports without a baseline and claims contradicted by the
recorded state.

\paragraph{AI4AI Setup.}
The AI4AI task is autoresearch \citep{karpathy2026autoresearch}, a
public single-GPU derivative of nanochat \citep{karpathy2025nanochat}
in which an agent improves the pretraining recipe of a GPT model with
about 50M parameters. Each training run receives 300 seconds on one GPU,
each campaign has a training budget of 5 GPU-hours on two A800 80GB
GPUs, and the metric is validation bits per byte (BPB), averaged over
several seeds of the final configuration. Runtime is the wall-clock
time of the whole campaign, including work outside the training runs. The unmodified starting script reaches 1.108 BPB.
Each system ran one campaign. Claude Code received a separately written
task statement with the same budget and constraints.
\Cref{app:eval-oncall} gives the rules and accounting.

\begin{figure}[htbp]
\centering
\includegraphics[width=0.95\linewidth]{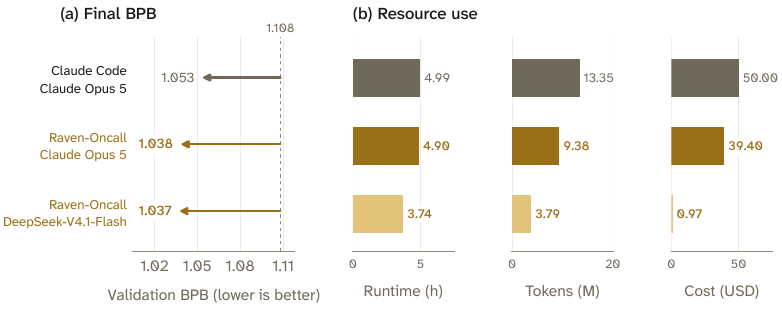}
\caption{AI4AI nanochat pretraining. (a) Final validation BPB of each campaign,
starting from 1.108 (dashed line). (b) Runtime, reported tokens, and USD cost.
Lower is better throughout. The lighter bars use the DeepSeek backbone.}
\label{fig:ai4ai-results}
\end{figure}

\paragraph{Lower BPB at Lower Cost Under \modelname{Claude Opus 5}.}
Raven-Oncall achieves better pretraining quality with lower reported token
use and cost under the shared \modelname{Claude Opus 5} backbone. It reaches
1.038 BPB, compared with 1.053 for Claude Code, in a similar runtime of
4.90 versus 4.99 hours (Figure~\ref{fig:ai4ai-results}). From the
starting value of 1.108, these results correspond to reductions of 0.070
and 0.055. With the same training budget, Raven-Oncall therefore finds a
better final configuration while spending fewer model tokens on managing
the campaign. With \modelname{DeepSeek-V4.1-Flash}, it reaches 1.037 BPB
in 3.74 hours at a reported cost of 0.97 USD, demonstrating that effective
campaign execution is also possible with a low-cost backbone.

\paragraph{AI4S Setup.}
To evaluate a broader range of sustained tasks, we use the AI4S benchmark,
an internal set of 17 automatic scientific
research tasks. The tasks execute solvers and training jobs on registered machines, covering structural
mechanics, computational fluid dynamics, molecular simulation, numerical
computation, LLM batch inference, embedding-model training, retrieval
tuning, and platform monitoring. Success rate is the number of
successful tasks divided by 17, and runtime, tokens, and cost are
averages over the tasks. \Cref{app:eval-oncall} describes the tasks and
the success judgment.

\begin{figure}[htbp]
\centering
\includegraphics[width=0.95\linewidth]{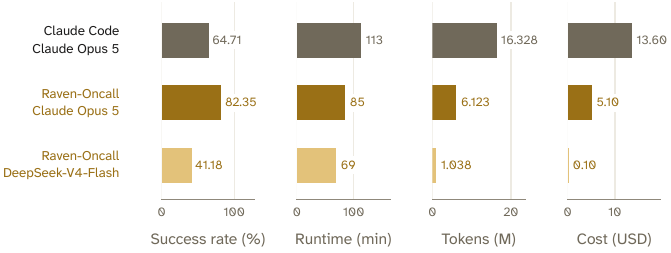}
\caption{AI4S internal benchmark. Success rate on the 17 tasks, and
runtime, tokens, and USD cost averaged per task. The lighter bars use
the DeepSeek backbone.}
\label{fig:ai4s-results}
\end{figure}

\paragraph{Higher Task Success with Lower Resource Use.}
Raven-Oncall solves 14 of the 17 scientific tasks with
\modelname{Claude Opus 5}, compared with 11 for Claude Code, while reducing
average runtime, token use, and cost (Figure~\ref{fig:ai4s-results}).
Completing more tasks with lower resource use demonstrates the practical
value of its campaign-based approach to sustained execution. With
\modelname{DeepSeek-V4-Flash}, it solves 7 tasks at an average cost of
0.10 USD, offering a lower-cost operating point. Success is judged
manually against task-specific reference values and supporting evidence
withheld from the agent (\cref{app:eval-oncall}). Together, the training
and scientific-task results support Raven-Oncall's effectiveness in
managing real jobs through completion under explicit resource budgets.

\subsection{Skill Retrieval and Reuse}\label{sec:eval-skills}

We evaluate how retrieved procedures improve task performance with the
backbone and harness held fixed. The published SkillCorpus experiments
\citep{wang2026skillcorpus} attach the
curated catalog of \cref{sec:skillforge-hub} and its reference
retrieval-and-selection stack to two agent harnesses, one of which is
Raven. The skill library stays fixed throughout these experiments,
allowing us to assess both the benefit of reusable procedural knowledge
and the harness's ability to translate that knowledge into successful
execution.

\paragraph{Setup.}
The evaluation covers 407 tasks from three third-party benchmarks.
SkillsBench \citep{li2026skillsbench} contributes 87 tasks across eight
domains with deterministic verifiers. GDPval \citep{patwardhan2025gdpval}
contributes 220 professional tasks graded by an LLM judge, and
QwenClawBench \citep{qwen2026qwenclawbench} contributes 100 tasks drawn
from a real-user distribution and graded by automated checks together
with an LLM judge. Raven and OpenClaw \citep{steinberger2025openclaw},
two independently developed harnesses that load \texttt{SKILL.md}
skills, are each paired with \modelname{Qwen3.5-27B} and \modelname{Qwen3.5-397B-A17B}.
Each of the four harness--backbone cells attempts every task
under a no-skill baseline and with SkillCorpus. In the latter condition,
the reference stack recalls and reranks catalog candidates, and an LLM
selector injects at most two full skill bodies into the agent's context.
Both harnesses receive identical precomputed selections, allowing their
performance gains to be compared under the same procedural input. SkillsBench reports Pass@1, and GDPval and QwenClawBench
report mean rewards on \([0,1]\) multiplied by 100. Every configuration
is averaged over three runs, and gains are reported in points on these
scales. \Cref{app:eval-skills} gives the grading and accounting details.

\begin{figure}[htbp]
\centering
\includegraphics[width=0.95\linewidth]{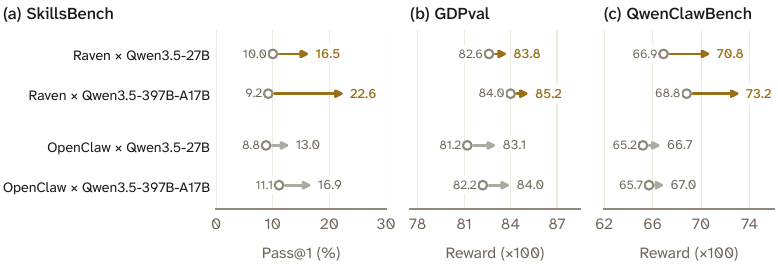}
\caption{Effect of the curated skill library reported by SkillCorpus
\citep{wang2026skillcorpus}. Each arrow runs from the no-skill baseline
(hollow marker) to the condition with SkillCorpus skills for one
harness--backbone cell. Values are three-run means. Horizontal axes
start at different values. \Cref{tab:skills-source} lists the values.}
\label{fig:skills-source}
\end{figure}

\paragraph{Consistent Gains Across Benchmarks.}
Skill retrieval improves task performance in all twelve
cell--benchmark comparisons in \cref{fig:skills-source}. This consistent
pattern spans both harnesses, both backbones, and all three benchmarks.
SkillCorpus pools each benchmark by
averaging each task's paired gain over the four cells and computing a
task-clustered standard error. The pooled gains are \(+7.5\pm2.3\)
points on SkillsBench, \(+1.51\pm0.49\) on GDPval, and \(+2.79\pm0.70\)
on QwenClawBench, with \(z\)-scores of 3.2, 3.1, and 4.0.
The gains span different baseline performance levels: SkillsBench starts
near 10\% Pass@1, whereas GDPval and QwenClawBench start between 65 and
85 points.
A single-run check with \modelname{Claude Opus 4.7} in OpenClaw raises
SkillsBench Pass@1 from 39.1\% to 47.1\%, showing that the library also
provides useful procedural knowledge to a stronger backbone.

\paragraph{Larger Gains with the Raven Harness.}
Raven obtains larger gains than OpenClaw from identical skill selections
on SkillsBench and QwenClawBench with both backbones. On SkillsBench,
Raven gains 6.5 and 13.4 points with \modelname{Qwen3.5-27B} and
\modelname{Qwen3.5-397B-A17B}, compared with 4.2 and 5.8 points for
OpenClaw. On QwenClawBench, the corresponding gains are 3.9 and 4.4
points for Raven and 1.5 and 1.3 points for OpenClaw. With
\modelname{Qwen3.5-397B-A17B}, Raven reaches the highest SkillsBench
Pass@1 in the grid, 22.6\%. In one run, adding skills resolves 19 tasks
that fail without skills, with regressions on 2 tasks (exact
McNemar test, \(p<0.001\)). These gains demonstrate Raven's ability to
turn the same retrieved procedures into more successful task executions.

Trace inspection in SkillCorpus provides a qualitative explanation for
this difference. On tasks that Raven solves and OpenClaw fails, OpenClaw
often stops after reasoning, leaves generated scripts unexecuted, or
finishes without running the verifier. Raven completes an
execute--verify--fix loop. The traces connect its stronger skill
utilization to carrying procedures through execution and checking their
outcomes.

On GDPval, however, all four cells improve by 1.2 to 1.9 points, with
per-task reward standard deviations of 20 to 27 points. These smaller
individual-cell gains are not statistically distinguishable from zero.

\paragraph{Contributions of Curation and Retrieval.}
To examine which components contribute to these gains, SkillCorpus reports
single-run ablations of catalog curation and trained retrieval. The analysis
uses the strongest configuration, Raven with
\modelname{Qwen3.5-397B-A17B} on SkillsBench
(\cref{tab:skills-ablation}). Replacing the
fine-tuned retriever and reranker with off-the-shelf Qwen3 embedding and
reranking models lowers Pass@1 from 22.6\% to 13.8\%. Replacing the
curated catalog with the raw crawl of about 821,000 skill files lowers
it to 14.9\%. Both ablated configurations remain above the no-skill
baseline of 9.2\%, while the full pipeline performs best. The results
support combining a curated procedural library with retrieval models
that can identify applicable skills.
As a standalone retriever on the 75 core SkillsBench
tasks, the fine-tuned stack matches SkillRouter \citep{zheng2026skillrouter} on its Easy tier
with a Hit@1 of 0.760 and exceeds it on the Hard tier, 0.760 against
0.720. This improvement on the harder retrieval tier complements the
downstream gains from the trained retrieval pipeline.

\paragraph{Larger Gains with Better Skill Coverage.}
Gains also vary with skill coverage: tasks with higher skill-coverage scores
show larger performance gains.
SkillCorpus uses the reranker score of the
highest-ranked skill as a proxy for task coverage. Pooled over the four
cells, the mean SkillsBench
gain rises from 2.2 points for the 26 tasks scoring below 0.45 to 6.2
points for the 40 tasks between 0.45 and 0.75 and 25.1 points for the
17 tasks at or above 0.75. Per task, the score correlates with the gain
at Pearson \(r=0.31\) to \(0.40\) across the four cells. The score is
essentially uncorrelated with the no-skill baseline, and controlling for
the baseline leaves partial correlations of 0.34 to 0.40.
However, Mathematics with 4 tasks and Finance \& Economics with 9 tasks show
gains near zero on Raven with \modelname{Qwen3.5-397B-A17B}. The overall
relationship between coverage and improvement supports task-aware skill
retrieval and identifies catalog coverage as a concrete direction for
extending its benefits across domains.

\section{Related Work}\label{sec:related-work}

\subsection{Multi-Agent Collaboration and Orchestration}

Multi-agent collaboration can be organized around conversation, a fixed
workflow, a dynamically revised task state, or an executable dependency
graph. The choice determines what the system can inspect before execution,
how it transfers intermediate results, and where coordination failures are
observed. Raven requires an orchestration interface that accommodates heterogeneous
model--harness pairs and their different execution capabilities.

Conversation-oriented frameworks coordinate agents through configurable
message exchanges.
AutoGen lets developers compose agents through messages and configurable
conversation patterns \citep{wu2023autogen}. MetaGPT assigns software
roles and encodes a development procedure in prompts and artifacts
\citep{hong2023metagpt}. These systems support role specialization through communication patterns
and roles configured by the developer. However, such a configuration does not by
itself provide a typed graph that can be validated before execution. The same
issue appears in systems that use a prescribed sequence of agent calls:
the sequence is inspectable, but adapting it to a new task often requires
editing the workflow itself.

Other systems make the execution structure more explicit or more adaptive.
MacNet represents interaction with a directed acyclic graph (DAG), although
its agents are drawn from a comparatively homogeneous collaboration pool
\citep{qian2025macnet}. Magentic-One maintains a task ledger and assigns
agents as the ledger changes during execution \citep{fourney2024magentic}.
MasRouter learns to route requests among large language models in a
multi-agent system, while Dang et al. evolve orchestration strategies over
tasks \citep{yue2025masrouter,dang2025orchestration}. These approaches
address executor selection or workflow adaptation, but they generally
assume a stable roster and a training or evaluation distribution from which
routing behavior can be learned. However, executor selection alone does not specify the artifact contract required
for a subsequent handoff.

Studies of scaling show that agent count alone is insufficient to predict
collaboration performance. Kim et al. report that coordination gains
depend on the interaction between task structure and system architecture
\citep{kim2026scaling}. More agents can add coverage when their capabilities
are complementary, while redundant messages and extra handoffs can consume
the budget without improving the result. AgentBench exposes related limits
in long-horizon agent tasks, where tool use, state tracking, and recovery
remain coupled \citep{liu2024agentbench}. MAST provides an empirical
taxonomy of multi-agent failures that includes coordination, message, and
execution effects \citep{cemri2025mast}. Together these results motivate
evaluating the structure of a collaboration plan separately from the final
quality of an artifact.

Formal methods offer useful vocabulary for making that structure testable.
Hoare logic connects a program's preconditions, commands, and postconditions
\citep{hoare1969axiomatic}. Separation logic explains how local reasoning
can remain sound when concurrent components own disjoint resources
\citep{ohearn2007resources}. Assume--guarantee reasoning and its probabilistic
variants express how one component's assumptions constrain another
component's guarantees \citep{kwiatkowska2010assume}. Recent discussions of
contracts for language-model agents argue that deployment needs several
layers of probabilistic guarantees rather than an informal message protocol
\citep{bensalem2026contracts}. These approaches motivate explicit executor identities, input scopes,
output conditions, and resource limits, although verifying open-ended
language-model execution remains unresolved.

Raven uses this contract perspective at the orchestration boundary. A Host
Agent can delegate directly or submit a DAG whose nodes identify an
executor, a prompt template, dependencies, input bindings, and optional
stateful handles. The runtime checks identifiers, acyclicity, path scope,
backend capabilities, registration status, recursion, and the shared
dispatch budget before it starts a worker. Every completed node produces a
recorded artifact that later nodes reference by content or path. The host
therefore selects a model--harness pair at invocation time from a
heterogeneous registry, while the graph fixes the predecessor relation and
the handoff representation. This separates planning from execution and
allows the same interface to represent a one-agent delegation or a
multi-stage workflow.

\subsection{Agent Harnesses and Self-Evolution}

An agent harness is the execution policy around a language model. It can
define the initial context, reasoning and action loop, tool interfaces,
memory access, iteration limits, verification rules, and recovery behavior.
This view distinguishes the model that proposes tokens from the system that
turns those tokens into a bounded task execution. ReAct couples reasoning
traces with actions in a prompt-level loop \citep{yao2023react}, while
SWE-agent shows that an agent--computer interface can be a decisive part of
software-engineering performance \citep{yang2024sweagent}. Harness
engineering therefore includes the interaction surface and control policy
alongside the backbone model \citep{weng2026harness}.

Several methods improve only part of this execution policy. GEPA evolves
prompts through reflective feedback and keeps the underlying agent program
fixed \citep{agrawal2026gepa}. ADAS searches over agent designs, and AFlow
generates workflow programs before deployment \citep{hu2024adas,zhang2025aflow}.
AgentSquare recombines four predefined modules, planning, reasoning, tool
use, and memory, within a modular design space \citep{shang2025agentsquare}.
These methods show that textual instructions, workflow structure, and
modular control logic can all be optimization variables. Their search spaces and evaluation
loops differ, however, so an improvement in one prompt or workflow does not
imply a reusable representation for changing another harness or another
agent family.

Open-ended evolution extends the search to executable agent code. The
Darwin G\"odel Machine edits its own implementation and evaluates descendants
in an open-ended loop \citep{zhang2026dgm}. Growing Harness takes a narrower
position on a frozen model. The model and tools remain fixed while an offline optimizer revises the
harness using execution traces and diagnostic artifacts, including
evaluator feedback and errors unavailable to the deployed harness \citep{li2026growing}.
Compared with AgentSquare's module recombination, the optimization target
is the harness as a whole, and development and service run in separate
loops. The resulting specialist harnesses are then reused across tasks.
However, separating the loops still leaves a credit-assignment problem. A candidate may change a
recovery branch, a memory instruction, or a tool policy while the measured
task score reflects all of them together. Without a stable edit vocabulary
and a fixed screening protocol, the evolver can select a patch whose causal
effect is unclear or whose gain does not transfer.

Beyond generating candidates, evolutionary search can retain multiple solutions.
MAP-Elites maintains high-performing elites across cells that partition
the search space by behavior
\citep{mouret2015illuminating}. HarnessBank applies this principle to
agent-harness self-evolution with a semantic gene bank, diagnosed failure
categories, recombination, and gated verification \citep{luo2026harnessbank}.
Its screening procedure checks that a proposed intervention actually
activates, that the score comparison is paired on the same tasks, and that
the observed gain passes an operational threshold. These checks combine evidence of activation with paired performance
measurements during candidate selection.

In Raven, the harness is factored into Memory, Planning, Capability, and
Action strategy interfaces. This boundary lets the evolver propose a targeted
change and the evaluator attribute it to an edit category within the
diagnosis, gene-bank, and gated-screening procedure. The development loop searches and
screens candidates, then freezes the selected harness before service use.
The Host Agent remains responsible for choosing among the resulting
model--harness pairs and composing them with other agents. This separation
distinguishes Raven from prompt-only optimization, workflow synthesis, and
unrestricted self-editing while retaining their useful search primitives.

\subsection{Long-Term Memory for Agents}

Agent memory research varies along several independent dimensions: the type
of record, the time scale, the retrieval index, the treatment of provenance,
and whether maintenance can proceed while the agent serves a request. A
recent survey separates factual, experiential, and working memory and
emphasizes that a memory system is more than a vector store
\citep{hu2026memorysurvey}. This taxonomy is useful for Raven because user
facts, agent cases, task artifacts, and reusable procedures have different
owners and different consequences when they are retrieved.

Early systems emphasize context management and reflection. MemGPT treats
the model as an operating system that moves information between memory
tiers with different access costs \citep{packer2023memgpt}. Generative
Agents record observations, retrieve relevant episodes, and periodically
reflect to form higher-level memories \citep{park2023generative}. A-MEM
constructs and updates a network of linked note cards, while Zep represents
agent memory as a temporal knowledge graph with source and validity
information \citep{xu2025amem,rasmussen2025zep}. These systems illustrate
different trade-offs between a compact working context, associative
retrieval, and explicit temporal structure. These distinctions motivate explicit scope and temporal information for
records used in later decisions.

Beyond context management, production-oriented memory systems address scale and operational updates,
using different principles to organize memory. Mem0 runs a two-phase
extraction and update pipeline and leaves hierarchical memory architectures
to future work \citep{chhikara2025mem0}. MemoryOS divides storage by
conversational time scale into short-term, mid-term, and long-term personal
memory \citep{kang2025memoryos}. MemOS instead distinguishes memory by
substrate, separating plaintext, activation, and parameter memory
\citep{li2025memos}. MIRIX keeps procedural
memory alongside other memory types in one multi-agent system
\citep{wang2025mirix}. Letta's sleep-time updates demonstrate that memory
consolidation can run asynchronously so that a foreground request does not
wait for every update \citep{letta2025sleeptime}. These systems improve
availability and reuse, yet a memory write and a task dependency remain
different kinds of relation. A record may describe a previous attempt
without being a required predecessor for the current computation.

EverOS organizes memory through extraction, consolidation, and retrieval
processes for long-horizon reasoning, with separate user and agent tracks
\citep{hu2026evermemos,evermind2026everos}.
Its user track can
retain profiles and episodes, while its agent track records cases and
experience-derived skills. The distinction gives an agent a durable source
of experience without treating another user's private context as a shared
resource. It also supports asynchronous writes and retrieval adapters that
can be selected by the host application.

Raven uses these memory facilities to supply context within the
orchestration structure. On a user-facing invocation, the host may add a
lexical profile slice and selected EverOS user memories. For a specialist,
the adapter can retrieve agent cases and skills within that agent's scope.
The collaboration layer decides what is visible at each invocation, and
the prompt marks retrieved text as untrusted context. A worker's session
record can be stored for later memory extraction, while the graph's
predecessor artifacts remain explicit in the artifact ledger. This separation keeps retrieved experience distinct from the declared
artifact dependencies and authorized inputs of each invocation.

\subsection{Skill Acquisition, Retrieval, and Evolution}

Skills provide a middle ground between one-off tool calls and changes to a
general harness. They package a task class, instructions, and optional
resources so that an agent can reuse a procedure across requests. Voyager
learns executable code skills in an embodied environment
\citep{wang2024voyager}. The Agent Skills specification standardizes a
lightweight \texttt{SKILL.md} format with a name, description, and
on-demand loading \citep{agentskills2025}. ExpeL stores natural-language
insights from successful experience, and SkillWeaver discovers and refines
procedures for web interaction \citep{zhao2024expel,zheng2025skillweaver}.
LATM instead treats reusable tool construction and request-conditioned tool
selection as the central problem \citep{cai2024latm}. These methods differ
in whether the learned object is code, instructions, or a tool wrapper, yet
all make procedural knowledge available outside the immediate context.

The open-skill ecosystem introduces a separate corpus problem. A public
catalog can contain duplicate names, near-duplicate bodies, malformed
metadata, unsafe instructions, and procedures whose descriptions do not
match their implementation. SkillCorpus addresses these issues with source
collection, structural filtering, semantic deduplication, quality and
safety judgments, release admission, and retrieval artifacts
\citep{wang2026skillcorpus}. Its reference stack rewrites a request when
needed, retrieves and reranks candidates, and lets an LLM select a bounded
number of full skill bodies. This separation between corpus curation and
online selection matters because a high retrieval score alone does not show
that a procedure is executable or appropriate for the receiving agent.

Skill maintenance is increasingly treated as an explicit operation.
Memp separates skill construction, retrieval, and update, which makes it
possible to evaluate procedural memory independently from initial discovery
\citep{zhang2026memp}. ExpeL and SkillWeaver update procedures from
experience, but their learned artifacts are tied to the task environments
and feedback signals used during acquisition. A persistent skill also needs
versioning, supporting cases, and a policy for rejecting regressions. These
requirements become more demanding when one catalog serves agents with
different tools and context limits.

Raven builds a single task-aware selection path over three sources: local
skills, memory-derived skills, and SkillHub. It fuses source hits by name,
keeps provenance, applies bounded retrieval and body budgets, and uses a
compatibility gate before injecting selected procedures. The update path is
separate. An execution case enters EverOS, where clustering and skill
operators can propose a new or revised procedure for later indexing.
Retrieval can therefore continue while revisions are pending. Persisted
revisions become available for subsequent selection after indexing.

Taken together, the four research lines are complementary at different levels of the Raven
architecture. Orchestration determines which executor runs and which
artifacts it receives. Harness evolution changes how that executor reasons,
uses tools, and recovers. Memory preserves user context and agent
experience, while Skill Forge exposes selected procedures to future runs.
Raven preserves explicit boundaries between these mechanisms. Memory records
remain distinct from graph edges, skill selection remains distinct from tool
permission, and harness updates remain distinct from the host's routing
policy. The evaluation therefore reports planning, harness adaptation,
specialist execution, and skill reuse as separate capability levels.

\section{Conclusion and Future Work}\label{sec:conclusion}

We introduced \raven, \emph{The Harness of Harnesses}, an open-source
multi-agent ecosystem that addresses the growing complexity of harness
design and the difficulty of composing specialized capabilities across
domains. Raven treats each executable model--harness pair as a unit of intelligence
that can be constructed, improved through experience, and composed with
other agents. The Host Agent organizes
native and third-party agents through explicit task dependencies, while
modular harness evolution, persistent memory, and Skill Forge support
adaptation and reuse across tasks. Our theory establishes sufficient
conditions under which composition expands reliable task coverage beyond
that of the available individual agents under a shared resource budget,
accounting for planning and coordination costs.

To assess the system's components, our evaluation examines planning quality, specialist performance, harness
adaptation, and skill reuse.
On MAOB, \raven ranks first among the compared systems on all four
planning metrics under both tested backbones, improving Exact Match over the strongest baseline by $10.4$ and $10.5$
percentage points, respectively.
Evaluations of the four native specialists characterize task performance
and resource use in research, software engineering, visual design, and
sustained operation. The published HarnessBank experiments demonstrate
the benefits of harness adaptation with a frozen backbone, while the
SkillCorpus experiments show that a curated skill library improves
\raven on three benchmarks, with larger gains than OpenClaw on two of
them.

Building on this ecosystem, our future work centers on an interconnected multi-agent system
spanning everyday devices and environments. We envision a \raven
instance for each device or environment, managing and orchestrating all
agents within its scope. We will develop communication and coordination
mechanisms between these instances, using protocols for agent-to-agent (A2A) communication to exchange
capability information, delegate tasks, and return results. Each agent will remain managed by its local \raven instance, while the
instances coordinate collaboration across devices. Connecting all of
a user's agents in this way would form an \emph{All-Domain Collaboration
Network}, making the network's capabilities accessible through requests from any
connected device or interaction interface.

We aim for this network to function as a persistent AI assistant that
maintains continuity across devices and adapts through experience.
To provide this continuity, we will connect the memory systems of \raven
instances, transfer task context, and synchronize execution progress so
that users can move between interfaces while ongoing work continues. User
preferences and accumulated experience will inform coordination across
the network, allowing an interaction begun on one device to continue on
another without requiring the user to reconstruct its history. Harness
evolution and reusable skills will support continued adaptation as the
network encounters new tasks and environments. End-to-end evaluations
will guide this development by measuring task success, continuity of
user experience, and coordination cost.

\begingroup
\raggedright
\bibliographystyle{min4}
\bibliography{references}
\endgroup

\clearpage
\appendix
\crefalias{section}{appendix}
\crefalias{subsection}{subappendix}
\section{Additional Theory Details}\label{app:theory-details}

\subsection{Notation and Scope}\label{app:theory-notation}

\begin{table}[htbp]
\centering
\small
\caption{Principal notation. States and correctness predicates are analytical
objects. An executor receives only its permitted observations.}
\label{tab:theory-notation}
\begin{tabularx}{\linewidth}{@{}lX@{}}
\toprule
Symbol & Meaning \\
\midrule
$a_i,M_i,h_i$ & Execution policy, underlying model, and harness. \\
$\mathcal H_i,\mathcal U_i,\Delta$ & Permitted histories, actions, and probability distributions. \\
$H,a_0,\mathcal A$ & Host controller, its nondelegating executor, and the pool including $a_0$. \\
$S,S_H$ & A generic system, either the composed system or a standalone agent, and the composed system $H[\mathcal A]$. \\
$t,r_t,\mu_t,\mathsf E_t,\mathsf V_t$ & Task, request, initial law, response kernel, and objective correctness. \\
$\omega,\operatorname{ref},s,\ell,Y$ & Reference conditions, their read-only state projection, full state, immutable-record ledger, and final record. \\
$B,C_S,\delta$ & Total resource budget, actual accrued cost, and tolerated failure probability. \\
$p_S,\mathcal C_S$ & Budgeted success probability and reliable task set. \\
$\pi,G,\sigma,\lambda$ & Annotated plan, DAG, executor assignment, and expanded operation order. \\
$K_{\rm op},\lambda_j,\mathcal X_v,\mathcal Y_v$ & Number of operations, the $j$th operation, and node input and output spaces. \\
$\Phi_v,d,\mathcal D_{\rm pkt},\mathcal M_e$ & Input assembly, authorized packet, packet space, and edge message space. \\
$P_v,Q_v,J_e$ & Node precondition, node transition guarantee, and handoff guarantee. \\
$A_j,D_j,I_j$ & Operation entry/transition predicates and the invariant after its prefix. \\
$\Gamma,\mathbf b$ & Contract/invariant annotations and planning/node/edge allocations. \\
$\kappa,\mathsf K_\kappa,\nu$ & Operation provider, conditional outcome kernel, and entry law. \\
$\mathcal Z_\kappa,\mathcal O_\kappa,\Pr_{\nu,\kappa}$ & Provider configuration and outcome spaces, and the local experiment. \\
$\mathfrak L,\Xi_j,O_j$ & Admissible entry laws and actual invocation configuration/outcome. \\
$\Sigma_j,L_j,C_j$ & State, ledger, and charged cost at operation $j$. \\
$Z,\Pi,\mathsf G_t,\mathsf S_t,q_t$ & Planning transcript, selected plan, valid-planning event, success event, and selected-plan success. \\
$R_{j-1},\mathsf{Ok}_j,F_j$ & Observable prefix, operation-success event, and success through the current operation. \\
$\epsilon_j,\epsilon(\zeta,\pi)$ & Local conditional failure bound and its sum for a selected plan. \\
$\eta_H,\bar\epsilon,\eta_\star,\epsilon_\star$ & Planning-failure bound, uniform valid-plan execution envelope, and their family-level bounds. \\
$\mathcal R_v,\mathcal C^{\rm new},\mathcal T_\star$ & Eligible realizers, new-task coverage, and a specified task family. \\
$\operatorname{rec}_\pi,\mathcal I,\mathcal L$ & Final-record projection, admitted initial pairs, and ledger space. \\
$\Omega_{\rm tr},\mathcal W,\mathcal Y_{\rm rec}$ & Trace, reference-condition, and execution-record spaces. \\
\bottomrule
\end{tabularx}
\end{table}

All task, state, contract, and probability objects refer to fixed system
versions and a stated input law. In conditional arguments, the planning
values $(\zeta,\pi)$ are fixed while execution randomness remains. Conditional probabilities are used only for conditioning events with
positive probability. The countable encoding convention makes this
conditioning unambiguous without requiring probability densities.

\subsection{Operation Records and Execution Semantics}
\label{app:theory-protocol}

The ledger is a finite map whose existing records are immutable. Write
$\ell[k]$ for the record stored under identifier $k$ and $\ell[k\mapsto r]$
for extension by a fresh identifier $k$. The extension is defined
only when $k\notin\operatorname{dom}(\ell)$. A node record $r_v$ contains
its actual input $\operatorname{in}(r_v)$, output
$\operatorname{out}(r_v)$, and relevant invocation trace. These fields
record execution behavior independently of the worker's account of it.
An edge record contains the delivered message. The root record contains
the authorized packet and task-reference bookkeeping. Only permitted
fields are exposed to an executor.

For $v\in V$, let
$x_v=\Phi_v(d,(\ell[e])_{e\in\operatorname{In}(v)})$. The entry and
transition predicates in the main text are defined by
\begin{align}
 A_v(s,\ell)
   &\Longleftrightarrow
     v\notin\operatorname{dom}(\ell)
     \land \operatorname{In}(v)\subseteq\operatorname{dom}(\ell)
     \land P_v(x_v,s),\label{eq:node-entry-record}\\
 D_v(s,\ell,s',\ell')
   &\Longleftrightarrow
     \exists r_v:\ \ell'=\ell[v\mapsto r_v]\notag\\
   &\qquad{}\land\operatorname{in}(r_v)=x_v
     \land Q_v(x_v,s,\operatorname{out}(r_v),s').
     \label{eq:node-transition-record}
\end{align}
All referenced records must also have their declared types. Otherwise,
these predicates are false. Every $D_v$ and $D_e$ additionally requires
$\operatorname{ref}(s')=\operatorname{ref}(s)$, preserving the read-only task reference.
For $e=(u,v)$, put
$y_u=\operatorname{out}(\ell[u])$. The transfer predicates are
\begin{align}
 A_e(s,\ell)&\Longleftrightarrow
    e\notin\operatorname{dom}(\ell)
       \land u\in\operatorname{dom}(\ell),\notag\\
 D_e(s,\ell,s',\ell')&\Longleftrightarrow
    \exists m:\ \ell'=\ell[e\mapsto m]\land J_e(y_u,s,m,s').
 \label{eq:edge-records}
\end{align}
Costs and acceptance are outcome fields, so successful completion also
requires acceptance and compliance with the allocation. Transitions
$D_v,D_e$ express semantic correctness independently of those fields.
The correspondence with the schedule is
$(A_j,D_j)=(A_{\lambda_j},D_{\lambda_j})$ and
$\epsilon_v=\epsilon_{\lambda^{-1}(v)}$,
$\epsilon_e=\epsilon_{\lambda^{-1}(e)}$.

With these record definitions, serial execution starts from the post-planning
state $(\Sigma_0,L_0)$. Operation
$\lambda_j$ is attempted only after its predecessors in $\lambda$ have completed
and been accepted. Its outcome supplies $(\Sigma_j,L_j)$, cost $C_j$, and its
acceptance status. The protocol need not observe the truth of $D_j$.
A rejection, crash, timeout, or nontermination can prevent plan completion.
If every operation succeeds, the protocol completes the finite sequence,
delivers the final execution record
\(\operatorname{rec}_\pi(L_{K_{\rm op}},\Sigma_{K_{\rm op}})\) in the final operation, and terminates. Final
delivery and its cost belong to that operation. The success event therefore includes any delay or failure associated
with final delivery.

Extending the analysis to a concurrent protocol requires each of its
runs to have a measurable serial representation respecting $\lambda$ with
the same invocation inputs and outputs, relevant state transitions,
acceptance outcomes, terminal record, and total charged cost. Probability
bounds then use the law induced by the actual concurrent runs on these
representations. Standalone serial error rates cannot be substituted.
This trace-preservation premise permits the pathwise deterministic lemma
and the probability argument to apply. Footprint separation is useful for
proving such premises in suitable program models
\citep{ohearn2007resources}, but does not alone guarantee preservation of
timing-sensitive service responses, acceptance, or cost. The grammar
compilation result below is proved for its serial protocol. Applying it to concurrent execution requires the same additional premise.

\paragraph{Ledger Extension.}
A compiled fragment reads only its own record identifiers and its declared
input interface. If a ledger $\ell_{\rm ext}$ has a disjoint domain and none
of its keys is read by the fragment, adding $\ell_{\rm ext}$ leaves its input
resolvers and entry predicates unchanged. Each transition above lifts by
retaining those extra records in both $\ell$ and $\ell'$. This follows by
substitution in \cref{eq:node-entry-record,eq:node-transition-record,eq:edge-records}:
freshness and all referenced values are preserved, and the state predicates
do not inspect unrelated records. The fragment's invariants lift by
restricting the larger ledger to the fragment's identifiers and interface
records. No claim is made for arbitrary predicates that inspect the size
or unrelated contents of the global ledger.

\subsection{A Finite-Budget Information Limit}\label{app:theory-search}

The following search problem illustrates the effect of a finite information
budget in a black-box setting consistent with no-free-lunch results
\citep{wolpert1997nfl}. Let $\mathcal X$ contain $N_{\rm pos}$ positions, where
$N_{\rm pos}\ge2$ is an integer, and let the unknown target be $z\in\mathcal X$.
The oracle returns $f_z(x)=0$ at $x=z$ and $1$ otherwise. An algorithm $A$
has no side information about $z$, its internal seed is independent of
$z$, and it returns an evaluated position $\widehat x$ after at most
$n_{\rm qry}$ distinct queries, where $n_{\rm qry}$ is an integer with $1\le n_{\rm qry}<N_{\rm pos}$.

\begin{proposition}[Finite-query coverage limit]
\label{prop:finite-query-limit}
Every possibly randomized and adaptive algorithm in this setting satisfies
\begin{equation}
    \frac{1}{N_{\rm pos}}\sum_{z\in\mathcal X}
       \mathbb P_A(\widehat x=z\mid f_z)\le\frac{n_{\rm qry}}{N_{\rm pos}}.
    \label{eq:query-limit}
\end{equation}
Thus some target has success probability at most $n_{\rm qry}/N_{\rm pos}$.
\end{proposition}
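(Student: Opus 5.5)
The approach is to reduce to deterministic algorithms and then count. Represent the randomized algorithm $A$ by an internal seed $\xi$, independent of $z$ and drawn from a fixed law. For each fixed seed value, $A$ becomes a deterministic adaptive procedure. Averaging over seeds preserves the bound, because the left side of \cref{eq:query-limit} is linear in the conditional success probability. It therefore suffices to show that every deterministic adaptive algorithm $A_\xi$ succeeds on at most $n_{\rm qry}$ of the $N_{\rm pos}$ targets.

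For a deterministic algorithm, the key observation is that every oracle $f_z$ returns the same answer, $1$, to any query $x\ne z$. Consider the reference run $\rho_\xi$ obtained by answering every query with $1$, which is the run against an all-ones oracle. This run queries some set $Q_\xi$ of at most $n_{\rm qry}$ distinct positions and returns an evaluated position $\widehat x_\xi\in Q_\xi$.

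Now fix a target $z\notin Q_\xi$. By induction on the query index, the actual run against $f_z$ coincides with $\rho_\xi$: each query lies in $Q_\xi$, differs from $z$, and so receives answer $1$, and adaptivity only uses past answers. The run against $f_z$ therefore outputs $\widehat x_\xi\in Q_\xi$, which differs from $z$, and fails. Success is thus possible only for $z\in Q_\xi$. This gives $\sum_z\mathbb I[A_\xi\text{ succeeds on }f_z]\le|Q_\xi|\le n_{\rm qry}$.

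Taking expectation over $\xi$ and dividing by $N_{\rm pos}$ yields \cref{eq:query-limit}. The final claim follows because an average of at most $n_{\rm qry}/N_{\rm pos}$ forces some $z$ to attain at most that value.

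The only delicate step is the coupling or induction showing that the run against $f_z$ for $z\notin Q_\xi$ is identical to the reference run. This must include the termination rule and the output choice, which must be measurable functions of the transcript and $\xi$ alone. The hypothesis of no side information about $z$, together with the independence of $\xi$ from $z$, is exactly what licenses this. The requirement that $\widehat x$ be an evaluated position is what places the output in $Q_\xi$.
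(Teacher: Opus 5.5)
Your proof is correct and follows the paper's own argument essentially step for step. Fix the seed, follow the all-ones branch, observe that any target outside its at most $n_{\rm qry}$ queried positions yields the identical transcript and hence a wrong evaluated output, then average over the independent seed and the uniform target (your explicit induction on the query index and your remark on the termination and output rule spell out what the paper's one-line coupling leaves implicit).
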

\begin{proof}
Fix the internal seed and follow the branch on which every response is
$1$. It queries at most $n_{\rm qry}$ positions. If $z$ lies outside this set, the
actual transcript follows exactly that branch, so the returned evaluated
position cannot be the target. Hence at most $n_{\rm qry}$ targets permit success
for this seed. Averaging over a uniform target and over the independent
seed proves the inequality. At least one target must then have a success probability no greater than
the bound on the mean.
\end{proof}

If $n_{\rm qry}/N_{\rm pos}<1-\delta$, reliable coverage of every target is impossible.
A host and workers sharing $n_{\rm qry}$ total queries induce another adaptive
querying algorithm and obey the same bound because parallel queries can be
serialized without adding observations. If an unevaluated final guess is
allowed, the all-failure branch adds at most one successful target, giving
$\min\{1,(n_{\rm qry}+1)/N_{\rm pos}\}$ instead. The result does not exclude advantages from
structured priors or exhaustive search with enough queries.

\subsection{From Local Experiments to Workflow Reliability}
\label{app:theory-risk}

Fix the actual law $\mathbb P=\mathbb P_{t,S_H,B}$ and a valid planning
pair $(\zeta,\pi)$. For a host-visible prefix value $\varrho$, let
\[
 \mathsf H_{j,\varrho}
    =F_{j-1}\cap\{R_{j-1}=\varrho,Z=\zeta,\Pi=\pi\},
 \qquad
 \nu_{j,\varrho}(c_{\rm cfg})=\mathbb P(\Xi_j=c_{\rm cfg}\mid \mathsf H_{j,\varrho}),
\]
whenever $\mathbb P(\mathsf H_{j,\varrho})>0$. The configuration $\Xi_j$ contains
the operation's entry state and ledger as defined by the protocol.
Write $A_j(c_{\rm cfg})$ for evaluation of $A_j$ on those components.

\begin{lemma}[Local-to-workflow consistency]\label{lem:local-to-workflow}
Suppose provider $\kappa_j$ realizes
$(A_j,D_j;b_{\lambda_j},\epsilon_j)$ over $\mathfrak L_j$.
For each positive-probability $\mathsf H_{j,\varrho}$, assume
$\nu_{j,\varrho}\in\mathfrak L_j$ and
\begin{equation}
 \mathbb P(O_j=o\mid\Xi_j=c_{\rm cfg},\mathsf H_{j,\varrho})
       =\mathsf K_{\kappa_j}(o\mid c_{\rm cfg})
 \label{eq:kernel-consistency}
\end{equation}
for all $c_{\rm cfg}$ of positive conditional mass and all outcomes $o$.
Then
\begin{equation}
 \mathbb P(\mathsf{Ok}_j^c\mid R_{j-1}=\varrho,F_{j-1},Z=\zeta,\Pi=\pi)
       \le\epsilon_j,
 \label{eq:history-uniform-error}
\end{equation}
and \cref{eq:conditional-local-error} follows.
\end{lemma}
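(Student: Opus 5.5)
The argument is a direct averaging computation: condition on a positive-probability history cell $\mathsf H_{j,\varrho}$, decompose over the countable configuration space, and match the resulting sum with the local experiment in \cref{eq:local-experiment}.

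Since $\mathsf{Ok}_j$ is defined as $\{\operatorname{ok}_{D_j,b_{\lambda_j}}(\Xi_j,O_j)=1\}$, the first step is the law of total probability over configurations $c_{\rm cfg}$ with $\nu_{j,\varrho}(c_{\rm cfg})>0$ and over outcomes $o$:
\[
\mathbb P(\mathsf{Ok}_j\mid\mathsf H_{j,\varrho})
=\sum_{c_{\rm cfg},o}\nu_{j,\varrho}(c_{\rm cfg})\,
 \mathbb P(O_j=o\mid\Xi_j=c_{\rm cfg},\mathsf H_{j,\varrho})\,
 \operatorname{ok}_{D_j,b_{\lambda_j}}(c_{\rm cfg},o).
\]
All spaces are countable and the indicator is nonnegative, so the sums may be rearranged freely.

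The second step substitutes the kernel-consistency hypothesis \cref{eq:kernel-consistency}. This turns the right-hand side into exactly $\Pr_{\nu_{j,\varrho},\kappa_j}(\operatorname{ok}_{D_j,b_{\lambda_j}})$. Configurations of zero conditional mass contribute nothing on either side, so the fact that \cref{eq:kernel-consistency} is assumed only on positive-mass configurations causes no problem.

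The third step applies \cref{def:contract-realization}. Because $\nu_{j,\varrho}\in\mathfrak L_j$, the infimum bound gives $\mathbb P(\mathsf{Ok}_j\mid\mathsf H_{j,\varrho})\ge1-\epsilon_j$. Taking complements yields \cref{eq:history-uniform-error}.

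Two bookkeeping points need checking. Membership $\nu_{j,\varrho}\in\mathfrak L_j$ requires support on $A_j$; that is part of the hypothesis, or it follows from \cref{lem:composition-soundness} on $F_{j-1}$ for a valid pair. Also, the outcome recorded by the protocol (state, ledger, acceptance, cost, or $\bot$) must be the same object that $\operatorname{ok}$ evaluates, and this holds by the definitions of $\Xi_j$ and $O_j$.

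Finally, \cref{eq:conditional-local-error} follows by averaging \cref{eq:history-uniform-error} over $\varrho$. The event $F_{j-1}\cap\{Z=\zeta,\Pi=\pi\}$ is the countable disjoint union of the cells $\mathsf H_{j,\varrho}$ over values of $R_{j-1}$. Hence
\[
\mathbb P(\mathsf{Ok}_j^c\mid F_{j-1},Z=\zeta,\Pi=\pi)=\sum_\varrho w_\varrho\,\mathbb P(\mathsf{Ok}_j^c\mid\mathsf H_{j,\varrho}),
\]
where the weights $w_\varrho=\mathbb P(R_{j-1}=\varrho\mid F_{j-1},Z=\zeta,\Pi=\pi)$ sum to one. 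Zero-probability cells are dropped. A convex combination of terms each at most $\epsilon_j$ is at most $\epsilon_j$.

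The only delicate point is measure-theoretic bookkeeping: conditioning only on positive-probability events, and ensuring the decomposition over $\varrho$ covers $F_{j-1}\cap\{Z=\zeta,\Pi=\pi\}$. This includes traces where $R_{j-1}$ takes padded values, which form their own cells. Countability makes both routine, and no independence assumption is needed anywhere.
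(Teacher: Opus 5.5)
Your proposal is correct and follows essentially the same route as the paper's proof. You expand $\mathbb P(\mathsf{Ok}_j\mid\mathsf H_{j,\varrho})$ over configurations and outcomes, substitute \cref{eq:kernel-consistency} to recover $\Pr_{\nu_{j,\varrho},\kappa_j}(\operatorname{ok}_{D_j,b_{\lambda_j}})$, invoke contract realization through $\nu_{j,\varrho}\in\mathfrak L_j$, and then average over $\varrho$ with the weights $\mathbb P(R_{j-1}=\varrho\mid F_{j-1},Z=\zeta,\Pi=\pi)$; your remark that entry in $A_j$ comes from \cref{lem:composition-soundness} on the successful prefix is also the paper's first step.
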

\begin{proof}
The successful-prefix part of \cref{lem:composition-soundness} gives
$A_j$ at entry. Using the actual definition of $\mathsf{Ok}_j$ and
\cref{eq:kernel-consistency},
\begin{align*}
 \mathbb P(\mathsf{Ok}_j\mid \mathsf H_{j,\varrho})
   &=\sum_{c_{\rm cfg},o}\nu_{j,\varrho}(c_{\rm cfg})\mathsf K_{\kappa_j}(o\mid c_{\rm cfg})
                 \operatorname{ok}_{D_j,b_{\lambda_j}}(c_{\rm cfg},o)\\
   &=\Pr_{\nu_{j,\varrho},\kappa_j}
                 (\operatorname{ok}_{D_j,b_{\lambda_j}})
     \ge1-\epsilon_j .
\end{align*}
The final inequality is contract realization for the specified entry law.
Let $\mathsf H_j=F_{j-1}\cap\{Z=\zeta,\Pi=\pi\}$ and
$w_j(\varrho)=\mathbb P(R_{j-1}=\varrho\mid \mathsf H_j)$.
For $\mathbb P(\mathsf H_j)>0$,
\begin{equation}
 \mathbb P(\mathsf{Ok}_j^c\mid \mathsf H_j)
   =\sum_{\varrho:w_j(\varrho)>0}
        w_j(\varrho)\mathbb P(\mathsf{Ok}_j^c\mid \mathsf H_{j,\varrho})
   \le\epsilon_j .
 \label{eq:history-averaging}
\end{equation}
This is the required successful-prefix bound.
\end{proof}

Kernel consistency links the local experiment to the actual invocation.
Configurations can retain full relevant history or latent state. If
planning or earlier outcomes change the conditional law of an omitted
variable, a kernel calibrated only on visible inputs may not satisfy
\cref{eq:kernel-consistency}. The configuration must then include the missing information, or the
outcome model must be refined. These analytical representations leave
the worker's permitted observations unchanged.

Conditioning on prior correctness is necessary for the stated local bound.
For example, let $U$ be
a fair bit, $R_1$ constant, $\mathsf{Ok}_1=\{U=0\}$, and $\mathsf{Ok}_2=\{U=1\}$.
Then $\mathbb P(\mathsf{Ok}_2^c\mid R_1)=1/2$ but
$\mathbb P(\mathsf{Ok}_2^c\mid R_1,F_1)=1$, while joint success is zero.
Thus the conditional bound cannot generally omit $F_1$, and the joint
success probability cannot generally be obtained by multiplying the two
marginal success probabilities. It does not refute Boole's bound
using actual marginal failure rates. Those rates also yield a union bound,
but isolated-provider error rates are not automatically the actual
marginals or successful-prefix rates of a selected workflow.

Beyond the additive bound, the same successful-prefix bounds imply
\begin{equation}
 \mathbb P(F_{K_{\rm op}}\mid Z=\zeta,\Pi=\pi)
   \ge\prod_{j=1}^{K_{\rm op}}(1-\epsilon_j)
   \ge\left[1-\sum_{j=1}^{K_{\rm op}}\epsilon_j\right]_+.
 \label{eq:conditional-product}
\end{equation}
If all factors are positive, induction ensures positive-probability
prefixes and permits multiplication of their conditional bounds.
If a factor is zero the product lower bound is trivial.
The final inequality follows by expanding two factors,
$(1-a)(1-b)\ge1-a-b$ for $a,b\in[0,1]$, and induction.
This argument does not require independence.

For plans with at most $n_0$ nodes and $r_0$ handoffs, suppose node and
handoff cost allocations are at most $b^{\max}_V$ and $b^{\max}_E$ and their
conditional errors are at most $\epsilon^{\max}_V$ and $\epsilon^{\max}_E$,
respectively. Here $n_0,r_0$ are nonnegative integers,
$b^{\max}_V,b^{\max}_E\ge0$, and $\epsilon^{\max}_V,\epsilon^{\max}_E\in[0,1]$.
A sufficient design-time resource envelope is
\begin{equation}
 b_H+n_0b^{\max}_V+r_0b^{\max}_E\le B.
 \label{eq:family-cost-envelope}
\end{equation}
If valid planning has probability at least $1-\eta_\star$, with $\eta_\star\in[0,1]$,
\cref{thm:composition-reliability,prop:host-reliability} give
\begin{equation}
 p_{S_H}(t;B)\ge(1-\eta_\star)[1-n_0\epsilon^{\max}_V-r_0\epsilon^{\max}_E]_+.
 \label{eq:family-size-bound}
\end{equation}
The envelope gives a sufficient design-time condition for resource
feasibility. A large error sum weakens the lower bound without establishing
a corresponding decrease in actual performance.

These local reliability bounds connect to executor selection through the
eligible sets, as formalized next.

\begin{proposition}[Eligible assignments]\label{prop:eligible-assignment}
Fix a decomposition with nonempty eligible sets $\mathcal R_v$, and
choose $\sigma(v)\in\mathcal R_v$ for every node. Suppose that this assignment
and its instance bindings give a compatible, feasible plan, that the transfers
realize their contracts, and that the actual contexts satisfy
\cref{eq:kernel-consistency} and entry-law membership for all operations.
At every valid planning pair the plan then satisfies
\cref{eq:conditional-local-error}, hence the bound in
\cref{thm:composition-reliability}. If
$\bigcap_v\mathcal R_v=\varnothing$, no constant assignment realizes all
the node contracts under those same allocations and law families.
\end{proposition}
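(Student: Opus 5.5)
The proof has two parts, matching the two sentences of \cref{prop:eligible-assignment}.

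\textbf{Reliability part.} Fix a valid planning pair $(\zeta,\pi)$ whose plan uses the chosen assignment $\sigma(v)\in\mathcal R_v$. For each operation index $j$, we identify the provider and check the hypotheses of \cref{lem:local-to-workflow}.
\begin{itemize}
\item If $\lambda_j=v$ is a node, the provider is $\kappa_{v,\sigma(v)}$. By the definition of $\mathcal R_v$ in \cref{eq:eligible-realizers}, it realizes $(A_v,D_v;b_v,\epsilon_v)$ over $\mathfrak L_v$.
\item If $\lambda_j=e$ is a handoff, the transfer-realization premise supplies the same property for $(A_e,D_e;b_e,\epsilon_e)$ over $\mathfrak L_e$.
\end{itemize}
The remaining hypotheses are assumed directly: kernel consistency \cref{eq:kernel-consistency} and entry-law membership $\nu_{j,\varrho}\in\mathfrak L_j$ for every positive-probability prefix $\mathsf H_{j,\varrho}$. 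One point needs care. The local family $\mathfrak L_v$ is defined on the common configuration space $\mathcal Z_v$, and the actual configuration $\Xi_j$ must be encoded in that space, including its instance bindings. I would check that this encoding coincides with the one used for the actual invocation of $\kappa_{v,\sigma(v)}$.

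With these hypotheses in place, \cref{lem:local-to-workflow} gives \cref{eq:history-uniform-error}. Averaging over $\varrho$, as in \cref{eq:history-averaging}, then gives \cref{eq:conditional-local-error} for every $j$. Since the pair is valid, \cref{thm:composition-reliability} applies and yields $q_t(\zeta,\pi)\ge[1-\epsilon(\zeta,\pi)]_+$. Compatibility and feasibility are already part of validity, so no further deterministic argument is needed beyond \cref{lem:composition-soundness}, which the theorem invokes internally.

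\textbf{Non-constancy part.} Argue by contradiction. Suppose $\sigma(v)=a$ for every $v$ and that each invocation $\kappa_{v,a}$ realizes $(A_v,D_v;b_v,\epsilon_v)$ over $\mathfrak L_v$. By \cref{eq:eligible-realizers}, this means $a\in\mathcal R_v$ for all $v$, so $a\in\bigcap_v\mathcal R_v$. This contradicts $\bigcap_v\mathcal R_v=\varnothing$. The phrase ``under those same allocations and law families'' matters here: eligibility is defined relative to the fixed $(b_v,\epsilon_v,\mathfrak L_v)$, so the contradiction holds only for the decomposition as fixed. I would state explicitly that reallocating budgets or error bounds falls outside the claim.

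\textbf{Main obstacle.} The hardest step is not the probabilistic bound but the bookkeeping that connects the decomposition-level eligibility to the actual invocation. Eligibility $\kappa_{v,i}\models_{\mathfrak L_v}$ is stated in a local experiment, whereas \cref{lem:local-to-workflow} needs the kernel of the actual provider at operation $j$ to be $\mathsf K_{\kappa_{v,\sigma(v)}}$ on $\Xi_j$. I would handle this by noting two things. First, the realization predicate depends only on the kernel, the law family, and $(A_v,D_v,b_v,\epsilon_v)$. Second, the proposition's consistency assumption is exactly what identifies the actual conditional outcome law with that kernel. Both observations should be stated explicitly rather than proved.
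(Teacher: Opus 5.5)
Your proposal is correct and follows the paper's proof: membership in $\mathcal R_v$ gives node realization, this together with transfer realization and the assumed context conditions feeds \cref{lem:local-to-workflow}, which yields \cref{eq:conditional-local-error} and hence \cref{thm:composition-reliability}; and a constant realizing assignment would lie in every $\mathcal R_v$, contradicting the empty intersection. The separate averaging over $\varrho$ you describe is redundant, since \cref{lem:local-to-workflow} already performs it in \cref{eq:history-averaging}, and the configuration-encoding point you plan to check is already covered by the assumed entry-law membership.
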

\begin{proof}
Membership in $\mathcal R_v$ gives each selected node's realization.
Together with transfer realization and the context conditions,
\cref{lem:local-to-workflow} gives every successful-prefix error bound.
The reliability theorem applies. A constant realizing assignment would
select an executor belonging to every $\mathcal R_v$, contradicting the
empty intersection.
\end{proof}

Actual invocation laws can depend on the whole assignment. Therefore,
nonempty eligible sets alone do not certify their context-membership
premise. The proposition concerns this decomposition and does not exclude
other ways an individual agent might solve the complete task.

\subsection{A Typed Task Grammar and Its Compilation}
\label{app:theory-family}

To establish compatible-plan existence constructively, we define a restricted
task grammar. Its primitive specifications, input
distributions, and admissible composition rules are fixed before observing
the host's success.

\paragraph{Primitive Specifications and Syntax.}
A primitive $p\in\mathcal P$ has input/output types
$(\mathcal X_p,\mathcal Y_p)$, precondition $P_p(x,s)$, postcondition
$Q_p(x,s,y,s')$, an assigned executor in $\mathcal A$, and a declared state
footprint. Its relational specification is the set of tuples $(x,s,y,s')$
with $P_p(x,s)=1$ and $Q_p(x,s,y,s')=1$.
A postcondition specifies the required result. A separate realization bound
quantifies the executor's ability to achieve it. Expressions
have the syntax
\begin{equation}
    \psi ::= p
        \ \mid\ \operatorname{Seq}_{\phi}(\psi_1,\psi_2)
        \ \mid\ \operatorname{Par}_{p_{\rm fork},p_{\rm join}}(\psi_1,\psi_2).
    \label{eq:task-grammar}
\end{equation}
The primitives $p_{\rm fork},p_{\rm join}\in\mathcal P$ are explicit fork and join operations.
A type judgment $\psi:\mathcal X\rightsquigarrow\mathcal Y$ means that the expression
accepts an input of type $\mathcal X$ and returns an output of type
$\mathcal Y$, with precondition $P_\psi$ and postcondition $Q_\psi$.
Write $\mathcal X_\psi$ and $\mathcal Y_\psi$ for these declared input
and output types. Admitted expressions form the least class
containing the primitives and closed under the two guarded rules below.

\paragraph{Sequential Rule.}
For $\psi_i:\mathcal X_i\rightsquigarrow\mathcal Y_i$, let
$\phi:\mathcal Y_1\to\mathcal X_2$ be a typed projection or serialization
from the specified wiring library.
The sequential expression has type
$\mathcal X_1\rightsquigarrow\mathcal Y_2$ and is admitted only if
\begin{equation}
    P_{\psi_1}(x,s)\land Q_{\psi_1}(x,s,y_1,s_1)
       \Longrightarrow P_{\psi_2}(\phi(y_1),s_1)
    \label{eq:sequential-side-condition}
\end{equation}
for all well-typed values. Define $P_\psi=P_{\psi_1}$ and
\begin{equation}
 \begin{split}
    Q_\psi(x,s,y,s')\ \Longleftrightarrow\
       \exists y_1,s_1:\;&Q_{\psi_1}(x,s,y_1,s_1)\\
          &{}\land Q_{\psi_2}(\phi(y_1),s_1,y,s').
 \end{split}
 \label{eq:sequential-semantics}
\end{equation}
Thus the output relation composes the two specifications through the
declared transfer. The connecting handoff contract requires exactly $\phi(y_1)$ and
preserves the mutable state. Its cost and error are explicit. Any content transformation outside the
wiring library, including deterministic computation, must be represented
as another primitive instead of being included implicitly in $\phi$.

\paragraph{Parallel Rule and State Separation.}
For $\psi_i:\mathcal X_i\rightsquigarrow\mathcal Y_i$, require
$p_{\rm fork}:\mathcal X\rightsquigarrow(\mathcal X_1\times\mathcal X_2)$ and
$p_{\rm join}:(\mathcal Y_1\times\mathcal Y_2)\rightsquigarrow\mathcal Y$.
The constructor is admitted only if its branch implementations and contracts
have a common state decomposition
\[
    \mathcal S=\mathcal S_{\rm sh}\times\mathcal S_1
                \times\mathcal S_2\times\mathcal S_{\rm rest}.
\]
The shared component is read-only during the branches. Subject to its
executor's permitted observations, a branch's state accesses are confined
to the shared and its own components. It may change only its own component.
For $k\in\{1,2\}$, there must exist local predicates
$\widetilde P_k,\widetilde Q_k$ such that
\begin{align}
 P_{\psi_k}(x,s)&=\widetilde P_k(x,s^{[\rm sh]},s^{[k]}),
    \label{eq:branch-local-pre}\\
 Q_{\psi_k}(x,s,y,s')&\ \Longleftrightarrow\
   \widetilde Q_k(x,s^{[\rm sh]},s^{[k]},y,(s')^{[k]})\notag\\
 &\qquad{}\land (s')^{[\rm sh]}=s^{[\rm sh]}
   \land (s')^{[\neg k]}=s^{[\neg k]}.
    \label{eq:branch-frame}
\end{align}
Here $s^{[\neg k]}$ collects the other mutable components. These footprint
conditions also constrain the implementations. Under these conditions, each branch preserves the other's precondition in
the serial composition proof. Nested
partitions refine the parent branch's mutable component and may inherit only
its read-only shared components. They cannot expose a sibling's mutable
state. The join runs after both branches. Concurrent use additionally
requires the trace-preservation premise in \cref{app:theory-protocol}.

The fork must establish both branch preconditions:
\begin{equation}
 \begin{split}
    P_{p_{\rm fork}}(x,s)\land Q_{p_{\rm fork}}(x,s,(x_1,x_2),s_{\rm fork})
       \Longrightarrow
       P_{\psi_1}(x_1,s_{\rm fork})\land P_{\psi_2}(x_2,s_{\rm fork}).
 \end{split}
 \label{eq:parallel-fork-condition}
\end{equation}
Use branch order $\psi_1$ then $\psi_2$ for the logical serialization. The join
condition, universally quantified over the fork and branch results, is
\begin{equation}
 \begin{split}
   &P_{p_{\rm fork}}(x,s)\land Q_{p_{\rm fork}}(x,s,(x_1,x_2),s_{\rm fork})\\
   &\quad{}\land Q_{\psi_1}(x_1,s_{\rm fork},y_1,s_1)
              \land Q_{\psi_2}(x_2,s_1,y_2,s_2)\\
   &\hspace{30mm}\Longrightarrow
            P_{p_{\rm join}}((y_1,y_2),s_2).
 \end{split}
 \label{eq:parallel-join-condition}
\end{equation}
This joint condition requires compatibility between the branch outputs. Set $P_\psi=P_{p_{\rm fork}}$ and define
\begin{equation}
 \begin{split}
 Q_\psi(x,s,y,s')\ \Longleftrightarrow\
  \exists x_1,x_2,y_1,y_2,s_{\rm fork},s_1,s_2:\;&
       Q_{p_{\rm fork}}(x,s,(x_1,x_2),s_{\rm fork})\\
   &{}\land Q_{\psi_1}(x_1,s_{\rm fork},y_1,s_1)\\
   &{}\land Q_{\psi_2}(x_2,s_1,y_2,s_2)\\
   &{}\land Q_{p_{\rm join}}((y_1,y_2),s_2,y,s').
 \end{split}
 \label{eq:parallel-semantics}
\end{equation}
This relation specifies the combined task before any success measurement.
These conditions admit the serial fork--join constructor. Shared mutable
interactions require additional compatibility rules, and parallel
execution retains the separate trace-preservation obligation.

\paragraph{Task Interpretation and Root Binding.}
For the family construction, a root expression has a declared task-state
projection $\theta_\psi:\mathcal S\to\mathcal S_\psi$ through which its root
contract factors:
\[
 P_\psi(x,s)=\overline P_\psi(x,\theta_\psi(s)),\qquad
 Q_\psi(x,s,y,s')=
       \overline Q_\psi(x,\theta_\psi(s),y,\theta_\psi(s')).
\]
The projection includes every component relevant to the root specification.
Controller-private bookkeeping can be excluded only when the root contract
does not depend on it. For each admitted root, fix an input/reference law
$\mu_\psi$ on $(x_{\rm ref},u_{\rm ref})$ supported on
$\overline P_\psi(x_{\rm ref},u_{\rm ref})$.
The task $t_\psi=(r_\psi,\mu_\psi,\mathsf E_\psi,\mathsf V_\psi)$ requests relation $Q_\psi$ on the
supplied input, uses the specified primitive-tool environment
$\mathsf E_\psi$, and evaluates the delivered output $y$ and final state $s'$ recorded in $Y$
by
\[
 \mathsf V_\psi((x_{\rm ref},u_{\rm ref}),Y)
    =\overline Q_\psi(x_{\rm ref},u_{\rm ref},y,\theta_\psi(s')).
\]
Aborted and nonterminating records have value zero. The reference field of
the analytical state carries $(x_{\rm ref},u_{\rm ref})$.

Planning must preserve this reference meaning. A correctly bound entry
has $\theta_\psi(\Sigma_0)=u_{\rm ref}$, and its authorized packet's input field is
$x_{\rm ref}$. Thus planning may change private bookkeeping but may not
change the task state on which the requested relation depends.
Necessary task-state mutations belong in explicit primitive operations.

\paragraph{Compilation with Explicit Interfaces.}
Write $\operatorname{Compile}(\psi,\iota)$ for compilation with a resolver
$\iota$ that maps the authorized packet and declared interface messages to
$\mathcal X_\psi$. It returns a graph with one entry and one exit. At the top level, $\iota(d)=x_{\rm ref}$ is the input field of the authorized packet $d$. The notation
$\operatorname{Compile}(\psi)$ uses this root binding.
For a primitive, its single node has $\Phi_p=\iota$.
All embedded subgraphs receive fresh identifiers, and every identifier
reference is renamed consistently.

For $\operatorname{Seq}_\phi(\psi_1,\psi_2)$, compile $\psi_1$ with $\iota$, add a
handoff from its exit carrying $m=\phi(y_1)$, and compile $\psi_2$ with the
resolver that reads that message. The child entry consumes $m$, not the
parent's original packet input. For
$\operatorname{Par}_{p_{\rm fork},p_{\rm join}}(\psi_1,\psi_2)$, the fork node receives the input resolved by $\iota$ and
returns $(x_1,x_2)$. Two handoffs carry the respective projections, and
each child's resolver reads its corresponding message. Two exit handoffs
carry $y_1,y_2$, and the join's resolver constructs the ordered pair
$(y_1,y_2)$. The join is the new exit. Edge transformations and node
assembly are distinct charged stages, as in \cref{eq:input-assembly}.
All added transfers preserve mutable task state on a successful transition.

Fragment contracts take their input from this explicit interface.
Child invariants use the corresponding input value and entry state. They
are not reused with the parent's original input. Providers access only
declared packet fields and messages. The ledger-extension property in
\cref{app:theory-protocol} permits a fragment to execute with unrelated
parent or sibling records present. Instance bindings must separately
satisfy its state footprint because fresh names alone do not isolate memory.

The serial schedule is recursive, with a primitive call as its base case.
A sequence executes the first child, the connecting handoff, then the second
child. A parallel expression executes the fork, outgoing handoffs, first
child, second child, incoming handoffs, then the join.
Fresh subgraphs and these connections preserve acyclicity.
Only the whole plan's exit performs final delivery, while internal exits return
to their parent. Root-delivery work belongs to the exit's allocation and
provider model.

Let $\ell_{\rm root}(x,u)$ contain the correctly bound packet and the
immutable reference record, with all compiled identifiers initially fresh.
For a reference $(x,u)$ define
\[
 \mathcal I_\psi(x,u)
   =\{(s,\ell_{\rm root}(x,u)):
       \operatorname{ref}(s)=(x,u),\
       \theta_\psi(s)=u,\
       \overline P_\psi(x,u)=1\}.
\]
Allowed unrelated planning records may be added by ledger extension.
The terminal projection extracts the actual delivered result and state.
It cannot replace the original reference by a post-planning one.

\begin{proposition}[Sound compilation of the task grammar]
\label{prop:grammar-compilation}
For an admitted root expression $\psi$, its serial compilation is a finite
plan compatible with $t_\psi$ and every $\mathcal I_\psi(x,u)$.
For every compiled fragment, each nonterminal successful operation prefix
enables its next operation. Complete successful execution establishes the fragment's
declared input/output relation.
\end{proposition}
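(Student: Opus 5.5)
The plan is a structural induction on the admitted expression $\psi$ with a strengthened hypothesis. For a fragment compiled with an arbitrary resolver $\iota$, entered from any state--ledger pair whose ledger contains the fragment's interface records, whose compiled identifiers are still fresh, and which satisfies $P_\psi(\iota(\cdot),s)$, we define fragment invariants $I^\psi_0,\ldots,I^\psi_{K_\psi}$ with three properties. First, each invariant enables the next operation's entry predicate in the sense of \cref{eq:contract-enable}. Second, each is preserved by that operation's transition predicate in the sense of \cref{eq:contract-preserve}. Third, $I^\psi_{K_\psi}$ implies that the exit record's output $y$ and the current state satisfy $Q_\psi(x,s_{\rm entry},y,s)$, where $x$ is the resolved input and $s_{\rm entry}$ is the entry state. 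Each invariant carries the entry data $(x,s_{\rm entry})$ as existentially fixed ghost values read from the ledger's input records, together with the relational facts accumulated so far. Finiteness and acyclicity of the compiled graph are immediate, because each constructor adds fresh nodes and forward edges only. The schedule respects $\lambda$ because it is built recursively in the order stated in the compilation paragraph.

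\emph{Base case (primitive $p$).} The only node has $\Phi_p=\iota$. Take $I_0$ to be $P_p(\iota(d,\ldots),s)$ together with freshness, which gives $A_v$ by \cref{eq:node-entry-record}. Take $I_1$ to be the existence of $r_v$ with $\operatorname{in}(r_v)=x$ and $Q_p(x,s,\operatorname{out}(r_v),s')$, which follows directly from $D_v$ in \cref{eq:node-transition-record}.

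\emph{Sequential case.} Concatenate the invariants of $\psi_1$ and $\psi_2$, inserting one invariant for the handoff. After $\psi_1$, its terminal property together with $P_{\psi_1}$ and the side condition \cref{eq:sequential-side-condition} yields $P_{\psi_2}(\phi(y_1),s_1)$. The handoff contract delivers exactly $m=\phi(y_1)$ and leaves the state unchanged, so the child resolver reads $\phi(y_1)$. That establishes the entry invariant of $\psi_2$. The existential witnesses $(y_1,s_1)$ are then carried along, which gives \cref{eq:sequential-semantics} at the end.

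\emph{Parallel case.} The fork invariant follows as in the base case. Condition \cref{eq:parallel-fork-condition} gives both branch preconditions at $s_{\rm fork}$, and the outgoing handoffs preserve them. The key step is showing that running $\psi_1$ does not falsify $P_{\psi_2}(x_2,\cdot)$. By \cref{eq:branch-local-pre}, $P_{\psi_2}$ depends only on the shared component and on component $2$, and by \cref{eq:branch-frame} $\psi_1$ changes neither. The same argument, applied to the invariants of $\psi_1$, must be checked against $\psi_2$'s frame, so that $\psi_1$'s accumulated facts survive. After both branches, \cref{eq:parallel-join-condition} gives $P_{p_{\rm join}}((y_1,y_2),s_2)$ once the incoming handoffs have carried $y_1$ and $y_2$. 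The join transition then closes \cref{eq:parallel-semantics}.

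Throughout the induction, fragments run with unrelated parent and sibling records present. The ledger-extension property in \cref{app:theory-protocol} lifts each child's entry and transition predicates and its invariants to the larger ledger, provided we restrict those invariants to the child's own identifiers and interface records. Every transition also preserves $\operatorname{ref}$, by the requirement stated after \cref{eq:node-transition-record}.

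At the root, $I_0$ holds on $\mathcal I_\psi(x,u)$. The reasons are that $\iota(d)=x_{\rm ref}$, that $\theta_\psi(s)=u$ with $\overline P_\psi(x,u)=1$, and that all compiled identifiers are fresh. The terminal property gives $\overline Q_\psi(x,u,y,\theta_\psi(s'))$. Because $\operatorname{ref}$ is preserved and $\operatorname{rec}_\pi$ extracts the actual $y$ and $s'$, this equals $\mathsf V_\psi(\operatorname{ref}(s'),\operatorname{rec}_\pi)=1$, which is \cref{eq:contract-terminal}. The enabling claim for prefixes is then \cref{lem:composition-soundness} applied to the compatible plan.

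I expect the main obstacle to be the frame argument in the parallel case. Preservation must be stated for the invariants, not only for the preconditions, and nested partitions must refine only the parent branch's component. I would handle this by strengthening the induction hypothesis so that every fragment invariant factors through the fragment's footprint, in the shared and own components, and is stable under any change outside it.
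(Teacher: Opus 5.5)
Your proposal is correct. Its case analysis matches the paper's:
\begin{itemize}
\item the side condition \eqref{eq:sequential-side-condition} rebinds the second child's entry;
\item the fork condition, together with the frame of $Q_{\psi_1}$, carries $P_{\psi_2}(x_2,\cdot)$ across the first branch;
\item the join condition closes \cref{eq:parallel-semantics};
\item ledger extension justifies running each child beside unrelated records;
\item root binding ($\theta_\psi(s_0)=u$) turns the root relation into $\mathsf V_\psi=1$.
\end{itemize}

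\textbf{How the routes differ.} The difference is in how you obtain the invariants. You build explicit fragment invariants compositionally, with ghost entry data, and then recover prefix enabling from \cref{lem:composition-soundness}. The paper first proves prefix enabling and the input/output relation by induction. It then takes $I_k$ to be reachability from the declared initial set through the first $k$ transitions satisfying $D_j$. With that choice, \cref{eq:contract-preserve} holds by definition and \cref{eq:contract-enable} is just the prefix assertions restated. Your route gives concrete, fragment-local invariants at the cost of ghost and freshness bookkeeping. Note that only the input can be read from the ledger; entry and intermediate states are not recorded there, so they must be existentially quantified. The paper's route avoids designing invariants at all.

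\textbf{Your anticipated obstacle is not one.} The serial schedule finishes $\psi_1$ before $\psi_2$ starts. So the frame is needed only to transport $P_{\psi_2}(x_2,\cdot)$ from $s_{\rm fork}$ to $s_1$. The facts about $\psi_1$ should be kept as relations among existentially quantified past states $(s_{\rm fork},s_1)$, exactly as \cref{eq:parallel-semantics} is written. Stated that way, $\psi_2$'s transitions cannot falsify them, so the proposed strengthening is unnecessary.

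In a current-state form the strengthening would actually fail. By \cref{eq:branch-frame}, $Q_{\psi_1}(x_1,s_{\rm fork},y_1,s)$ forces $s$ to agree with $s_{\rm fork}$ on component $2$, and $\psi_2$ is allowed to change that component.

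Finally, the restriction that nested partitions refine only the parent branch's component is an admission hypothesis of the grammar. You do not need to prove it.
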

\begin{proof}
Induct on the syntax tree, retaining the fragment input, its entry state,
and its interface binding. The induction assertion includes internal
enabling conditions, postcondition correctness, and preservation under
unrelated ledger extension.

For the base case, a primitive's resolver supplies its declared input. Its precondition
enables the node and its transition establishes the primitive
postcondition. Freshness and the record definitions give ledger extension.

For a sequence, apply the first child's induction assertion. Every
internal successful prefix enables its next step. On completion it gives
$Q_{\psi_1}(x,s,y_1,s_1)$. The transfer is enabled by its source record,
and its success preserves $s_1$ and delivers $\phi(y_1)$.
Equation~\eqref{eq:sequential-side-condition} enables the second child's
explicitly rebound entry. Apply its full induction assertion, including
all its internal prefixes. The actual intermediate $(y_1,s_1)$ witnesses
\cref{eq:sequential-semantics}.

For a fork--join expression, the fork establishes both branch
preconditions by \cref{eq:parallel-fork-condition}. The outgoing
transfers deliver the respective inputs without changing the task state.
Apply the first child's assertion. Its frame preserves the second
precondition, so the second child can be invoked with its own bound input.
Its assertion gives all internal enabling steps and $Q_{\psi_2}$.
The exit records enable the two incoming transfers, and their ordered
messages satisfy the join precondition by
\cref{eq:parallel-join-condition}. The successful join gives $Q_{p_{\rm join}}$.
The actual intermediate states and outputs witness
\cref{eq:parallel-semantics}. Disjoint ledger identifiers and the extension
property justify each child invocation.

For each resulting graph, take $I_k$ to express reachability from its
declared initial set through the first $k$ transitions satisfying $D_j$.
The prefix assertions just proved establish \cref{eq:contract-enable}.
Appending a specified transition establishes \cref{eq:contract-preserve}.
The final relation gives
$Q_\psi(x,s_0,y,s')=\overline Q_\psi(x,u,y,\theta_\psi(s'))$ because
$\theta_\psi(s_0)=u$. This is $\mathsf V_\psi=1$, establishing
\cref{eq:contract-terminal}. Finite syntax yields a finite graph.
\end{proof}

This proposition proves semantic compatibility for serial execution.
Extension to a parallel implementation requires the trace-preservation
condition in \cref{app:theory-protocol}, beyond the graph structure and
state footprints.

\paragraph{Resource and Error Recurrences.}
To account for resource use and error in these compiled plans, let $n_{\rm node}(\psi)$
and $n_{\rm edge}(\psi)$ count expanded nodes and handoffs. Let $\operatorname{cost}(\psi)$ be
the sum of their allocated costs, excluding host planning, and let
$\operatorname{err}(\psi)$ sum their conditional error allocations. For a primitive,
\[
    n_{\rm node}(p)=1,\quad n_{\rm edge}(p)=0,\quad \operatorname{cost}(p)=b_p,\quad
    \operatorname{err}(p)=\epsilon_p.
\]
Here $p$ denotes an instantiated primitive occurrence, and $b_p$ and
$\epsilon_p$ are that call's cost and conditional-error allocations.
Repeated occurrences may carry different allocations. An occurrence at
the whole plan's exit includes final delivery in its allocation and
provider model, while an internal occurrence returns to its parent.
For a sequence with connecting handoff allocation $b_\phi^{\rm hd}$ and error
$\epsilon_\phi$,
\begin{align}
 n_{\rm node}(\psi)&=n_{\rm node}(\psi_1)+n_{\rm node}(\psi_2),&
 n_{\rm edge}(\psi)&=n_{\rm edge}(\psi_1)+n_{\rm edge}(\psi_2)+1,\label{eq:sequence-size}\\
 \operatorname{cost}(\psi)&=\operatorname{cost}(\psi_1)+\operatorname{cost}(\psi_2)+b_\phi^{\rm hd},&
 \operatorname{err}(\psi)&=\operatorname{err}(\psi_1)+\operatorname{err}(\psi_2)+\epsilon_\phi .
 \label{eq:sequence-budget}
\end{align}
For a parallel expression, denote its four connecting handoff allocations
by $(b_k^{\rm hd},\epsilon_k^{\rm hd})$, $k=1,\ldots,4$. Then
\begin{equation}
 \begin{aligned}
 n_{\rm node}(\psi)&=n_{\rm node}(\psi_1)+n_{\rm node}(\psi_2)+2,\\
 n_{\rm edge}(\psi)&=n_{\rm edge}(\psi_1)+n_{\rm edge}(\psi_2)+4.
 \end{aligned}
 \label{eq:parallel-size}
\end{equation}
\begin{align}
 \operatorname{cost}(\psi)&=\operatorname{cost}(\psi_1)+\operatorname{cost}(\psi_2)
                         +b_{p_{\rm fork}}+b_{p_{\rm join}}\notag\\
                       &\quad+\sum_{k=1}^4 b_k^{\rm hd},\label{eq:parallel-budget}\\
 \operatorname{err}(\psi)&=\operatorname{err}(\psi_1)+\operatorname{err}(\psi_2)
                         +\epsilon_{p_{\rm fork}}+\epsilon_{p_{\rm join}}\notag\\
                       &\quad+\sum_{k=1}^4\epsilon_k^{\rm hd}.
 \label{eq:parallel-error}
\end{align}
The fresh-copy construction gives these equalities by counting each fork,
join, and input/output transfer as an explicit operation.

However, these sums yield reliability bounds only if the declared contract
realizations apply to every invocation law induced by the compiled graphs,
conditional on planning and successful prefixes. Typing and structural
induction do not establish this distributional premise. For example,
a join input assembled from earlier outputs may have a different law
from its standalone evaluation inputs. The stated entry-law membership
condition must cover it.

\paragraph{An Independent Bounded Task Family.}
The primitive specifications, constructor rules, root projections, and
laws defining $t_\psi$ are fixed independently of observed agent success.
For integers $n_0\ge1$ and $r_0\ge0$, let
\begin{equation}
 \mathcal T_{n_0,r_0}
   =\{t_\psi:\psi\text{ is an admitted root},\
       n_{\rm node}(\psi)\le n_0,\ n_{\rm edge}(\psi)\le r_0\}.
 \label{eq:bounded-task-family}
\end{equation}
The family has bounded graph size but may contain infinitely many tasks.

For a task $t_\psi$, define $\mathsf B_\psi$ as the $(Z,\Pi)$-measurable event that
planning finishes within $b_H$, returns $\operatorname{Compile}(\psi)$ up to
consistent identifier renaming, and uses the declared serial protocol
(or its certified parallel implementation). Correct root binding must hold
throughout the conditional support of $(\Sigma_0,L_0)$: the packet supplies
$x_{\rm ref}$, $\theta_\psi(\Sigma_0)=u_{\rm ref}$, references and fresh identifiers
are preserved, and fragment interfaces are instantiated as specified.
These are planning-pair properties. Future success is not included in
$\mathsf B_\psi$.

\begin{corollary}[Coverage through correct compilation]
\label{cor:compiler-coverage}
For every $t_\psi\in\mathcal T_{n_0,r_0}$, suppose
$\mathbb P(\mathsf B_\psi)\ge1-\eta_\star$, where $\eta_\star\in[0,1]$.
On every positive-probability planning pair in $\mathsf B_\psi$, assume the
provider realizations and kernel-consistency hypotheses of
\cref{lem:local-to-workflow}. Let node and edge allocations be bounded
uniformly by $b^{\max}_V$ and $b^{\max}_E$ and their conditional errors by
$\epsilon^{\max}_V$ and $\epsilon^{\max}_E$.
If
\begin{equation}
 b_H+n_0b^{\max}_V+r_0b^{\max}_E\le B,\qquad
 \eta_\star+(1-\eta_\star)(n_0\epsilon^{\max}_V+r_0\epsilon^{\max}_E)\le\delta<1,
 \label{eq:grammar-coverage-conditions}
\end{equation}
then $\mathcal T_{n_0,r_0}\subseteq\mathcal C_{S_H}(B,\delta)$.
\end{corollary}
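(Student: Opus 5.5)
The plan is to reduce \cref{cor:compiler-coverage} to \cref{cor:relative-completeness} with $\mathcal T_\star=\mathcal T_{n_0,r_0}$. The work is to show, for each $t_\psi$ in the family, that the hypotheses of \cref{prop:host-reliability} hold with $\eta_H(t_\psi;B)\le\eta_\star$ and $\bar\epsilon(t_\psi;B)\le\epsilon_\star:=\min\{1,n_0\epsilon^{\max}_V+r_0\epsilon^{\max}_E\}$. Then \cref{eq:grammar-coverage-conditions} gives $\eta_\star+(1-\eta_\star)\epsilon_\star\le\delta$, and the inclusion follows from \cref{cor:relative-completeness}. Clipping $\epsilon_\star$ at one only decreases the left-hand side, so that condition is preserved.

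\textbf{Step 1: valid planning contains $\mathsf B_\psi$.} Fix a positive-probability planning pair $(\zeta,\pi)\in\mathsf B_\psi$. Up to renaming, $\pi=\operatorname{Compile}(\psi)$ with correct root binding throughout the conditional support of $(\Sigma_0,L_0)$. That support therefore lies in $\bigcup_{(x,u)}\mathcal I_\psi(x,u)$, possibly enlarged by unrelated planning records. Compatibility of $\pi$ with this support then follows from \cref{prop:grammar-compilation}, using the ledger-extension property of \cref{app:theory-protocol} to absorb the extra planning records. I need to check that compatibility with each $\mathcal I_\psi(x,u)$ gives compatibility with their union. This holds because the invariants $I_k$ are reachability predicates from the declared initial set, and conditions \cref{eq:contract-enable,eq:contract-preserve,eq:contract-terminal} are pointwise in $(s,\ell)$.

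For resource feasibility, $n_{\rm node}(\psi)\le n_0$ and $n_{\rm edge}(\psi)\le r_0$ together with the uniform allocation bounds give $b_H+\sum_v b_v+\sum_e b_e\le b_H+n_0b^{\max}_V+r_0b^{\max}_E\le B$. The planning cost is within $b_H$ by the definition of $\mathsf B_\psi$, and the protocol semantics are serial or certified parallel by the same definition. Hence $\mathsf B_\psi\subseteq\mathsf G_{t_\psi}$ as $(Z,\Pi)$-measurable events, so $\mathbb P(\mathsf G_{t_\psi})\ge1-\eta_\star$, and we may take $\eta_H=\eta_\star$.

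\textbf{Step 2: the execution envelope.} On each such pair, \cref{lem:local-to-workflow} supplies \cref{eq:conditional-local-error}. The error sum \cref{eq:plan-error-budget} is at most $n_0\epsilon^{\max}_V+r_0\epsilon^{\max}_E$, so $\min\{1,\epsilon(\zeta,\pi)\}\le\epsilon_\star$.

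The main obstacle is the mismatch of events. \cref{prop:host-reliability} asks for the local premises on \emph{every} positive-probability valid pair, while the corollary assumes them only on pairs in $\mathsf B_\psi$. My first attempt is to rerun the proof of \cref{prop:host-reliability} with $\mathsf G_t$ replaced by $\mathsf B_\psi$:
\[
p_{S_H}(t_\psi;B)\ge\mathbb E\bigl[\mathbb I[\mathsf B_\psi]\,q_{t_\psi}(Z,\Pi)\bigr]\ge(1-\epsilon_\star)(1-\eta_\star).
\]
This is valid because $\mathsf B_\psi$ is $(Z,\Pi)$-measurable and, by Step 1, consists of valid pairs where \cref{thm:composition-reliability} applies. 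The final arithmetic from \cref{cor:relative-completeness} then yields $p_{S_H}(t_\psi;B)\ge1-\delta$ for every task in the family, which gives the stated inclusion.
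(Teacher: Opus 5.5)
Your proof is correct and matches the paper's. The paper likewise establishes $\mathsf B_\psi\subseteq\mathsf G_{t_\psi}$ from \cref{prop:grammar-compilation} and the resource envelope, and then averages $q_{t_\psi}$ directly over $\mathsf B_\psi$ instead of invoking \cref{prop:host-reliability} on all valid pairs, exactly as in your final display. The one cosmetic difference is that the paper skips clipping: it notes that the risk inequality with $\delta<1$ already forces $\eta_\star<1$ and $n_0\epsilon^{\max}_V+r_0\epsilon^{\max}_E<1$. Your check that compatibility passes to the union of the sets $\mathcal I_\psi(x,u)$ (by closure under disjunction) fills in a step the paper leaves implicit.
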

\begin{proof}
Correct binding and \cref{prop:grammar-compilation} establish semantic
compatibility on $\mathsf B_\psi$. The resource envelope establishes
feasibility, so $\mathsf B_\psi\subseteq\mathsf G_{t_\psi}$.
Local realization and consistency supply the successful-prefix bounds.
Let $\epsilon_\star=n_0\epsilon^{\max}_V+r_0\epsilon^{\max}_E$. The risk inequality implies
$\eta_\star<1$ and $\epsilon_\star<1$.
Thus \cref{thm:composition-reliability} gives
$q_{t_\psi}(\zeta,\pi)\ge1-\epsilon_\star$ on $\mathsf B_\psi$. Averaging on that event,
\[
 p_{S_H}(t_\psi;B)
   \ge\mathbb E[\mathbb I[\mathsf B_\psi]q_{t_\psi}(Z,\Pi)]
   \ge(1-\eta_\star)(1-\epsilon_\star)\ge1-\delta.
\]
Apply the capability definition to each task.
\end{proof}

The construction gives one sufficient way to meet the coverage conditions.
Other planning outcomes may yield successful executions that increase the
overall success probability beyond this lower bound. The general result
\cref{cor:relative-completeness} continues to apply to any host meeting
its stated valid-plan and reliability premises.

\clearpage
\section{Method Implementation Details}\label{app:method-details}

This appendix records the interfaces, implementation defaults, and
protocol provenance for \crefrange{sec:collaboration}{sec:skillforge}.
Experimental configurations are described in \cref{app:specialist-eval}.

\subsection{Notation Across Method Modules}\label{app:method-notation}

The theory's notation is collected in \cref{tab:theory-notation}.
Complementing this notation, Tables~\ref{tab:method-runtime-notation} and
\ref{tab:method-learning-notation} define the main method-specific objects.
Ordinary loop indices and finite-set dummy variables are local to their
stated subsection. A font, superscript, or descriptive subscript that
distinguishes an object type is part of its notation.

\begin{table}[!ht]
\centering
\small
\caption{Runtime and Memory Notation.}
\label{tab:method-runtime-notation}
\begin{tabularx}{\linewidth}{@{}lX@{}}
\toprule
Symbol & Meaning \\
\midrule
\(G_{\rm rt},G_{\rm ann}\) & Submitted graph and its analytical expansion. \\
\(\mathsf{Tpl}_v,\operatorname{Dep}(v),\operatorname{Bind}_v\) & Prompt template, dependency identifiers, and slot bindings. \\
\(\operatorname{Cfg}_v,\mathrm{inst}_v\) & Invocation capability overrides and stateful-instance handle. \\
\(j_{\rm rt},z_{j_{\rm rt}}(v)\) & Scheduler-event index and node status at that event. \\
\(x_v^{\rm task},x_v^{\rm base},x_v\) & Rendered task, task plus memory paths, and final dispatched input. \\
\(\mathsf{Paths}_v,\operatorname{MemPath}(u)\) & Advertised ancestor-memory paths and a node's reserved memory-file location. \\
\(\mathsf L_a,g_v\) & Owner-partitioned library and a worker's group-memory context. \\
\(\mathsf{Verd}_{r,a}^{(p)},\mathrm{id}_H\) & Agent assessment at round/phase and its host author identifier. \\
\(\mathsf{Hist},\mathsf{seg}_i,\mathsf{ep}_i\) & Interaction stream, source segment, and episode narrative. \\
\(\mathsf{scene}_j,\boldsymbol{\mu}_j,n_j,t_{j,{\rm last}}\) & Scene, centroid, member count, and latest timestamp. \\
\(\mathsf{Cell}_i,\mathbf z_i,\mathbf z_c\) & Memory cell, episode embedding, and case embedding. \\
\(\operatorname{Rel}_i(q),\operatorname{Rel}_f(q)\) & Calibrated episode and fact relevance scores. \\
\(k_{\rm group},k_{\rm verdict}\) & Per-library group recall and recent-verdict caps. \\
\(k_{\rm mem},k_{\rm recall}\) & Backend memory-candidate and final adapter-result caps. \\
\(\mathsf{Prof}_u,\operatorname{Segments}\) & User profile and source-segment collection operator. \\
\(\Delta_{\rm time},\epsilon_{\rm num}\) & Clustering time window and numerical stabilizer. \\
\(c_{\rm RRF}^{\rm mem},\alpha_{\rm mix}\) & Memory-retrieval rank damping and fact-score mixing weight. \\
\(\operatorname{MemContext}(q)\) & Host archive and user recall rendered for query \(q\). \\
\(\mathbin{+\!\!+},\operatorname{Fence}_{B_{\rm char}}\) & Ordered concatenation and untrusted-context wrapper under a character budget. \\
\(L^{\rm hit}_\ell(q),\operatorname{RRF}_i(q),d_{\rm emb}\) & Channel hit list, episode expansion priority, and embedding dimension. \\
\bottomrule
\end{tabularx}
\end{table}

\begin{table}[!ht]
\centering
\small
\caption{Harness Evolution and Skill Notation.}
\label{tab:method-learning-notation}
\begin{tabularx}{\linewidth}{@{}lX@{}}
\toprule
Symbol & Meaning \\
\midrule
\(\mathrm{id}_h,\operatorname{Exec}(h)\) & Candidate identity and executable components. \\
\(K_{\rm att},N_{\rm keep},R_{\rm round}\) & Attempts per task, survivor cap, and round cap. \\
\(\mathsf{cell},\operatorname{desc}(h)\) & Archive cell and the candidate's assigned descriptor. \\
\(\Pi_{\rm eval},\lambda_{\rm screen}\) & Fixed evaluation protocol and paired-gain screening threshold. \\
\(\mathcal X_{\rm edit},\mathcal K_{\rm fixed}\) & Editable and protected harness locations. \\
\(\Delta^{\rm patch}\) & Executable harness patch. \\
\(\operatorname{Util},\mathsf{Log}\) & Task-averaged utility and indexed evaluation record. \\
\(\mathsf{Log}^{\rm sc}_{h'},\mathsf{Log}_h\) & Fresh screening record and cached full-training record. \\
\(\mathrm{active}_{i,k},\mathrm{stale}\) & Activation indicator and consecutive non-updating round count. \\
\(\gamma_i,\bar\gamma_r,t_{\rm pair}\) & Paired task gain, mean paired gain, and paired statistic. \\
\(\mathcal B_r,\bot\) & Gene bank at round \(r\) and the value of an empty cell. \\
\(\mathsf{s},\mathsf{h}\) & Skill record and source retrieval hit. \\
\(\mathrm{util}_{\mathsf s},\mathrm{rob}_{\mathsf s},\mathrm{safe}_{\mathsf s}\) & Normalized utility, robustness, and safety facets. \\
\(\mathcal U_{\rm tool},\mathcal A_{\rm spec}\) & Receiving agent's available tools and specialist roster. \\
\(\operatorname{qid},\operatorname{name}\) & Qualified hit identifier and display-name extraction. \\
\(\mathsf n,L_j^{\rm hit},\mathsf{Hits}_{\mathsf n}\) & Display-name value, source hit list, and hits sharing that name. \\
\(\mathsf{Pool},\mathsf{IDs},S_{\rm sel}\) & Filtered candidate list, gate-returned identifiers, and selected entries. \\
\(k_{\rm pool},k_{\rm sel},k_{\rm fb}\) & Fused skill-pool, normal selection, and fallback caps. \\
\(c_{\rm RRF}^{\rm skill},\mathcal J_{\rm src}\) & Skill rank damping and retrieval-source set. \\
\(\operatorname{RRF}(\mathsf n),\mathsf{Ops},\mathsf{op}_l\) & Fused name score, skill-update operation list, and one operation. \\
\(\operatorname{own}(a),\mathsf{Skills}_C,\mathsf{Cases}_C\) & Agent-memory partition, existing skill list, and supporting-case pool. \\
\(\mathsf{Catalog}_C,\mathsf{UsedTargets},\mathsf{Emitted}\) & Cluster skill collection, reserved update targets, and emitted records. \\
\(\mathrm{qual}_c,\mathrm{qual}_{\rm cur}(\mathsf{s}),\chi(\mathsf{s})\) & Case quality, catalog quality, and skill confidence. \\
\bottomrule
\end{tabularx}
\end{table}

\subsection{Graph and Prompt Contracts}\label{app:method-contracts}

\Cref{tab:method-contracts} specifies the fields exchanged at the graph
and model interfaces. Graph admission follows \cref{tab:collab-admission},
and skill response handling follows \cref{eq:skill-selection-decision}.
The case-to-skill operation format is defined in
\cref{sec:skillforge-evolution}.

\begin{center}
\begin{minipage}{\linewidth}
\captionsetup{hypcap=false}
\centering
\small
\captionof{table}{Graph and Model Interface Fields.}
\label{tab:method-contracts}
\begin{tabularx}{\linewidth}{@{}>{\raggedright\arraybackslash}p{0.17\linewidth}>{\raggedright\arraybackslash}X@{}}
\toprule
\textbf{Interface} & \textbf{Fields and Constraints} \\
\midrule
Graph call & Required: nonblank \texttt{task\_summary} and a \texttt{nodes} array.
Optional booleans: \texttt{background} and \texttt{confirm}. \\
Node & Required nonblank strings: \texttt{id}, \texttt{subagent},
\texttt{node\_summary}, \texttt{prompt\_template}.
Optional: \texttt{depends\_on} (identifier array), \texttt{inputs} (map),
and \texttt{instance} (nonblank string). \\
Input value & Exactly one literal string, \texttt{\{"file": "path"\}},
or \texttt{\{"node": "id"\}}. Reference objects have one key and a
nonempty value. Template references and declared input keys must agree. \\
Node verdict & Required \texttt{outcome}: \texttt{accomplished} or
\texttt{not\_accomplished}. Optional negative-verdict fields:
\texttt{category}, \texttt{what\_is\_missing}, \texttt{evidence}. \\
Verdict category & \texttt{missing\_user\_input}, \texttt{missing\_credential},
\texttt{tool\_failure}, \texttt{dependency\_output\_unusable},
\texttt{output\_limit}, or \texttt{other}. \\
Node resolution & Requires \texttt{run\_id}, \texttt{node\_id}, and
\texttt{decision}. Both \texttt{continue} and \texttt{replan} require a
nonblank \texttt{message}. Replanning also requires a nonempty successor
\texttt{nodes} array. \texttt{abandon} requires neither payload. \\
Query rewriting & JSON fields: \texttt{need\_retrieval} (boolean) and
\texttt{rewritten\_query} (string or null). \\
Skill selection & JSON fields: \texttt{plan} (rationale string) and
\texttt{skills} (array of source-qualified identifiers, possibly empty).
The parser requires the array and permits an omitted rationale. \\
\bottomrule
\end{tabularx}
\end{minipage}
\end{center}

\subsection{Implementation Defaults}\label{app:method-defaults}

\Cref{tab:method-defaults} collects numerical settings used by the
method implementations. EverOS defines the memory and case-to-skill defaults. Raven defines the
defaults for graph execution, recall adaptation, and skill routing.
Experiment-specific settings are described in \cref{app:specialist-eval}.

\begin{center}
\begin{minipage}{\linewidth}
\captionsetup{hypcap=false}
\centering
\small
\captionof{table}{Implementation Defaults. Harness-evolution run settings are
separated from these defaults in \cref{app:harness-screening-policy}.}
\label{tab:method-defaults}
\begin{tabularx}{\linewidth}{@{}l>{\raggedright\arraybackslash}Xl@{}}
\toprule
\textbf{Mechanism} & \textbf{Setting} & \textbf{Value} \\
\midrule
DAG execution & Shared concurrency for graph nodes and spawns & 8 \\
DAG execution & Private fallback concurrency without a shared gate & 5 \\
DAG admission & Dispatch budget for graph runs and spawns & 30 per hour \\
DAG verdict & Judge timeout / transcript-tail budget & 180 s / 8,000 characters \\
DAG recovery & Additional continuations per node & 2 \\
DAG recovery & Decision deadline for a suspended node & 600 s \\
DAG output & Terminal-output cap in the run report & 128,000 characters \\
Clarification & Host autofill timeout & 20 s \\
EverOS clustering & Cosine threshold / time window & 0.65 / 7 days \\
EverOS user retrieval & RRF damping / facts per expansion / unchanged limit & 60 / 3 / 10 \\
Raven user recall & Result clamp / profile-text cap & 1--100 / 1,200 characters \\
Raven user recall & Timeout / default result count & 5 s / 5 \\
Raven store queue & Session / global / in flight & 64 / 256 / 4 \\
Raven store retry & Attempts / waits of 2, 5, 15, 30 s & 5 \\
Skill rewriting & Request prefix & 2,000 characters \\
Skill routing & Over-fetch factor / gate pool / ungated pool & 2 / 10 / 2 \\
Skill routing & Local / memory / SkillHub weights & 1.0 / 0.9 / 0.85 \\
Skill routing & RRF damping & 10 \\
Skill selection & Normal cap / failure fallback & 2 / 5 \\
Skill selection & Description / body-excerpt limits & 200 / 300 characters \\
SkillHub detail policy & Minimum parseable safety score & 0.7 \\
Case admission & Minimum quality \(\theta_{\rm skip}\) for clustering & 0.2 \\
Case clustering & Recall cap / direct-merge similarity & 30 / 0.85 \\
Skill extraction & Existing-skill cap \(k_{\rm skill}\) / supporting-case cap \(k_{\rm case}\) & 10 / 9 \\
Skill extraction & Supporting summaries per existing skill & 3 \\
Skill extraction & Description / body prefix budgets & 400 / 5,000 tokens \\
Skill extraction & Supporting-approach prefix budget & 200 tokens \\
\bottomrule
\end{tabularx}
\end{minipage}
\end{center}

\paragraph{SkillHub Detail Policy.}
When discovery returns only metadata and a detail client is configured,
the minimum safety threshold applies if the retrieved detail score is
present and parseable.
Missing or malformed scores do not trigger that rejection. Detail checks
also reject blocked skills and incompatible external-home-path references.
The corpus release predicate in \cref{eq:skill-admission} is applied
separately. Token-prefix budgets in the table exclude omission markers.

\subsection{Harness Evolution Protocol and Evidence}
\label{app:harness-screening-policy}

The seven benchmark results in \cref{sec:eval-evolution,tab:harnessbank-source}
are reproduced from HarnessBank v2, Section 4 and Table 1
\citep{luo2026harnessbank}. The reported experiments use a frozen \modelname{Qwen3.6-27B} task
model, \modelname{Claude Opus 4.8} as the evolver, and three attempts per task, with
training-based selection followed by held-out testing. The method specification
in \cref{alg:evolution-screen,alg:evolution} states per-round sampling, cached
parent records, invalid-run retries, descriptor assignment, ties, and zero
variance explicitly. The published results do not establish the exact
values of every run-specific setting.

The source paper does not provide a complete per-run manifest for the seven
results. In particular, its result table does not identify each run's
screening subset and proposal order or all values of \((n,J,N_{\rm keep},R_{\rm round},P)\) and
retry limits. Reconstructing those runs therefore requires their run records.

\subsection{Source Prior for Skill Curation}
\label{app:method-skill-prior}

For corpus source \(a\), let \(n_a\) denote its skill count,
\(\overline{\mathrm{qual}}_a\) its mean normalized \emph{content} quality before the
source-prior mixture, and \(\mathrm{lic}_a\in[0,1]\) its license-coverage
rate under the corpus policy. These source statistics belong to the
curation snapshot. \(\overline{\mathrm{qual}}_a\) is not the final composite
\(\mathrm{qual}_{\rm cur}\). This distinction avoids circular score computation.
SkillCorpus uses
\begin{align}
    \operatorname{track}(a)
      &=\frac{n_a\overline{\mathrm{qual}}_a+K_0\mu_0}{n_a+K_0},\\
    p_{\rm src}(a)
      &=\operatorname{clip}_{[0.30,1]}
        \bigl(0.95\operatorname{track}(a)+0.05\,\mathrm{lic}_a+
           0.10\,\mathbb I[a\in\mathsf{TrustedSrc}]\bigr),
\end{align}
with shrinkage pseudo-count \(K_0=10\), prior mean content quality
\(\mu_0=0.685\), and
\(\mathsf{TrustedSrc}=\{\texttt{anthropics},
\texttt{antigravity},\texttt{karanb192}\}\).
For an empty source, define \(n_a\overline{\mathrm{qual}}_a=0\), so its track score
reduces to \(\mu_0\).
The shrinkage estimate of empirical quality contributes most to the prior.
The source reports that these settings were fixed before downstream
evaluation and were not optimized using benchmark outcomes.

\clearpage
\section{Evaluation Details}\label{app:specialist-eval}

This appendix supplements \cref{sec:evaluation} with scoring conventions,
evaluation configurations, grading procedures, and cost accounting.
It records departures from official benchmark protocols.
The main-text figures report orchestration and specialist scores. The
tables below reproduce the harness-evolution results, skill-library
results, and skill ablation from their cited sources.

\subsection{Multi-Agent Orchestration}\label{app:eval-maob}

\paragraph{Scoring Conventions.}\label{app:maob-conventions}
The metric definitions in \cref{subsec:mao-metrics} follow the DAGbench
scorer used for these results. Domain folding drops unrecognized agent
entries, flags dangling dependencies, removes within-domain edges, and
unions duplicate inter-domain edges. When distinct domains interleave in
an otherwise valid invocation graph, folding may introduce a cycle. The
adapter marks this as a projection conflict. Node F1 remains defined,
while Edge F1, POA, and Exact Match are unavailable for that record.

By contrast, a cyclic prediction without that projection-conflict flag receives zero
Node F1 and Edge F1 and a false Exact Match indicator. Its POA is zero
when a shared pair exists and unavailable otherwise. For ordinary acyclic
predictions, no shared node gives POA zero and a single shared node gives
POA one. Two empty edge sets have F1 one. These values follow explicit scoring conventions for degenerate cases.

Reference annotations can admit alternative edge sets, node folds, or
pair orders. The scorer expands explicit edge alternatives and allowed
node folds, evaluates each resulting reference, and selects the candidate
with the largest lexicographic tuple (Node F1, Edge F1, POA). Unavailable
order scores count as zero only for this selection, and ties retain the
first candidate. An empty prediction is scored against the original
reference without benefiting from node folds. Within a selected reference,
Edge F1 uses the best accepted reduced edge set, while POA and Exact Match
use the annotated accepted pair relations. With no such annotations the
reference family is a singleton.

Each reported metric is the arithmetic mean of its defined values over
scored multi-domain records. Unavailable values are omitted from that
metric's denominator, and an empty collection has no reported mean.
Defined zero scores, including ordinary empty-delegation predictions,
remain in the mean. Repeated attempts are separate records. Consequently,
different metrics can have different denominators, and the nominal
140-task count alone does not determine them.

\subsection{Harness Self-Evolution}\label{app:eval-evolution}

\Cref{tab:harnessbank-source} lists the values plotted in
\cref{fig:harnessbank-source}.

\begin{table}[htbp]
\centering
\caption{Held-out Pass@1 (\%) before and after harness self-evolution
with frozen \modelname{Qwen3.6-27B}. Scores and percentage-point gains are reproduced
as reported by HarnessBank.}
\label{tab:harnessbank-source}
\begin{tabular}{lrrr}
\toprule
\textbf{Benchmark} & \textbf{Initial} & \textbf{Evolved} & \textbf{Gain} \\
\midrule
Terminal-Bench-2 & 36.1 & 45.4 & +9.3 \\
LiveCodeBench   & 58.1 & 71.8 & +13.7 \\
Omni-MATH       & 54.3 & 66.0 & +11.7 \\
BrowseComp+     & 16.9 & 30.8 & +13.9 \\
GDPval          & 43.7 & 52.9 & +9.2 \\
AppWorld        & 41.3 & 56.7 & +15.4 \\
SWE-bench Verified & 47.4 & 52.6 & +5.1 \\
\bottomrule
\end{tabular}
\end{table}

\subsection{Raven-Research}\label{app:eval-research}

\paragraph{Benchmark Composition.}
DeepResearch Mixed combines questions from BrowseComp (English), FRAMES,
Humanity's Last Exam, and xBench-DeepSearch release 2510, which is in Chinese. Humanity's Last Exam is
evaluated in its text-only exact-match setting.

\paragraph{Grading, Tokens, and Cost.}
An LLM judge (\modelname{GPT-5.6 Luna Pro} at temperature 0) grades the short final
answer of each system for semantic equivalence with the reference
answer. Accuracy is the fraction of questions answered correctly, and
questions without an answer, including timeouts, count as incorrect.
Regrading the same answers changed accuracy by approximately 1.3
percentage points. Token
counts are per-question means of provider-reported usage, with cache
reads included in the input. For the \modelname{DeepSeek-V4-Flash} group, cost applies one
DeepSeek list-price schedule to the recorded usage of all three systems.
This gives the three systems a common pricing basis, although the
resulting estimates differ from the amounts billed. By contrast, costs of the commercial services are not on a
common basis: the Perplexity cost is measured, the Gemini cost converts
the billed rate, and the MiroThinker cost applies its list price.

\paragraph{Systems and Budgets.}
\modelname{Qwen3.6-35B-A3B} and a 4-bit GPTQ build of \modelname{Qwen3.5-397B-A17B} are
self-hosted on internal infrastructure with thinking enabled, and
\modelname{DeepSeek-V4-Flash} is deepseek-v4-flash, accessed through the official
DeepSeek API. MiroFlow \citep{su2026miroflow} runs its main agent with a 150-turn
limit and extracts its final answer with \modelname{Qwen3.6-35B-A3B} in all three
groups, including the \modelname{DeepSeek-V4-Flash} group. DeepSeek-Harness
\citep{deepseek2026harness} runs its default loop. The three local harnesses use Serper for
search and Jina Reader for page retrieval, and each keeps its own loop,
context policy, and sampling settings. Raven-Research was evaluated with
the two web tools, at most 150 tool iterations, 16,384 output tokens per
call, high reasoning effort, and a per-question limit of 90 minutes on
\modelname{DeepSeek-V4-Flash} (120 and 200 minutes on the two self-hosted models). The
three commercial services
\citep{miromind2026mirothinker17,perplexity2025sonardr,google2026geminidr}
ran end to end on the same questions with their own models and search.
Each system answered each question once, between August 10 and August
18, 2026.

\paragraph{Comparability.}
Because the harnesses retain different turn limits, time limits, and
context policies, these results provide an observational comparison
without isolating the effect of individual harness components. The same questions were also used while the research flow was
developed.
None of the evaluated systems blocked access to public copies of the
benchmarks. The absolute BrowseComp and Humanity's Last Exam scores may therefore
overestimate performance in an uncontaminated evaluation.

\subsection{Raven-Code}\label{app:eval-code}

\paragraph{Harness and Baselines.}
The self-run coding results were produced with AgentEval, an internal
evaluation framework that runs each attempt in a disposable Docker
container, records every model request through a usage proxy, and
grades the result in a fresh container with each benchmark's own
grading code. AgentEval invokes Raven-Code as a single-turn command-line
agent. Baselines were installed from their public packages: Claude Code
through claude-agent-sdk, OpenCode, Hermes Agent,
and DeepSeek-Harness through its SDK. Unless stated otherwise, each task
was attempted once, attempts that failed for infrastructure reasons
count as unsolved, and no task was excluded.

\paragraph{SWE-bench Pro.}
The evaluation uses the full public set of 731 tasks from 11
repositories with the official images and grading scripts. Generation
is offline: the agent container has no network interface, model traffic
passes through a Unix socket, and git objects that could reveal the
reference fix are removed. Grading is online because some test suites
install packages. Both systems use \modelname{Qwen3.8-27B} served by vLLM with a
budget of 3,600 seconds per task.

\paragraph{SWE-bench Verified.}
The 500 tasks are graded with the upstream grading functions, with 3,600
seconds per task. In contrast to SWE-bench Pro, the agent container has network access during
generation. The \modelname{DeepSeek-V4-Flash} systems run through the same AgentEval
setup, and the Claude Code entry for \modelname{Claude Opus 4.8} is the VALS.ai value for
the Claude Code harness \citep{valsai2026swebench}.

\paragraph{WorkBuddy-Code.}
Following the official protocol, each of the 80 tasks receives three
attempts of up to 3,600 seconds, and its reward is the mean hidden-test
pass rate over the attempts. The \modelname{Claude Opus 4.8} baselines are official
leaderboard values \citep{workbuddy2026leaderboard}, and all other
entries are self-run. The Raven-Code runs
with \modelname{DeepSeek-V4-Flash} and \modelname{Claude Opus 4.8} also blocked 35 code-hosting and search
domains, a stricter setting than that of the other self-run systems.

\paragraph{SWE-Refactor.}
Each of the 20 stack migrations is scored from 0 to 100 by a three-stage procedure: a model-based migration audit that acts as a gate, an
all-or-nothing behavioral test stage, and six adversarial verification
rounds. Each Raven-Code attempt had network access limited to the model
gateway, and the reported score combines task results from several
runs. For one task whose audit gate the benchmark maintainers
acknowledged as unreliable, the audit was repeated for Raven-Code. The
baselines are official leaderboard values at maximum reasoning effort
\citep{einsia2026srbleaderboard}. Raven-Code also uses maximum reasoning
effort with \modelname{GPT-5.6 Luna}. Reasoning effort is reported separately
from the model name.

\paragraph{DataAgentBench.}
Following the leaderboard protocol, Raven-Code ran five trials per query
with benchmark hints enabled. The official validator checked each answer.
Pass@1 averages the per-query pass rates within each dataset and then
averages the dataset scores. Pass@5 assigns one to a query
if at least one of its five trials succeeds and zero otherwise, then
uses the same dataset-level averaging. All 270 planned trials in this
run were graded, so these quantities require no missing-trial convention
for the reported run. The run took place on August 24,
2026, with a limit of 5,400 seconds per trial and 24 concurrent
trials. It used \modelname{Claude Opus 5} through
OpenRouter as the main model, together with a data-agent tool set and
DAB-specific checks. One
semantic-mapping step also used \modelname{Claude Haiku 4.5} and \modelname{Claude Sonnet 5}, so under the
leaderboard's categories the entry counts as a tuned-prompt system.
The entry has not been submitted to the leaderboard maintainers.

\subsection{Raven-Design}\label{app:eval-design}

\paragraph{Harness and Baselines.}
The design results were produced with AgentEval, with one attempt per
task, in containers with 2 CPUs and 8 GB of memory, in August 2026.
The evaluated deck pipeline fixes slide coordinates and typography, and
the agent fills the resulting layout slots. The visual-artifact
results use the rendering and preview tools, Task
State, and always-loaded design skills. Claude Code ran through claude-agent-sdk and Hermes Agent through the Hermes Agent
Agent package, with
built-in web search disabled for Claude Code.

\paragraph{PresentBench.}
Each task provides an instruction and background materials mounted
read-only, and the agent must produce an editable PPTX file. The
official evaluator converts the deck to PDF and its judge,
\modelname{Gemini 3 Flash}, answers about 54 binary checklist items per task. In
our runs the judge receives the rendered deck through a different file
transport from the official evaluator. The public leaderboard entries
were evaluated by the benchmark authors on all 238 tasks. However, not every
Raven-Design and Claude Code configuration was evaluated on the full
task set, and failed generations are excluded from the reported mean.
For Raven-Design with \modelname{GPT-5.6 Luna}, 229 tasks produced a valid deck and
228 were scored, and that run did not use image generation or image
search.

\paragraph{ArtifactsBench and GDPval.}
ArtifactsBench contributes its SVG generation tasks and its
data-visualization dashboard tasks after the removal of tasks with known
source defects. GDPval contributes its public gold subset, restricted to
tasks without audio or video that do not require live web search.
Agents run in multiple turns and declare their final files. An internal grader renders each deliverable and asks \modelname{GPT-5.6 Luna} to
assess it. This differs from both benchmarks' official protocols. For
configurations generated with \modelname{GPT-5.6 Luna}, the generator and judge use
the same model. For ArtifactsBench the grader captures timed
screenshots, replays frozen interaction intents with no-action controls,
and scores the official ten-item checklist from 0 to 100. For GDPval it
scores each rubric item of the public release as met or not met and
computes a weighted score from 0 to 100.

\subsection{Raven-Oncall}\label{app:eval-oncall}

\paragraph{AI4AI.}
The agent may modify only the training script of autoresearch. The data pipeline, tokenizer, held-out validation shard, and evaluation
function remain fixed. Package installation is prohibited, and the
parameter count may not exceed the starting model's 50,332,176 parameters. Every training run has a budget of 300 seconds of single-GPU
training time, and each campaign has 5 GPU-hours on a machine with two
A800 80GB GPUs. BPB is measured over 20,971,520 tokens of the held-out
shard, and each reported BPB is the mean over several seeds of the final
configuration (5 seeds for Claude Code, 6 for Raven-Oncall with \modelname{Claude Opus 5},
and 10 with \modelname{DeepSeek-V4.1-Flash}). The two Raven-Oncall campaigns share one
task statement, whereas Claude Code received a separately written
statement with the same budget and constraints. Runtime is the wall-clock time
of the whole campaign, including work outside the training runs, and
cost is the campaign total. The \modelname{Claude Opus 5} cost of Raven-Oncall is the amount billed
through OpenRouter, the \modelname{DeepSeek-V4.1-Flash} cost applies DeepSeek's peak-hour
prices, and the Claude Code cost applies the \modelname{Claude Opus 5} API prices to its
recorded usage.

\paragraph{AI4S.}
The 17 tasks cover engineering simulation, compute tuning, model training,
and other scientific and operational workloads. Seven tasks concern
engineering simulation. Three use CalculiX: an elastic-plastic cantilever
limit-load problem, a contact problem requiring evidence that the
specified penalty stiffness was retained, and a 25-minute limit-load task
that tests whether the agent requests additional budget. Four use
OpenFOAM: two dam-break runs with a divergent fixed step and a loose
Courant limit, a transient dam-break case with a seeded viscosity error,
and a steady turbulent bump case with approximately 880,000 cells.

Two compute-tuning tasks use \modelname{Qwen3-8B} for batch inference. One minimizes
batch wall time, and the other prohibits truncation. Three model-training
tasks use \modelname{Qwen3-Embedding-0.6B} on NFCorpus: fine-tuning, resuming an
interrupted job, and continuing a partially completed campaign from a
previous shift. The remaining tasks concern a two-dimensional heat
equation with an analytic solution, BM25 parameter tuning on NFCorpus, a
GitLab merge-request review bot and its pipeline, and two molecular
simulations. The molecular tasks estimate the density and self-diffusion
of liquid ethanol and the hydration free energy of phenol by
thermodynamic integration.

Each task has a specified budget in wall-clock, core, or GPU minutes.
Success is judged manually against the task statement and a task-specific
rubric containing reference values and supporting evidence withheld from
the agent. All systems received the same budgets and machines. For
simulation and batch inference, Claude Code's task statements also
included machine addresses and job commands, which Raven-Oncall selects
itself.

\subsection{Skill Retrieval and Reuse}\label{app:eval-skills}

\Cref{tab:skills-source} lists the values plotted in
\cref{fig:skills-source}, and \cref{tab:skills-ablation} lists the
ablation. All values are reproduced as reported by SkillCorpus
\citep{wang2026skillcorpus}.

\paragraph{Grading and Accounting.}
SkillsBench reports the binary pass rate over all 87 tasks. The 4 tasks
that do not build in the evaluation environment count as failures under
both conditions. A \modelname{GPT-4o} judge, from a different model family than the
evaluated backbones, grades GDPval. QwenClawBench uses its official
configuration of automated checks and an LLM judge. \modelname{Qwen3.5-397B-A17B}
runs as a 4-bit GPTQ build. Each backbone runs with its default
inference settings, with greedy decoding for the Qwen models and the
default thinking effort for \modelname{Claude Opus 4.7}. Both conditions of a cell
share these settings. Absolute baseline scores are therefore not directly comparable with
leaderboard values obtained under other configurations. Comparisons
within each cell, however, retain the same inference settings. A pooled gain averages each task's
paired difference over the four cells, and its standard error divides
the standard deviation of these averages by the square root of the
number of tasks. This task-clustered error is more conservative than
treating cell--task pairs as independent observations.

\paragraph{Frontier Check, Ablation, and Retrieval.}
The \modelname{Claude Opus 4.7} check runs OpenClaw on SkillsBench once per
condition. The ablation replaces exactly one component of the full
pipeline in a single run. The standalone retrieval evaluation follows
the SkillRouter protocol \citep{zheng2026skillrouter} on the 75 core
SkillsBench tasks, which exclude 12 generic-only cases.
Hit@1 is the fraction of these tasks for which the top-ranked skill
belongs to the task's designated ground-truth skill set under that protocol.

\begin{table}[htbp]
\centering
\small
\setlength{\tabcolsep}{4pt}
\caption{Effect of SkillCorpus skills on each harness--backbone cell.
Each benchmark gives the three-run mean without skills (None), with
skills (Skills), and their difference in points. SkillsBench reports
Pass@1 (\%), and GDPval and QwenClawBench report mean rewards multiplied
by 100. The last two rows give the task-clustered pooled gain with one
standard error and its \(z\)-score.}
\label{tab:skills-source}
\begin{tabular}{lrrrrrrrrr}
\toprule
& \multicolumn{3}{c}{\textbf{SkillsBench}}
& \multicolumn{3}{c}{\textbf{GDPval}}
& \multicolumn{3}{c}{\textbf{QwenClawBench}} \\
\cmidrule(lr){2-4}\cmidrule(lr){5-7}\cmidrule(lr){8-10}
\textbf{Cell} & None & Skills & Gain & None & Skills & Gain
& None & Skills & Gain \\
\midrule
Raven \(\times\) \modelname{Qwen3.5-27B}          & 10.0 & 16.5 & +6.5  & 82.6 & 83.8 & +1.2 & 66.9 & 70.8 & +3.9 \\
Raven \(\times\) \modelname{Qwen3.5-397B-A17B}    &  9.2 & 22.6 & +13.4 & 84.0 & 85.2 & +1.2 & 68.8 & 73.2 & +4.4 \\
OpenClaw \(\times\) \modelname{Qwen3.5-27B}       &  8.8 & 13.0 & +4.2  & 81.2 & 83.1 & +1.9 & 65.2 & 66.7 & +1.5 \\
OpenClaw \(\times\) \modelname{Qwen3.5-397B-A17B} & 11.1 & 16.9 & +5.8  & 82.2 & 84.0 & +1.8 & 65.7 & 67.0 & +1.3 \\
\midrule
Pooled gain & \multicolumn{3}{r}{\(+7.5\pm2.3\)}
& \multicolumn{3}{r}{\(+1.51\pm0.49\)}
& \multicolumn{3}{r}{\(+2.79\pm0.70\)} \\
\(z\)-score & \multicolumn{3}{r}{3.2} & \multicolumn{3}{r}{3.1}
& \multicolumn{3}{r}{4.0} \\
\bottomrule
\end{tabular}
\end{table}

\begin{table}[htbp]
\centering
\caption{Single-run ablation on Raven \(\times\) \modelname{Qwen3.5-397B-A17B}, SkillsBench
Pass@1 (\%) over 87 tasks. Each ablation replaces one component of the full pipeline, and the gain is measured against the no-skill baseline.}
\label{tab:skills-ablation}
\begin{tabular}{llrr}
\toprule
\textbf{Catalog} & \textbf{Retrieval Stack} & \textbf{Pass@1} & \textbf{Gain} \\
\midrule
\multicolumn{2}{l}{No skills} & 9.2 & -- \\
Curated & Fine-tuned & \textbf{22.6} & \textbf{+13.4} \\
Curated & Off-the-shelf Qwen3 & 13.8 & +4.6 \\
Raw crawl & Fine-tuned & 14.9 & +5.7 \\
\bottomrule
\end{tabular}
\end{table}

\clearpage
\section{Author List}\label{app:author_list}

\textbf{Project Leader:} Chuanrui Hu

\textbf{Core Contributors:} Dizhan Xue, Chuanrui Hu

\textbf{Contributors:} Zuyi Zhou, Hongda Chen, Xingze Gao, Zhao Wang, Pengfei Yao,
Zhengwei Wu, Tong Li, Ethan Wang, Xiaotian Luo, Juwei Yue, Jie Huang,
Hui Zhang, Weixiang Chen, Chang Zhang, Yuqi Yang, Yifan Chen, Yunyun Han

\textbf{Corresponding Author:} Yafeng Deng

\end{document}